\def\Secref#1{Section~\ref{#1}}
\def\TwoAsmpsref#1#2{Assumptions~\ref{#1} and \ref{#2}}
\def\1{\bm{1}}
\def\rf{{\textnormal{f}}}
\def\vzero{{\bm{0}}}
\def\vtheta{{\bm{\theta}}}
\def\vf{{\bm{f}}}
\def\vg{{\bm{g}}}
\def\vu{{\bm{u}}}
\def\vv{{\bm{v}}}
\def\vw{{\bm{w}}}
\def\vx{{\bm{x}}}
\def\vy{{\bm{y}}}
\def\mH{{\bm{H}}}
\DeclareMathAlphabet{\mathsfit}{\encodingdefault}{\sfdefault}{m}{sl}
\SetMathAlphabet{\mathsfit}{bold}{\encodingdefault}{\sfdefault}{bx}{n}
\def\gC{{\mathcal{C}}}
\def\gF{{\mathcal{F}}}
\def\gL{{\mathcal{L}}}
\def\gN{{\mathcal{N}}}
\def\gU{{\mathcal{U}}}
\def\gV{{\mathcal{V}}}
\newcommand{\R}{\mathbb{R}}
\newcommand{\del}[2]{{\frac{\partial #1}{\partial #2}}}
\newcommand{\ddel}[2]{{\frac{\partial^2 #1}{\partial {#2}^2}}}
\def\bfu{\boldsymbol{u}}
\def\bfx{\boldsymbol{x}}
\def\R{{\mathbb{R}}}
\def\dom{{\mathrm{dom}}}
\def\Wlzero{{W^{(l)}_0}}
\def\Rlzero{{R^{(l)}_0}}
\newcommand{\beq}{\begin{equation}}
\newcommand{\eeq}{\end{equation}}
\newcommand{\g}{\mathbf{g}}
\renewcommand{\a}{\mathbf{a}}
\renewcommand{\b}{\mathbf{b}}
\renewcommand{\r}{\mathbf{r}} 
\renewcommand{\v}{\mathbf{v}}
\newcommand{\w}{\mathbf{w}}
\newcommand{\x}{\mathbf{x}}
\newcommand{\cL}{{\cal L}}
\newcommand{\cN}{{\cal N}}
\newcommand{\cQ}{{\cal Q}}
\newcommand{\cB}{{\cal B}}
\newcommand{\aalpha}{\boldsymbol{\alpha}}
\newcommand{\norm}[2]{\ensuremath \|#1\|_{#2}}
\DeclareMathOperator{\argmin}{argmin}
\newtheorem{prop}{Proposition}
\newtheorem{lemm}[subsection]{Lemma}
\newtheorem{asmp}{Assumption}
\newtheorem{defn}{Definition}
\newtheorem{cond}{Condition}
\theoremstyle{definition} 
\newtheorem{remark}{Remark}
\newcommand {\commentout}[1] {}
\def\ints{{{\rm Z} \kern -.35em {\rm Z} }}  
\def\smallints{{{\rm Z} \kern -.3em {\rm Z} }}  
\def\pints{{{\rm I} \kern -.15em {\rm N} }}      
\newcommand{\reals}{\mathbb R}
\def\cplx{{{\rm I} \kern -.45em {\rm C} }}       
\def\l2{\rm {\mathcal L}^{2}(\reals)}            
\renewcommand{\norm}[1]{\lVert#1\rVert}
\newcommand{\be}{\begin{eqnarray}}
\newcommand{\ee}{\end{eqnarray}}
\newcommand{\bea}{\begin{eqnarray}}
\newcommand{\eea}{\end{eqnarray}}
\newcommand{\beaa}{\begin{eqnarray*}}
\newcommand{\eeaa}{\end{eqnarray*}}
\newcommand{\bnad}{\begin{nad}}
\newcommand{\enad}{\end{nad}}
\title{Optimization for Neural Operators can Benefit from Width}
\author{%
  Pedro Cisneros-Velarde$^\dagger$\\
    \normalsize{VMware Research}\\
    \texttt{pacisne@gmail.com}
    \and
    Bhavesh Shrimali$^\dagger$\\
    \normalsize{Corporate Research, Kimberly-Clark}\\
    \texttt{bhavesh.shrimali@gmail.com}
    \and
  Arindam Banerjee \\ 
  \normalsize{University of Illinois Urbana-Champaign}\\
  \texttt{arindamb@illinois.edu}
}
\date{}
\begin{document}
\maketitle
\def\thefootnote{$\dagger$}\footnotetext{These authors contributed equally to this work.}\def\thefootnote{\arabic{footnote}}

\begin{abstract}
    Neural Operators that directly learn mappings between function spaces, such as Deep Operator Networks (DONs) and Fourier Neural Operators (FNOs), have received considerable attention. Despite the universal approximation guarantees for DONs and FNOs, there is currently no optimization convergence guarantee for learning such networks using gradient descent (GD). In this paper, we address this open problem by presenting a unified framework for optimization based on GD and applying it to establish convergence guarantees for both DONs and FNOs. 
    In particular, we show that the losses associated with both of these neural operators satisfy two conditions---restricted strong convexity (RSC) and smoothness---that guarantee a decrease on their loss values due to GD. Remarkably, these two conditions are satisfied for each neural operator due to different reasons associated with the architectural differences of the respective models. One takeaway that emerges from the theory is that wider networks should lead to better optimization convergence for both DONs and FNOs. 
    We present empirical results on canonical operator learning problems to support our theoretical results.
\end{abstract}

\section{Introduction}
\label{sec:intro}
Replicating the success of deep learning in scientific computing such as developing neural PDE solvers, constructing surrogate models, and developing hybrid numerical solvers, has recently captured the interest of the broader scientific community~\citep{kutz2024promisdirectpde,kovachi2023neuraloperator}. In relevant applications to scientific computing, we often need to learn mappings between input and output function spaces. Neural operators have emerged as the prominent class of deep learning models used to learn such mappings~\citep{lu20201DeepONet}.
\pcedit{They have become a natural choice for learning solution operators of parametric PDEs and of inverse problems where multiple evaluations are needed under different parameters of the problem.}
%
Two of the arguably most widely adopted neural operators are Deep Operator Networks (DONs)~\citep{lu20201DeepONet,wang_learning_2021} and {Fourier} Neural Operators (FNOs)~\citep{li_fourier_2021,li_markov_2021}. 

The fundamental idea of a neural operator is to parameterize mappings between function spaces with deep neural networks and proceed with its \emph{learning}, i.e., \emph{optimization}, as in a standard supervised learning setup. However, contrary to a classical supervised learning setting where we learn mappings between two finite-dimensional vector spaces, here we learn mappings between
\emph{infinite-dimensional} function spaces.
%
%
While there exist results on the universal approximation properties of DONs and FNOs \citep{deng2021convergence,kovachki2021universal}, to the best of our knowledge, 
there are \emph{no formal optimization convergence results} for the training of these two popular neural operator models. 

To address this open problem, in this paper, \textbf{we establish \pcedit{such} optimization convergence guarantees for learning DONs and FNOs with gradient descent (GD)}.
To achieve this, we first \textbf{propose a general framework that ensures the optimization of any loss using GD as long as two conditions are satisfied across iterations}. The conditions do not include convexity of the loss as that is not satisfied by models based on neural networks, including neural operators.
The first condition is based on restricted strong convexity (RSC), a recently introduced alternative~\citep{banerjee2022restricted} to the widely used neural tangent kernel (NTK) analysis~\citep{liu_linearity_2021,liu2022loss,allen-zhu_convergence_2019}. 
The second condition is based on a smoothness property of the loss function. For feedforward neural networks, the RSC condition relies on the second-order Taylor expansion of the loss~\citep{banerjee2022restricted,cisnerosvelarde2024optgenWeightNorm}, \pcedit{using 
the \emph{Hessian} 
of the neural network} (i.e., second order structure), whereas the NTK approach relies on a kernel approximation of the training dynamics~\citep{jacot2018neural}, \pcedit{using 
the \emph{gradient} of the network} (i.e., first order structure).
%
For a specific model such as neural operators, \textbf{the \emph{technical challenge} in using our optimization framework} is to establish suitable properties of the loss, its gradient, and its Hessian in order to show that the RSC and smoothness conditions are indeed satisfied. Convexity is not one of the required conditions, so we are not attempting to show that the Hessian is positive semi-definite, as that will not be true for most neural models, including neural operators.

Having defined a general optimization framework based on RSC and smoothness conditions, 
\textbf{the key novelty 
of our current work is showing that the losses for DONs and FNOs 
provably satisfy these two conditions when the neural operators are wide, despite the substantial differences in their architectures and mathematical analyses}. 
For both DONs and FNOs, we need to bound the Hessian of their respective empirical losses
and of the neural operator models themselves in order to 
determine whether the RSC and smoothness properties are satisfied. 

\textbf{The \emph{challenge} in the analysis of DONs} stems from the fact that the output of this neural operator is the inner product of two neural networks. This greatly complicates the Hessian structure of the loss compared to standard neural networks. Indeed, the Hessian now contains cross-interaction terms between two neural networks which have to be carefully analyzed and which require a more complex definition of the restricted set over which the RSC property is defined compared to standard neural networks.  

\textbf{The \emph{challenge} in the analysis of FNOs} stems from the fact that it contains, inside their neural network structure, learnable weights that define transformations in the Fourier domain---something absent in standard neural networks. 
This complicates the Hessian structure of the FNO since it contains parameters both in the data domain and transformed Fourier domain leading to cross-derivatives between parameters in data and Fourier domains. 
Thus, a more involved analysis than of standard neural networks is required. 

Remarkably, we find that \textbf{the \emph{widths} of both neural operator models benefit our optimization guarantees in similar ways}. First, the widths appear in the RSC condition such that larger widths make this condition less restrictive. Second, larger widths enlarge the neighborhood around the initialization point where our optimization guarantees hold. Similar benefits from larger widths were found for standard neural networks by~\citet{banerjee2022restricted}, despite the substantial differences between our analyses and theirs (as mentioned in the \emph{challenges} above). 

Finally, to complement our theoretical results, we present empirical evaluations of DONs and FNOs and show the benefits of width on learning three popular operators in the literature~\citep{li_fourier_2021,lu20201DeepONet}: antiderivative, diffusion-reaction, and Burger's equation. \pcedit{Our experiments show that increasing the width leads to lower training losses 
and generally leads to faster convergence.}

%
\textbf{Paper Organization}. 
\Secref{sec:related} presents related literature. 
\Secref{sec:modelSetup} outlines the architectures and learning problems for DONs and FNOs. 
\Secref{sec:optmain} establishes our general optimization framework, and \Secref{sec:optDON} and \Secref{sec:optFNO} establish convergence guarantees using this framework for DONs and FNOs respectively, highlighting the benefits of width.
\Secref{sec:Comparison} compares our results and known ones for standard neural networks. 
\Secref{sec:Experiments} presents empirical evaluations on the benefits of width. 
\Secref{sec:Discussion} is the conclusion.


\textbf{Notation}. $\norm{\,\cdot\,}_2$ denotes the $L_2$-norm or the induced matrix $L_2$-norm when the argument is a vector or a matrix, respectively. Given an operator/function $f$, $\operatorname{ran}(f)$ and $\operatorname{dom}(f)$ denote the range and domain of $f$, respectively.
\section{Related Work}
\label{sec:related}

We only provide a brief overview of the literature related to our work and provide a more extensive treatment in Appendix~\ref{app:related}. In the case of DONs, approximation~\citep{lu20201DeepONet} and generalization~\citep{kontolati2022_Over_parameterization} properties have been formally studied, as well as several applications of DONs~\citep{goswami_physics-informed_2022,wang_long-time_2021,diab2024u,centofanti2024learning,sun2023deepgraphonet}. Nevertheless, optimization guarantees for DONs is an open problem.
Approximation properties for FNOs have been formally studied~\citep{kovachki2021universal}, and diverse applications of FNOs and various Fourier-based operators have been formulated~\citep{li_multipole_2020,liu_learning-based_2022,wen_u-fnoenhanced_2022,pathak_fourcastnet_2022,centofanti2024learning,li2023fourier,yang2023fourier,harder2023hard}. Nevertheless, optimization guarantees for DONs is also an open problem. 
Though formal optimization guarantees for neural operators are largely absent, there is a more established literature on such guarantees for neural networks. We highlight two particular approaches for optimization analysis: based on the NTK approach~\citep{jacot2018neural,liu_linearity_2021,banerjee23a,du2019gradient,allen-zhu_convergence_2019} and on the RSC approach~\citep{banerjee2022restricted,cisnerosvelarde2024optgenWeightNorm}---our work is related to the latter.
\section{Learning Neural Operators}
\label{sec:modelSetup}
A neural operator~\citep{li_fourier_2021,li_neural_2020,lu20201DeepONet} is a parametric model based on neural networks that aims to best approximate a 
mapping between two function spaces, which 
can be linear, such as the antiderivative or integral operator, or nonlinear such as the solution operator of a nonlinear PDE. 
\pcedit{Thus, letting $G^\dagger$ denote the ground-truth operator we are trying to approximate and $G_{\vtheta}$ denote the neural operator parameterized by the parameter vector $\vtheta$, the objective is to \emph{learn} $\vtheta$ such that, given an input function $\vu$, we have $G_{\vtheta}(\vu)\approx G^{\dagger}(\vu)$.} 
Such learning 
is done by solving an optimization problem  
using data samples consisting of tuples of input and output function values of $G^\dagger$. 
This optimization problem is analogous to the notion of learning in finite dimensions, which is precisely the setup for which classical deep learning is used.

We now introduce 
DONs and FNOs. 
More information about neural operators and the schematics of both DONs and FNOs 
are found in Appendix~\ref{app:learning_fno_don}.

\subsection{Learning Deep Operator Networks (DONs)}
\label{subsec:DON_Setup} 
The DON model~\citep{lu20201DeepONet} is defined as the inner product of two deep feedforward neural networks, each one with $K$ output neurons. Given the 
the branch net $\vf = \{f_k\}_{k=1}^{K}$ and {the trunk net}  $\vg = \{g_k\}_{k=1}^{K}$, the DON is 
\begin{equation}
    G_{\vtheta}(\vu)(\vy) := \sum_{k=1}^K f_k(\vtheta_f;\vu) g_k(\vtheta_g;\vy),
    \label{eq:DONoutput}
\end{equation}
where the input function $\vu$ has $\operatorname{ran}(\vu)\subseteq\R^{d_u}$ 
and $\vy \in \dom (G_{\vtheta}(\vu))\subseteq \R^{d_y}$ is the output location on which the operator is evaluated. 
The training data is composed of $n$ input functions $\{\vu^{(i)}\}_{i=1}^n$ and $q_i$ output locations for each $G^\dagger(\vu^{(i)})$, i.e., $ \{\{\vy^{(i)}_j\}_{j=1}^{q_i}\}_{i=1}^n$ with $\vy^{(i)}_j\in\R^{d_y}$ denoting the $j$-th output location for $G^\dagger_{\vtheta}(\vu^{(i)})$. 
Each $\vu^{(i)}$ is represented in $R$ locations $\{\bfx_r\}_{r=1}^{R}$
so that $\vu^{(i)}(\bfx_r)\in\R^{d_u}$, $r\in [R]$. 
The entire set of parameters is $\vtheta = [\vtheta_f^{\top}\;\vtheta_g^{\top}]^\top \in \R^{p_f+p_g}$, where $\vtheta_f\in \R^{p_f}$ and $\vtheta_g\in\R^{p_g}$ are the parameter vectors of $\vf$ and $\vg$ respectively. 

We only consider scalar input functions, i.e., 
$d_u = 1$.  
For each $i\in[n]$, we stack $\{u^{(i)}(\vx_r)\}_{r=1}^R$ as an input vector to $\vf$, thus, $\vf:\R^R\to \R^K$. Note that $\vg:\R^{d_y}\to\R^K$. Then, the DON learning problem is 
the minimization: 
\begin{align}
    \begin{aligned}
        \vtheta^\dagger_{\rm (don)} &\in 
    \underset{\vtheta\in\R^{p_f+p_g}}{\argmin}~
    \gL\left(G_{\vtheta}, G^{\dagger}\right)
    \label{eq:empirical_risk}
    \end{aligned}
\end{align}
where 
\begin{equation}
\label{eq:loss-don}
\begin{aligned}
&\gL\left(G_{\vtheta}, G^{\dagger}\right)= \frac{1}{n}\sum_{i=1}^n \frac{1}{q_i} \sum_{j=1}^{q_i} 
    \left(
        G_{\vtheta}(u^{(i)})(\vy^{(i)}_j) - G^{\dagger}(u^{(i)})(\vy^{(i)}_j) 
    \right)^2
\end{aligned}
\end{equation}
is the empirical loss function that measures the approximation between $G_{\vtheta}$ and $G^\dagger$, and where
$G_{\vtheta}(u^{(i)})(\vy^{(i)}_j)=\sum_{k=1}^K f_k\left(\vtheta_f;\{u^{(i)}(\vx_r)\}_{r=1}^R\right) g_k\left(\vtheta_g;\vy^{(i)}_j\right)$.

Note that the ground truth operator $G^{\dagger}$  
can either be explicit, e.g. integral of a function, or implicit, e.g. the solution to a nonlinear partial differential equation (PDE). 

\subsection{Learning Fourier Neural Operators (FNOs)}
\label{subsec:FNO_setup} 
The FNO model~\citep{li_fourier_2021} is defined as follows: $G_{\vtheta}(\vu)(\vx):=f(\vtheta;\vx)$ with 
\begin{equation}
\label{eq:continuous_fno}
\begin{aligned}
    \aalpha^{(0)}(\vx) &= P(\vu;\vtheta_p)(\vx)\\ 
    \aalpha^{(l)}(\vx) &= \gF^{(l)}(\aalpha^{(l-1)}(\vx); \vtheta_{F^{{(l)}}}),\; l\in[L+1]\\ 
    f(\vtheta;\vx) &= Q(\aalpha^{({L+1})};\vtheta_q)(\vx),
\end{aligned}
\end{equation} 
where the input function $\vu$ has $\operatorname{ran}(\vu)\subseteq\R^{d_u}$, 
$G_{\vtheta}(\vu)(\vx)\in\R$ is the output of the FNO evaluated at output location $\vx\in\R^{d_x}$,
$\{\gF^{(l)}\}_{l=1}^{L+1}$ are nonlinear transformations with learnable parameters $\vtheta_{F} = [\vtheta_{F^{(1)}}^{\top}, \dots, \vtheta_{F^{(L+1)}}^\top]^\top\in\R^{F}$ and which may contain operations in the Fourier domain, $P$ is an encoder that maps $\vu$ and $\vx$ to an ambient space of dimension $d$ and has parameter vector $\vtheta_p\in\R^{p}$, and $Q$ is a decoder that maps the output from the block $\aalpha^{(L+1)}(\vx)$ to a scalar output 
with parameter vector $\vtheta_q\in\R^q$. 
The entire set of parameters for the FNO can be written as $\vtheta = \left[ \vtheta_p^{\top} \ \vtheta_{F}^{\top}\ \vtheta_q^{\top}\right]^{\top}$. 
With a slight abuse of notation, the FNO is simply written as $G_{\vtheta}(\vu)(\vx)=f(\vtheta;\vx)$ in~\eqref{eq:continuous_fno} when the input function $\vu$ is known by the context.
%
%
%

The training data is composed of $n$ input-output pairs $\{(\vu^{(i)},G^\dagger(\vu^{(i)})\}_{i=1}^n$ and a computational grid of evaluations $\{\vx_{r}\}_{r=1}^R$. We let $f^{(i)}(\vtheta;\vx_r)$ denote the FNO model~\eqref{eq:continuous_fno} with input function $\vu^{(i)}$ and evaluated at $\vx_r$. Then, the FNO learning problem is 
the minimization:
\begin{equation}
    \vtheta^{\dagger}_{\rm (fno)} \in \underset{{\vtheta \in \R^{p+F+q}}}{\argmin}~ \gL (G_{\vtheta}, G^{\dagger}) 
    \label{eq:fno_loss}
\end{equation}
with empirical loss function
\begin{equation}
\label{eq:loss-fno}
\begin{aligned}
    &\gL (G_{\vtheta}, G^{\dagger})=\frac{1}{n}\sum_{i=1}^n \frac{1}{R} \sum_{r=1}^R \left( 
        G_{\vtheta}({\vu^{(i)}})(\vx_r) - G^{\dagger}(\vu^{(i)})(\vx_r)
    \right)^2
\end{aligned}
\end{equation}
and where $G_{\vtheta}(\vu^{(i)})(\vx_{r})=\vf^{(i)}(\vtheta;\vx_{r})$.

%


\section{Optimization Convergence Framework}
\label{sec:optmain}
We now establish {\em two conditions}---Conditions \ref{cond:rsc} and~\ref{cond:smooth} below---for the convergence of gradient descent (GD) when minimizing a 
loss function $\cL$.
We show that as long as these two conditions are satisfied, the loss will decrease in value. 
In the following sections we show how the empirical losses used for training DONs (Section~\ref{sec:optDON}) and FNOs (Section~\ref{sec:optFNO}), as in \eqref{eq:loss-don} and~\eqref{eq:loss-fno} respectively, satisfy these two conditions. 

\pcedit{We consider $\vtheta\mapsto\gL(\vtheta)$ to be continuously differentiable.} Let $\vtheta_0\in\R^p$ be a suitable initialization point and $\{\vtheta_t\}_{t\geq 1}$ be the sequence of iterates obtained from GD on loss $\cL$ for some step-size $\eta_t>0$, i.e.,
\begin{equation}
\vtheta_{t+1} = \vtheta_{t} - \eta_{t} \nabla_{\vtheta} \gL(\vtheta_t )~.
    \label{eq:gd_at_t}
\end{equation}
We consider a non-empty set $\cB(\vtheta_0)\subseteq\R^p$ around and including $\vtheta_0$. 

\begin{asmp}[{\bf Iterates inside $\cB(\vtheta_0)$}]
\label{asmp:iter-0}
All iterates $\{\vtheta_t\}_{t\geq 1}$ follow GD as in~\eqref{eq:gd_at_t} and are inside the set $\cB(\vtheta_0)$.
\end{asmp}

The first condition is based on the concept of Restricted Strong Convexity (RSC) being satisfied for $\cL$. 
\begin{defn}[{\bf Restricted strong convexity (RSC)}] A function $\mathcal{L}$ is said to satisfy $\alpha$-restricted strong convexity ($\alpha$-RSC) w.r.t.~the tuple $(\mathcal{S}, \vtheta)$ if for any $\vtheta^{\prime} \in \mathcal{S} \subseteq \mathbb{R}^p$ and some fixed $\vtheta \in \mathbb{R}^p$, we have 
\begin{align}
\label{eq:RSC-prim}
\mathcal{L}\left(\vtheta^{\prime}\right) \geq \mathcal{L}(\vtheta) + \left\langle\vtheta^{\prime}-\vtheta, \nabla_\vtheta \mathcal{L}(\vtheta)\right\rangle+\frac{\alpha}{2}\left\|\vtheta^{\prime}-\vtheta\right\|_2^2~, 
\end{align}
with $\alpha>0$.
\end{defn}
%
%
%

%
\begin{cond}[{\bf RSC}]
Consider Assumption~\ref{asmp:iter-0}.
At step $t$, 
there exists a non-empty set $\mathcal{N}_t$ 
such that:
\begin{itemize}
\item[(a)] 
$\mathcal{N}_t\subseteq \cB(\vtheta_0)$;
\item[(b)] one of these two conditions hold:
\begin{itemize}
    \item[(b.1)] $\vtheta_{t+1}\in \mathcal{N}_t$ with either $\vtheta_{t}\notin \mathcal{N}_t$ or $\gL(\vtheta_{t})\neq\inf_{\vtheta\in\mathcal{N}_t}\cL(\vtheta)$,\label{condb1} 
    \item[(b.2)] there exists some $\vtheta'\in\mathcal{N}_t$ such that $\cL(\vtheta')<\cL(\vtheta_{t})$;\label{condb2}
\end{itemize}
\item[(c)] $\gL$ satisfies $\alpha_t$-RSC w.r.t.~$(\mathcal{N}_t,\vtheta_t)$ for some $\alpha_t > 0$.
\end{itemize}
\label{cond:rsc}
\end{cond}

Note that $\gL$ need not be convex for it to satisfy $\alpha_t$-RSC.

The second condition is based on the smoothness of $\gL$. 
\begin{cond}[{\bf Smoothness}]
The function $\gL$ is $\beta$-smooth, i.e., for $\vtheta',\vtheta \in \cB(\vtheta_0)$
and some $\beta>0$, 
$\gL(\vtheta') \leq \gL(\vtheta) + \langle \vtheta'-\vtheta, \nabla_{\vtheta} \gL(\vtheta) \rangle + \frac{\beta}{2} \| \vtheta'-\vtheta \|^2_2$.
\label{cond:smooth}
\end{cond}
As long as Conditions~\ref{cond:rsc} and~\ref{cond:smooth} are satisfied at step $t$ of the GD update in \eqref{eq:gd_at_t}, the loss is guaranteed to decrease with a suitable step-size choice. 
\begin{restatable}[{\bf Global loss reduction}]{theo}{GlobalLossSmooth}
Consider Assumption~\ref{asmp:iter-0} and Conditions~\ref{cond:rsc} and \ref{cond:smooth} with $\alpha_t \leq \beta$ at step $t$ of the GD update~\eqref{eq:gd_at_t} with step-size $\eta_t=\frac{\omega_t}{\beta}$ for some $\omega_t \in(0,2)$. 
\pcedit{If $\gL(\vtheta_t)\neq \underset{\vtheta \in \cB(\vtheta_0)}{\inf} \gL(\vtheta)$, then}
%
we have $0 \leq \gamma_t := \frac{\underset{\vtheta \in \mathcal{N}_t}{\inf} \gL(\vtheta) - \underset{\vtheta \in \cB(\vtheta_0)}{\inf} \gL(\vtheta)}{\gL(\vtheta_t) - \underset{\vtheta \in \cB(\vtheta_0)}{\inf} \gL(\vtheta)} < 1$ 
and 
\begin{equation}
\begin{aligned}
    &\gL(\vtheta_{t+1}) - \underset{\vtheta \in \cB(\vtheta_0)}{\inf} \gL(\vtheta)\leq \left(1-\frac{\alpha_t \omega_t (1-\gamma_t)}{\beta}(2-\omega_t) \right) (\gL(\vtheta_t) - \underset{\vtheta \in \cB(\vtheta_0)}{\inf} \gL(\vtheta)).
\end{aligned}
\label{eq:conv-0}
\end{equation}
%
\label{theo:global_main}
\end{restatable}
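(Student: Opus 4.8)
The plan is to run the standard descent-lemma argument, but track carefully the gap between the infimum of $\gL$ over the small set $\mathcal{N}_t$ and its infimum over the larger set $\cB(\vtheta_0)$, which is exactly what the quantity $\gamma_t$ encodes. First I would show $0\le\gamma_t<1$. Nonnegativity is immediate: $\mathcal{N}_t\subseteq\cB(\vtheta_0)$ by Condition~\ref{cond:rsc}(a), so $\inf_{\mathcal{N}_t}\gL\ge\inf_{\cB(\vtheta_0)}\gL$, and the denominator is positive by the running hypothesis $\gL(\vtheta_t)\neq\inf_{\cB(\vtheta_0)}\gL$ (combined with the trivial $\gL(\vtheta_t)\ge\inf_{\cB(\vtheta_0)}\gL$ since $\vtheta_t\in\cB(\vtheta_0)$ by Assumption~\ref{asmp:iter-0}). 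For the strict bound $\gamma_t<1$ I need $\inf_{\mathcal{N}_t}\gL<\gL(\vtheta_t)$; this is precisely what Condition~\ref{cond:rsc}(b) delivers — case (b.2) gives it directly, and in case (b.1), $\vtheta_{t+1}\in\mathcal{N}_t$ together with the side condition forces the infimum over $\mathcal{N}_t$ to be strictly below $\gL(\vtheta_t)$ (either because $\vtheta_t\notin\mathcal{N}_t$ and one uses that $\mathcal{N}_t$ contains a point of strictly smaller value, or because $\gL(\vtheta_t)$ itself is not the infimum over $\mathcal{N}_t$).

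Next I would derive the per-step decrease. By Condition~\ref{cond:smooth} applied to $\vtheta'=\vtheta_{t+1}$, $\vtheta=\vtheta_t$ and the GD update $\vtheta_{t+1}-\vtheta_t=-\eta_t\nabla\gL(\vtheta_t)$ with $\eta_t=\omega_t/\beta$, one gets the familiar
\[
\gL(\vtheta_{t+1})\le \gL(\vtheta_t)-\eta_t\Bigl(1-\frac{\beta\eta_t}{2}\Bigr)\|\nabla\gL(\vtheta_t)\|_2^2
=\gL(\vtheta_t)-\frac{\omega_t(2-\omega_t)}{2\beta}\|\nabla\gL(\vtheta_t)\|_2^2 .
\]
Then I would lower-bound $\|\nabla\gL(\vtheta_t)\|_2^2$ using the RSC inequality of Condition~\ref{cond:rsc}(c): for any $\vtheta'\in\mathcal{N}_t$, $\gL(\vtheta')\ge\gL(\vtheta_t)+\langle\vtheta'-\vtheta_t,\nabla\gL(\vtheta_t)\rangle+\frac{\alpha_t}{2}\|\vtheta'-\vtheta_t\|_2^2$. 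Minimizing the right-hand side over all of $\R^p$ (a relaxation, which only decreases it) gives $\gL(\vtheta')\ge\gL(\vtheta_t)-\frac{1}{2\alpha_t}\|\nabla\gL(\vtheta_t)\|_2^2$, hence $\|\nabla\gL(\vtheta_t)\|_2^2\ge 2\alpha_t\bigl(\gL(\vtheta_t)-\gL(\vtheta')\bigr)$ for every $\vtheta'\in\mathcal{N}_t$; taking the supremum over $\vtheta'$ yields $\|\nabla\gL(\vtheta_t)\|_2^2\ge 2\alpha_t\bigl(\gL(\vtheta_t)-\inf_{\mathcal{N}_t}\gL\bigr)$.

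Finally I would combine the two bounds and rewrite everything relative to $\inf_{\cB(\vtheta_0)}\gL$. Substituting,
\[
\gL(\vtheta_{t+1})\le \gL(\vtheta_t)-\frac{\alpha_t\omega_t(2-\omega_t)}{\beta}\bigl(\gL(\vtheta_t)-\inf_{\mathcal{N}_t}\gL\bigr),
\]
and writing $\gL(\vtheta_t)-\inf_{\mathcal{N}_t}\gL=\bigl(\gL(\vtheta_t)-\inf_{\cB(\vtheta_0)}\gL\bigr)-\bigl(\inf_{\mathcal{N}_t}\gL-\inf_{\cB(\vtheta_0)}\gL\bigr)=(1-\gamma_t)\bigl(\gL(\vtheta_t)-\inf_{\cB(\vtheta_0)}\gL\bigr)$ by the definition of $\gamma_t$. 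Subtracting $\inf_{\cB(\vtheta_0)}\gL$ from both sides of the displayed inequality and factoring gives exactly~\eqref{eq:conv-0}. The condition $\alpha_t\le\beta$ and $\omega_t\in(0,2)$ ensures the contraction factor lies in $[0,1)$, consistent with the loss actually decreasing. The main obstacle I anticipate is not the descent computation — that is routine — but being careful with the case analysis in Condition~\ref{cond:rsc}(b) to establish $\gamma_t<1$ rigorously, and making sure the unconstrained minimization step used to bound the gradient norm is valid (it is, since enlarging the feasible set of a minimization can only lower the optimal value, so the resulting inequality $\gL(\vtheta')\ge\gL(\vtheta_t)-\frac{1}{2\alpha_t}\|\nabla\gL(\vtheta_t)\|_2^2$ still holds for the $\vtheta'\in\mathcal{N}_t$ we care about).
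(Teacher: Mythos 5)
Your overall route is the same as the paper's: a restricted PL-type bound obtained by globally minimizing the RSC quadratic surrogate, the standard smoothness descent inequality with $\eta_t=\omega_t/\beta$, and the algebraic rewriting $\gL(\vtheta_t)-\inf_{\mathcal{N}_t}\gL=(1-\gamma_t)\bigl(\gL(\vtheta_t)-\inf_{\cB(\vtheta_0)}\gL\bigr)$; in fact your direct substitution is slightly cleaner than the paper's, which routes through a separate local-reduction lemma and an unnecessary split into $\gamma_t>0$ and $\gamma_t=0$. Those parts are correct.

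The genuine gap is exactly where you flagged discomfort: the proof of $\gamma_t<1$ under Condition~\ref{cond:rsc}(b.1). Your parenthetical argument is circular in the first sub-case ("$\vtheta_t\notin\mathcal{N}_t$ and one uses that $\mathcal{N}_t$ contains a point of strictly smaller value" — that strictly smaller point is precisely what must be produced, and $\vtheta_t\notin\mathcal{N}_t$ alone does not produce it), and insufficient in the second ($\gL(\vtheta_t)\neq\inf_{\mathcal{N}_t}\gL$ does not rule out $\gL(\vtheta_t)<\inf_{\mathcal{N}_t}\gL$, which is possible a priori when $\vtheta_t\notin\mathcal{N}_t$ and would give $\gamma_t>1$). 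The missing ingredient is that (b.1) must be combined with the GD step and the descent inequality: assume for contradiction $\inf_{\mathcal{N}_t}\gL\geq\gL(\vtheta_t)$; since $\vtheta_{t+1}\in\mathcal{N}_t$,
\begin{equation*}
\inf_{\vtheta\in\mathcal{N}_t}\gL(\vtheta)\;\leq\;\gL(\vtheta_{t+1})\;\leq\;\gL(\vtheta_t)-\eta_t\Bigl(1-\tfrac{\beta\eta_t}{2}\Bigr)\|\nabla_\vtheta\gL(\vtheta_t)\|_2^2\;\leq\;\inf_{\vtheta\in\mathcal{N}_t}\gL(\vtheta)-\eta_t\Bigl(1-\tfrac{\beta\eta_t}{2}\Bigr)\|\nabla_\vtheta\gL(\vtheta_t)\|_2^2,
\end{equation*}
and since $\eta_t(1-\beta\eta_t/2)>0$ for $\omega_t\in(0,2)$ this forces $\nabla_\vtheta\gL(\vtheta_t)=\vzero$. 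Then $\vtheta_{t+1}=\vtheta_t$, so $\vtheta_t\in\mathcal{N}_t$, contradicting the first alternative of (b.1); and the chain above collapses to equalities, giving $\gL(\vtheta_t)=\inf_{\mathcal{N}_t}\gL$, contradicting the second alternative. This is how the paper closes the case, and without it your claim $\gamma_t<1$ is not established in case (b.1); case (b.2) is fine as you argued.
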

Theorem~\ref{theo:global_main}'s proof is found in Appendix~\ref{app:rscopt}.
\pcedit{We note that if the infimum loss inside $\cB(\vtheta_0)$ is attained at time $t$, i.e., $\cL(\vtheta_t)=\underset{\vtheta \in \cB(\vtheta_0)}{\inf} \gL(\vtheta)$,
then there is nothing to prove---hence the conditional in the second sentence of Theorem~\ref{theo:global_main}.}

\pcedit{
\begin{remark}[The RSC to smoothness ratio] 
\label{rem:ratio}
Theorem~\ref{theo:global_main} requires $\alpha_t/\beta\leq 1$, which needs to be proved for the particular function $\gL$ being considered. 
If \eqref{cond:rsc} were to hold for any $\vtheta,\vtheta'\in\cB(\vtheta_0)$, then $\gL$ would be a \emph{locally strongly convex} function in the set $\cB(\vtheta_0)$~\citep{Boyd_Vandenberghe_2004}. This is a stronger condition on $\gL$ which makes $\alpha$ in \eqref{cond:rsc} \emph{independent} from the choice of $\vtheta$ (in the context of Theorem~\ref{theo:global_main}, $\alpha_t$ would be independent from $t$), which immediately implies $\alpha/\beta<1$.
\qed
\end{remark}}

Our analysis is inspired by the recent works \citep{banerjee2022restricted} and~\citep{cisnerosvelarde2024optgenWeightNorm}, where optimization guarantees were done for feedforward networks and normalization. We abstract out from those special cases, and demonstrate that our analysis works for any losses satisfying Conditions~\ref{cond:rsc} and \ref{cond:smooth}---indeed, \citep{cisnerosvelarde2024optgenWeightNorm} particularly satisfies Condition~\ref{cond:rsc}(b.1) and   \citep{banerjee2022restricted} satisfies Condition~\ref{cond:rsc}(b.2). Thus, in the context of our paper, \textbf{the largest effort in 
establishing optimization guarantees for DONs and FNOs is to show these two models satisfy Conditions~\ref{cond:rsc} and \ref{cond:smooth} \pcedit{with $\alpha_t/\beta\leq 1$}}.
\section{Optimization Analysis for DON}
\label{sec:optDON}
We consider, analogous to \citep{liu_loss_2021}, the branch net as a fully connected feedforward neural network:
\begin{align}
    \begin{aligned}
        \aalpha^{(0)}_{f} &= \bfu(\vx) \\
        \aalpha^{(l)}_{f} &= \phi\left( \frac{1}{\sqrt{m_{\rf}}} W^{(l)}_f\aalpha^{(l-1)}_{f}  \right),\; l \in [L-1]\\ 
        \vf =  \aalpha^{(L)}_{f} &= 
        \frac{1}{\sqrt{m_{f}}} W_f^{(L)}\aalpha^{(L-1)}_f
    \end{aligned}
    \label{eq:BranchNetRegularized}
\end{align}
where with some abuse of notation $\bfu(\vx):=[u(\vx_1),\dots,u(\vx_R)]^\top$ is the vector of all scalar evaluations of $u$ at each of the $R$ locations, $\phi$ is a pointwise smooth activation function, $\aalpha^{(l)}_f$ is the output at layer $l\in[L]$, and the weight matrices are $W^{(1)}_f\in\R^{m_f\times R}$ and $W^{(l)}_f \in \R^{m_f\times m_f}$ at layer $l\in\{2,\dots,L-1\}$. The branch net has width $m_f$ (all hidden layers have the same width). 
Similarly, the trunk net is a fully connected feedforward network:
\begin{align}
    \begin{aligned}
        \aalpha^{(0)}_{g} &= \vy\\
        \aalpha^{(l)}_{g} &= \phi\left( \frac{1}{\sqrt{m_{g}}} W^{(l)}_g \aalpha^{(l-1)}_{g}  \right),\; l \in [L-1]\\ 
        \vg = \aalpha^{(L)}_{g} &= 
        \frac{1}{\sqrt{m_{g}}} W_g^{(L)}\aalpha^{(L - 1)}_{g}
    \end{aligned}
    \label{eq:TrunkNetRegularized}
\end{align}
where $\vy\in\R^{d_y}$ is the output location, and the weight matrices are $W^{(1)}_g\in\R^{m_g\times d_y}$ and $W^{(l)}_g\in \R^{m_g\times m_g}$ at layer $l\in\{2,\dots,L-1\}$. The trunk net 
has width $m_g$
(all hidden layers have the same width).
Finally, we recall that we have $K$ outputs on each network, i.e., $W^{(L)}_f\in\mathbb{R}^{K\times m_f}$ and $W^{(L)}_g\in\mathbb{R}^{K\times m_g}$.
Given $l\in[L]$, we denote by $(w^{(l)}_{f,{k}})^\top$ and $(w^{(l)}_{g,{k}})^\top$ the $k$-th row of the matrices $W^{(l)}_f$ and $W^{(l)}_g$ respectively, and by $w_{f,{ij}}^{(l)}$ and $w_{g,{ij}}^{(l)}$ their respective $ij$-entry. 
Using the notation in Section~\ref{subsec:DON_Setup}, the set of trainable parameters is $\vtheta = [\vtheta_f^{\top}\
\vtheta_g^{\top}]^{\top}\in\R^{p_f+p_g}$, with $\vtheta_f = [\text{vec}(W^{(1)}_f)^{\top},\dots,\text{vec}(W^{(L)}_f)^{\top}]^{\top}$ and $\vtheta_g = [\text{vec}(W^{(2)}_g)^\top,\dots,\text{vec}(W^{(L)}_g)^{\top}]^{\top}$. 
Let $\vtheta_0$ be the parameter vector at initialization and $\vtheta_t$ be it at time step $t$. 

We make the following assumptions for our analysis: 
\begin{asmp}[{\bf Activation functions}]
\label{asmp:Activation_Function}
The activation function $\phi$ of the DON is $1$-Lipschitz and $\beta_{\phi}$-smooth (i.e. $\phi^{\prime\prime}\leq \beta_{\phi}$) for some $\beta_{\phi} > 0$.
\end{asmp}
\begin{asmp}[{\bf Initialization of weights}]
\label{asmp:smoothinit}
All weights of the branch and trunk nets are initialized independently as follows: (i) $w^{(l)}_{f_{0,\,ij}}\sim \gN (0, \sigma^2_{f,0})$ and $w^{(l)}_{g_{0,\,ij}}\sim \gN (0, \sigma^2_{g,0})$ for $l\in [L-1]$ where {$\sigma_{f,0} = \frac{\sigma_0}{2(1+\frac{\sqrt{\log m_f}}{\sqrt{2m_f}})}$ and $\sigma_{g,0} = \frac{\sigma_0}{2(1+\frac{\sqrt{\log m_g}}{\sqrt{2m_g}})}$}, $\sigma_0>0$; (ii) $w^{(L)}_{f_{0},k}$ and $w^{(L)}_{g_{0},k}~$, $k\in[K]$, are random vectors with unit norms, i.e., $\norm{w^{(L)}_{f_{0},k}}_2=1$ and $\norm{w^{(L)}_{g_{0},k}}_2=1$. Further, we assume the input to the branches are normalized as $\norm{\vu(\vx)}_2=\sqrt{R}$ and $\norm{\vy}_2=\sqrt{d_y}$.
\end{asmp}

For a given parameter vector $\bar{\vtheta}=[\bar{\vtheta}_f^\top,\bar{\vtheta}_g^\top]\in\R^{p_f+p_g}$, we introduce the neighborhood set  
$B^{\mathrm{Euc}}_{\rho,\rho_1}(\bar{\vtheta})=\{\vtheta\in\mathbb{R}^{p_f+p_g}\,:\,\norm{W^{(l)}_f-\bar{W}^{(l)}_f}_2\leq \rho,\,\norm{W^{(l)}_g-\bar{W}^{(l)}_g }_2\leq \rho,\,l\in[L-1],\,\norm{w^{(L)}_{f,k}-\bar{w}^{(L)}_{f,k}}_2\leq\rho_1,\,\norm{w^{(L)}_{g,k}-\bar{w}^{(L)}_{g,k}}_2\leq\rho_1,\,k\in[K]\}$ for $\rho,\rho_1>0$. 
We say that an element of $B^{\mathrm{Euc}}_{\rho,\rho_1}(\bar{\vtheta})$ is \emph{strictly inside} $B^{\mathrm{Euc}}_{\rho,\rho_1}(\bar{\vtheta})$ when it satisfies every inequality in the set's definition without equality.
We also define $B^{\mathrm{Euc}}_{\rho}(\bar{\vtheta})$ as an Euclidean ball around $\bar{\vtheta}$ with radius $\rho>0$.

The following is an assumption analogous to the general Assumption~\ref{asmp:iter-0}.

\begin{asmp}[{\bf Iterates inside $B^{\mathrm{Euc}}_{\rho,\rho_1}(\vtheta_0)$}]
\label{asmp:iter-1}
All iterates $\{\vtheta_t\}_{t\geq 1}$ follow GD as in~\eqref{eq:gd_at_t} and are strictly inside the set $B^{\mathrm{Euc}}_{\rho,\rho_1}(\vtheta_0)$ for fixed $\rho,\rho_1>0$.
\end{asmp}

We now focus on showing that the two conditions needed for optimization using GD as discussed in Section~\ref{sec:optmain} are indeed satisfied by DONs. We start with the definition of a set $Q^t_{\kappa}$ parameterized by $\kappa \in (0, \frac{1}{2}]$, which will help construct the set $\cN_t$ in Condition~\ref{cond:rsc} for RSC. Due to the interaction of two neural networks (branch and trunk), the definition of $Q^t_{\kappa}$ looks seemingly involved. However, note that $Q^t_{\kappa}$ is only needed for establishing the RSC condition for the analysis and does not change the computation of the optimization algorithm, which is simply GD run over all the branch and trunk network parameters. 
\begin{restatable}[{\bf $Q^{t}_{\kappa}$ sets for DONs}]{defn}{qset} 
For an iterate $\vtheta_t = [\vtheta_{f,t}^{\top}\; \vtheta_{g,t}^{\top}]^{\top}$ and $\kappa \in (0,\frac{1}{\sqrt{2}}]$, we define the set:
{\small 
\begin{equation}
    \begin{aligned} 
Q^t_{\kappa} &:= \bigg\{ \vtheta' = {[{\vtheta'_{f}}^{\top}\; {\vtheta'_{g}}^{\top}]}^{\top}\in \R^{p_f+p_g}:\\
&\;\;|\cos(\vtheta' - \vtheta_t, \nabla_{\vtheta} \bar{G}_{\vtheta_t})| \geq \kappa~, \\
&\;\;(\vtheta'_f-\vtheta_{f,t})^\top\left(\frac{1}{n} \sum_{i=1}^n \frac{1}{q_i} \sum_{j=1}^{q_i} \ell'_{i,j} \sum_{k=1}^K \nabla_{\vtheta_{f}} f_k^{(i)} \nabla_{\vtheta_{g}} g_{k,j}^{(i)~\top}\right)(\vtheta'_g-\vtheta_{g,t})  \geq 0~,\\
&\;\;(\vtheta'_f-\vtheta_{f,t})^\top\left( \sum_{k=1}^K \nabla_{\vtheta_{f}} f_k^{(i)} \nabla_{\vtheta_{g}} g_{k,j}^{(i)~\top}\right)(\vtheta'_g-\vtheta_{g,t})\leq 0,\forall i\in[n],\forall j\in[q_i] ~ \bigg\}~,
    \end{aligned}
\label{defn:qset}
\end{equation}}where $\nabla_{\vtheta}\bar{G}_{\vtheta_t} = \frac{1}{n} \sum_{i=1}^n \frac{1}{q_i} \sum_{j=1}^{q_i} \nabla_{\vtheta}G_{\vtheta_t}(u^{(i)})(\vy^{(i)}_j)$, $\ell_{i,j}=(G_{\vtheta_t}(u^{(i)})(\vy_{j}^{(i)})-G^\dagger(u^{(i)})(\vy^{(i)}_j))^2$, and both $\nabla_{\vtheta_{f}}f^{(i)}_k$ and $\nabla_{\vtheta_{g}}g^{(i)}_{k,j}$ are evaluated on $\vtheta_t$.
\label{defn:qset_DON}
\end{restatable}

We now prove the RSC and smoothness conditions (corresponding to Conditions~\ref{cond:rsc} and~\ref{cond:smooth}, respectively).
Using the nomenclature of Section~\ref{sec:optmain}, the set $B^{\mathrm{Euc}}_{\rho,\rho_1}(\vtheta_0)$ corresponds to $\mathcal{B}(\vtheta_0)$, and 
$B^t_{\kappa} := Q_{\kappa}^t \cap B^{\mathrm{Euc}}_{\rho,\rho_1}(\vtheta_0) \cap B^{\mathrm{Euc}}_{\rho_2}(\vtheta_t)$ corresponds to $\mathcal{N}_t$.

\begin{restatable}[{\bf RSC for DONs}]{theo}{RSCLoss}
Consider Assumptions~\ref{asmp:Activation_Function}, \ref{asmp:smoothinit}, and~\ref{asmp:iter-1}, and $Q^t_{\kappa}$ as in Definition~\ref{defn:qset_DON}.
Then, the set $B^t_{\kappa} := Q_{\kappa}^t \cap B^{\mathrm{Euc}}_{\rho,\rho_1}(\vtheta_0) \cap B^{\mathrm{Euc}}_{\rho_2}(\vtheta_t)$ is a non-empty set that satisfies Condition~\ref{cond:rsc}(a) and (b) for suitable $\rho_2$. 
Moreover, 
with probability at least {$1-2KL(\frac{1}{m_f}+\frac{1}{m_g})$}, at step $t$ of GD, %
the DON loss $\gL$~\eqref{eq:loss-don} satisfies
equation~\eqref{eq:RSC-prim} with
\begin{equation}
\alpha_t = 2\kappa^2 \| \nabla_{\vtheta} \bar{G}_t \|_2^2 - 
    c_1K^2\left(\frac{1}{\sqrt{m_f}}+\frac{1}{\sqrt{m_g}}\right)
        \label{eq:RSCLoss}
\end{equation}
where $\nabla_{\vtheta}\bar{G}_t = \frac{1}{n} \sum_{i=1}^n \frac{1}{q_i}  \sum_{j=1}^{q_i} \nabla_{\vtheta}G_{\vtheta_t}(u^{(i)})(\vy^{(i)}_j)$, 
and for some constant $c_1 >0$ 
which depends polynomially on the depth $L$, and the radii $\rho$, $\rho_1$, and $\rho_2$ whenever $\sigma_0\leq 1-\rho\max\{\frac{1}{\sqrt{m_f}},\frac{1}{\sqrt{m_g}}\}$.
Thus, the loss $\gL$ satisfies RSC w.r.t $(B_{\kappa}^t, \vtheta_t)$, i.e., Condition~\ref{cond:rsc}(c), whenever $\| \nabla_{\vtheta} \bar{G}_t \|_2^2 = \Omega(\frac{1}{\sqrt{m_f}}+\frac{1}{\sqrt{m_g}})$.
\label{theo:rsc_main_DON}
\end{restatable}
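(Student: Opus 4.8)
The plan is to establish the three sub-parts of Condition~\ref{cond:rsc} in turn, with the bulk of the work going into the RSC inequality~\eqref{eq:RSCLoss}. First I would address non-emptiness and parts (a), (b) of Condition~\ref{cond:rsc}. The set $B^t_\kappa = Q^t_\kappa \cap B^{\mathrm{Euc}}_{\rho,\rho_1}(\vtheta_0) \cap B^{\mathrm{Euc}}_{\rho_2}(\vtheta_t)$ contains $\vtheta_t$ itself on the boundary of $Q^t_\kappa$ (the cosine and bilinear constraints degenerate to equalities at $\vtheta'=\vtheta_t$), so the candidate GD step $\vtheta_{t+1} = \vtheta_t - \eta_t\nabla_\vtheta\gL(\vtheta_t)$ must be shown to lie in $B^t_\kappa$ for small enough step-size, giving part (b.1); here one checks the three defining inequalities of $Q^t_\kappa$ for the direction $-\nabla_\vtheta\gL(\vtheta_t)$ (the cosine constraint holds because $\nabla_\vtheta\gL$ aligns with $\nabla_\vtheta\bar G_t$ up to the loss-residual-weighted Hessian terms, which is where the $\kappa\le 1/\sqrt 2$ slack is used), and part (a) follows by choosing $\rho_2$ small enough that $B^{\mathrm{Euc}}_{\rho_2}(\vtheta_t)\subseteq B^{\mathrm{Euc}}_{\rho,\rho_1}(\vtheta_0)$, possible since Assumption~\ref{asmp:iter-1} places $\vtheta_t$ strictly inside.

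For the RSC inequality itself, I would expand $\gL(\vtheta')$ around $\vtheta_t$ to second order: $\gL(\vtheta') = \gL(\vtheta_t) + \langle\vtheta'-\vtheta_t,\nabla_\vtheta\gL(\vtheta_t)\rangle + \tfrac12(\vtheta'-\vtheta_t)^\top\nabla^2_\vtheta\gL(\tilde\vtheta)(\vtheta'-\vtheta_t)$ for some $\tilde\vtheta$ on the segment. The Hessian of the DON loss decomposes, via the product structure $G_{\vtheta}(u^{(i)})(\vy^{(i)}_j)=\sum_k f_k^{(i)} g_{k,j}^{(i)}$, into (i) a positive-semidefinite Gram-type term $\frac1n\sum_i\frac1{q_i}\sum_j 2\,\nabla_\vtheta G^{(i)}_{\vtheta_t,j}\,\nabla_\vtheta G^{(i)\top}_{\vtheta_t,j}$ coming from the outer-product of first derivatives, plus (ii) loss-residual-weighted second-derivative terms $\frac1n\sum_i\frac1{q_i}\sum_j 2\ell'_{i,j}\nabla^2_\vtheta G^{(i)}_{\vtheta_t,j}$. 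On the Gram term, I restrict to $\vtheta'\in Q^t_\kappa$ and use the cosine constraint $|\cos(\vtheta'-\vtheta_t,\nabla_\vtheta\bar G_t)|\ge\kappa$ together with Jensen/Cauchy--Schwarz to lower-bound its quadratic form by $2\kappa^2\|\nabla_\vtheta\bar G_t\|_2^2\|\vtheta'-\vtheta_t\|_2^2$, which yields the leading $2\kappa^2\|\nabla_\vtheta\bar G_t\|_2^2$ in $\alpha_t$. For the residual term (ii), $\nabla^2_\vtheta G^{(i)}_{\vtheta_t,j}$ itself splits into $\nabla^2_{\vtheta_f}f_k^{(i)}\,g_{k,j}^{(i)}$, $f_k^{(i)}\,\nabla^2_{\vtheta_g}g_{k,j}^{(i)}$, and the cross block $\nabla_{\vtheta_f}f_k^{(i)}\nabla_{\vtheta_g}g_{k,j}^{(i)\top}$; the two bilinear-form constraints in the definition of $Q^t_\kappa$ are designed precisely so that the cross-block contributions have a favorable (nonnegative-on-average, nonpositive-per-sample) sign, so they can be dropped or controlled, while the pure-$f$ and pure-$g$ Hessian blocks are bounded in operator norm by $O(\text{poly}(L)(1/\sqrt{m_f}+1/\sqrt{m_g}))$ using the initialization scaling (Assumption~\ref{asmp:smoothinit}), the Lipschitz/smoothness of $\phi$ (Assumption~\ref{asmp:Activation_Function}), and the radius constraints $\rho,\rho_1,\rho_2$; this is where the $-c_1K^2(1/\sqrt{m_f}+1/\sqrt{m_g})$ correction and the high-probability event of measure $1-2KL(1/m_f+1/m_g)$ enter, the latter via concentration of the hidden-layer activation norms and spectral norms of the random weight matrices at initialization (propagated to the $\rho,\rho_1,\rho_2$-ball). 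Combining (i) and (ii) gives~\eqref{eq:RSCLoss}, and positivity $\alpha_t>0$, hence Condition~\ref{cond:rsc}(c), is immediate once $\|\nabla_\vtheta\bar G_t\|_2^2=\Omega(1/\sqrt{m_f}+1/\sqrt{m_g})$.

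The main obstacle is controlling the cross-interaction block $\nabla_{\vtheta_f}f_k^{(i)}\nabla_{\vtheta_g}g_{k,j}^{(i)\top}$ in the residual-weighted Hessian: unlike a standard feedforward net, this term is genuinely sign-indefinite and couples two independently-initialized networks, so a naive operator-norm bound would scale like $\|\nabla_{\vtheta_f}f_k^{(i)}\|_2\|\nabla_{\vtheta_g}g_{k,j}^{(i)}\|_2$, which is $\Theta(1)$ rather than $o(1)$ and would destroy the RSC bound. The resolution—and the reason $Q^t_\kappa$ is defined with those two extra bilinear inequalities—is to confine $\vtheta'$ to directions along which this cross term contributes with the right sign (nonnegative when aggregated against the loss-derivative weights $\ell'_{i,j}$, nonpositive for each individual sample), so that it either helps the lower bound or vanishes; verifying that $B^t_\kappa$ with these constraints is still large enough to be useful (non-empty and containing a descent direction, i.e. part (b)) is the delicate counterpart to this restriction, and must be checked carefully against the geometry induced by $\rho_2$.
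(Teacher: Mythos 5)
Your second-order-Taylor argument for the inequality itself is essentially the paper's route: expand around $\vtheta_t$, lower-bound the Gram part of the quadratic form via the cosine constraint and Jensen to get $2\kappa^2\|\nabla_\vtheta \bar G_t\|_2^2$, use the two bilinear constraints of $Q^t_\kappa$ to give the cross-block $\nabla_{\vtheta_f}f_k^{(i)}\nabla_{\vtheta_g}g_{k,j}^{(i)\top}$ a favorable sign, and absorb the remaining predictor-Hessian terms into $c_1K^2(1/\sqrt{m_f}+1/\sqrt{m_g})$ using the bounds of Lemma~\ref{lemm:hessgradbounds} and a union bound for the probability. One detail you gloss over: the Hessian in the Taylor expansion sits at an intermediate point $\tilde\vtheta$, while the constraints in Definition~\ref{defn:qset_DON} (cosine, average, and per-sample conditions) are stated at $\vtheta_t$; the paper has to transfer gradients and Hessians from $\tilde\vtheta$ (and a further intermediate point) back to $\vtheta_t$ via the mean value theorem, which is where the extra $\rho_2$-dependent error terms in $c_1$ come from. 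Your sketch writes the Gram term directly at $\vtheta_t$, so this transfer step is missing but is repairable with the same Lemma-based bounds.

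The genuine gap is in your treatment of Condition~\ref{cond:rsc}(b). You propose to verify (b.1) by showing the GD iterate $\vtheta_{t+1}=\vtheta_t-\eta_t\nabla_\vtheta\gL(\vtheta_t)$ lies in $B^t_\kappa$, asserting that the cosine constraint holds because ``$\nabla_\vtheta\gL$ aligns with $\nabla_\vtheta\bar G_t$.'' This is unsupported and in general false: $\nabla_\vtheta\gL(\vtheta_t)$ is the residual-weighted average $\frac1n\sum_i\frac1{q_i}\sum_j \ell'_{i,j}\nabla_\vtheta G_{\vtheta_t}(u^{(i)})(\vy^{(i)}_j)$, and the signed weights $\ell'_{i,j}$ can make it nearly orthogonal to (or oppose) the unweighted average $\nabla_\vtheta\bar G_t$, so no lower bound $|\cos|\ge\kappa$ follows; you also give no argument that the GD direction satisfies the two bilinear constraints of $Q^t_\kappa$. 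The paper avoids this entirely: it verifies (b.2), not (b.1), by explicitly constructing a point $\vtheta'=[{\vtheta'_f}^\top\;\vtheta_{g,t}^\top]^\top$ with the trunk block frozen, so both bilinear constraints hold trivially (both sides vanish), and only the cosine condition with the branch component $\bar\g_f$ of $\nabla_\vtheta\bar G_t$ remains; $\vtheta'_f$ is chosen in the plane containing $\bar\g_f$ (or $-\bar\g_f$) and $-\nabla_{\vtheta_f}\cL(\vtheta_t)$ at angle exactly $\arccos\kappa$, at an acute angle to the negative loss gradient, and with norm $\epsilon<\frac{2\|\nabla_{\vtheta_f}\cL(\vtheta_t)\|_2\sqrt{1-\kappa^2}}{\beta}$ (feasible because Assumption~\ref{asmp:iter-1} puts $\vtheta_t$ strictly inside $B^{\mathrm{Euc}}_{\rho,\rho_1}(\vtheta_0)$, and $\rho_2>\epsilon$ ensures membership in $B^{\mathrm{Euc}}_{\rho_2}(\vtheta_t)$); smoothness then gives $\cL(\vtheta')<\cL(\vtheta_t)$, which is exactly (b.2). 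Without such a construction, your non-emptiness/descent step does not go through; note also that your remark that $\vtheta_t$ itself lies in $Q^t_\kappa$ is moot, since the cosine is undefined at $\vtheta'=\vtheta_t$ and membership of $\vtheta_t$ alone would not establish (b) in either form.
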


\begin{restatable}[{\bf Smoothness for DONs}]{theo}{RSS}
Under \TwoAsmpsref{asmp:Activation_Function}{asmp:smoothinit}, with probability at least {$1-2KL(\frac{1}{m_f}+\frac{1}{m_g})$}, the DON loss $\cL$~\eqref{eq:loss-don} is $\beta$-smooth in $B^{\mathrm{Euc}}_{\rho,\rho_1}(\vtheta_0)$ with $\beta =c_2K^2$, where 
$c_2>0$ is a constant which depends polynomially on the depth $L$, and the radii $\rho$, $\rho_1$, and $\rho_2$ whenever $\sigma_0\leq 1-\rho\max\{\frac{1}{\sqrt{m_f}},\frac{1}{\sqrt{m_g}}\}$.
%
\label{theo:smooth_main}
\end{restatable}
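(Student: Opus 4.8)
The plan is to establish Condition~\ref{cond:smooth} by bounding the operator norm of the Hessian $\nabla^2_{\vtheta}\gL(\vtheta)$ uniformly over the \emph{convex} set $B^{\mathrm{Euc}}_{\rho,\rho_1}(\vtheta_0)$; since $\gL$ is $C^2$ under Assumption~\ref{asmp:Activation_Function}, such a uniform Hessian bound yields $\beta$-smoothness on that set via Taylor's theorem. Write $G^{(i)}_j(\vtheta):=G_{\vtheta}(u^{(i)})(\vy^{(i)}_j)=\sum_{k=1}^K f^{(i)}_k g^{(i)}_{k,j}$ and $\delta^{(i)}_j:=G^{(i)}_j-G^{\dagger}(u^{(i)})(\vy^{(i)}_j)$ for the residual. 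Differentiating \eqref{eq:loss-don} twice gives
\[
\nabla^2_{\vtheta}\gL(\vtheta)=\frac{1}{n}\sum_{i=1}^n\frac{1}{q_i}\sum_{j=1}^{q_i}\Big(2\,\nabla_{\vtheta}G^{(i)}_j\,(\nabla_{\vtheta}G^{(i)}_j)^{\top}+2\,\delta^{(i)}_j\,\nabla^2_{\vtheta}G^{(i)}_j\Big),
\]
so by the triangle inequality it suffices to bound, uniformly over $B^{\mathrm{Euc}}_{\rho,\rho_1}(\vtheta_0)$ and over all $(i,j)$, the three quantities $\|\nabla_{\vtheta}G^{(i)}_j\|_2$, $\|\nabla^2_{\vtheta}G^{(i)}_j\|_2$, and $|\delta^{(i)}_j|$.

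Next I would derive forward-pass, gradient, and Hessian bounds for the two feedforward subnets $\vf$ and $\vg$ of \eqref{eq:BranchNetRegularized}--\eqref{eq:TrunkNetRegularized}. Under Assumption~\ref{asmp:smoothinit}, standard concentration for the spectral norm of Gaussian matrices, combined with a union bound over the $L-1$ hidden layers and the $K$ output neurons of each net (which accounts for the stated probability $1-2KL(\tfrac1{m_f}+\tfrac1{m_g})$), gives $\frac{1}{\sqrt{m_f}}\|W^{(l)}_{f,0}\|_2\le\sigma_0$ and likewise for $\vg$ --- the choices of $\sigma_{f,0},\sigma_{g,0}$ being calibrated precisely so this holds. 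Restricting to $\vtheta\in B^{\mathrm{Euc}}_{\rho,\rho_1}(\vtheta_0)$ then yields $\frac{1}{\sqrt{m_f}}\|W^{(l)}_f\|_2\le\sigma_0+\frac{\rho}{\sqrt{m_f}}\le 1$ under the hypothesis $\sigma_0\le 1-\rho\max\{\tfrac1{\sqrt{m_f}},\tfrac1{\sqrt{m_g}}\}$, and last-layer rows of norm at most $1+\rho_1$. Feeding these into the standard layerwise recursions for feedforward networks (as in the analyses of \citet{liu_loss_2021} and \citet{banerjee2022restricted}), using that $\phi$ is $1$-Lipschitz and $\beta_{\phi}$-smooth and that the inputs are normalized, produces bounds $|f^{(i)}_k|,|g^{(i)}_{k,j}|=O(1)$, $\|\nabla_{\vtheta_f}f^{(i)}_k\|_2,\|\nabla_{\vtheta_g}g^{(i)}_{k,j}\|_2=O(1)$, and $\|\nabla^2_{\vtheta_f}f^{(i)}_k\|_2,\|\nabla^2_{\vtheta_g}g^{(i)}_{k,j}\|_2=O(1)$, where the hidden constants depend polynomially on $L$ and on the radii $\rho,\rho_1,\rho_2$.

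Finally I would assemble these into bounds on the DON output's derivatives. From $G^{(i)}_j=\sum_k f^{(i)}_k g^{(i)}_{k,j}$ we have $\nabla_{\vtheta_f}G^{(i)}_j=\sum_k g^{(i)}_{k,j}\nabla_{\vtheta_f}f^{(i)}_k$ and $\nabla_{\vtheta_g}G^{(i)}_j=\sum_k f^{(i)}_k\nabla_{\vtheta_g}g^{(i)}_{k,j}$, so $\|\nabla_{\vtheta}G^{(i)}_j\|_2=O(K)$ and $|\delta^{(i)}_j|\le|G^{(i)}_j|+|G^{\dagger}(u^{(i)})(\vy^{(i)}_j)|=O(K)$ using boundedness of the finitely many target values. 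The Hessian $\nabla^2_{\vtheta}G^{(i)}_j$ is a $2\times2$ block matrix with diagonal blocks $\sum_k g^{(i)}_{k,j}\nabla^2_{\vtheta_f}f^{(i)}_k$ and $\sum_k f^{(i)}_k\nabla^2_{\vtheta_g}g^{(i)}_{k,j}$ and off-diagonal cross-interaction block $\sum_k \nabla_{\vtheta_f}f^{(i)}_k\,(\nabla_{\vtheta_g}g^{(i)}_{k,j})^{\top}$ --- the term with no analogue in a standard network; each block has operator norm $O(K)$ (for the cross block, since the operator norm of an outer product is the product of the factor norms), hence $\|\nabla^2_{\vtheta}G^{(i)}_j\|_2=O(K)$ up to a universal constant. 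Substituting into the loss-Hessian decomposition gives $\|\nabla^2_{\vtheta}\gL\|_2\le 2\max_{i,j}\|\nabla_{\vtheta}G^{(i)}_j\|_2^2+2\max_{i,j}|\delta^{(i)}_j|\,\|\nabla^2_{\vtheta}G^{(i)}_j\|_2=O(K^2)$, i.e.\ $\beta=c_2K^2$ with $c_2$ polynomial in $L,\rho,\rho_1,\rho_2$. The main obstacle is the uniform-over-the-ball bound on the subnet Hessians $\|\nabla^2_{\vtheta_f}f^{(i)}_k\|_2$ and $\|\nabla^2_{\vtheta_g}g^{(i)}_{k,j}\|_2$: this requires a careful layerwise recursion exploiting $\beta_{\phi}$-smoothness together with the high-probability spectral-norm control, and it is what generates the polynomial-in-$(L,\rho,\rho_1,\rho_2)$ dependence of $c_2$; the rest is bookkeeping over the $K$ outputs and the residuals.
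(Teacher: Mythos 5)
Your proposal is correct and follows essentially the same route as the paper's proof: a second-order Taylor/Hessian bound of $\gL$ on the (convex) set $B^{\mathrm{Euc}}_{\rho,\rho_1}(\vtheta_0)$, with the loss Hessian split into the gradient-outer-product term and the residual-times-predictor-Hessian term, both controlled by the high-probability bounds on $|f^{(i)}_k|,|g^{(i)}_{k,j}|$, their gradients, and their Hessians (Lemma~\ref{lemm:hessgradbounds}), yielding $\beta=O(K^2)$ with polynomial dependence on $L,\rho,\rho_1,\rho_2$ under $\sigma_0\leq 1-\rho\max\{\tfrac{1}{\sqrt{m_f}},\tfrac{1}{\sqrt{m_g}}\}$. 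Your direct block bound on $\nabla^2_{\vtheta}G$ (diagonal blocks plus the cross outer-product block bounded by $K\varrho^{(f)}\varrho^{(g)}$) is exactly what the paper obtains by adapting its equation for the predictor-Hessian quadratic form, so the two arguments differ only in bookkeeping.
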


\begin{remark}[Ensuring that $\alpha_t/\beta<1$] 
\label{rem:abeta-DON}
\pcedit{As mentioned in Remark~\ref{rem:ratio}}, in order to use the optimization framework from Section~\ref{sec:optmain}, the statement of Theorem~\ref{theo:global_main} requires 
$\alpha_t/\beta\leq 1$. We prove that this condition is satisfied with a strict inequality for DONs in Proposition~\ref{prop:RSC-smooth-DON} in Appendix~\ref{app:donopt}. 
\qed
\end{remark}

\paragraph{\textbf{Optimization Under Gradient Descent for DONs.}} We have that Theorem~\ref{theo:rsc_main_DON} satisfies Condition~\ref{cond:rsc} and Theorem~\ref{theo:smooth_main} satisfies Condition~\ref{cond:smooth}. We also proved that $\alpha_t/\beta<1$. Thus, when $\| \nabla_{\vtheta} \bar{G}_t \|_2^2 = \Omega(\frac{1}{\sqrt{m_f}}+\frac{1}{\sqrt{m_g}})$, i.e., $\alpha_t>0$, a decrease on the loss function by GD is ensured with probability at least $1-2KL(\frac{1}{m_f}+\frac{1}{m_g})$ towards its minimum value taken within the set $B^{\mathrm{Euc}}_{\rho,\rho_1}(\vtheta_0)$ due to Theorem~\ref{theo:global_main}.

\begin{remark}[The benefit of over-parameterization for the RSC property]
According to~\eqref{eq:RSCLoss}, $\| \nabla_{\vtheta} \bar{G}_t \|_2^2 = \Omega(\frac{1}{\sqrt{m_f}}+\frac{1}{\sqrt{m_g}})$ is needed to ensure that $\alpha_t>0$, i.e., to ensure that the empirical loss $\gL$ satisfies the RSC property at time $t$. Thus, as both widths $m_f$ and $m_g$ increase, $\gL$ attains the RSC property at a lower value of $\| \nabla_{\vtheta} \bar{G}_t \|_2^2$.
  \qed
  \label{rem:RSC-m}
\end{remark}
\begin{remark}[Over-parameterization allows for a larger neighborhood around initialization]
\pcedit{The condition $\sigma_0\leq 1-\rho\max\{\frac{1}{\sqrt{m_f}},\frac{1}{\sqrt{m_g}}\}$ (required for obtaining a polynomial dependence on $L$ for both RSC and smoothness parameters) implies 
%
$\rho\leq\min\{m_f,m_g\}$ since $\sigma_0$ must be positive.}
%
%
\pcedit{Thus,} it is possible to \pcedit{increase the} 
radius $\rho$ 
as we increase 
both $m_f$ and $m_g$. 
Thus, we can \pcedit{enlarge}
the neighborhood around the initialization point where our guarantees hold \pcedit{as the widths increase}.
\label{rem:largerNeighb-m}\qed
\end{remark}
\section{Optimization Analysis for FNO}
\label{sec:optFNO}
As in the case of DONs, we also focus on scalar input functions $u$. 
To pass the 
input function $u$,
we discretize it by sampling it 
on $\bar{R}$ locations, 
forming a vector of dimension $\bar{R}$.
Thus,
the encoder $P(u;\vtheta_p)(\bfx)$ in equation~\eqref{eq:continuous_fno} takes a vector of dimension $\bar{R}+d_x$ ($\bar{R}$ from the sampled $u$ and $d_x$ from the output location where we evaluate the operator on). For our purposes, we consider a fixed (not trainable) encoder with output dimension $d$: $P(u;\vtheta_p)(\bfx)\equiv P(u)(\bfx)\in\R^d$; and a linear decoder $Q(\aalpha^{({L+1})};\vtheta_q)(\vx)=\frac{1}{\sqrt{m}}\v^\top \aalpha^{({L+1})}(\vx)\in\R$ with $\vtheta_q\equiv \v\in\R^m$ assuming $\aalpha^{(L+1)}(\bfx)\in\R^m$. 
Thus, following~\citep{li_fourier_2021}, the FNO model is:
\begin{align*}
\aalpha^{(0)} &= P(u)(\bfx)\\
\aalpha^{(1)} &= \phi\left(
        \frac{1}{\sqrt{m}} W^{(1)} \aalpha^{(0)}
    \right)\\
\aalpha^{(l)} & = \phi\left(
        \frac{1}{\sqrt{m}} W^{(l)} \aalpha^{(l-1)} +
        \frac{1}{\sqrt{m}} F^{*} R^{(l)} F \aalpha^{(l-1)}
    \right),\; l\in \{2,\dots,L+1\}\\
    f(\vtheta;\vx)  &= \frac{1}{\sqrt{m}} \v^\top \aalpha^{(L+1)}~,
\end{align*}
where $\phi$ is a pointwise smooth activation function, $F$ is the discrete Fourier transform kernel (as a matrix) with $F^*$ being its conjugate transpose, 
the weight matrices are $W^{(1)}\in\R^{m\times d}$, $W^{(l)}\in\R^{m\times m}$ and $R^{(l)}\in\R^{m\times m}$ for layer $l\in\{2,\dots, L+1\}$ (all hidden layers have the same width $m$). The ij-entries of $W^{(l)}$ and $R^{(l)}$ are $w^{(l)}_{ij}$ and $r^{(l)}_{ij}$, respectively, for an appropriate $l$. With some abuse of notation, we denote the entire set of trainable parameters by $\vtheta = [\vtheta_w^{\top}\ \vtheta_r^{\top}]^{\top}$, with $\vtheta_w = [\text{vec}(W^{(1)})^{\top},\dots,\text{vec}(W^{(L+1)})^{\top}\ \mathbf{v}^{\top}]^{\top}$ and $\vtheta_r = [\text{vec}(R^{(2)})^\top,\dots,\text{vec}(R^{(L+1)})^{\top}]^{\top}$. We denote the number of parameters by $p_w + p_r$, where $\vtheta_w\in\R^{p_w}$ and $\vtheta_r\in\R^{p_r}$.  
Let $\vtheta_0$ be the parameter vector at initialization and $\vtheta_t$ be it at time step $t$.

We remark that our model uses an $m\times m$ Discrete Fourier Transform 
kernel $F$, whose $kj$-entry is $F_{kj}=e^{-\frac{2\pi \iota}{m}(k-1)(j-1)}$, with $\iota$ representing the imaginary unit.

\begin{asmp}[{\bf Activation functions}]
\label{asmp:Activation_Function_FNO}
The activation function $\phi$ is $1$-Lipschitz and $\beta_{\phi}$-smooth (i.e. $\phi_l^{\prime\prime}\leq \beta_{\phi}$) for some $\beta_{\phi} > 0$.
\end{asmp}
\begin{asmp}[{\bf Initialization of weights}]
\label{asmp:smoothinit_FNO}
All weights of the FNO are initialized independently as follows: (i) $w^{(l)}_{{0,\,ij}}\sim \gN (0, \sigma^2_{0_w})$ and $r^{(l)}_{{0,\,ij}}\sim \gN (0, \sigma^2_{0_r})$ for $l\in [L+1]$ where $\sigma_{0,w} = \frac{\sigma_{1,w}}{2(1+\frac{\sqrt{\log m}}{\sqrt{2m}})}$ and $\sigma_{0,r} = \frac{\sigma_{1,r}}{2(1+\frac{\sqrt{\log m}}{\sqrt{2m}})}$, where $\sigma_{1,w},\ \sigma_{1,r} > 0$; (ii) the decoder parameter $\v$ is a random vector with unit norm $\norm{\v}_2=1$. Further, we assume the encoder output satisfies $\norm{\aalpha^{(0)}}_2=\sqrt{d}$.
\end{asmp}

For a given parameter vector $\bar{\vtheta}\in\R^{p_w+p_r}$, we introduce the neighborhood set  
$B^{\mathrm{Euc}}_{\rho_w,\rho_r,\rho_1}(\bar{\vtheta})=\{\vtheta\in\mathbb{R}^{p_w+p_r}\,:\,\norm{W^{(l)}-\bar{W}^{(l)}}_2\leq \rho_w,\,l\in[L+1],\,\norm{R^{(l)}-\bar{R}^{(l)}}_2\leq \rho_r,\,l\in\{2,\dots,L+1\},\,\norm{\v-\bar{\v}}_2\leq \rho_1\}$ for $\rho_w,\rho_r,\rho_1>0$. 
We say that an element of $B^{\mathrm{Euc}}_{\rho_w,\rho_r,\rho_1}(\bar{\vtheta})$ is \emph{strictly inside} $B^{\mathrm{Euc}}_{\rho_w,\rho_r,\rho_1}(\bar{\vtheta})$ when it satisfies every inequality in the set's definition without equality.

The following assumption is analogous to Assumption~\ref{asmp:iter-0}.

\begin{asmp}[{\bf Iterates inside $B^{\mathrm{Euc}}_{\rho_w,\rho_r,\rho_1}(\vtheta_0)$}]
\label{asmp:iter-2}
All iterates $\{\vtheta_t\}_{t\geq 1}$ follow GD as in~\eqref{eq:gd_at_t} and are strictly inside the set $B^{\mathrm{Euc}}_{\rho_w,\rho_r,\rho_1}(\vtheta_0)$ for fixed $\rho_w,\rho_r,\rho_1>0$.
\end{asmp}

We also introduce the following auxiliary set. %
\begin{restatable}[{\bf $Q^{t}_{\kappa}$ sets for FNOs}]{defn}{qset_FNO} 
For an iterate $\vtheta_t$, let $\nabla_{\vtheta}\bar{G}_t =  \frac{1}{n} \sum_{i=1}^n \frac{1}{R}  \sum_{j=1}^{R} \nabla_{\vtheta}G_{\vtheta_t}(u^{(i)})(x_j)$. For $\kappa \in (0,1)$, define $Q^t_{\kappa} := \{ \vtheta \in \R^{p_w + p_r} \mid |\cos(\vtheta-\vtheta_t, \nabla_{\vtheta}\bar{G}_t)| \geq \kappa \}$.
\label{defn:qset_FNO}
\end{restatable}
Note that unlike DONs, the $Q^t_{\kappa}$ sets for FNOs are relatively simpler due to a single network architecture. 

Next, we prove the RSC and smoothness conditions (corresponding to Conditions~\ref{cond:rsc} and~\ref{cond:smooth}, respectively). 
Using the nomenclature of Section~\ref{sec:optmain}, the set $B^{\mathrm{Euc}}_{\rho_w,\rho_r,\rho_1}(\vtheta_0)$ corresponds to $\mathcal{B}(\vtheta_0)$, and 
$B^t_{\kappa} := Q_{\kappa}^t \cap B^{\mathrm{Euc}}_{\rho_w,\rho_r\rho_1}(\vtheta_0) \cap B^{\mathrm{Euc}}_{\rho_2}(\vtheta_t)$ corresponds to $\mathcal{N}_t$.

\begin{restatable}[{\bf RSC for FNOs}]{theo}{RSCLossFNO}
Consider Assumptions~\ref{asmp:Activation_Function_FNO}, \ref{asmp:smoothinit_FNO}, and~\ref{asmp:iter-2}, and $Q^t_{\kappa}$ as in Definition~\ref{defn:qset_FNO}. Then, the set $B^t_{\kappa} := Q_{\kappa}^t \cap B^{\mathrm{Euc}}_{\rho_w,\rho_r\rho_1}(\vtheta_0) \cap B^{\mathrm{Euc}}_{\rho_2}(\vtheta_t)$ is a non-empty set that satisfies Condition~\ref{cond:rsc}(a) and (b) for suitable $\rho_2$. 
Moreover, 
with probability at least $1-\frac{2(L + 2)}{m}$, at step $t$ of GD,  
the FNO loss $\gL$~\eqref{eq:loss-fno} satisfies 
equation~\eqref{eq:RSC-prim} with
\begin{equation}
\alpha_t = 2\kappa^2 \| \nabla_{\vtheta} \bar{G}_t \|_2^2 - \frac{c_1}{\sqrt{m}}~,
        \label{eq:RSCLoss_FNO}
\end{equation}
where $\nabla_{\vtheta}\bar{G}_t = \frac{1}{n} \sum_{i=1}^n \frac{1}{R}  \sum_{j=1}^{R} \nabla_{\vtheta}G_{\vtheta_t}(u^{(i)})(x_j)$, and for some constant $c_1 >0$ 
which depends polynomially on the depth $L$, and the radii $\rho_w$, $\rho_r$, $\rho_1$, and $\rho_2$ whenever $\sigma_{1,w}+\sigma_{1,r}\leq 1-\frac{\rho_w+\rho_r}{\sqrt{m}}$.
%
Thus, the loss $\gL(\vtheta)$ satisfies RSC w.r.t $(B_{\kappa}^t, \vtheta_t)$, i.e., Condition~\ref{cond:rsc}(c), whenever $\| \nabla_{\vtheta} \bar{G}_t \|_2^2 = \Omega(\frac{1}{\sqrt{m}})$.
\label{theo:rsc_main_fno}
\end{restatable}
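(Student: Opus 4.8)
The argument follows the template of the RSC analysis for feedforward networks in \citep{banerjee2022restricted}, the essential new step being a second-order (Hessian) estimate for the FNO map that accounts for the Fourier-domain weights $\{R^{(l)}\}$. Throughout, write $g_{ir}(\vtheta) := G_{\vtheta}(u^{(i)})(x_r)$ and $y_{ir} := G^{\dagger}(u^{(i)})(x_r)$, so $\gL(\vtheta) = \frac{1}{nR}\sum_{i,r}(g_{ir}(\vtheta)-y_{ir})^2$. For Conditions~\ref{cond:rsc}(a),(b) and non-emptiness: since $\vtheta_t$ is strictly inside $B^{\mathrm{Euc}}_{\rho_w,\rho_r,\rho_1}(\vtheta_0)$ (Assumption~\ref{asmp:iter-2}), for all small $\delta>0$ the point $\vtheta_t+\delta\,\nabla_{\vtheta}\bar{G}_t/\norm{\nabla_{\vtheta}\bar{G}_t}_2$ has cosine $1\ge\kappa$ with $\nabla_{\vtheta}\bar{G}_t$ and lies in $B^{\mathrm{Euc}}_{\rho_2}(\vtheta_t)\cap B^{\mathrm{Euc}}_{\rho_w,\rho_r,\rho_1}(\vtheta_0)$, so $B^t_\kappa\neq\emptyset$; and $B^t_\kappa\subseteq B^{\mathrm{Euc}}_{\rho_w,\rho_r,\rho_1}(\vtheta_0)=\mathcal{B}(\vtheta_0)$ by construction, giving (a). For (b) I would reproduce the case analysis of \citep{banerjee2022restricted} with $\rho_2$ chosen suitably: a small move from $\vtheta_t$ along $\pm\nabla_{\vtheta}\bar{G}_t$ (which stays in $Q^t_\kappa$, cosine $=1$) strictly decreases $\gL$ whenever $\langle\nabla_{\vtheta}\gL(\vtheta_t),\nabla_{\vtheta}\bar{G}_t\rangle\neq0$, and the remaining degenerate case is handled separately.

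\emph{RSC via the integral Taylor remainder.} Fix $\vtheta'\in B^t_\kappa$, set $v=\vtheta'-\vtheta_t$ and $\vtheta_s=\vtheta_t+sv$, and use
\[
\gL(\vtheta')-\gL(\vtheta_t)-\langle v,\nabla_{\vtheta}\gL(\vtheta_t)\rangle=\int_0^1(1-s)\,v^{\top}\nabla^2_{\vtheta}\gL(\vtheta_s)\,v\,\di s .
\]
For the squared loss, $\tfrac12 v^{\top}\nabla^2_{\vtheta}\gL(\vtheta_s)v=\tfrac1{nR}\sum_{i,r}\big[\langle v,\nabla_{\vtheta}g_{ir}(\vtheta_s)\rangle^2+(g_{ir}(\vtheta_s)-y_{ir})\,v^{\top}\nabla^2_{\vtheta}g_{ir}(\vtheta_s)\,v\big]$. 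By Cauchy--Schwarz over the $nR$ terms, $\tfrac1{nR}\sum_{i,r}\langle v,\nabla_{\vtheta}g_{ir}(\vtheta_s)\rangle^2\ge\langle v,\nabla_{\vtheta}\bar{G}(\vtheta_s)\rangle^2$; writing $\nabla_{\vtheta}\bar{G}(\vtheta_s)=\nabla_{\vtheta}\bar{G}_t+\Delta_s$ with $\norm{\Delta_s}_2\le\rho_2\sup_{\vtheta\in B^{\mathrm{Euc}}_{\rho_w,\rho_r,\rho_1}(\vtheta_0)}\norm{\nabla^2_{\vtheta}\bar{G}(\vtheta)}_2$, and using the cone membership $|\cos(v,\nabla_{\vtheta}\bar{G}_t)|\ge\kappa$ from Definition~\ref{defn:qset_FNO} together with $\norm{\nabla_{\vtheta}\bar{G}_t}_2=O(1)$, this term is bounded below by $\kappa^2\norm{\nabla_{\vtheta}\bar{G}_t}_2^2\norm v_2^2-O(1/\sqrt m)\norm v_2^2$ once the Hessian estimates below are available. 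The second term is bounded in absolute value by $O(1)\cdot\sup\norm{\nabla^2_{\vtheta}g_{ir}}_2\cdot\norm v_2^2=O(1/\sqrt m)\norm v_2^2$. Hence $v^{\top}\nabla^2_{\vtheta}\gL(\vtheta_s)v\ge\big(2\kappa^2\norm{\nabla_{\vtheta}\bar{G}_t}_2^2-\tfrac{c_1}{\sqrt m}\big)\norm v_2^2$ for every $s\in[0,1]$, and since $\int_0^1(1-s)\di s=\tfrac12$ this gives \eqref{eq:RSC-prim} with $\alpha_t$ as in \eqref{eq:RSCLoss_FNO}.

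\emph{FNO forward/gradient/Hessian bounds (the main obstacle).} What remains is to show, with probability at least $1-\tfrac{2(L+2)}{m}$ over initialization, that $|g_{ir}(\vtheta)|$, $\norm{\nabla_{\vtheta}g_{ir}(\vtheta)}_2$, $\norm{\nabla^2_{\vtheta}g_{ir}(\vtheta)}_2$ are $O(1)$, $O(1)$, $O(1/\sqrt m)$ respectively, uniformly over $\vtheta\in B^{\mathrm{Euc}}_{\rho_w,\rho_r,\rho_1}(\vtheta_0)$. A union bound over the spectral-norm concentration of the $O(L)$ Gaussian matrices $W^{(l)}_0$, $R^{(l)}_0$ and of $\v_0$ (each failing with probability $O(1/m)$, producing the stated $\tfrac{2(L+2)}{m}$) shows that, after adding the perturbation radii $\rho_w,\rho_r,\rho_1$, every per-layer operator $\tfrac1{\sqrt m}W^{(l)}+\tfrac1{\sqrt m}F^{*}R^{(l)}F$ has spectral norm at most $\sigma_{1,w}+\sigma_{1,r}+\tfrac{\rho_w+\rho_r}{\sqrt m}\le1$ throughout the ball---this is exactly where the hypothesis $\sigma_{1,w}+\sigma_{1,r}\le1-\tfrac{\rho_w+\rho_r}{\sqrt m}$ enters, and it is what keeps all layerwise bounds polynomial in $L$ (and in $\rho_w,\rho_r,\rho_1,\rho_2$) rather than exponential. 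Given this, $\norm{\aalpha^{(l)}}_2=O(1)$ follows by induction on $l$ (using $\phi$ $1$-Lipschitz, Assumption~\ref{asmp:Activation_Function_FNO}), the gradient bound by the chain rule, and the Hessian bound by differentiating the FNO recursion twice. The genuinely new difficulty relative to \citep{banerjee2022restricted} is that $\nabla^2_{\vtheta}g_{ir}$ now contains blocks of cross-second-derivatives between data-domain weights $w^{(l)}_{ij}$ and Fourier-domain weights $r^{(l')}_{i'j'}$ (and among the $r$'s across layers); each such block is routed through the DFT kernel $F$ and must be estimated using its spectral properties, after which the usual $1/\sqrt m$ damping is recovered because each second differentiation of the recursion contributes an extra $1/\sqrt m$ from the layer normalization. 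Feeding these bounds into the previous paragraph fixes $c_1=\mathrm{poly}(L,\rho_w,\rho_r,\rho_1,\rho_2)$, and the final claim---RSC w.r.t.\ $(B^t_\kappa,\vtheta_t)$, i.e.\ $\alpha_t>0$, whenever $\norm{\nabla_{\vtheta}\bar{G}_t}_2^2=\Omega(1/\sqrt m)$---is immediate from \eqref{eq:RSCLoss_FNO}. I expect this last paragraph---the uniform spectral-norm control of the FNO Hessian with its mixed data/Fourier cross-derivative blocks---to be the technical crux, with the earlier steps a fairly mechanical adaptation of the feedforward RSC argument.
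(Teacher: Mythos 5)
Your plan is essentially the paper's proof: (i) non-emptiness and Conditions~\ref{cond:rsc}(a),(b) via a point constructed arbitrarily close to $\vtheta_t$, using that Assumption~\ref{asmp:iter-2} places $\vtheta_t$ strictly inside $B^{\mathrm{Euc}}_{\rho_w,\rho_r,\rho_1}(\vtheta_0)$; (ii) the RSC inequality via a second-order Taylor expansion of $\gL$, lower-bounding the Hessian quadratic form with Jensen, the cone condition $|\cos(v,\nabla_{\vtheta}\bar G_t)|\geq\kappa$, and an $O(1/\sqrt m)$ bound on the predictor Hessian (plus an $O(1)$ gradient/predictor bound), with the transfer from the intermediate point back to $\vtheta_t$ absorbed into the $c_1/\sqrt m$ term; (iii) the probability $1-\frac{2(L+2)}{m}$ and the polynomial dependence under $\sigma_{1,w}+\sigma_{1,r}\leq 1-\frac{\rho_w+\rho_r}{\sqrt m}$ coming from layer-wise spectral-norm control of $W^{(l)},R^{(l)}$ and the unitarity of $F$, exactly the content the paper isolates as Lemma~\ref{lemm:hessgradbounds-FNO} before invoking the adaptation of Theorem~5.1 of \citep{banerjee2022restricted}. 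Your use of the integral Taylor remainder instead of the Lagrange form, and your explicit $\nabla_\vtheta\bar G(\vtheta_s)=\nabla_\vtheta\bar G_t+\Delta_s$ decomposition, are cosmetic differences.

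The one step you leave genuinely unresolved is Condition~\ref{cond:rsc}(b) in the degenerate case. Moving along $\pm\nabla_{\vtheta}\bar G_t$ gives no first-order decrease when $\langle\nabla_{\vtheta}\gL(\vtheta_t),\nabla_{\vtheta}\bar G_t\rangle=0$, and "handled separately" cannot be completed within that family of directions. The paper's construction avoids the case split altogether: it picks $\vtheta'-\vtheta_t$ in the plane spanned by $\nabla_{\vtheta}\bar G_t$ (or $-\nabla_{\vtheta}\bar G_t$) and $-\nabla_{\vtheta}\gL(\vtheta_t)$ with cosine \emph{exactly} $\kappa$ to $\nabla_{\vtheta}\bar G_t$ and an acute angle to $-\nabla_{\vtheta}\gL(\vtheta_t)$; this forces $\cos(\nu')\geq\sqrt{1-\kappa^2}>0$ even in the orthogonal case, and $\beta$-smoothness with step size $\epsilon<\frac{2\|\nabla_{\vtheta}\gL(\vtheta_t)\|_2\sqrt{1-\kappa^2}}{\beta}$ then yields $\gL(\vtheta')<\gL(\vtheta_t)$, i.e.\ Condition~\ref{cond:rsc}(b.2), with $\rho_2>\epsilon$. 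Adopting that tilted-direction construction (and noting it also requires $\nabla_{\vtheta}\gL(\vtheta_t)\neq 0$, as in the paper) closes the gap; the rest of your argument, including the treatment of the mixed $w$--$r$ cross-derivative Hessian blocks through the unitary DFT kernel, matches the paper's Lemma~\ref{lemm:cross-w-r-bound} and the block-wise Hessian bound.
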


\begin{restatable}[{\bf Smoothness for FNOs}]{theo}{RSSFNO}
Under Assumptions~\ref{asmp:Activation_Function_FNO} and \ref{asmp:smoothinit_FNO}, with probability at least $ 1 - \frac{2(L+2)}{m}$, the FNO loss $\cL$~\eqref{eq:loss-fno} is $\beta$-smooth in $B^{\mathrm{Euc}}_{\rho_w,\rho_r\rho_1}(\vtheta_0)$ with $\beta$ being a positive constant which 
depends polynomially on the depth $L$, and the radii $\rho_w$, $\rho_r$, and $\rho_1$ whenever $\sigma_{1,w}+\sigma_{1,r}\leq 1-\frac{\rho_w+\rho_r}{\sqrt{m}}$.
%
\label{theo:smooth_main_fno}
\end{restatable}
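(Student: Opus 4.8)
\medskip

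The plan is to bound the Hessian $\nabla^2_{\vtheta}\gL$ in operator norm on $B^{\mathrm{Euc}}_{\rho_w,\rho_r,\rho_1}(\vtheta_0)$, since $\beta$-smoothness follows from $\|\nabla^2_{\vtheta}\gL(\vtheta)\|_2\leq\beta$ for all $\vtheta$ in a convex set. Writing the per-sample residual $\ell_{i,r}(\vtheta)=\bigl(f^{(i)}(\vtheta;\vx_r)-G^\dagger(\vu^{(i)})(\vx_r)\bigr)$, the chain rule gives
\[
\nabla^2_{\vtheta}\gL=\frac{2}{n}\sum_{i=1}^n\frac{1}{R}\sum_{r=1}^R\Bigl(\nabla_{\vtheta}f^{(i)}(\vtheta;\vx_r)\,\nabla_{\vtheta}f^{(i)}(\vtheta;\vx_r)^{\top}+\ell_{i,r}(\vtheta)\,\nabla^2_{\vtheta}f^{(i)}(\vtheta;\vx_r)\Bigr).
\]
So it suffices to uniformly bound (i) $\|\nabla_{\vtheta}f^{(i)}(\vtheta;\vx_r)\|_2$, (ii) $\|\nabla^2_{\vtheta}f^{(i)}(\vtheta;\vx_r)\|_2$, and (iii) the residual magnitude $|\ell_{i,r}(\vtheta)|$, all over the neighborhood. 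Step (iii) reduces to bounding $|f^{(i)}(\vtheta;\vx_r)|$ (the target values $G^\dagger(\vu^{(i)})(\vx_r)$ are fixed data), which in turn follows from a forward bound on $\|\aalpha^{(L+1)}\|_2$.

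\medskip

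The core technical work is a layerwise analysis of the FNO forward and backward passes. First I would establish, at initialization, spectral-norm control on each $W^{(l)}_0$ and $R^{(l)}_0$ using standard Gaussian matrix concentration: with the chosen variances $\sigma_{0,w},\sigma_{0,r}$ (scaled down by the factor $2(1+\sqrt{\log m}/\sqrt{2m})$), each satisfies $\frac{1}{\sqrt m}\|W^{(l)}_0\|_2\le \sigma_{1,w}/2$ and $\frac{1}{\sqrt m}\|R^{(l)}_0\|_2\le \sigma_{1,r}/2$ with probability $1-O(1/m)$ per layer, hence $1-\tfrac{2(L+2)}{m}$ over all $L+2$ relevant weight matrices by a union bound. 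Then inside $B^{\mathrm{Euc}}_{\rho_w,\rho_r,\rho_1}(\vtheta_0)$ the perturbed matrices obey $\frac{1}{\sqrt m}\|W^{(l)}\|_2\le \sigma_{1,w}/2+\rho_w/\sqrt m$ and similarly for $R^{(l)}$; the condition $\sigma_{1,w}+\sigma_{1,r}\le 1-\frac{\rho_w+\rho_r}{\sqrt m}$ ensures the combined per-layer operator norm $\frac{1}{\sqrt m}\|W^{(l)}+F^*R^{(l)}F\|_2$ is bounded by a constant $<1$ (using $\|F\|_2\|F^*\|_2$ is controlled because $\frac{1}{\sqrt m}F$ is unitary up to scaling — I'd verify $\|F\|_2=\sqrt m$ from $F_{kj}=e^{-2\pi\iota(k-1)(j-1)/m}$). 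Combined with $\phi$ being $1$-Lipschitz (Assumption~\ref{asmp:Activation_Function_FNO}) and the normalization $\|\aalpha^{(0)}\|_2=\sqrt d$, this gives a uniform forward bound $\|\aalpha^{(l)}\|_2\le c\sqrt d$ that is polynomial in $L$.

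\medskip

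For the gradient and Hessian bounds I would differentiate the recursion $\aalpha^{(l)}=\phi\bigl(\tfrac{1}{\sqrt m}(W^{(l)}+F^*R^{(l)}F)\aalpha^{(l-1)}\bigr)$ with respect to each block of parameters, obtaining backpropagation-style recursions for $\partial\aalpha^{(l)}/\partial W^{(k)}$, $\partial\aalpha^{(l)}/\partial R^{(k)}$ and their second derivatives. The Jacobian-norm propagation uses the $1$-Lipschitzness of $\phi$ and the $<1$ per-layer operator norm to keep products of layer Jacobians bounded; the second-derivative recursion additionally invokes $\beta_\phi$-smoothness of $\phi$ and the already-established forward and first-derivative bounds, yielding a bound polynomial in $L$. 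The one genuinely FNO-specific subtlety — and the step I expect to be the main obstacle — is the cross-derivatives between data-domain parameters $W^{(l)}$ and Fourier-domain parameters $R^{(l)}$: the Hessian block $\partial^2\aalpha^{(l)}/\partial W^{(k)}\partial R^{(j)}$ mixes $F$, $F^*$, and $\phi'$ terms and must be shown to remain bounded; here the key is that $\frac{1}{\sqrt m}F$ and $\frac{1}{\sqrt m}F^*$ are isometries, so the Fourier factors contribute only $O(1)$ operator-norm mass and the cross-terms are controlled exactly like the pure $W$–$W$ terms. Assembling (i)–(iii) into the Hessian expression above gives $\|\nabla^2_{\vtheta}\gL\|_2\le\beta$ with $\beta$ a constant that is polynomial in $L$ and in the radii $\rho_w,\rho_r,\rho_1$, which is the claim; the dependence on $\rho_2$ mentioned in the DON statement does not appear here because smoothness is asserted on $B^{\mathrm{Euc}}_{\rho_w,\rho_r,\rho_1}(\vtheta_0)$ directly. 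This completes the plan.
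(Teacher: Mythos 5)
Your plan is essentially the paper's own route: the paper proves Theorem~\ref{theo:smooth_main_fno} by a second-order Taylor expansion of $\gL$ whose quadratic form splits into the gradient-outer-product term plus the residual times the predictor Hessian (an adaptation of the feedforward argument of Banerjee et al.), with all predictor-level bounds ($|f|\le\lambda$, $\|\nabla_{\vtheta}f\|_2\le\varrho$, $\|\nabla^2_{\vtheta}f\|\le c/\sqrt{m}$) supplied by Lemma~\ref{lemm:hessgradbounds-FNO}, which is proved exactly by the layer-wise analysis you sketch: Gaussian spectral control of $W^{(l)}_0,R^{(l)}_0$ plus the radii, $1$-Lipschitz and $\beta_\phi$-smooth $\phi$, the isometry of $F$ to handle the $R$-blocks and the $W$--$R$ cross-derivatives, and the condition $\gamma=\sigma_{1,w}+\sigma_{1,r}+\frac{\rho_w+\rho_r}{\sqrt{m}}\le 1$ giving the polynomial dependence, yielding $\beta=2\varrho^2+\bar{c}/\sqrt{m}$. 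Two quantitative slips are worth fixing, neither of which derails the argument. First, the forward bound is $\|\aalpha^{(l)}\|_2=O(\sqrt{m})$, not $O(\sqrt{d})$: the hidden layers have width $m$ and the term $\|\phi(\mathbf{0})\|_2=|\phi(0)|\sqrt{m}$ already scales with $\sqrt{m}$; it is the explicit $1/\sqrt{m}$ factors in $\partial\aalpha^{(l)}/\partial\w^{(l)}$, $\partial\aalpha^{(l)}/\partial\r^{(l)}$ and in the decoder that bring the parameter gradients and $|f|$ back to $O(1)$ and the predictor Hessian to $O(1/\sqrt{m})$, so the final constant is still polynomial in $L,\rho_w,\rho_r,\rho_1$ (your smaller claimed bound would only make the conclusion easier, but as stated it is not provable). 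Second, your treatment of the Fourier factors needs $F$ to act as an $\ell_2$-isometry (the normalized DFT), which is what the paper's lemmas use; with the unnormalized kernel and $\|F\|_2=\sqrt{m}$, the block $\frac{1}{\sqrt{m}}F^*R^{(l)}F$ would pick up an extra factor of $m$, so you must either normalize $F$ or track that scaling consistently rather than asserting directly that the Fourier factors contribute only $O(1)$ operator-norm mass.
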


\begin{remark}[Ensuring that $\alpha_t/\beta<1$] 
Similar to our discussion in Remark~\ref{rem:abeta-DON}, we prove that 
$\alpha_t/\beta<1$ in Proposition~\ref{prop:RSC-smooth-FNO} from Appendix~\ref{app:fnoopt}, satisfying the condition required in the statement of Theorem~\ref{theo:global_main}.
%
%
\qed
\end{remark}

\paragraph{\textbf{Optimization Under Gradient Descent for FNOs.}} 
We have that 
Theorem~\ref{theo:rsc_main_fno} satisfies Condition~\ref{cond:rsc} and Theorem~\ref{theo:smooth_main_fno} satisfies Condition~\ref{cond:smooth}. We also proved that $\alpha_t/\beta<1$. Thus, when $\| \nabla_{\vtheta} \bar{G}_t \|_2^2 = \Omega(\frac{1}{\sqrt{m}})$, i.e., $\alpha_t>0$, a decrease on the loss function by GD is ensured with probability at least $1- \frac{2(L+2)}{m}$ towards its minimum value taken within the set $B^{\mathrm{Euc}}_{\rho_w,\rho_r\rho_1}(\vtheta_0)$ due to Theorem~\ref{theo:global_main}.

\begin{remark}[The effects of over-parameterization for FNOs]
Similar observations to Remarks~\ref{rem:RSC-m} and~\ref{rem:largerNeighb-m} hold for FNOs, i.e., that over-parameterization ensures (i) a better condition for ensuring the RSC property, and (ii) a larger neighborhood 
around the initialization point over which our guarantees hold. Item (ii) follows from the relationship $\rho_w+\rho_r\leq \sqrt{m}$ obtained when choosing $\sigma_{1,w}$ and $\sigma_{1,r}$ to ensure a polynomial dependence as in Theorems~\ref{theo:rsc_main_fno} and~\ref{theo:smooth_main_fno}.
\end{remark}
\section{Comparison between Neural Operators and Feedforward Neural Networks}
\label{sec:Comparison}
Our presented analysis provides sufficient conditions that guarantee the optimization of DONs and FNOs under gradient descent (GD). It is non-trivial that GD should converge for neural operators in a similar way to how it converges for feedforward neural networks (FFNs), i.e., by being particular instances of the general optimization framework from Section~\ref{sec:optmain}. Indeed, as indicated in Table~\ref{tab:comp}, there exist similarities and differences between our derivations for neural operators and the ones for FFNs. 
\begin{table*}[t!]
    \centering
    \scriptsize 
\begin{tabular}{@{}lcc@{}}
\toprule
    & \textbf{Deep Operator Network} & \textbf{Fourier Neural Operator} \\ 
\midrule
\textbf{\centering $Q^t_{\kappa}$ set} & More Complex & Similar \\
%
\textbf{Hessian and gradient bounds of the neural operator model} & 
Similar* & More Complex\\
%
\textbf{RSC and Smoothness characterization; computing the Hessian of $\gL$}
  & More Complex & Similar \\
    \bottomrule
    \end{tabular}
    \caption{We indicate whether a specific neural operator (DON or FNO) has a \emph{similar} or a
    \emph{more complex}
    derivation of specific mathematical objects or properties compared to a feedforward neural network (as in~\citep{banerjee2022restricted}). *The similarity is with respect to each
    \pcedit{individual network} of the DON.}
    \label{tab:comp}
\end{table*}

\paragraph{\textbf{The challenge in the analysis of DONs.}} 
The fact that the output of a DON is an inner product of two FFNs~\eqref{eq:DONoutput}---the branch and trunk networks---makes the mathematical analysis of the RSC and smoothness properties more involved than the analysis associated to a single FFN. Indeed, the appearance of cross-interaction terms between the two FFNs complicates the Hessian structure of the empirical loss $\gL$ and 
requires a 
more complex definition of the RSC $Q^t_{\kappa}$ set compared to the one used for FFNs or FNOs. On the other hand, since the branch and trunk networks are \emph{individually} FFNs, their individual Hessian and gradient bounds are known. 

\paragraph{\textbf{The challenge in the analysis of FNOs.}} 
The fact that FNOs---unlike FFNs---include a series of learnable transformations in the Fourier domain makes the mathematical analyses of their Hessian and gradient bounds more involved than the ones for FFNs. Indeed, these Fourier transformations 
introduce cross-derivatives between weights in data and Fourier domains in 
the Hessian that need to be carefully taken into account. On the other hand, since FNOs are composed of a single network, their $Q^t_{\kappa}$ set is similar to FFNs, as well as their RSC and smoothness analyses.
\section{Experiments}
\label{sec:Experiments}
We present experiments on
the effect of over-parameterization on the training performance of DONs and FNOs, as measured by the empirical risk over a mini-batch of the training dataset using the Adam optimizer. We consider three prototypical operator learning problems in the literature~\citep{li_fourier_2021,lu20201DeepONet}: 
(a) the antiderivative (or integral) operator, (b) the diffusion-reaction operator, and (c) Burgers' equation. We do not consider vector-valued problems (e.g., Navier-Stokes) because they are not 
covered by our theoretical framework. 
For definiteness, we consider the branch and trunk nets to have the same width $m$ (i.e., $m_f=m_g={m}$) for the DON, and the same width $m$ for the FNO. 
In all experiments, we increase the width from $m=10$ to $m=500$. 
For all networks, we use the Scaled Exponential Linear Unit (SELU)~\citep{selu-paper} as their smooth activation function. 
We monitor the training process over $80,000$ training epochs and report the resulting average loss. Note that the objective of this section is to show the effect of over-parameterization on the neural operator training and not to present any kind of comparison between the two neural operators. 

\begin{figure*}[t!]
    \centering
    \begin{subfigure}[H]{0.29\textwidth}
        \centering
        \includegraphics[width=\linewidth]{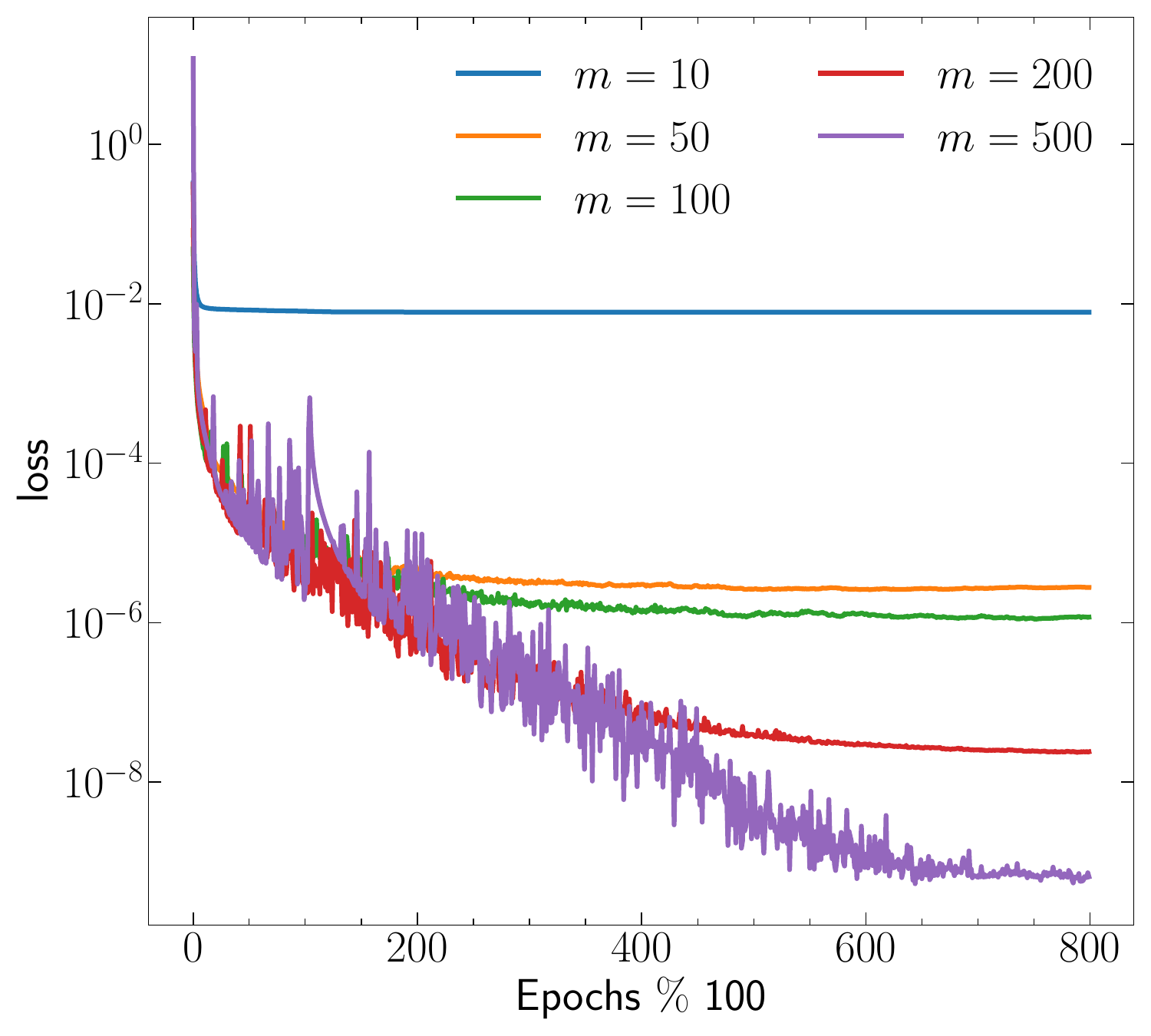}
        \caption{Antiderivative}
    \end{subfigure}\begin{subfigure}[H]{0.29\textwidth}
        \centering
        \includegraphics[width=\linewidth]{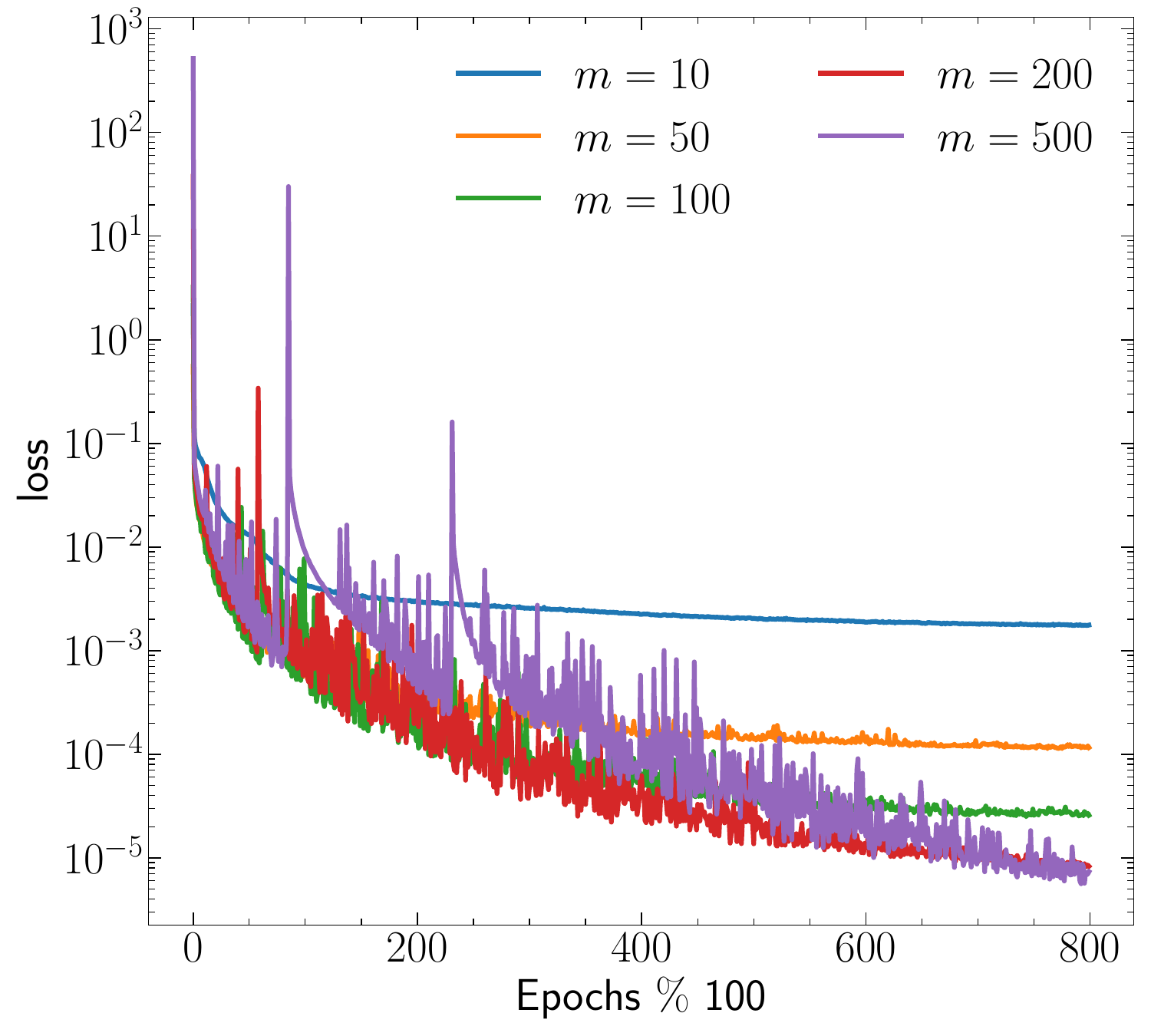}
        \caption{Diffusion-Reaction}
    \end{subfigure}\vspace{1ex}
    \begin{subfigure}[H]{0.29\textwidth}
        \centering
        \includegraphics[width=\linewidth]{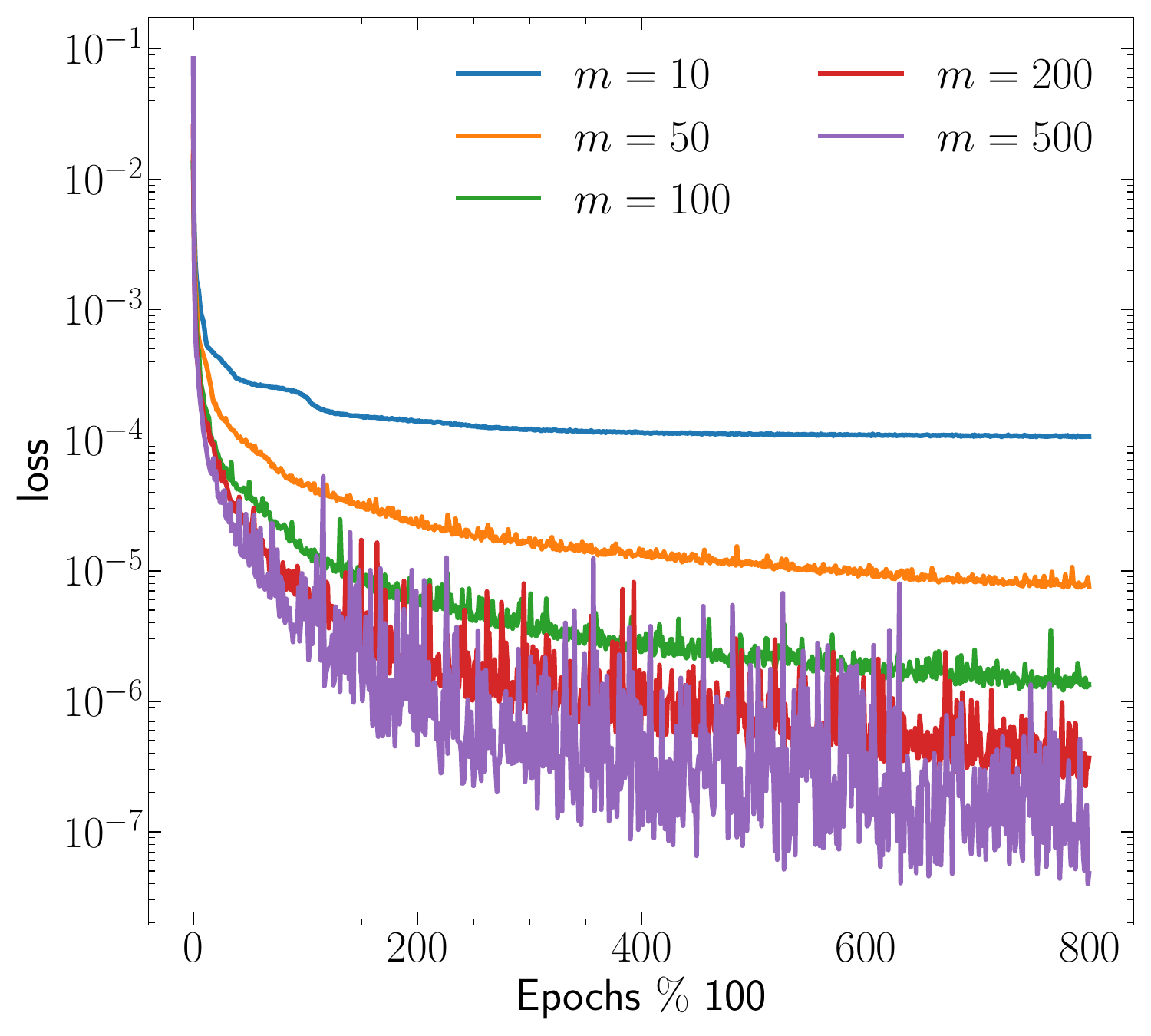}
        \caption{Burger's Equation}
    \end{subfigure}
    \caption{Training progress of DONs 
    as measured by the \pcedit{empirical} loss~\eqref{eq:empirical_risk} over 80,000 epochs. The $y$-axis is plotted on a \emph{log-scale} and the $x$-axis denotes the training epochs \% 100 (i.e., the loss is stored at every 100\textsuperscript{th} epoch). Wider networks typically lead to lower loss for all three problems.}
\label{fig:seLU_Loss_DON}
\end{figure*}
\begin{figure*}[t!]
    \centering
    \begin{subfigure}[H]{0.29\textwidth}
        \centering
        \includegraphics[width=\linewidth]{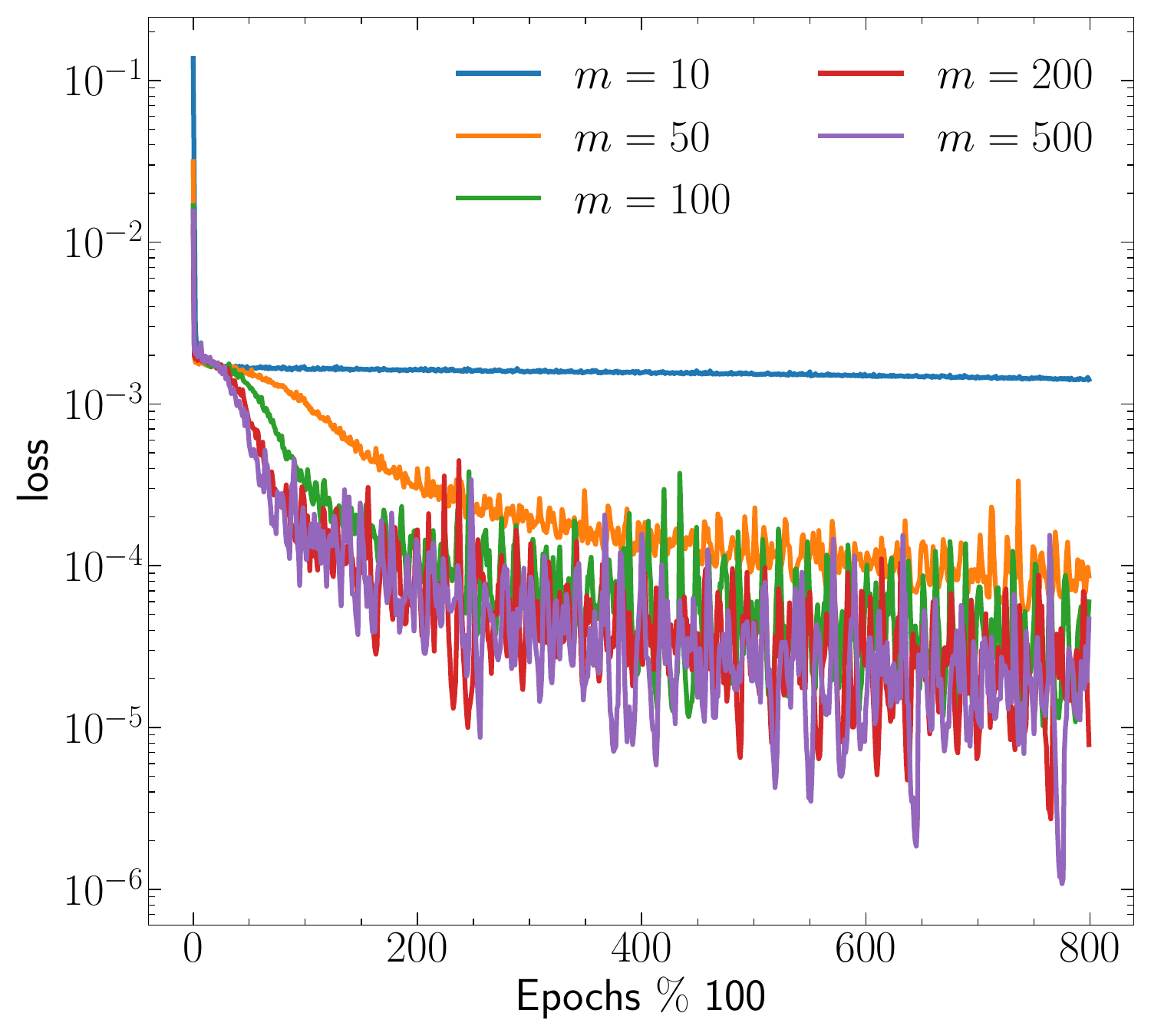}
        \caption{Antiderivative}
    \end{subfigure}\begin{subfigure}[H]{0.29\textwidth}
        \centering
        \includegraphics[width=\linewidth]{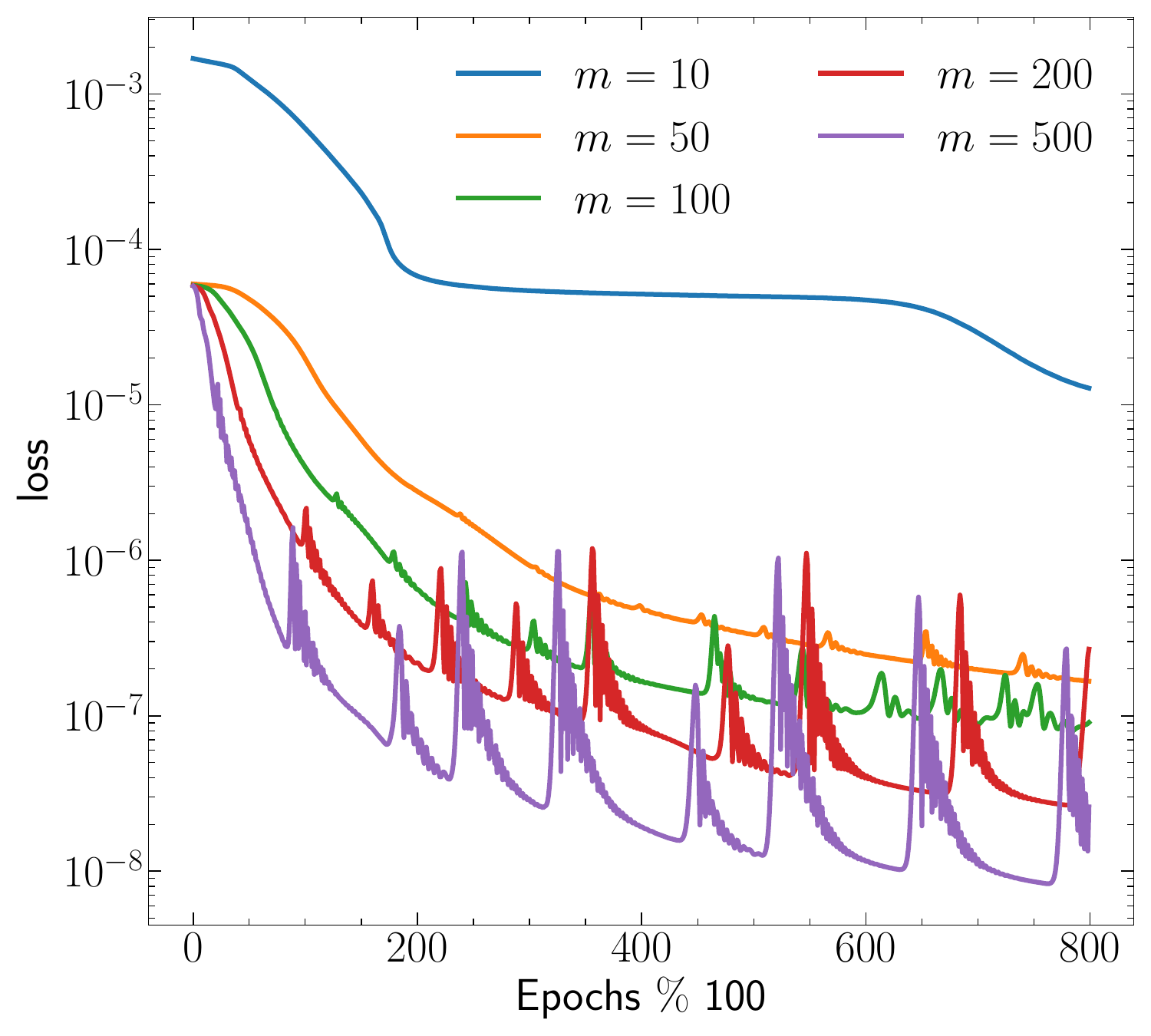}
        \caption{Diffusion-Reaction}
    \end{subfigure}\vspace{1ex}
    \begin{subfigure}[H]{0.29\textwidth}
        \centering
        \includegraphics[width=\linewidth]{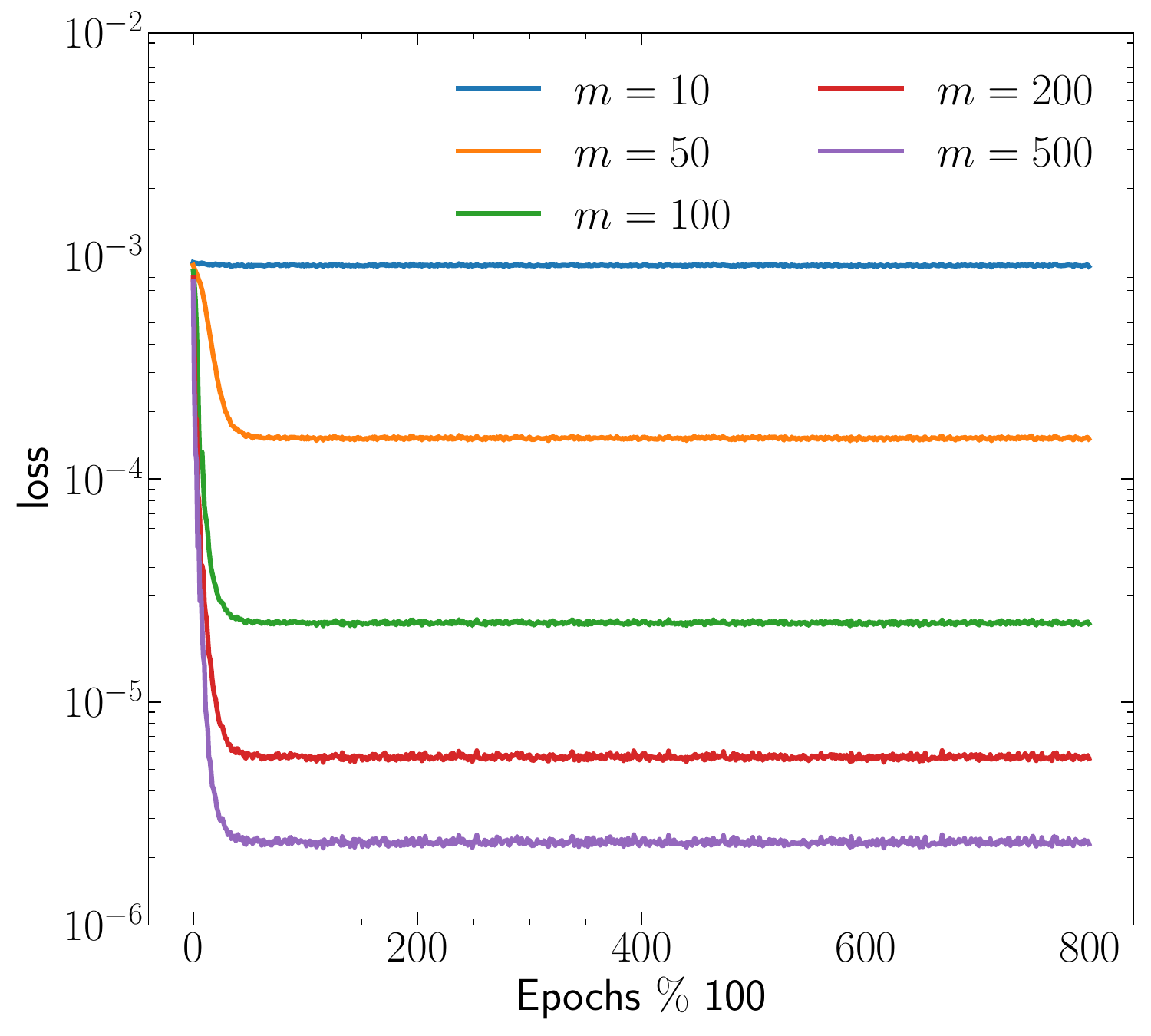}
        \caption{Burger's Equation}
    \end{subfigure}
    \caption{Training progress of FNOs 
    as measured by the \pcedit{empirical} loss~\eqref{eq:fno_loss} over 80,000 epochs. The setting of the plots is similar to Figure~\ref{fig:seLU_Loss_DON}. Wider networks typically lead to lower loss for all three problems.
    }\label{fig:seLU_Loss}
\end{figure*}

The results for DONs (Figure~\ref{fig:seLU_Loss_DON}) and FNOs (Figure~\ref{fig:seLU_Loss})
clearly show that \pcedit{both neural operators benefit from an increasing}
width $m$ \pcedit{since it} leads to overall lower training losses for all three learning problems and it generally leads to faster optimization convergence.
The Antiderivative operator 
is a linear operator and therefore is learned very accurately, especially for wider DONs and FNOs where the loss is around $10^{-12}$ and $10^{-5}$ respectively. 
%
The diffusion-reaction equation 
demonstrates lower loss with increasing width 
less markedly than the antiderivative operator for DONs and more markedly for FNOs. This can be attributed in part to the fact that the operator is inherently nonlinear. 
Finally, regarding Burger's equation,
lower training losses and faster convergence is more markedly for FNOs than for DONs as the width increases. 

Additional information on the experimental settings and additional experiments are found in Appendix~\ref{app:exp_don_fno}.


\section{Conclusion}
\label{sec:Discussion}
We present novel optimization guarantees for 
gradient descent for neural operators with smooth activations: 
Deep Operator Networks and Fourier Neural Operators.
Our guarantees are based on the restricted strong convexity and smoothness of the loss, 
thus
%
%
providing 
an encompassing framework to neural operator optimization. 
\pcedit{We argue that increasing the width of the neural operators benefits our theoretical guarantees.} 
We also present empirical evaluations on prototypical 
operator learning problems 
to complement our theory.

\paragraph{Acknowledgements.} Part of the work was done when Pedro Cisneros-Velarde and Bhavesh Shrimali were affiliated with the University of Illinois Urbana-Champaign and was concluded during their current affiliations. The work was supported by the National Science Foundation (NSF) through awards IIS 21-31335, OAC 21-30835, DBI 20-21898, as well as a C3.ai research award.

%

\bibliographystyle{my-plainnat}
\bibliography{references}

\begin{thebibliography}{47}
\providecommand{\natexlab}[1]{#1}
\providecommand{\url}[1]{\texttt{#1}}
\expandafter\ifx\csname urlstyle\endcsname\relax
  \providecommand{\doi}[1]{doi: #1}\else
  \providecommand{\doi}{doi: \begingroup \urlstyle{rm}\Url}\fi

\bibitem[Allen-Zhu et~al.(2019)Allen-Zhu, Li, and Song]{allen-zhu_convergence_2019}
Z.~Allen-Zhu, Y.~Li, and Z.~Song.
\newblock A {Convergence} {Theory} for {Deep} {Learning} via {Over}-{Parameterization}.
\newblock Technical Report arXiv:1811.03962, arXiv, 2019.
\newblock arXiv:1811.03962 [cs, math, stat] type: article.

\bibitem[Arora et~al.(2019{\natexlab{a}})Arora, Du, Hu, Li, and Wang]{arora2019fine}
S.~Arora, S.~Du, W.~Hu, Z.~Li, and R.~Wang.
\newblock Fine-grained analysis of optimization and generalization for overparameterized two-layer neural networks.
\newblock In \emph{International Conference on Machine Learning}. 2019{\natexlab{a}}.

\bibitem[Arora et~al.(2019{\natexlab{b}})Arora, Du, Hu, Li, Salakhutdinov, and Wang]{Arora_Du_Neurips_2019}
S.~Arora, S.~S. Du, W.~Hu, Z.~Li, R.~R. Salakhutdinov, and R.~Wang.
\newblock On exact computation with an infinitely wide neural net.
\newblock In \emph{Advances in Neural Information Processing Systems}. Curran Associates, Inc., 2019{\natexlab{b}}.

\bibitem[Banerjee et~al.(2023{\natexlab{a}})Banerjee, Cisneros-Velarde, Zhu, and Belkin]{banerjee23a}
A.~Banerjee, P.~Cisneros-Velarde, L.~Zhu, and M.~Belkin.
\newblock Neural tangent kernel at initialization: linear width suffices.
\newblock In \emph{Proceedings of the Thirty-Ninth Conference on Uncertainty in Artificial Intelligence}. PMLR, 2023{\natexlab{a}}.

\bibitem[Banerjee et~al.(2023{\natexlab{b}})Banerjee, Cisneros-Velarde, Zhu, and Belkin]{banerjee2022restricted}
A.~Banerjee, P.~Cisneros-Velarde, L.~Zhu, and M.~Belkin.
\newblock Restricted strong convexity of deep learning models with smooth activations.
\newblock In \emph{The Eleventh International Conference on Learning Representations (ICLR)}, 2023{\natexlab{b}}.

\bibitem[Bhattacharya et~al.(2021{\natexlab{a}})Bhattacharya, Hosseini, Kovachki, and Stuart]{bhattacharya_model_2021}
K.~Bhattacharya, B.~Hosseini, N.~B. Kovachki, and A.~M. Stuart.
\newblock Model {Reduction} {And} {Neural} {Networks} {For} {Parametric} {PDEs}.
\newblock \emph{The SMAI journal of computational mathematics}, 7:\penalty0 121--157, 2021{\natexlab{a}}.

\bibitem[Bhattacharya et~al.(2021{\natexlab{b}})Bhattacharya, Hosseini, Kovachki, and Stuart]{bhattacharya_model_2021-1}
K.~Bhattacharya, B.~Hosseini, N.~B. Kovachki, and A.~M. Stuart.
\newblock Model {Reduction} and {Neural} {Networks} for {Parametric} {PDEs}.
\newblock \emph{arXiv:2005.03180 [cs, math, stat]}, 2021{\natexlab{b}}.
\newblock arXiv: 2005.03180.

\bibitem[Boyd and Vandenberghe(2004)]{Boyd_Vandenberghe_2004}
S.~Boyd and L.~Vandenberghe.
\newblock \emph{Convex Optimization}.
\newblock Cambridge University Press, 2004.

\bibitem[Centofanti et~al.(2024)Centofanti, Ghiotto, and Pavarino]{centofanti2024learning}
E.~Centofanti, M.~Ghiotto, and L.~F. Pavarino.
\newblock Learning the hodgkin-huxley model with operator learning techniques.
\newblock \emph{arXiv preprint arXiv:2406.02173}, 2024.

\bibitem[Chen and Chen(1995)]{chen1995universal}
T.~Chen and H.~Chen.
\newblock Universal approximation to nonlinear operators by neural networks with arbitrary activation functions and its application to dynamical systems.
\newblock \emph{IEEE Transactions on Neural Networks}, 6\penalty0 (4):\penalty0 911--917, 1995.

\bibitem[Cisneros-Velarde et~al.(2025)Cisneros-Velarde, Chen, Koyejo, and Banerjee]{cisnerosvelarde2024optgenWeightNorm}
P.~Cisneros-Velarde, Z.~Chen, S.~Koyejo, and A.~Banerjee.
\newblock Optimization and generalization guarantees for weight normalization.
\newblock \emph{Transactions on Machine Learning Research}, 2025.

\bibitem[Cybenko(1989)]{cybenkot_Univ_Approximation_1989}
G.~Cybenko.
\newblock Approximation by superpositions of a sigmoidal function.
\newblock \emph{Mathematics of control, signals and systems}, 2\penalty0 (4):\penalty0 303--314, 1989.

\bibitem[Deng et~al.(2022)Deng, Shin, Lu, Zhang, and Karniadakis]{deng2021convergence}
B.~Deng, Y.~Shin, L.~Lu, Z.~Zhang, and G.~E. Karniadakis.
\newblock Approximation rates of deeponets for learning operators arising from advection–diffusion equations.
\newblock \emph{Neural Networks}, 153:\penalty0 411--426, 2022.

\bibitem[Diab and Al~Kobaisi(2024)]{diab2024u}
W.~Diab and M.~Al~Kobaisi.
\newblock U-deeponet: U-net enhanced deep operator network for geologic carbon sequestration.
\newblock \emph{Scientific Reports}, 14\penalty0 (1):\penalty0 21298, 2024.

\bibitem[Du et~al.(2019)Du, Lee, Li, Wang, and Zhai]{du2019gradient}
S.~Du, J.~Lee, H.~Li, L.~Wang, and X.~Zhai.
\newblock Gradient descent finds global minima of deep neural networks.
\newblock In \emph{International conference on machine learning}. 2019.

\bibitem[Goswami et~al.(2022)Goswami, Yin, Yu, and Karniadakis]{goswami_physics-informed_2022}
S.~Goswami, M.~Yin, Y.~Yu, and G.~Karniadakis.
\newblock A physics-informed variational {DeepONet} for predicting the crack path in brittle materials.
\newblock \emph{Computer Methods in Applied Mechanics and Engineering}, 391:\penalty0 114587, 2022.
\newblock arXiv: 2108.06905.

\bibitem[Harder et~al.(2023)Harder, Hernandez-Garcia, Ramesh, Yang, Sattegeri, Szwarcman, Watson, and Rolnick]{harder2023hard}
P.~Harder, A.~Hernandez-Garcia, V.~Ramesh, Q.~Yang, P.~Sattegeri, D.~Szwarcman, C.~Watson, and D.~Rolnick.
\newblock Hard-constrained deep learning for climate downscaling.
\newblock \emph{Journal of Machine Learning Research}, 24\penalty0 (365):\penalty0 1--40, 2023.

\bibitem[Hornik(1991)]{hornik1991approximation}
K.~Hornik.
\newblock Approximation capabilities of multilayer feedforward networks.
\newblock \emph{Neural networks}, 4\penalty0 (2):\penalty0 251--257, 1991.

\bibitem[Hornik et~al.(1989)Hornik, Stinchcombe, and White]{hornik_multilayer_1989}
K.~Hornik, M.~Stinchcombe, and H.~White.
\newblock Multilayer feedforward networks are universal approximators.
\newblock \emph{Neural Networks}, 2\penalty0 (5):\penalty0 359--366, 1989.

\bibitem[Jacot et~al.(2018)Jacot, Gabriel, and Hongler]{jacot2018neural}
A.~Jacot, F.~Gabriel, and C.~Hongler.
\newblock Neural tangent kernel: Convergence and generalization in neural networks.
\newblock \emph{Advances in neural information processing systems}, 31, 2018.

\bibitem[Karimi et~al.(2016)Karimi, Nutini, and Schmidt]{karimi2016linear}
H.~Karimi, J.~Nutini, and M.~Schmidt.
\newblock Linear convergence of gradient and proximal-gradient methods under the polyak-{l}ojasiewicz condition.
\newblock In \emph{Joint European Conference on Machine Learning and Knowledge Discovery in Databases}. 2016.

\bibitem[Klambauer et~al.(2017)Klambauer, Unterthiner, Mayr, and Hochreiter]{selu-paper}
G.~Klambauer, T.~Unterthiner, A.~Mayr, and S.~Hochreiter.
\newblock Self-normalizing neural networks.
\newblock In \emph{Proceedings of the 31st International Conference on Neural Information Processing Systems}, NIPS'17, Red Hook, NY, USA, 2017. Curran Associates Inc.

\bibitem[Kontolati et~al.(2022)Kontolati, Goswami, Shields, and Karniadakis]{kontolati2022_Over_parameterization}
K.~Kontolati, S.~Goswami, M.~D. Shields, and G.~E. Karniadakis.
\newblock On the influence of over-parameterization in manifold based surrogates and deep neural operators.
\newblock \emph{arXiv preprint arXiv:2203.05071}, 2022.

\bibitem[Kovachki et~al.(2021)Kovachki, Lanthaler, and Mishra]{kovachki2021universal}
N.~Kovachki, S.~Lanthaler, and S.~Mishra.
\newblock On universal approximation and error bounds for fourier neural operators.
\newblock \emph{The Journal of Machine Learning Research}, 22\penalty0 (1):\penalty0 13237--13312, 2021.

\bibitem[Kovachki et~al.(2023)Kovachki, Li, Liu, Azizzadenesheli, Bhattacharya, Stuart, and Anandkumar]{kovachi2023neuraloperator}
N.~Kovachki, Z.~Li, B.~Liu, K.~Azizzadenesheli, K.~Bhattacharya, A.~Stuart, and A.~Anandkumar.
\newblock Neural operator: Learning maps between function spaces with applications to pdes.
\newblock \emph{Journal of Machine Learning Research}, 24\penalty0 (89):\penalty0 1--97, 2023.

\bibitem[Kutz and Brunton(2024)]{kutz2024promisdirectpde}
J.~N. Kutz and S.~L. Brunton.
\newblock Promising directions of machine learning for partial differential equations.
\newblock \emph{Nature Computational Science}, 4\penalty0 (7):\penalty0 483--494, 2024.

\bibitem[Li et~al.(2020{\natexlab{a}})Li, Kovachki, Azizzadenesheli, Liu, Bhattacharya, Stuart, and Anandkumar]{li_multipole_2020}
Z.~Li, N.~Kovachki, K.~Azizzadenesheli, B.~Liu, K.~Bhattacharya, A.~Stuart, and A.~Anandkumar.
\newblock Multipole {Graph} {Neural} {Operator} for {Parametric} {Partial} {Differential} {Equations}.
\newblock \emph{arXiv:2006.09535 [cs, math, stat]}, 2020{\natexlab{a}}.
\newblock arXiv: 2006.09535.

\bibitem[Li et~al.(2020{\natexlab{b}})Li, Kovachki, Azizzadenesheli, Liu, Bhattacharya, Stuart, and Anandkumar]{li_neural_2020}
Z.~Li, N.~Kovachki, K.~Azizzadenesheli, B.~Liu, K.~Bhattacharya, A.~Stuart, and A.~Anandkumar.
\newblock Neural {Operator}: {Graph} {Kernel} {Network} for {Partial} {Differential} {Equations}.
\newblock \emph{arXiv:2003.03485 [cs, math, stat]}, 2020{\natexlab{b}}.
\newblock arXiv: 2003.03485.

\bibitem[Li et~al.(2021{\natexlab{a}})Li, Kovachki, Azizzadenesheli, Liu, Bhattacharya, Stuart, and Anandkumar]{li_fourier_2021}
Z.~Li, N.~Kovachki, K.~Azizzadenesheli, B.~Liu, K.~Bhattacharya, A.~Stuart, and A.~Anandkumar.
\newblock Fourier {Neural} {Operator} for {Parametric} {Partial} {Differential} {Equations}.
\newblock \emph{arXiv:2010.08895 [cs, math]}, 2021{\natexlab{a}}.
\newblock arXiv: 2010.08895.

\bibitem[Li et~al.(2021{\natexlab{b}})Li, Kovachki, Azizzadenesheli, Liu, Bhattacharya, Stuart, and Anandkumar]{li_markov_2021}
Z.~Li, N.~Kovachki, K.~Azizzadenesheli, B.~Liu, K.~Bhattacharya, A.~Stuart, and A.~Anandkumar.
\newblock Markov {Neural} {Operators} for {Learning} {Chaotic} {Systems}.
\newblock \emph{arXiv:2106.06898 [cs, math]}, 2021{\natexlab{b}}.
\newblock arXiv: 2106.06898.

\bibitem[Li et~al.(2023)Li, Huang, Liu, and Anandkumar]{li2023fourier}
Z.~Li, D.~Z. Huang, B.~Liu, and A.~Anandkumar.
\newblock Fourier neural operator with learned deformations for pdes on general geometries.
\newblock \emph{Journal of Machine Learning Research}, 24\penalty0 (388):\penalty0 1--26, 2023.

\bibitem[Liu et~al.(2022{\natexlab{a}})Liu, Kovachki, Li, Azizzadenesheli, Anandkumar, Stuart, and Bhattacharya]{liu_learning-based_2022}
B.~Liu, N.~Kovachki, Z.~Li, K.~Azizzadenesheli, A.~Anandkumar, A.~Stuart, and K.~Bhattacharya.
\newblock A learning-based multiscale method and its application to inelastic impact problems.
\newblock \emph{Journal of the Mechanics and Physics of Solids}, 158:\penalty0 104668, 2022{\natexlab{a}}.
\newblock arXiv: 2102.07256.

\bibitem[Liu et~al.(2021{\natexlab{a}})Liu, Zhu, and Belkin]{liu_linearity_2021}
C.~Liu, L.~Zhu, and M.~Belkin.
\newblock On the linearity of large non-linear models: when and why the tangent kernel is constant.
\newblock \emph{arXiv preprint arXiv:2010.01092}, 2021{\natexlab{a}}.

\bibitem[Liu et~al.(2021{\natexlab{b}})Liu, Zhu, and Belkin]{liu_loss_2021}
C.~Liu, L.~Zhu, and M.~Belkin.
\newblock Loss landscapes and optimization in over-parameterized non-linear systems and neural networks.
\newblock \emph{arXiv preprint arXiv:2003.00307}, 2021{\natexlab{b}}.

\bibitem[Liu et~al.(2022{\natexlab{b}})Liu, Zhu, and Belkin]{liu2022loss}
C.~Liu, L.~Zhu, and M.~Belkin.
\newblock Loss landscapes and optimization in over-parameterized non-linear systems and neural networks.
\newblock \emph{Applied and Computational Harmonic Analysis}, 2022{\natexlab{b}}.

\bibitem[Lu et~al.(2021)Lu, Jin, Pang, Zhang, and Karniadakis]{lu20201DeepONet}
L.~Lu, P.~Jin, G.~Pang, Z.~Zhang, and G.~E. Karniadakis.
\newblock Learning nonlinear operators via deeponet based on the universal approximation theorem of operators.
\newblock \emph{Nature Machine Intelligence}, 3\penalty0 (3):\penalty0 218–229, 2021.

\bibitem[Lu et~al.(2017)Lu, Pu, Wang, Hu, and Wang]{lu_expressive_2017}
Z.~Lu, H.~Pu, F.~Wang, Z.~Hu, and L.~Wang.
\newblock The {Expressive} {Power} of {Neural} {Networks}: {A} {View} from the {Width}.
\newblock In \emph{Advances in {Neural} {Information} {Processing} {Systems}}. Curran Associates, Inc., 2017.

\bibitem[Pathak et~al.(2022)Pathak, Subramanian, Harrington, Raja, Chattopadhyay, Mardani, Kurth, Hall, Li, Azizzadenesheli, Hassanzadeh, Kashinath, and Anandkumar]{pathak_fourcastnet_2022}
J.~Pathak, S.~Subramanian, P.~Harrington, S.~Raja, A.~Chattopadhyay, M.~Mardani, T.~Kurth, D.~Hall, Z.~Li, K.~Azizzadenesheli, P.~Hassanzadeh, K.~Kashinath, and A.~Anandkumar.
\newblock {FourCastNet}: {A} {Global} {Data}-driven {High}-resolution {Weather} {Model} using {Adaptive} {Fourier} {Neural} {Operators}.
\newblock \emph{arXiv:2202.11214 [physics]}, 2022.
\newblock arXiv: 2202.11214.

\bibitem[Qin et~al.(2024)Qin, Lyu, Peng, Geng, Wang, Gao, Liu, and Wang]{qin2024toward}
S.~Qin, F.~Lyu, W.~Peng, D.~Geng, J.~Wang, N.~Gao, X.~Liu, and L.~L. Wang.
\newblock Toward a better understanding of fourier neural operators: Analysis and improvement from a spectral perspective.
\newblock \emph{arXiv preprint arXiv:2404.07200}, 2024.

\bibitem[Qiu et~al.(2024)Qiu, Bridges, and Chen]{qiu2024derivative}
Y.~Qiu, N.~Bridges, and P.~Chen.
\newblock Derivative-enhanced deep operator network.
\newblock \emph{arXiv preprint arXiv:2402.19242}, 2024.

\bibitem[Sun et~al.(2023)Sun, Moya, Lin, and Yue]{sun2023deepgraphonet}
Y.~Sun, C.~Moya, G.~Lin, and M.~Yue.
\newblock Deepgraphonet: A deep graph operator network to learn and zero-shot transfer the dynamic response of networked systems.
\newblock \emph{IEEE Systems Journal}, 2023.

\bibitem[Tran et~al.(2021)Tran, Mathews, Xie, and Ong]{tran2021factorized}
A.~Tran, A.~Mathews, L.~Xie, and C.~S. Ong.
\newblock Factorized fourier neural operators.
\newblock \emph{arXiv preprint arXiv:2111.13802}, 2021.

\bibitem[Wang and Perdikaris(2021)]{wang_long-time_2021}
S.~Wang and P.~Perdikaris.
\newblock Long-time integration of parametric evolution equations with physics-informed {DeepONets}.
\newblock \emph{arXiv:2106.05384 [physics]}, 2021.
\newblock arXiv: 2106.05384.

\bibitem[Wang et~al.(2021{\natexlab{a}})Wang, Wang, and Perdikaris]{physicsInformed202WangPerdikaris}
S.~Wang, H.~Wang, and P.~Perdikaris.
\newblock Learning the solution operator of parametric partial differential equations with physics-informed deeponets, 2021{\natexlab{a}}.

\bibitem[Wang et~al.(2021{\natexlab{b}})Wang, Wang, and Perdikaris]{wang_learning_2021}
S.~Wang, H.~Wang, and P.~Perdikaris.
\newblock Learning the solution operator of parametric partial differential equations with physics-informed {DeepOnets}.
\newblock \emph{arXiv:2103.10974 [cs, math, stat]}, 2021{\natexlab{b}}.
\newblock arXiv: 2103.10974.

\bibitem[Wen et~al.(2022)Wen, Li, Azizzadenesheli, Anandkumar, and Benson]{wen_u-fnoenhanced_2022}
G.~Wen, Z.~Li, K.~Azizzadenesheli, A.~Anandkumar, and S.~M. Benson.
\newblock U-{FNO}—{An} enhanced {Fourier} neural operator-based deep-learning model for multiphase flow.
\newblock \emph{Advances in Water Resources}, 163:\penalty0 104180, 2022.

\bibitem[Yang et~al.(2023)Yang, Hernandez-Garcia, Harder, Ramesh, Sattegeri, Szwarcman, Watson, and Rolnick]{yang2023fourier}
Q.~Yang, A.~Hernandez-Garcia, P.~Harder, V.~Ramesh, P.~Sattegeri, D.~Szwarcman, C.~D. Watson, and D.~Rolnick.
\newblock Fourier neural operators for arbitrary resolution climate data downscaling.
\newblock \emph{arXiv preprint arXiv:2305.14452}, 2023.

\end{thebibliography}


\newpage
\appendix
\onecolumn

\section{Related Work}
\label{app:related}

{\bf Learning Operators.}
Constructing operator networks for ordinary differential equations using learning-based approaches was first studied in~\citep{chen1995universal}, where a neural network with a single hidden layer was shown to approximate a nonlinear continuous functional.
This was, in essence, akin to the Universal Approximation Theorem for classical neural networks \citep{cybenkot_Univ_Approximation_1989,hornik_multilayer_1989,hornik1991approximation,lu_expressive_2017}. 
While this theorem only guaranteed the existence of a neural architecture, it was not practically realized until \cite{lu20201DeepONet} provided an extension of the theorem to \pcedit{DONs}. Since then, several works have pursued applications of DONs to different problems, e.g.,~\citep{goswami_physics-informed_2022,wang_long-time_2021,diab2024u,centofanti2024learning,sun2023deepgraphonet}, as well as improved the DON model itself, e.g., \cite{wang_learning_2021,qiu2024derivative}. 
From a standpoint of generalization, \citet{kontolati2022_Over_parameterization} studied the effects of over-parameterization on the generalization properties of DONs in the context of dynamical systems. Nonetheless, an optimization analysis of DONs is an open problem.

The operator learning paradigm has also been explored in parallel by other works seeking to directly parameterize the integral kernel in the Fourier domain using a deep network~\citep{bhattacharya_model_2021-1, bhattacharya_model_2021, li_fourier_2021, li_neural_2020, li_markov_2021}. Several subsequent extensions explored different architectures for Fourier-based operators tailored to specific problems~\citep{li_multipole_2020,liu_learning-based_2022,wen_u-fnoenhanced_2022,pathak_fourcastnet_2022,centofanti2024learning}. Other notable techniques include the use of a factorized spectral representation \cite{tran2021factorized}, using larger Fourier kernels to capture a broader set of frequencies \cite{qin2024toward}, employing FNOs in latent space in an encoder-decoder framework \cite{li2023fourier}. Recently, FNOs were used to accelerate simulations in climate science~\citep{yang2023fourier,harder2023hard}. \pcedit{Nevertheless,} while significant progress has been made for FNOs from an \pcedit{applied} perspective,
their formal optimization analysis 
\pcedit{is an open problem.} 

{\bf Optimization Analysis of Neural Networks.}
Optimization of over-parameterized deep neural networks has been studied extensively, e.g., \citep{du2019gradient,Arora_Du_Neurips_2019,arora2019fine,allen-zhu_convergence_2019,liu_linearity_2021}. In particular, \citet{jacot2018neural} showed that the 
NTK of a deep network converges to an explicit kernel in the limit of infinite network width and stays constant during training. \citet{liu_linearity_2021} showed that this constancy arises due to the scaling properties of the Hessian of the predictor as a function of network width. 
\citet{banerjee23a} showed that a deep network whose width is effectively linear on the sample size can ensure convergence under appropriate initialization.
\citet{du2019gradient} and \citet{allen-zhu_convergence_2019} showed that GD converges to zero training error in polynomial time for deep over-parameterized models \pcedit{such as ResNets and CNNs.}
\citet{karimi2016linear} showed that the Polyak-Lojasiewicz (PL) condition, a 
weaker condition than strong convexity, can be used to explain the linear convergence of gradient-based methods. \citet{banerjee2022restricted} showed convergence of GD for feedforward networks using 
RSC, 
which leads to a variant of the PL condition. \citet{cisnerosvelarde2024optgenWeightNorm} used RSC to prove the optimization of networks with weight normalization using GD.

\section{Additional Information on Neural Operators}
\label{app:learning_fno_don}
\subsection{Learning Operators}\label{subsec:LearningInfiniteDimensions}
\label{subsec:Learning_Operators}We briefly outline the notion of learning for neural operators \citep{li_fourier_2021,li_neural_2020,lu20201DeepONet}. 
Consider two separable Banach spaces, the input space $\gU$ and the output space $\gV$, and a possibly nonlinear operator $G^\dagger: \gU\to \gV$. 

The standard operator learning problem seeks to approximate $G^\dagger$ by a parametric operator $G_{\vtheta}: \gU\to\gV$ that depends on the parameter vector $\vtheta\in \Theta$ defined over some parameter space $\Theta$. 

This is done by proposing an optimization framework where we learn a vector $\vtheta^\dagger\in\Theta$ that ``best'' approximates $G^\dagger$ in some sense. 
Given observations $\{\vu^{(j)}\}_{j=1}^n\in\gU$ and $\{G^{\dagger}(\vu^{(j)})\}_{j=1}^n\in \gV$ where $\vu^{(j)}\sim\mu$, $j=1,\dots,n$, is an i.i.d sequence from the probability measure $\mu$ supported on $\gU$, we take $\vtheta^{\dagger}$ as the solution of the minimization problem
\begin{equation}
    \vtheta^{\dagger}
    =
    \argmin_{\vtheta \in \Theta} \mathbb{E}_{\vu \sim \mu}\left[
        \gC\left(G_{\vtheta}(\vu), G^{\dagger}(\vu)\right)
    \right],
    \label{eq:learningProblemInfiniteDimensions}
\end{equation}
where $\gC$ is a suitable cost functional that measures the discrepancy on the approximation between the operators $G_{\vtheta}(\vu)$ and $G^{\dagger}(\vu)$ for a given $\vu\in\gU$. This optimization problem is analogous to the notion of learning in finite dimensions, which is precisely the setup for which classical deep learning is used.

\subsection{DON Architecture}
The schematic for the Deep Operator Network's architecture is presented in Figure~\ref{fig:deeponetArchitecture}.

\begin{figure}[H]
    \centering
    \includegraphics[width=0.9\textwidth]{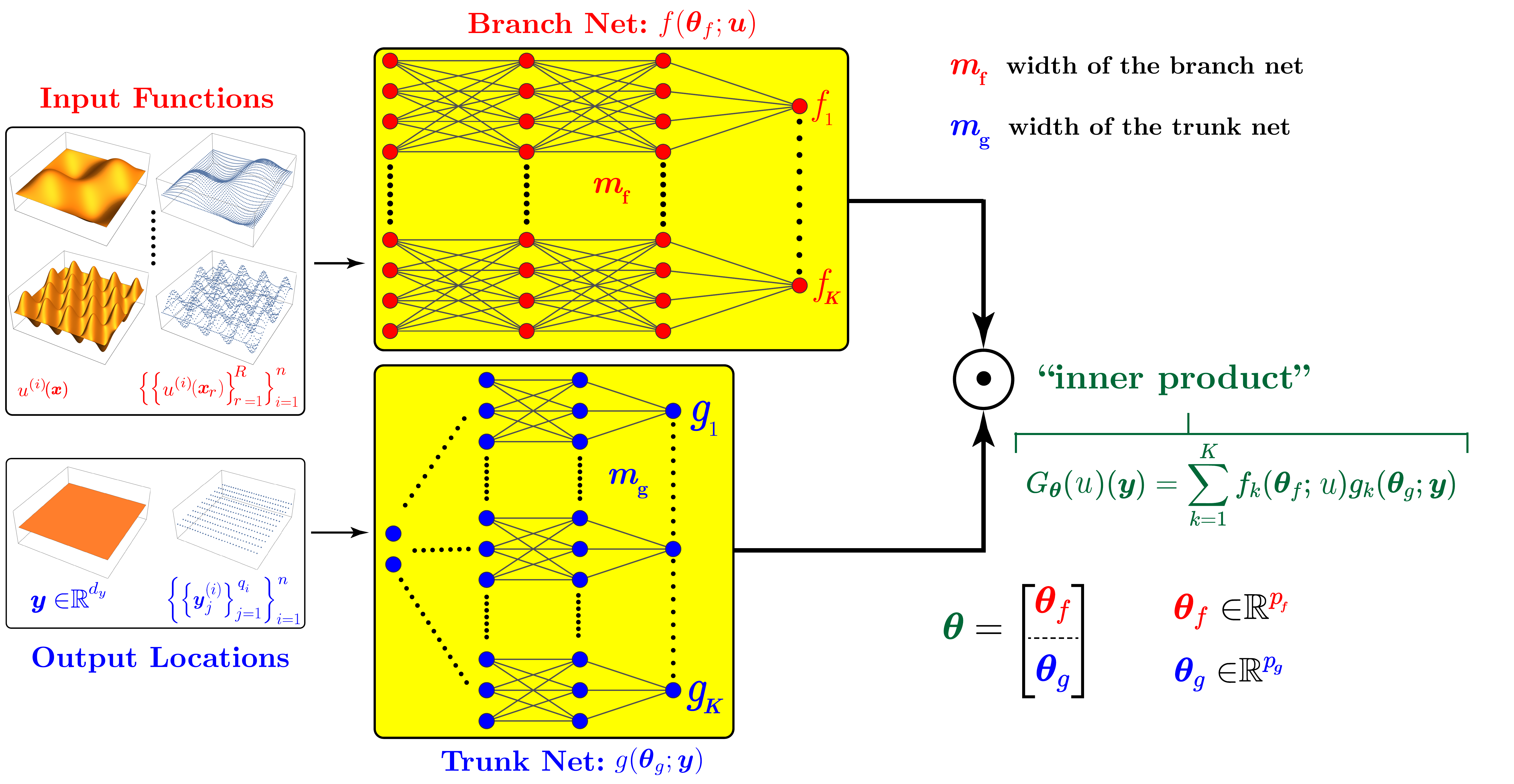}
    \caption{
    A schematic of the 
    DON architecture by \citet{lu20201DeepONet} used in our study. We refer to the notation used in our paper. Note that the input functions need not be sampled on a structured grid of points.
    }
    \label{fig:deeponetArchitecture}
\end{figure}

\subsection{FNO Architecture}
A schematic for the Fourier Neural Operator's architecture is presented in Figure~\ref{fig:fnoArchitecture}.
\begin{figure}[H]
    \centering
    \includegraphics[width=\textwidth]{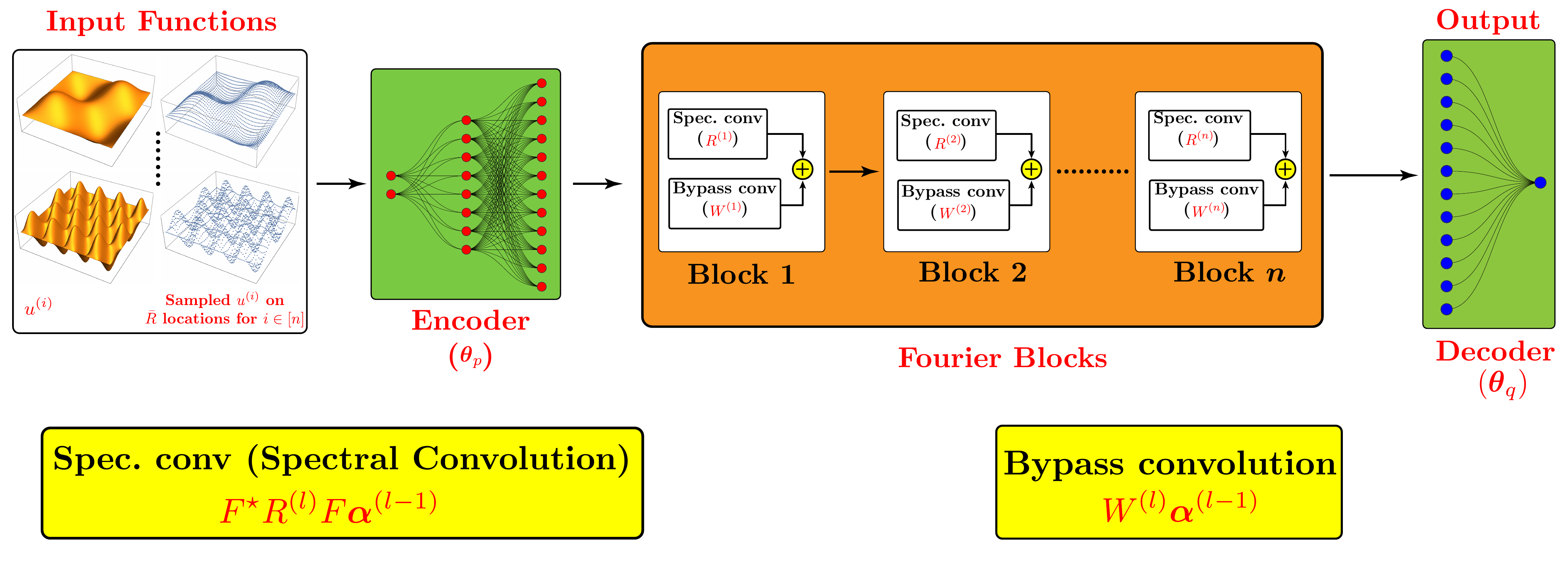}
    \caption{
    A schematic of the FNO architecture by \citet{li_fourier_2021} used in our study. We refer to the notation used in our paper. ``Spectral convolution'' and ``bypass convolution'' are terms used in the FNO literature to denote the effect of the linear mappings in the spectral and spatial domain, respectively.
    }
    \label{fig:fnoArchitecture}
\end{figure}


\section{Optimization Convergence Analysis for Section~\ref{sec:optmain}}
\label{app:rscopt}

We establish relevant results for Section~\ref{sec:optmain}. Our analysis follows very closely the recent work by~\citet{banerjee2022restricted} and generalizes it. We now provide all the relevant proofs.

We start with the following lemma which shows that Condition~\ref{cond:rsc} implies a form of restricted PL condition. 
\begin{lemm}[{\bf Restricted PL}]\label{lemm:RPL}
Assume Condition~\ref{cond:rsc} is satisfied. Then, $\gL$ satisfies a restricted form of the Polyak-Łojasiewicz (PL) condition w.r.t.~$(\mathcal{N}_t,\vtheta_t)$: 
\begin{equation}
\label{eq:RPL}
\gL(\vtheta_t) - \inf_{\vtheta \in \mathcal{N}_t} \gL(\vtheta) \leq \frac{1}{2\alpha_t} \| \nabla_\vtheta\gL(\vtheta_t) \|_2^2~.
\end{equation}
\label{lemm:rsc2}
\end{lemm}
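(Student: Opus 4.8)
The plan is to turn the $\alpha_t$-RSC inequality of Condition~\ref{cond:rsc}(c) into the restricted PL bound by the classical trick of minimizing the quadratic lower bound over all of $\mathbb{R}^p$. First I would unpack Condition~\ref{cond:rsc}(c): it states that $\gL$ satisfies $\alpha_t$-RSC with respect to the tuple $(\mathcal{N}_t,\vtheta_t)$ for some $\alpha_t>0$, which by the definition of RSC means that for every $\vtheta'\in\mathcal{N}_t$,
\[
\gL(\vtheta')\;\geq\;\gL(\vtheta_t)+\langle \vtheta'-\vtheta_t,\nabla_{\vtheta}\gL(\vtheta_t)\rangle+\frac{\alpha_t}{2}\|\vtheta'-\vtheta_t\|_2^2.
\]

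Next I would observe that the right-hand side, regarded as a function of the free variable $\vtheta'\in\mathbb{R}^p$, is a strictly convex quadratic (since $\alpha_t>0$) whose unique global minimizer is $\vtheta'=\vtheta_t-\tfrac{1}{\alpha_t}\nabla_{\vtheta}\gL(\vtheta_t)$, at which its value equals $\gL(\vtheta_t)-\tfrac{1}{2\alpha_t}\|\nabla_{\vtheta}\gL(\vtheta_t)\|_2^2$. Hence the right-hand side is bounded below by this quantity for \emph{every} point of $\mathbb{R}^p$, in particular for every $\vtheta'\in\mathcal{N}_t$, which yields
\[
\gL(\vtheta')\;\geq\;\gL(\vtheta_t)-\frac{1}{2\alpha_t}\|\nabla_{\vtheta}\gL(\vtheta_t)\|_2^2\qquad\text{for all }\vtheta'\in\mathcal{N}_t.
\]
Since $\mathcal{N}_t$ is non-empty (as guaranteed in Condition~\ref{cond:rsc}), I would then take the infimum over $\vtheta'\in\mathcal{N}_t$ on the left and rearrange, obtaining exactly \eqref{eq:RPL}; note the displayed bound also certifies that this infimum is finite.

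I do not expect any genuine obstacle here: the argument is a one-step consequence of RSC. The only point worth being careful about is that the minimization producing the $-\tfrac{1}{2\alpha_t}\|\nabla_{\vtheta}\gL(\vtheta_t)\|_2^2$ term is performed over the whole space $\mathbb{R}^p$ rather than over $\mathcal{N}_t$ — this is precisely what makes the lower bound hold uniformly across $\mathcal{N}_t$ — so the specific geometry of $\mathcal{N}_t$ plays no role beyond non-emptiness, and convexity of $\gL$ is never invoked.
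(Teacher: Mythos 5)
Your argument is correct and is essentially the paper's own proof: both lower-bound $\gL$ on $\mathcal{N}_t$ by the RSC quadratic, minimize that quadratic over all of $\mathbb{R}^p$ to get the value $\gL(\vtheta_t)-\tfrac{1}{2\alpha_t}\|\nabla_\vtheta\gL(\vtheta_t)\|_2^2$, and then take the infimum over $\mathcal{N}_t$ and rearrange. No differences worth noting.
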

\begin{proof}
Define 
\begin{equation*}
\hat{\gL}_{\vtheta_t}(\vtheta) := \gL(\vtheta_t) + \langle \vtheta - \vtheta_t, \nabla_\vtheta\gL(\vtheta_t) \rangle + \frac{\alpha_t}{2} \| \vtheta - \vtheta_t \|_2^2 ~.
\end{equation*}
By the $\alpha_t$-RSC property of Condition~\ref{cond:rsc}(\pcedit{c}), $\forall \vtheta' \in \mathcal{N}_t$, we have
\begin{align}
    \gL(\vtheta') \geq \hat{\gL}_{\vtheta_t}(\vtheta')~.
\label{eq:dom1}
\end{align}
Further, note that $\hat{\gL}_{\vtheta_t}(\cdot)$ is minimized at $\hat{\vtheta}_{t+1} := \vtheta_t - \nabla_\vtheta\gL(\vtheta_t)/\alpha_t$ and the minimum value is:
\begin{equation*}
    \inf_{\vtheta\in\R^p} \hat{\gL}_{\vtheta_t}(\vtheta) = \hat{\gL}_{\vtheta_t}(\hat{\vtheta}_{t+1}) = \gL(\vtheta_t) - \frac{1}{2\alpha_t} \| \nabla_\vtheta\gL(\vtheta_t) \|_2^2 ~.
\end{equation*}
%
Then, we have that 
\begin{equation}
\inf_{\vtheta \in \mathcal{N}_t} \hat{\gL}_{\vtheta_t} (\vtheta) \geq \inf_{\vtheta\in\R^p} \hat{\gL}_{\vtheta_t}(\vtheta) = \gL(\vtheta_t) - \frac{1}{2\alpha_t} \| \nabla_\vtheta\gL(\vtheta_t) \|_2^2~.
\label{eq:oxii}
\end{equation}
This means that that $\hat{\gL}_{\vtheta_t}(\cdot)$ is lower bounded by the expression on the right-hand side of~\eqref{eq:oxii} and so we can take the infimum over $\mathcal{N}_t$ on both sides of~\eqref{eq:dom1} and obtain   
\begin{equation}
\inf_{\vtheta \in \mathcal{N}_t} \gL(\vtheta) \geq \inf_{\vtheta \in \mathcal{N}_t} \hat{\gL}_{\vtheta_t} (\vtheta)~.
\label{eq:oxii2}
\end{equation}
Finally, we obtain the expression in~\eqref{eq:RPL} by using both inequalities in~\eqref{eq:oxii} and~\eqref{eq:oxii2} and rearranging terms. 
\label{lemm:QTk} 
\end{proof}

Next, we show that the restricted PL condition on $\mathcal{N}_t$ in Lemma~\ref{lemm:RPL} along with smoothness (Condition~\ref{cond:smooth}) can be used to show a loss reduction on $\mathcal{N}_t$. 
\begin{lemm}[{\bf Local loss reduction}]
Assume Conditions~\ref{cond:rsc} and \ref{cond:smooth} with $\alpha_t \leq \beta$ at step $t$ of the GD update as in \eqref{eq:gd_at_t} with step-size $\eta_t=\frac{\omega_t}{\beta}$ for some $\omega_t \in(0,2)$.
Then, we have 
\begin{equation}
    \gL(\vtheta_{t+1}) - \inf_{\vtheta \in \mathcal{N}_t} \gL(\vtheta)  \leq \left(1-\frac{\alpha_t \omega_t}{\beta}(2-\omega_t) \right) (\gL(\vtheta_t) - \inf_{\vtheta \in \mathcal{N}_t} \gL(\vtheta))~.
    \label{eq:conv-1_app}
\end{equation}
\label{lemm:local_loss}
\end{lemm}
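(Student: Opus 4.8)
The plan is to combine the smoothness inequality (Condition~\ref{cond:smooth}) along the GD step with the restricted PL inequality of Lemma~\ref{lemm:RPL}. First I would instantiate Condition~\ref{cond:smooth} with $\vtheta = \vtheta_t$ and $\vtheta' = \vtheta_{t+1} = \vtheta_t - \eta_t \nabla_\vtheta\gL(\vtheta_t)$; this is legitimate because Assumption~\ref{asmp:iter-0} places both iterates inside $\cB(\vtheta_0)$, where $\beta$-smoothness holds. Substituting the GD update and using $\eta_t = \omega_t/\beta$ gives
\begin{equation*}
\gL(\vtheta_{t+1}) \leq \gL(\vtheta_t) - \eta_t\|\nabla_\vtheta\gL(\vtheta_t)\|_2^2 + \frac{\beta \eta_t^2}{2}\|\nabla_\vtheta\gL(\vtheta_t)\|_2^2 = \gL(\vtheta_t) - \frac{\omega_t(2-\omega_t)}{2\beta}\|\nabla_\vtheta\gL(\vtheta_t)\|_2^2,
\end{equation*}
where $\omega_t \in (0,2)$ ensures the coefficient $\omega_t(2-\omega_t)/(2\beta)$ is positive, so this is a genuine decrease controlled by the gradient norm.

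Next I would bring in Lemma~\ref{lemm:RPL}, whose hypothesis (Condition~\ref{cond:rsc}) is assumed here, giving $\|\nabla_\vtheta\gL(\vtheta_t)\|_2^2 \geq 2\alpha_t\big(\gL(\vtheta_t) - \inf_{\vtheta\in\mathcal{N}_t}\gL(\vtheta)\big)$. Plugging this lower bound into the previous display — which is valid because the coefficient multiplying $\|\nabla_\vtheta\gL(\vtheta_t)\|_2^2$ is negative, so a larger gradient norm only helps — yields
\begin{equation*}
\gL(\vtheta_{t+1}) \leq \gL(\vtheta_t) - \frac{\alpha_t\omega_t(2-\omega_t)}{\beta}\Big(\gL(\vtheta_t) - \inf_{\vtheta\in\mathcal{N}_t}\gL(\vtheta)\Big).
\end{equation*}
Finally I would subtract $\inf_{\vtheta\in\mathcal{N}_t}\gL(\vtheta)$ from both sides and factor out $\gL(\vtheta_t) - \inf_{\vtheta\in\mathcal{N}_t}\gL(\vtheta)$ to arrive at \eqref{eq:conv-1_app}. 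The condition $\alpha_t \leq \beta$ together with $\omega_t \in (0,2)$ guarantees the contraction factor $1 - \frac{\alpha_t\omega_t(2-\omega_t)}{\beta}$ is nonnegative (since $\omega_t(2-\omega_t) \leq 1$ and $\alpha_t/\beta \leq 1$), so the bound is meaningful rather than vacuous.

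There is no serious obstacle here; this is essentially a textbook descent-lemma argument adapted to the restricted setting. The only point requiring a little care is the direction of the inequalities — making sure that substituting the PL lower bound for $\|\nabla_\vtheta\gL(\vtheta_t)\|_2^2$ is done in a step where its coefficient is negative, and that $\mathcal{N}_t \subseteq \cB(\vtheta_0)$ (Condition~\ref{cond:rsc}(a)) is what lets the infimum over $\mathcal{N}_t$ interact cleanly with the smoothness inequality stated on $\cB(\vtheta_0)$. The subsequent global statement (Theorem~\ref{theo:global_main}) will presumably be obtained from this lemma by relating $\inf_{\mathcal{N}_t}\gL$ to $\inf_{\cB(\vtheta_0)}\gL$ via the ratio $\gamma_t$, but that is beyond the present claim.
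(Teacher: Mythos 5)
Your proposal is correct and follows essentially the same route as the paper's proof: apply the smoothness descent inequality along the GD step with $\eta_t=\omega_t/\beta$, substitute the restricted PL bound from Lemma~\ref{lemm:RPL} (whose negative coefficient makes the substitution valid), and rearrange by subtracting $\inf_{\vtheta\in\mathcal{N}_t}\gL(\vtheta)$. The extra remarks on the sign of the coefficient and on $\alpha_t\leq\beta$ keeping the contraction factor nonnegative are accurate but not needed beyond what the paper does.
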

\proof 
%
Since $\gL$ is $\beta$-smooth by Condition~\ref{cond:smooth}, we have
\begin{equation}
    \label{eq:lemma2-aux}
\begin{aligned}
\gL(\vtheta_{t+1}) & \leq \gL(\vtheta_t) + \langle \vtheta_{t+1} - \vtheta_t, \nabla_\vtheta\gL(\vtheta_t) \rangle + \frac{\beta}{2}\| \vtheta_{t+1} - \vtheta_t \|_2^2 \\
& = \gL(\vtheta_t) - \eta_t \| \nabla_\vtheta\gL(\vtheta_t) \|_2^2  + \frac{\beta \eta_t^2}{2} \| \nabla_\vtheta\gL(\vtheta_t) \|_2^2 \\
& = \gL(\vtheta_t) - \eta_t \left(1 - \frac{\beta \eta_t}{2} \right) \| \nabla_\vtheta\gL(\vtheta_t) \|_2^2~.
\end{aligned}
\end{equation}
Since $\alpha_t>0$ by assumption, from Lemma~\ref{lemm:RPL} we obtain
\begin{align*}
    - \| \nabla_\vtheta\gL(\vtheta_t) \|_2^2 \leq - 2\alpha_t (\gL(\vtheta_t) - \inf_{\vtheta \in \mathcal{N}_t} \gL(\vtheta) )~.
\end{align*}
Hence
\begin{align*}
\gL(\vtheta_{t+1}) - \inf_{\vtheta \in \mathcal{N}_t} \gL(\vtheta) & \leq \gL(\vtheta_t) - \inf_{\vtheta \in \mathcal{N}_t} \gL(\vtheta)  - \eta_t \left(1 - \frac{\beta \eta_t}{2} \right) \| \nabla_\vtheta\gL(\vtheta_t) \|_2^2  \\
& \overset{(a)}{\leq} \gL(\vtheta_t) - \inf_{\vtheta \in \mathcal{N}_t} \gL(\vtheta) - 
\eta_t \left(1 - \frac{\beta \eta_t}{2} \right)2\alpha_t(\gL(\vtheta_t) - \inf_{\vtheta \in \mathcal{N}_t} \gL(\vtheta)) \\
& = \left(1 - 
2\alpha_t\eta_t \left(1 - \frac{\beta \eta_t}{2} \right)
\right) (\gL(\vtheta_t) - \inf_{\vtheta \in \mathcal{N}_t} \gL(\vtheta))
\end{align*}
where (a) follows for any $\eta_t\leq \frac{2}{\beta}$ because this implies $1-\frac{\beta\eta_t}{2}\geq 0$. Choosing $\eta_t = \frac{\omega_t}{\beta}, \omega_t \in (0,2)$,
$$
    \gL(\vtheta_{t+1}) - \inf_{\vtheta \in \mathcal{N}_t} \gL(\vtheta)  \leq \left(1-\frac{\alpha_t \omega_t}{\beta}(2-\omega_t)\right) (\gL(\vtheta_t) - \inf_{\vtheta \in \mathcal{N}_t} \gL(\vtheta))~.
$$
This completes the proof. \qed

Finally, we show that the local loss reduction result in $\mathcal{N}_t$ from Lemma~\ref{lemm:local_loss} can be extended to show loss reduction in $\mathcal{B}(\vtheta_0)$, which is the main optimization result. 
\GlobalLossSmooth*
\begin{proof} We start by showing $\gamma_t = \frac{\inf_{\vtheta \in \mathcal{N}_t} \gL(\vtheta) - \underset{\vtheta \in \cB(\vtheta_0)}{\inf} \gL(\vtheta)}{\gL(\vtheta_t) - \underset{\vtheta \in \cB(\vtheta_0)}{\inf} \gL(\vtheta)}$ satisfies $0 \leq \gamma_t < 1$. \pcedit{First of all, we note that this quantity is well-defined because we are assuming that $\gL(\vtheta_t)\neq \underset{\vtheta \in \cB(\vtheta_0)}{\inf} \gL(\vtheta)$, i.e., that the current iterate does not attain the minimum loss.}
The fact that $\gamma_t \geq 0$ follows immediately from $\underset{\vtheta \in \cB(\vtheta_0)}{\inf} \gL(\vtheta)< \gL(\vtheta_{t})$ and $\underset{\vtheta \in \cB(\vtheta_0)}{\inf} \gL(\vtheta)\leq \underset{\vtheta \in \mathcal{N}_t}{\inf} \gL(\vtheta)$ since $\mathcal{N}_t\subseteq\cB(\vtheta_0)$ by Condition~\ref{cond:rsc}(a).
%
%
%
Now, there are two ways to prove that $\gamma_t<1$ depending on whether we consider 
Condition~\ref{cond:rsc}(b.1) or Condition~\ref{cond:rsc}(b.2).

We start by considering Condition~\ref{cond:rsc}(b.1) and 
prove by contradiction that $\gamma_t<1$. Assume that $\gamma_t\geq 1$, i.e., 
$\underset{\vtheta \in \mathcal{N}_t}{\inf}\gL(\vtheta)\geq \cL(\vtheta_{t})$. Then, we note that 
\begin{align}
\label{eq:oinkaaa1}
\inf_{\vtheta \in \mathcal{N}_t} \gL(\vtheta) \overset{(a)}{\leq} \gL(\vtheta_{t+1}) \overset{(b)}{\leq} \gL(\vtheta_t) - \eta_t \left( 1 - \frac{\beta \eta_t}{2} \right) \| \nabla_\vtheta\gL(\vtheta_t) \|_2^2 \overset{(c)}{\leq} \inf_{\vtheta \in \mathcal{N}_t} \gL(\vtheta)- \eta_t \left( 1 - \frac{\beta \eta_t}{2} \right) \| \nabla_\vtheta\gL(\vtheta_t) \|_2^2~, 
\end{align}
where (a) follows from $\vtheta_{t+1}\in\mathcal{N}_t$, (b) from~\eqref{eq:lemma2-aux}, and (c) from $\gamma_t\geq 1$. Then, comparing the leftmost and rightmost inequalities in~\eqref{eq:oinkaaa1}, we must have $\| \nabla_\vtheta\gL(\vtheta_t) \|_2 = 0$ since $\eta_t \left( 1 - \frac{\beta \eta_t}{2} \right)>0$ because of $\eta_t = \frac{\omega_t}{\beta}$ with $\omega_t \in (0,2)$. 
%
%
Now, when considering Condition~\ref{cond:rsc}(b.1), we either assumed that $\vtheta\notin\mathcal{N}_t$ or that $\gL(\vtheta_t)\neq\underset{\vtheta \in \mathcal{N}_t}{\inf} \gL(\vtheta)$; thus, we analyze both cases.
\begin{enumerate}[(i)]
    \item \label{condi}\textbf{Assuming $\vtheta\notin\mathcal{N}_t$:} Note that $\| \nabla_\vtheta\gL(\vtheta_t) \|_2 = 0$ implies $ \nabla_\vtheta\gL(\vtheta_t) = \vzero_p$ (i.e., the gradient evaluated at $\vtheta_t$ is the zero vector), which then, due to GD in~\eqref{eq:gd_at_t}, implies $\vtheta_t=\vtheta_{t+1}$. Since we had $\vtheta_{t+1}\in\mathcal{N}_t$, this then means that $\vtheta_t\in\mathcal{N}_t$---a contradiction to our assumption.
    \item \label{condii}\textbf{Assuming $\gL(\vtheta_t)\neq\underset{\vtheta \in \mathcal{N}_t}{\inf}\gL(\vtheta)$:} Note that $\| \nabla_\vtheta\gL(\vtheta_t) \|_2 = 0$ implies that all the inequalities in~\eqref{eq:oinkaaa1} are also equalities. This then implies that $\gL(\vtheta_t)=\underset{\vtheta \in \mathcal{N}_t}{\inf}\gL(\vtheta)$---a contradiction to our assumption.
\end{enumerate}
In either case~\eqref{condi} or~\eqref{condii}, our proof by contradiction shows that 
$\gamma_t < 1$. 
%
%

We now consider Condition~\ref{cond:rsc}(b.2) with the 
element $\vtheta'\in\mathcal{N}_t$ as described in the condition's statement.  
We immediately obtain that $\vtheta'$ satisfies $\underset{\vtheta \in \mathcal{N}_t}{\inf}\gL(\vtheta) \leq \cL(\vtheta') < \cL(\vtheta_t)$, which then implies $\gamma_t<1$.



Having shown that $\gamma_t<1$ according to Condition~\ref{cond:rsc}(b), we now proceed to prove equation~\eqref{eq:conv-0}.
\pcedit{We consider two cases: (A) $\gamma_t>0$ and (B) $\gamma_t = 0$.} 

\pcedit{We start by considering Case (A), which holds if and only if $\underset{\vtheta \in \mathcal{N}_t}{\inf}\gL(\vtheta)>\underset{\vtheta \in \cB(\vtheta_0)}{\inf} \gL(\vtheta)$.}
\pcedit{We now 
define $\delta_t:=\frac{\gL(\vtheta_t)-\inf_{\vtheta \in \mathcal{N}_t} \gL(\vtheta)}{\gL(\vtheta_t)-\underset{\vtheta \in \cB(\vtheta_0)}{\inf} \gL(\vtheta)}$ and note that $\delta_t\in(0,1)$ since $\delta_t=1-\gamma_t$.} 
Now, with $\omega_t \in (0,2)$, we have
\pcedit{\begin{align*}
\gL(\vtheta_{t+1}) - \underset{\vtheta \in \cB(\vtheta_0)}{\inf} \gL(\vtheta) 
& = \gL(\vtheta_{t+1}) - \inf_{\vtheta \in \mathcal{N}_t} \gL(\vtheta) + \inf_{\vtheta \in \mathcal{N}_t} \gL(\vtheta) - \underset{\vtheta \in \cB(\vtheta_0)}{\inf} \gL(\vtheta) \\
& \overset{(a)}{\leq} \left(1-\frac{\alpha_t \omega_t}{\beta}(2-\omega_t) \right) (\gL(\vtheta_{t}) - \inf_{\vtheta \in \mathcal{N}_t} \gL(\vtheta)) +  (\inf_{\vtheta \in \mathcal{N}_t} \gL(\vtheta) - \underset{\vtheta \in \cB(\vtheta_0)}{\inf} \gL(\vtheta)) \\
& = \left(1-\frac{\alpha_t \omega_t}{\beta}(2-\omega_t)\right)\delta_t (\gL(\vtheta_{t}) - \underset{\vtheta \in \cB(\vtheta_0)}{\inf} \gL(\vtheta)) + (1-\delta_t)(\gL(\vtheta_t) - \underset{\vtheta \in \cB(\vtheta_0)}{\inf} \gL(\vtheta)) \\
& \overset{(b)}{=} \left(1-\frac{\alpha_t \omega_t}{\beta}(2-\omega_t)(1-\gamma_t)\right) (\gL(\vtheta_{t}) - \underset{\vtheta \in \cB(\vtheta_0)}{\inf} \gL(\vtheta))~,
\end{align*}}\pcedit{which is~\eqref{eq:conv-0}, and} where (a) follows from Lemma~\ref{lemm:local_loss} 
\pcedit{and (b) 
follows from
$$
\left(1-\frac{\alpha_t\omega_t}{\beta}(2-\omega_t)\right)\delta_t+(1-\delta_t)=1-\frac{\alpha_t\omega_t}{\beta}(2-\omega_t)\delta_t=1-\frac{\alpha_t\omega_t}{\beta}(2-\omega_t)(1-\gamma_t)~.
$$
}

\pcedit{We now consider Case (B), i.e., $\gamma_t=0$, which holds if and only if $\underset{\vtheta \in \mathcal{N}_t}{\inf}\gL(\vtheta)=\underset{\vtheta \in \cB(\vtheta_0)}{\inf} \gL(\vtheta)$.} 
\pcedit{Then, we have 
\pcedit{\begin{align*}
\gL(\vtheta_{t+1}) - \underset{\vtheta \in \cB(\vtheta_0)}{\inf} \gL(\vtheta) 
& = \gL(\vtheta_{t+1}) - \inf_{\vtheta \in \mathcal{N}_t} \gL(\vtheta) \\
& \overset{(a)}{\leq} \left(1-\frac{\alpha_t \omega_t}{\beta}(2-\omega_t) \right) (\gL(\vtheta_{t}) - \inf_{\vtheta \in \mathcal{N}_t} \gL(\vtheta))\\
& = \left(1-\frac{\alpha_t \omega_t}{\beta}(2-\omega_t)\right) (\gL(\vtheta_{t}) - \underset{\vtheta \in \cB(\vtheta_0)}{\inf} \gL(\vtheta))~,
\end{align*}}which is~\eqref{eq:conv-0} when $\gamma_t=0$, and where (a) follows from Lemma~\ref{lemm:local_loss}.}
This completes the proof.
\label{theo:global}
\end{proof}

\section{Analysis for Deep Operator Networks}
\label{app:donopt}

\subsection{Bounds on the Hessian, Gradients and the Predictor}

The convergence analysis makes use of the gradients and Hessians of the empirical loss with respect to the parameters $\vtheta$, namely, 
\begin{align}
    \nabla_{\vtheta} \gL(\vtheta) = 
    \begin{bmatrix}
        \nabla_{\vtheta_f}\gL^\top \; \nabla_{\vtheta_g}\gL^\top
    \end{bmatrix}^\top,\quad \text{and} \qquad 
    \nabla_{\vtheta}^2\gL(\vtheta) = 
    \mH
    \left({\vtheta}\right) 
    & = \left[
        \begin{array}{c c}
        H_{ff} & H_{fg} \\
        H_{gf} & H_{gg}
        \end{array}
    \right],
    \label{eq:gradHessDeepONetLoss}
\end{align}
where $\nabla_{\vtheta_f}\gL(\vtheta)=\partial \gL(\vtheta)/\partial\vtheta_f\in \R^{p_f}$ and  $\nabla_{\vtheta_g}\gL(\vtheta)=\partial \gL(\vtheta)/\partial\vtheta_g\in \R^{p_g}$. Note that we make use of the notation $\nabla_{\vtheta_f}(\cdot)$ to denote the derivative with respect to the parameters $\vtheta_f$ and this \emph{is not} a functional gradient. Similarly, the individual blocks in the $2\times 2$ block Hessian $\mH(\vtheta)$ are given by 
\begin{equation}
    H_{ff} = \nabla^2_{\vtheta_f}\gL = \ddel{\gL}{\vtheta_f},\quad H_{fg} = \frac{{\partial^2\gL}}{\partial\vtheta_f\partial\vtheta_g},\quad H_{gf} = H_{fg}^\top= \frac{{\partial^2\gL}}{\partial\vtheta_g\partial\vtheta_f}, \quad H_{gg} = \nabla^2_{\vtheta_g}\gL = \ddel{\gL}{\vtheta_g},\label{eq:hessian_blocks_expanded}
\end{equation}
where $H_{ff}\in\R^{p_f\times p_f}$,\ $H_{gg}\in\R^{p_g\times p_g}$,  $H_{fg}\in\R^{p_f\times p_g}$, $H_{gf}\in \R^{p_g\times p_f}$ and the argument $\vtheta$ is ignored for clarity of exposition. Using \eqref{eq:loss-don} and rewriting the derivatives in \eqref{eq:gradHessDeepONetLoss} and \eqref{eq:hessian_blocks_expanded}, 
recalling that 
$\ell_{i,j}=(G_{\vtheta}(u^{(i)})(\vy_{j}^{(i)})-G^\dagger(u^{(i)})(\vy^{(i)}_j))^2$, 
we get
\begin{align}
    \del{\gL}{\vtheta_f}
    = 
    \frac{1}{n}\sum_{i=1}^n \frac{1}{q_i} \sum_{j=1}^{q_i} 
    \ell^{\prime}_{i,j} 
    \sum_{k=1}^K g_{k,j}^{(i)} \nabla_{\vtheta_f} f_k^{(i)}
    \quad \text{and}\quad
    \del{\gL}{\vtheta_g} 
    =
    \frac{1}{n}\sum_{i=1}^n \frac{1}{q_i} \sum_{j=1}^{q_i} 
    \ell^{\prime}_{ij} 
    \sum_{k=1}^K f_k^{(i)}\nabla_{\vtheta_g} g_{k,j}^{(i)},
    \label{eq:Gradients_fully_expanded}
\end{align}
\begin{align}
    \begin{aligned}
        \ddel{\gL}{\vtheta_f} &= 
        \frac{1}{n} \sum_{i=1}^n \frac{1}{q_i}\sum_{j=1}^{q_i} \ell_{i,j}^{\prime} \sum_{k=1}^{K} g_{k,j}^{(i)} \nabla_{\vtheta_f}^{2} f_{k}^{(i)} 
        +
        \frac{1}{n}\sum_{i=1}^n \frac{1}{q_i} \sum_{j=1}^{q_i} \ell_{i,j}^{\prime \prime}
        \left(
            \sum_{k,k^\prime=1}^{K} g_{k,j}^{(i)}g_{k^\prime,j}^{(i)} \nabla_{\vtheta_f} f_{k}^{(i)} \nabla_{\vtheta_f} f_{k^\prime}^{{(i)}\top} \right),\\
        \ddel{\gL}{\vtheta_g} 
        &=
        \frac{1}{n} \sum_{i=1}^{n} \frac{1}{q_i} \sum_{j=1}^{q_i} \ell_{i,j}^{\prime} \sum_{k=1}^{k} f_{k}^{(i)} \nabla_{\vtheta_g}^{2} g_{k,j}^{(i)}
        +
        \frac{1}{n}\sum_{i=1}^n \frac{1}{q_i} \sum_{j=1}^{q_i}
        \ell_{i,j}^{\prime \prime}
        \left(\sum_{k,k^\prime=1}^{K} f_{k}^{(i)}f_{k^\prime}^{(i)} \nabla_{\vtheta_g} g_{k,j}^{(i)
        } \nabla_{
\vtheta_g} {g_{k^\prime,j}}^{{(i)}\top}
        \right),\\
        \frac{{\partial^2\gL}}{\partial\vtheta_f\partial\vtheta_g}
        &=
            \frac{1}{n}\sum_{i=1}^n \frac{1}{q_i} \sum_{j=1}^{q_i} 
        \ell_{i,j}^{\prime} 
        \sum_{k=1}^{K} \nabla_{\vtheta_f} f_{k}^{(i)} \nabla_{\vtheta_g} g_{k,j}^{(i){\top}}
        +
            \frac{1}{n}\sum_{i=1}^n \frac{1}{q_i} \sum_{j=1}^{q_i} \ell_{i,j}^{\prime \prime}
        \left(
            \sum_{k,k'=1}^{K} g_{k,j}^{(i)}f_{k'}^{(i)} \nabla_{\vtheta_f} f_{k}^{(i)}
             \nabla_{\vtheta_g} {g_{k',j}}^{{(i)}\top}
        \right),
    \end{aligned}
    \label{eq:Hessian_blocks_fully_Expanded}
\end{align}
for the individual blocks of the Hessian \eqref{eq:gradHessDeepONetLoss} where we make use of the notation $f^{(i)}_k = f_k (\vtheta_f; u^{(i)})$ and $g^{(i)}_{k,j} = g_k(\vtheta_g;\vy^{(i)}_j)$. In the rest of the paper, with some abuse of notation, we also make use of the implicit notation $f^{(i)}_k(\vtheta_f) = f_k (\vtheta_f; u^{(i)})$ and $g^{(i)}_{k,j}(\vtheta_g) = g_k(\vtheta_g;\vy^{(i)}_j)$.

In order to prove the RSC and smoothness properties of the empirical loss $\cL$, we need to upper bound the spectral norm of its Hessian. As can be seen above, the gradient and Hessian of the predictors (i.e., the branch $f_k^{(i)}$ and trunk $g^{(i)}_{k,j}$ networks, $k\in[K]$, $j\in[q_i]$, $i\in[n]$) appear in the Hessian of $\cL$, and thus, we will eventually need the upper bound of their norms. For this, we will make use of the next lemma.  
\begin{lemm}[{\bf Bounds on the predictors}]
\label{lemm:hessgradbounds}
Under \TwoAsmpsref{asmp:Activation_Function}{asmp:smoothinit}, and for $\vtheta \in B^{\mathrm{Euc}}_{\rho,\rho_1}(\vtheta_0)$, with probability at least {$1-2KL\left(\frac{1}{m_f}+\frac{1}{m_g}\right)$}, we have for every $k\in [K]$, $i\in[n]$, $j\in[q_i]$,
\begin{align}
    \begin{aligned}
        \left\|\nabla^2_{\vtheta_f} f^{(i)}_k \right\| \leq \frac{c^{(f)}}{\sqrt{m_f}} \quad \text{and}\quad 
        \left\|\nabla^2_{\vtheta_g} g^{(i)}_{k,j} \right\| \leq \frac{c^{(g)}}{\sqrt{m_g}}~,
        \\
    \left\| \nabla_{\vtheta_f} f_k^{(i)}\right\|_2 \leq \varrho^{(f)} \quad \text{and}\quad \left\| \nabla_{\vtheta_g} g^{(i)}_{k,j}\right\|_2 \leq \varrho^{(g)}~,\\
     |f_k^{(i)}| \leq \lambda_1, \quad \text{and}\quad |  g^{(i)}_{k,j}| \leq \lambda_2~,
    \end{aligned}\label{eq:gradientBoundG_fg}
\end{align}
where $c^{(f)}$, $c^{(g)}$, $\varrho^{(f)}$, $\varrho^{(g)}$, $\lambda_1$, and $\lambda_2$ are suitable constants that depend on $\sigma_0$, the depth $L$ and the radii $\rho$, $\rho_1$. 
%
\pcedit{The dependence of the constants reduces to the depth and the radii and becomes polynomial whenever $\sigma_0\leq 1-\rho\max\{\frac{1}{\sqrt{m_f}},\frac{1}{\sqrt{m_g}}\}$.}
\end{lemm}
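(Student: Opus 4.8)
The plan is to treat the branch net $\vf$ and the trunk net $\vg$ as two independent fully connected feedforward networks with $1/\sqrt{m}$ layer scaling, and to establish the stated bounds via the standard forward/backward recursions, reusing the predictor, gradient and Hessian estimates for feedforward networks of the type in \citet{liu_linearity_2021} and \citet{banerjee2022restricted}. Since $\vf$ and $\vg$ have identical structure (smooth $1$-Lipschitz activation, $1/\sqrt{m}$ scaling, unit-norm last-layer rows, normalized input), it suffices to carry out the argument for $\vf$ and obtain the statement for $\vg$ by symmetry with $m_f$ replaced by $m_g$ and input norm $\sqrt{R}$ replaced by $\norm{\vy}_2=\sqrt{d_y}$.

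\emph{Step 1: spectral norms of the weight matrices.} At initialization each hidden $W^{(l)}_{f,0}$, $l\in[L-1]$, has i.i.d.\ $\gN(0,\sigma_{f,0}^2)$ entries, so by Gaussian concentration of the operator norm (a Bai--Yin-type mean bound together with a Lipschitz-concentration tail, with the non-square first layer handled by the extra $\sqrt{R}\le\sqrt{m_f}$ term) we get, with probability at least $1-\frac{1}{m_f}$ per layer, $\big\|\tfrac{1}{\sqrt{m_f}}W^{(l)}_{f,0}\big\|_2 \le 2\sigma_{f,0}\big(1+\tfrac{\sqrt{\log m_f}}{\sqrt{2 m_f}}\big)=\sigma_0$, where the precise form of $\sigma_{f,0}$ in Assumption~\ref{asmp:smoothinit} is chosen exactly so that this collapses to $\sigma_0$; the unit-norm last-layer rows need no concentration. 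A union bound over the $L$ layers, the $K$ last-layer neurons, and the two networks yields the failure probability $2KL\big(\tfrac{1}{m_f}+\tfrac{1}{m_g}\big)$ in the statement. Then for any $\vtheta\in B^{\mathrm{Euc}}_{\rho,\rho_1}(\vtheta_0)$, the triangle inequality gives $\big\|\tfrac{1}{\sqrt{m_f}}W^{(l)}_f\big\|_2\le\sigma_0+\tfrac{\rho}{\sqrt{m_f}}$, which under $\sigma_0\le 1-\rho\max\{\tfrac{1}{\sqrt{m_f}},\tfrac{1}{\sqrt{m_g}}\}$ is $\le 1$ --- the key fact that converts otherwise exponential-in-$L$ constants into polynomial-in-$L$ ones.

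\emph{Step 2: recursions for predictor, gradient, Hessian.} With $\big\|\tfrac{1}{\sqrt{m_f}}W^{(l)}_f\big\|_2\le 1$ (in general $\le\sigma_0+\rho/\sqrt{m_f}$) and $\phi$ $1$-Lipschitz, a forward pass bounds $\norm{\aalpha^{(l)}_f}_2$ layer by layer from $\norm{\aalpha^{(0)}_f}_2=\sqrt{R}$, giving $|f^{(i)}_k|\le\lambda_1$ (dimension-free because the last-layer rows have unit norm); any nonzero $\phi(0)$ contributes an additive per-layer term absorbed into $\lambda_1$. Backpropagation writes $\nabla_{W^{(l)}_f}f^{(i)}_k$ as an outer product of a layer activation with a backpropagated vector that is a product of scaled weight matrices and diagonal activation-derivative matrices, each of operator norm $\le 1$; summing squared Frobenius norms over $l\in[L]$ gives $\norm{\nabla_{\vtheta_f}f^{(i)}_k}_2\le\varrho^{(f)}$. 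For the Hessian, the second-order recursion shows every term either carries a factor $\phi''\le\beta_\phi$ multiplied by a layer scaling $\tfrac{1}{\sqrt{m_f}}$, or is a mixed term already carrying an explicit $\tfrac{1}{\sqrt{m_f}}$; in all cases one extra $\tfrac{1}{\sqrt{m_f}}$ survives relative to the gradient bound, so $\big\|\nabla^2_{\vtheta_f}f^{(i)}_k\big\|_2\le c^{(f)}/\sqrt{m_f}$ --- the Hessian-scaling (``transition to linearity'') phenomenon. The same recursions apply verbatim to $\vg$.

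\emph{Main obstacle.} The difficulty is bookkeeping rather than conceptual: one must verify that every intermediate constant ($\lambda_1,\lambda_2,\varrho^{(f)},\varrho^{(g)},c^{(f)},c^{(g)}$) depends on $L$ and the radii $\rho,\rho_1$ \emph{polynomially}, which hinges entirely on the $\le 1$ spectral-norm bound of Step 1 --- without it the products over the $L$ layers would contribute a $\sigma_0^{\Theta(L)}$-type factor and the dependence would become exponential. A secondary point is to track the $|\phi(0)|$ contributions (since $\phi(0)$ need not be $0$) through both the forward pass and the Hessian recursion; they are bounded and do not affect the stated scaling in $m_f,m_g$.
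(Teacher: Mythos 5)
Your proposal is correct and follows essentially the same route as the paper: the paper also treats the branch and trunk nets as two independent feedforward networks and simply invokes Theorem~4.1 and Lemma~4.1 of \citet{banerjee2022restricted} (whose proofs are exactly the spectral-norm-at-initialization plus forward/backward recursion argument you sketch, with the $\sigma_0\le 1-\rho\max\{1/\sqrt{m_f},1/\sqrt{m_g}\}$ condition giving polynomial dependence on $L$), followed by a union bound over the $K$ outputs and the two networks to reach the probability $1-2KL(\tfrac{1}{m_f}+\tfrac{1}{m_g})$. The only nit is your probability bookkeeping (the factor $2$ comes from the per-network, per-output failure probability $\tfrac{2L}{m_f}$ resp.\ $\tfrac{2L}{m_g}$, not from counting the two networks), but this does not affect the stated bound.
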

\proof The proof follows from a direct adaptation of Theorem~4.1 \pcedit{and of both the statement and proof of} Lemma~4.1 in~\citep{banerjee2022restricted} to our setting. \pcedit{Indeed, these results show that $c^{(f)}$, $\varrho^{(f)}$, and $\lambda_1$ depend on $\sigma_0$, the depth $L$ and the radii $\rho$, $\rho_1$; and that such dependence reduces to the depth and the radii and becomes polynomial whenever $\sigma_0\leq 1-\frac{\rho}{\sqrt{m_f}}$. A similar dependence is obtained for the constants $c^{(g)}$, $\varrho^{(g)}$, and $\lambda_2$ whenever $\sigma_0\leq 1-\frac{\rho}{\sqrt{m_g}}$. The last statement in Lemma~\ref{eq:gradientBoundG_fg} follows immediately.} 
Finally, since the bound for a single branch network output holds with probability at least $1-\frac{2L}{m_f}$ and for a single trunk network output holds with probability at least $1-\frac{2L}{m_g}$, then in order for these bounds to hold for the $K$ outputs of all predictors, we obtain the overall probability using De Morgan's law and a union bound.\qed


\subsection{RSC and Smoothness Results}
Using the results from the previous section, we derive the RSC and smoothness results. 

\RSCLoss*

\proof 
We start by proving the first part of the theorem's statement. 
%
Since $B^t_{\kappa}\subset B^{\mathrm{Euc}}_{\rho,\rho_1}(\vtheta_0)$, we satisfy Condition~\ref{cond:rsc}(a). We now need to satisfy Condition~\ref{cond:rsc}(b). For this, we first show the existence of an element $\vtheta' \in B^t_{\kappa}$. For such $\vtheta'$, it must be true that $\vtheta' \in Q_{\kappa}^t$. From Definition~\ref{defn:qset_DON}, 
$\vtheta'$ needs to satisfy three conditions: 
\begin{align*}
|\cos(\vtheta' - \vtheta_t, \nabla_{\vtheta} \bar{G}_{\vtheta_t})| & \geq \kappa \quad \text{(cosine similarity condition)}~,\\
(\vtheta'_f-\vtheta_{f,t})^\top\left(\frac{1}{n} \sum_{i=1}^n \frac{1}{q_i} \sum_{j=1}^{q_i} \ell'_{i,j} \sum_{k=1}^K \nabla_{\vtheta_{f}} f_k^{(i)} \nabla_{\vtheta_{g}} g_{k,j}^{(i)~\top}\right)(\vtheta'_g-\vtheta_{g,t})  &\geq 0\quad \text{(average condition)}~,\\
(\vtheta'_f-\vtheta_{f,t})^\top\left( \sum_{k=1}^K \nabla_{\vtheta_{f}} f_k^{(i)} \nabla_{\vtheta_{g}} g_{k,j}^{(i)~\top}\right)(\vtheta'_g-\vtheta_{g,t})&\leq 0,\forall i\in[n],\forall j\in[q_i]
\quad \text{(output condition)}.
\end{align*}
%
Let us consider 
%
$\vtheta'=[{\vtheta'_f}^{\top}\;{\vtheta'_g}^{\top}]^{\top}$,
%
where 
$\vtheta'_f\in\R^{p_f}$ will be specified later and $\vtheta'_g = \vtheta_{g,t}$. 
Then, belonging to the $Q_\kappa^t$ set conveniently reduces to the feasibility of the cosine similarity condition as follows: 
\begin{align}
\label{eq:cs-cond}
  |\cos(\vtheta'_f-\vtheta_{f,t}\,, \bar{\g}_f \rangle)| \geq \kappa~,
\end{align}
where $\bar{\g}_f$ is the first $p_f$ components of the gradient $\nabla_{\vtheta}\bar{G}_{\vtheta_t}$ (recall that the cosine computation is invariant to the vector norms).
%
%
%
%
%
%

With all of this in mind, we proceed to show the existence of an element $\vtheta' \in B^t_{\kappa}$ of the form $\vtheta'=[{\vtheta'_f}^{\top}\;{\vtheta_{g,t}}^{\top}]^{\top}$ 
%
%
satisfying condition~\eqref{eq:cs-cond} and the following two conditions:
\begin{enumerate}[{Condition} (A):]
\item  $\|\vtheta'_f - \vtheta_{f,t} \|_2 = \epsilon$ for some $\epsilon< \frac{2 \norm{\nabla_{\vtheta_f} \cL(\vtheta_t)}_2 \sqrt{1-\kappa^2}}{\beta}$; and \label{cond-1-don}
\item the angle $\nu'$ between $(\vtheta'_f - \vtheta_{f,t})$ and $-\nabla_{\vtheta_f} \cL(\vtheta_t)$ is acute, so that $\cos(\nu') > 0$. \label{cond-2-don}\end{enumerate}
%


To show the existence of such element $\vtheta' \in B_t$, we propose two possible constructions:
\begin{enumerate}[{Choice} (A):]
\item  
If the points $\vtheta_{f,t+1}$, $\bar{\g}_f + \vtheta_{f,t}$, and $\vtheta_{f,t}$ are not collinear, then they define a hyperplane $\mathcal{P}$ that contains the vectors $\bar{\g}_f$ and $-\nabla_{\vtheta_f} \cL(\vtheta_t)$ (recall that $\vtheta_{f,t+1} - \vtheta_{f,t}=-\nabla_{\vtheta_f}\cL(\vtheta_t)$ by gradient descent). We choose $\vtheta'_f$ such that the vector 
$\vtheta'_f-\vtheta_{f,t}$ 
lies in 
$\mathcal{P}$ 
%
%
with 
$\cos(\vtheta'_f-\vtheta_{f,t},\bar{\g}_f)=\kappa$ (i.e., it satisfies condition~\eqref{eq:cs-cond} with equality) while simultaneously satisfying Condition~\eqref{cond-2-don}.
If the points $\vtheta_{f,t+1}$, $\bar{\g}_f + \vtheta_{f,t}$, and $\vtheta_{f,t}$ are collinear, we choose $\vtheta'_f$ such that it is not collinear with these points, thus defining a hyperplane $\mathcal{P}$ 
with these other three points, 
and such that 
$\vtheta'_f$ is also taken so that $\cos(\vtheta'_f-\vtheta_{f,t},\bar{\g}_f)=\kappa$ while simultaneously satisfying Condition~\eqref{cond-2-don}.

Thus far we have only defined \emph{angle} (or \emph{direction}) conditions on the vector $\vtheta'_f-\vtheta_{f,t}$, and so there could be an infinite number of values for $\vtheta'_f$ satisfying such angle conditions without $\vtheta'$ belonging to the set $B^{\mathrm{Euc}}_{\rho,\rho_1}(\vtheta_0)$ nor $\vtheta'_f$ satisfying Condition~\eqref{cond-1-don}. To determine the feasible values for $\vtheta'_f$, we observe that $\vtheta_t$ is \emph{strictly inside} the set $B^{\mathrm{Euc}}_{\rho,\rho_1}(\vtheta_0)$ by Assumption~\ref{asmp:iter-1}, and so 
%
%
$\vtheta'_f$ can be taken arbitrarily close to $\vtheta_{f,t}$ so that $\vtheta'\in B^{\mathrm{Euc}}_{\rho,\rho_1}(\vtheta_0)$ and Condition~\eqref{cond-1-don} is satisfied. 

We remark that, regardless of the collinearity of the points 
$\vtheta_{f,t+1}$, $\bar{\g}_f + \vtheta_{f,t}$, and $\vtheta_{f,t}$, hyperplane $\mathcal{P}$ contains the vectors $\vtheta'_f-\vtheta_{f,t}$, $\bar{\g}_f$, and $-\nabla_{\vtheta_f}\cL(\vtheta_t)$, all sharing its origin at $\vtheta_{f,t}\in\mathcal{P}$. \label{ch-A-don}
%
\item  
%
%
We choose $\vtheta'$ as in Choice~\eqref{ch-A-don} but with $\bar{\g}_f$ replaced by $-\bar{\g}_f$.
%
\label{ch-B-don}
\end{enumerate}
We immediately notice that $\vtheta'$ defined by either Choice~\eqref{ch-A-don} or Choice~\eqref{ch-B-don} satisfies 
$\vtheta'\in Q^t_\kappa \cap B^{\mathrm{Euc}}_{\rho,\rho_1}(\vtheta_0)$. To make $\vtheta'$ belong to the set $B^t_\kappa$, we need to find a radius $\rho_2$ such that $\vtheta'\in B^{\mathrm{Euc}}_{\rho_2}(\vtheta_t)$, or, equivalently, such that $\vtheta'_f\in B^{\mathrm{Euc}}_{\rho_2}(\vtheta_{f,t})$ due to our construction of $\vtheta'$. Such $\rho_2$ is found by taking $\rho_2>\epsilon$ with $\epsilon$ as in Condition~\eqref{cond-1-don}. 
Finally, it is straightforward to verify that such $\vtheta'\in B^t_\kappa$ defined by either Choice~\eqref{ch-A-don} or Choice~\eqref{ch-B-don} will always exist, by considering the following cases for the angle $\nu$ between $\bar{\g}_f$ and $-\nabla_{\vtheta_f} \cL(\vtheta_t)$:
\begin{enumerate}[(i)]
\item If $\nu \in [0, \pi/2]$ or $\nu \in [3\pi/2, 2\pi]$, then Choice~\eqref{ch-A-don} will be true, since $-\nabla_{\vtheta_f} \cL(\vtheta_t)$ is in the positive half space\footnote{We say $\a$ is in the positive half-space of $\b$ if $\langle \a, \b \rangle \geq 0$.} of $\bar{\g}_f$; and
\label{it-i-don}
\item if $\nu \in [\pi/2,\pi]$ or $\nu \in [\pi, 3\pi/2]$, then Choice~\eqref{ch-B-don} will be true, since $-\nabla_{\vtheta_f} \cL(\vtheta_t)$ is in the positive half space of $-\bar{\g}_f$.\label{it-ii-don}
\end{enumerate}

%
Now, let us assume we are in the case of item~\eqref{it-i-don} above, so that $\vtheta'$ is constructed according to Choice~\eqref{ch-A-don} (the rest of the proof can be adapted to the case of item~\eqref{it-ii-don} by using a symmetrical argument and so it is omitted). 
Let $\nu_1$ be the angle between $\vtheta'_f-\vtheta_{f,t}$ and $\bar{\g}_f$, so that $\cos(\nu_1)=\kappa$ according to Choice~\eqref{ch-A-don}.
%
Then, we have that 
\begin{align*}
|\cos(\nu')| = |\cos(\nu - \nu_1)| \geq |\cos(\pi/2 - \nu_1)| = |\sin(\nu_1)| = \sqrt{1-\cos^2(\nu_1)} = \sqrt{1-\kappa^2}~.
\end{align*}
Further, by the construction in Condition~\eqref{cond-2-don}, $\cos(\nu') > 0$, which implies 
$\cos(\nu') \geq  \sqrt{1-\kappa^2}>0$.
%
%
%
Now, 
by the smoothness property of the empirical loss $\cL$ we have
\begin{align*}
\cL(\vtheta') & \leq \cL(\vtheta_t) - \langle \vtheta' - \vtheta_t, -\nabla_\vtheta \cL(\vtheta_t) \rangle + \frac{\beta}{2}\| \vtheta' - \vtheta_t \|_2^2 \\ 
& = \cL(\vtheta_t) - \|\vtheta'_f - \vtheta_{f,t}\|_2 \|\nabla_{\vtheta_f} \cL(\vtheta_t) \|_2 \cos(\nu) + \frac{\beta}{2}\| \vtheta'_f - \vtheta_{f,t} \|_2^2 \\
& =  \cL(\vtheta_t) -  \epsilon \|\nabla_{\vtheta_f} \cL(\vtheta_t) \|_2 \cos(\nu) + \frac{\beta}{2} \epsilon^2 \\
& = \cL(\vtheta_t) -  \frac{\beta \epsilon}{2} \left( \frac{2 \|\nabla_{\vtheta_f} \cL(\vtheta_t) \|_2 \cos(\nu)}{\beta} - \epsilon \right)\\
&<\cL(\vtheta_t)~.
\end{align*}
where the last inequality follows by the construction of $\epsilon$ in Condition~\eqref{ch-A-don}. Note that this implies that the constructed $\vtheta'$ is as described in Condition~\ref{cond:rsc}(b.2). This finishes the proof for Condition~\ref{cond:rsc}(b).



We now proceed to prove the second part of the proof. 
For any $\vtheta' \in B^t_{\kappa}$, by the second order Taylor expansion of the DON loss with respect to iterate $\vtheta_t$, we have
\[
\cL(\vtheta') = \cL(\vtheta_t) + \langle \vtheta' - \vtheta_t, \nabla_\vtheta\cL(\vtheta_t) \rangle + \frac{1}{2} (\vtheta'-\vtheta_t)^\top \frac{\partial^2 \cL(\tilde{\vtheta})}{\partial \vtheta^2} (\vtheta'-\vtheta_t)~,
\]
where $\tilde{\vtheta} = \xi \vtheta' + (1-\xi) \vtheta_t$ for some $\xi \in [0,1]$. To establish $\alpha_t$-RSC of the loss with $\alpha_t$ as in \eqref{eq:RSCLoss}, it suffices to focus on the quadratic form of the Hessian and show 
\begin{equation}
(\vtheta' - \vtheta_t)^{\top} \frac{\partial^2 \cL(\tilde{\vtheta})}{\partial \vtheta^2}  (\vtheta - \vtheta_t) \geq \alpha_t \| \vtheta' - \vtheta_t \|_2^2~.
\end{equation}
Note that the Hessian, by chain rule, is given by
\begin{align*}
\mH (\tilde{\vtheta}) & := \frac{\partial^2 \cL(\tilde{\vtheta})}{\partial \vtheta^2}  
=  \frac{1}{n} \sum_{i=1}^n \frac{1}{q_i} \sum_{j=1}^{q_i}  \left( \ell^{\prime\prime}_{i,j} \nabla_{\vtheta} G_{\tilde{\vtheta}}(u^{(i)})(\vy^{(i)}_j) \nabla_{\vtheta} G_{\tilde{\vtheta}}(u^{(i)})(\vy^{(i)}_j)^\top  
+ \ell^{\prime}_{i,j}   \nabla^2 G_{\tilde{\vtheta}}(u^{(i)})(\vy^{(i)}_j) \right)~. 
\end{align*}
where 
$\ell_{i,j}=(G_{\tilde{\vtheta}}(u^{(i)})(\vy_{j}^{(i)})-G^\dagger(u^{(i)})(\vy^{(i)}_j))^2$. 
Given the $2 \times 2$ block structure of the Hessian as in \eqref{eq:gradHessDeepONetLoss}, denoting $\delta \vtheta := \vtheta' - \vtheta_t$ for compactness, the quadratic form on the Hessian is given by
\begin{equation}
\delta \vtheta^{\top} \mH (\tilde{\vtheta}) \delta \vtheta 
= \underbrace{\delta \vtheta_{f}^{\top} H_{ff}(\tilde{\vtheta}) \delta \vtheta_{f}}_{T_1}
        + \underbrace{2 \delta \vtheta_{f}^{\top} H_{fg}(\tilde{\vtheta}) \delta \vtheta_{g}}_{T_2}
        + \underbrace{\delta \vtheta_{g}^{\top} H_{gg}(\tilde{\vtheta}) \delta \vtheta_{g}}_{T_3}~.
\end{equation}
Focusing on $T_1$ and using the exact form of $H_{ff}(\tilde{\vtheta})$ as in~\eqref{eq:Hessian_blocks_fully_Expanded}, we have
\begin{align*}
    T_1 & = \frac{1}{n} \sum_{i=1}^n \frac{1}{q_i} \sum_{j=1}^{q_i} \ell^{\prime\prime}_{i,j} \left\langle \delta \vtheta_f , \sum_{k=1}^K g_{k,j}^{(i)}(\tilde{\vtheta}_g) \nabla_{\vtheta_f} f_k^{(i)}(\tilde{\vtheta}_f) \right\rangle^2 
    + \frac{1}{n} \sum_{i=1}^n \frac{1}{q_i} \sum_{j=1}^{q_i} \ell'_{ij} \sum_{k=1}^K g^{(i)}_{k,j}(\tilde{\vtheta}_g) \delta \vtheta_f^{\top} \nabla_{\vtheta_f}^2 f_k^{(i)}(\tilde{\vtheta}_f) \delta \vtheta_f \nonumber \\
    & \overset{(a)}{\geq} \frac{2}{n} \sum_{i=1}^n \frac{1}{q_i} \sum_{j=1}^{q_i}  \left\langle \delta \vtheta_f , \nabla_{\vtheta_f} G_{\tilde{\vtheta}}(u^{(i)})(\vy^{(i)}_j)  \right\rangle^2 - \frac{(2K\lambda_1\lambda_2+\tilde{c})\lambda_2 c^{(f)}}{\sqrt{m_f}} \| \delta \vtheta_f \|_2^2~,
\end{align*}
where (a) follows from $\ell''_{ij}=2$ and the different bounds in Lemma~\ref{lemm:hessgradbounds} since $\tilde{\vtheta}\in B^{\mathrm{Euc}}_{\rho,\rho_1}(\vtheta_0)$, so that $|\ell'_{ij}|\leq 2K\lambda_1\lambda_2+\tilde{c}$  with $\tilde{c}=\max_{i\in[n],j\in[q_i]}|G^\dagger(u^{(i)})(\vy^{(i)}_j)|$.
Similarly, for $T_3$ we get
\begin{align*}
    T_3 \geq \frac{2}{n} \sum_{i=1}^n \frac{1}{q_i} \sum_{j=1}^{q_i}  \left\langle \delta \vtheta_g , \nabla_{\vtheta_g} G_{\tilde{\vtheta}}(u^{(i)})(\vy^{(i)}_j)  \right\rangle^2 - \frac{(2K\lambda_1\lambda_2+\tilde{c})\lambda_1 c^{(g)}}{\sqrt{m_g}} \| \delta \vtheta_g \|_2^2~.
\end{align*}
Then, 
\begin{align*}
T_1+T_3 &\overset{(a)}{\geq} \frac{2}{n} \sum_{i=1}^n \frac{1}{q_i} \sum_{j=1}^{q_i}  \left(\left\langle \delta \vtheta_g ,  \nabla_{\vtheta_g} G_{\tilde{\vtheta}}(u^{(i)})(\vy^{(i)}_j) \right\rangle^2
+
\left\langle \delta \vtheta_f ,  \nabla_{\vtheta_f} G_{\tilde{\vtheta}}(u^{(i)})(\vy^{(i)}_j) \right\rangle^2
\right)\\
&\quad- 
    (2K\lambda_1\lambda_2+\tilde{c})
    \left(\frac{\lambda_1 c^{(g)}}{\sqrt{m_g}} + \frac{\lambda_2 c^{(f)}}{\sqrt{m_f}}\right) \| \delta \vtheta \|_2^2\\
&= \frac{2}{n} \sum_{i=1}^n \frac{1}{q_i} \sum_{j=1}^{q_i}  \left(\left\langle \delta \vtheta_g ,  \nabla_{\vtheta_g} G_{\tilde{\vtheta}}(u^{(i)})(\vy^{(i)}_j) \right\rangle
+
\left\langle \delta \vtheta_f ,  \nabla_{\vtheta_f} G_{\tilde{\vtheta}}(u^{(i)})(\vy^{(i)}_j) \right\rangle
\right)^2\\
&\quad -\frac{4}{n} \sum_{i=1}^n \frac{1}{q_i} \sum_{j=1}^{q_i}  \left\langle \delta \vtheta_g ,  \nabla_{\vtheta_g} G_{\tilde{\vtheta}}(u^{(i)})(\vy^{(i)}_j) \right\rangle
\left\langle \delta \vtheta_f ,  \nabla_{\vtheta_f} G_{\tilde{\vtheta}}(u^{(i)})(\vy^{(i)}_j) \right\rangle\\
&\quad- 
    (2K\lambda_1\lambda_2+\tilde{c})
    \left(\frac{\lambda_1 c^{(g)}}{\sqrt{m_g}} + \frac{\lambda_2 c^{(f)}}{\sqrt{m_f}}\right) \| \delta \vtheta \|_2^2~,
\end{align*}
where (a) follows from $\norm{\vtheta_f}_2,\norm{\vtheta_g}_2\leq\norm{\vtheta}_2$.

Focusing on $T_2$ and using the exact form as in~\eqref{eq:Hessian_blocks_fully_Expanded}, we have
\begin{align*}
    T_2 & = 2\delta \vtheta_f^\top \left(\frac{1}{n} \sum_{i=1}^n \frac{1}{q_i} \sum_{j=1}^{q_i} \ell'_{ij} \sum_{k=1}^K \nabla_{\vtheta_f} f_k^{(i)}(\tilde{\vtheta}_f) \nabla_{\vtheta_g} g_{k,j}^{(i)}(\tilde{\vtheta}_g)^\top \right) \delta \vtheta_g \nonumber \\
    & \qquad \qquad + 2\delta \vtheta_f^\top \left( \frac{1}{n} \sum_{i=1}^n \frac{1}{q_i} \sum_{j=1}^{q_i}\ell^{\prime\prime}_{i,j} \left(\sum_{k=1}^K g_{k,j}^{(i)} \nabla_{\vtheta_f} f_k^{(i)}(\tilde{\vtheta}_f) \right) \left( \sum_{k'=1}^K f_{k'}^{(i)} \nabla_{\vtheta_g} g_{k',j}^{(i)}(\tilde{\vtheta}_g)^\top \right) \right) \delta \vtheta_g\\
    & \overset{(a)}{=} \underbrace{2\delta \vtheta_f^\top \left(\frac{1}{n} \sum_{i=1}^n \frac{1}{q_i} \sum_{j=1}^{q_i} \ell'_{ij} \sum_{k=1}^K \nabla_{\vtheta_f} f_k^{(i)}(\tilde{\vtheta}_f) \nabla_{\vtheta_g} g_{k,j}^{(i)}(\tilde{\vtheta}_g)^\top \right) \delta \vtheta_g}_{I_1} \nonumber \\
    & \qquad \qquad + \left( \frac{4}{n} \sum_{i=1}^n \frac{1}{q_i} 
    \sum_{j=1}^{q_i} 
    \left\langle \delta \vtheta_g ,  \nabla_{\vtheta_g} G_{\vtheta_t}(u^{(i)})(\vy^{(i)}_j) \right\rangle
\left\langle \delta \vtheta_f ,  \nabla_{\vtheta_f} G_{\vtheta_t}(u^{(i)})(\vy^{(i)}_j)\right\rangle\right)~,
\end{align*}
where (a) follows from $\ell^{\prime\prime}_{i,j}=2$.

For $I_1$ our goal is to first transfer the dependence of the gradient terms on $\tilde{\vtheta}$ to $\vtheta_t$, so that we can use properties of the restricted set $Q^t_{\kappa}$ which is based on $\vtheta_t$ to simplify the analysis. Towards that end, note that 
\begin{align*}
    \frac{1}{2}I_1 & = \delta \vtheta_f^\top \left(\frac{1}{n} \sum_{i=1}^n \frac{1}{q_i} \sum_{j=1}^{q_i} \ell'_{ij} \sum_{k=1}^K \nabla_{\vtheta_f} f_k^{(i)}(\vtheta_{t,f}) \nabla_{\vtheta_g} g_{k,j}^{(i)}(\vtheta_{t,g})^\top \right) \delta \vtheta_g \\
    & \quad + \delta \vtheta_f^\top \left(\frac{1}{n} \sum_{i=1}^n \frac{1}{q_i} \sum_{j=1}^{q_i} \ell'_{ij} \sum_{k=1}^K \left( \nabla_{\vtheta_f} f_k^{(i)}(\tilde{\vtheta}_f) - \nabla_{\vtheta_f} f_k^{(i)}(\vtheta_{t,f}) \right) \nabla_{\vtheta_g} g_{k,j}^{(i)}(\tilde{\vtheta}_g)^\top \right) \delta \vtheta_g \\
    & \quad + \delta \vtheta_f^\top \left(\frac{1}{n} \sum_{i=1}^n \frac{1}{q_i} \sum_{j=1}^{q_i} \ell'_{ij} \sum_{k=1}^K \nabla_{\vtheta_f} f_k^{(i)}(\vtheta_{t,f}) \left( \nabla_{\vtheta_g} g_{k,j}^{(i)}(\tilde{\vtheta}_g) - \nabla_{\vtheta_g} g_{k,j}^{(i)}(\vtheta_{t,g}) \right)^\top \right) \delta \vtheta_g \\
    %
    & \overset{(a)}{\geq} -  \frac{(2K\lambda_1\lambda_2+\tilde{c})}{n} \sum_{i=1}^n \frac{1}{q_i} \sum_{j=1}^{q_i}  \left\| \nabla_{\vtheta_f} f_k^{(i)}(\tilde{\vtheta}_f) - \nabla_{\vtheta_f} f_k^{(i)}(\vtheta_{t,f}) \right\|_2 \left\| \nabla_{\vtheta_g} g_{k,j}^{(i)}(\tilde{\vtheta}_g) \right\|_2 \| \delta \vtheta_f \|_2 \| \delta \vtheta_g \|_2  \\
    & \quad - \frac{(2K\lambda_1\lambda_2+\tilde{c})}{n} \sum_{i=1}^n \frac{1}{q_i} \sum_{j=1}^{q_i} \sum_{k=1}^K \left\| \nabla_{\vtheta_f} f_k^{(i)}(\vtheta_{t,f}) \right\|_2 \left\| \nabla_{\vtheta_g} g_{k,j}^{(i)}(\tilde{\vtheta}_g) - \nabla_{\vtheta_g} g_{k,j}^{(i)}(\vtheta_{t,g})  \right\|_2 \| \delta \vtheta_f \|_2 \| \delta \vtheta_g \|_2 \\
    & \overset{(b)}{=} -  \frac{(2K\lambda_1\lambda_2+\tilde{c})}{n} \sum_{i=1}^n \frac{1}{q_i} \sum_{j=1}^{q_i}  \left\| \nabla^2_{\vtheta_f} f_k^{(i)}(\bar{\vtheta}_f)\right\|_2 \norm{\tilde{\vtheta}_f-\vtheta_{t,f}}_2 \left\| \nabla_{\vtheta_g} g_{k,j}^{(i)}(\tilde{\vtheta}_g) \right\|_2 \| \delta \vtheta_f \|_2 \| \delta \vtheta_g \|_2 \\
    & \quad - \frac{(2K\lambda_1\lambda_2+\tilde{c})}{n} \sum_{i=1}^n \frac{1}{q_i} \sum_{j=1}^{q_i} \sum_{k=1}^K \left\| \nabla_{\vtheta_f} f_k^{(i)}(\vtheta_{t,f}) \right\|_2 \left\| \nabla^2_{\vtheta_g} g_{k,j}^{(i)}(\bar{\vtheta}_g) \right\|_2\norm{\tilde{\vtheta}_g-\vtheta_{t,g}}_2 \| \delta \vtheta_f \|_2 \| \delta \vtheta_g \|_2 \\
    & \overset{(c)}{\geq} - (2K\lambda_1\lambda_2+\tilde{c})\left( \frac{c^{(f)} \varrho^{(g)}}{\sqrt{m_f}} \right) \| \delta \vtheta_f \|_2^2 \| \delta \vtheta_g \|_2 
    - (2K\lambda_1\lambda_2+\tilde{c})\left( \frac{c^{(g)} \varrho^{(f)}}{\sqrt{m_g}} \right) \| \delta \vtheta_f \|_2 \| \delta \vtheta_g \|_2^2
    \\
    & \overset{(d)}{\geq} - (2K\lambda_1\lambda_2+\tilde{c})\left( \frac{c^{(g)} \varrho^{(f)}}{\sqrt{m_f}} + \frac{c^{(g)} \varrho^{(f)}}{\sqrt{m_f}} \right) \| \delta \vtheta \|_2^3\\
    & \geq - (2K\lambda_1\lambda_2+\tilde{c})\rho_2 \left( \frac{c^{(g)} \varrho^{(f)}}{\sqrt{m_f}} + \frac{c^{(g)} \varrho^{(f)}}{\sqrt{m_f}} \right) \| \delta \vtheta \|_2^2 ~,
\end{align*}
where (a) follows from the definition of $Q_{\kappa}^t$ set 
(Definition~\ref{defn:qset}) 
since $\vtheta' \in B^t_\kappa\subset Q^t_\kappa$; 
(b) follows from the generalized mean value theorem with $\bar{\vtheta_f}=\xi_1\tilde{\vtheta}_f+(1-\xi_1)\vtheta_{t,f}$ for some $\xi_1\in[0,1]$ and $\bar{\vtheta_g}=\xi_2\tilde{\vtheta}_g+(1-\xi_2)\vtheta_{t,g}$ for some $\xi_2\in[0,1]$; (c) follows from the results in Lemma~\ref{lemm:hessgradbounds} since $[\bar{\vtheta}_f^\top\; \bar{\vtheta}_g^\top]^\top \in B^{\mathrm{Euc}}_{\rho,\rho_1}(\vtheta_0)$, and the fact that $\| \tilde{\vtheta}_f - \vtheta_{f,t} \|_2 \leq \| \delta \vtheta_f \|_2$ and $\| \tilde{\vtheta}_g - \vtheta_{g,t} \|_2 \leq \| \delta \vtheta_g \|_2$; and (d) follows from $\norm{\delta\vtheta_f}_2,\norm{\delta\vtheta_g}_2\leq \norm{\delta\vtheta}_2$.

Replacing $I_1$ back in $T_2$ and then combining the bounds on $T_1+T_3$ and $T_2$, we have
\begin{equation}
\label{eq:Hess_tildetheta}
\begin{aligned}
    \delta \vtheta^{\top} \mH (\tilde{\vtheta}) \delta \vtheta  
&\geq \frac{2}{n} \sum_{i=1}^n \frac{1}{q_i} \sum_{j=1}^{q_i}  \left(\left\langle \delta \vtheta_g ,  \nabla_{\vtheta_g} G_{\tilde{\vtheta}}(u^{(i)})(\vy^{(i)}_j) \right\rangle
+
\left\langle \delta \vtheta_f ,  \nabla_{\vtheta_f} G_{\tilde{\vtheta}}(u^{(i)})(\vy^{(i)}_j) \right\rangle
\right)^2\\
&\quad- 
    (2K\lambda_1\lambda_2+\tilde{c})
    \left(\frac{\lambda_1 c^{(g)}}{\sqrt{m_g}} + \frac{\lambda_2 c^{(f)}}{\sqrt{m_f}}\right) \| \delta \vtheta \|_2^2\\
& \quad- 2(2K\lambda_1\lambda_2+\tilde{c})\rho_2 \left( \frac{c^{(g)} \varrho^{(f)}}{\sqrt{m_f}} + \frac{c^{(g)} \varrho^{(f)}}{\sqrt{m_f}} \right) \| \delta \vtheta \|_2^2\\
&=\frac{2}{n} \sum_{i=1}^n \frac{1}{q_i} \sum_{j=1}^{q_i}  \left\langle \delta \vtheta ,  \nabla_{\vtheta} G_{\tilde{\vtheta}}(u^{(i)})(\vy^{(i)}_j) \right\rangle
^2\\
&\quad- 
    (2K\lambda_1\lambda_2+\tilde{c})
    \left(\frac{\lambda_1 c^{(g)}}{\sqrt{m_g}} + \frac{\lambda_2 c^{(f)}}{\sqrt{m_f}}\right) \| \delta \vtheta \|_2^2\\
& \quad- 2(2K\lambda_1\lambda_2+\tilde{c})\rho_2 \left( \frac{c^{(g)} \varrho^{(f)}}{\sqrt{m_f}} + \frac{c^{(g)} \varrho^{(f)}}{\sqrt{m_f}} \right) \| \delta \vtheta \|_2^2\\
&=\underbrace{\frac{2}{n} \sum_{i=1}^n \frac{1}{q_i} \sum_{j=1}^{q_i}  \left(\left\langle \delta \vtheta ,  \nabla_{\vtheta} G_{\vtheta_t}(u^{(i)})(\vy^{(i)}_j) \right\rangle+\left(\left\langle \delta \vtheta ,  \nabla_{\vtheta} G_{\tilde{\vtheta}}(u^{(i)})(\vy^{(i)}_j) \right\rangle
-
\left\langle \delta \vtheta ,  \nabla_{\vtheta} G_{\vtheta_t}(u^{(i)})(\vy^{(i)}_j) \right\rangle
\right)\right)
^2}_{I_2}\\
&\quad- 
    (2K\lambda_1\lambda_2+\tilde{c})
    \left(\frac{\lambda_1 c^{(g)}}{\sqrt{m_g}} + \frac{\lambda_2 c^{(f)}}{\sqrt{m_f}}\right) \| \delta \vtheta \|_2^2\\
& \quad- 2(2K\lambda_1\lambda_2+\tilde{c})\rho_2 \left( \frac{c^{(g)} \varrho^{(f)}}{\sqrt{m_f}} + \frac{c^{(g)} \varrho^{(f)}}{\sqrt{m_f}} \right) \| \delta \vtheta \|_2^2
~.
\end{aligned}
\end{equation}
Then, 
\begin{equation}
\label{eq:I3-1}
\begin{aligned}
    I_2 & = \frac{2}{n} \sum_{i=1}^n \frac{1}{q_i} \sum_{j=1}^{q_i}  \left\langle \delta \vtheta , \nabla_{\vtheta} G_{\vtheta_t}(u^{(i)})(\vy^{(i)}_j) \right\rangle^2 + \sum_{i=1}^n \frac{2}{q_i} \sum_{j=1}^{q_i}  \left\langle \delta \vtheta , \nabla_{\vtheta} G_{\tilde{\vtheta}}(u^{(i)})(\vy^{(i)}_j) - \nabla_{\vtheta} G_{\vtheta_t}(u^{(i)})(\vy^{(i)}_j) \right\rangle^2 \\
    & \quad + \frac{4}{n} \sum_{i=1}^n \frac{1}{q_i} \sum_{j=1}^{q_i} \left\langle \delta \vtheta_f , \nabla_{\vtheta} G_{\vtheta_t}(u^{(i)})(\vy^{(i)}_j) \right\rangle \left\langle \delta \vtheta , \nabla_{\vtheta} G_{\tilde{\vtheta}}(u^{(i)})(\vy^{(i)}_j) - \nabla_{\vtheta} G_{\vtheta_t}(u^{(i)})(\vy^{(i)}_j) \right\rangle \\
    & \overset{(a)}{=}\frac{2}{n} \sum_{i=1}^n \frac{1}{q_i} \sum_{j=1}^{q_i}  \left\langle \delta \vtheta , \nabla_{\vtheta} G_{\vtheta_t}(u^{(i)})(\vy^{(i)}_j) \right\rangle^2 + \sum_{i=1}^n \frac{2}{q_i} \sum_{j=1}^{q_i}  \left\langle \delta \vtheta , \nabla_{\vtheta} G_{\tilde{\vtheta}}(u^{(i)})(\vy^{(i)}_j) - \nabla_{\vtheta} G_{\vtheta_t}(u^{(i)})(\vy^{(i)}_j) \right\rangle^2 \\
    & \quad + \frac{4}{n} \sum_{i=1}^n \frac{1}{q_i} \sum_{j=1}^{q_i} \left\langle \delta \vtheta , \nabla_{\vtheta} G_{\vtheta_t}(u^{(i)})(\vy^{(i)}_j) \right\rangle \left((\delta \vtheta)^\top \nabla_{\vtheta}^2 G_{\tilde{\tilde{\vtheta}}}(u^{(i)})(\vy^{(i)}_j)(\tilde{\vtheta}-\vtheta_t)\right)
    \\
    & \geq\frac{2}{n} \sum_{i=1}^n \frac{1}{q_i} \sum_{j=1}^{q_i}  \left\langle \delta \vtheta , \nabla_{\vtheta} G_{\vtheta_t}(u^{(i)})(\vy^{(i)}_j) \right\rangle^2  \\
    & \quad + \frac{4}{n} \sum_{i=1}^n \frac{1}{q_i} \sum_{j=1}^{q_i} \left\langle \delta \vtheta , \nabla_{\vtheta} G_{\vtheta_t}(u^{(i)})(\vy^{(i)}_j) \right\rangle \left((\delta \vtheta)^\top \nabla_{\vtheta}^2 G_{\tilde{\tilde{\vtheta}}}(u^{(i)})(\vy^{(i)}_j)(\tilde{\vtheta}-\vtheta_t)\right)
    \\
    & \overset{(b)}{\geq}\frac{2}{n} \sum_{i=1}^n \frac{1}{q_i} \sum_{j=1}^{q_i}  \left\langle \delta \vtheta , \nabla_{\vtheta} G_{\vtheta_t}(u^{(i)})(\vy^{(i)}_j) \right\rangle^2  \\
    & \quad - 
4\xi_3K(\lambda_2\varrho^{(f)}+\lambda_1\varrho^{(g)})\left\|\delta\vtheta\right\|_2
    \frac{1}{n} \sum_{i=1}^n \frac{1}{q_i} \sum_{j=1}^{q_i}
    \left|(\delta \vtheta)^\top \nabla_{\vtheta}^2 G_{\tilde{\tilde{\vtheta}}}(u^{(i)})(\vy^{(i)}_j)(\delta\vtheta)\right|
\end{aligned}
\end{equation}
where (a) follows from the generalized mean value theorem and has $\tilde{\tilde{\vtheta}}\in\xi_3 \tilde{\vtheta}+(1-\xi_3)\vtheta_t$ for some $\xi_3\in[0,1]$; and (b) follows from the fact that $\tilde{\vtheta}-\vtheta_t=\xi_3(\theta'-\vtheta')$ and 
\begin{align*}
\left\| \nabla_{\vtheta} G_{\vtheta_t}(u^{(i)})(\vy^{(i)}_j) \right\|_2 &\leq 
\left\| \nabla_{\vtheta_f} G_{\vtheta_t}(u^{(i)})(\vy^{(i)}_j) \right\|_2 +
\left\| \nabla_{\vtheta_g} G_{\vtheta_t}(u^{(i)})(\vy^{(i)}_j) \right\|_2\\
&\leq 
\left\| \sum_{k=1}^K g^{(i)}_{k,j}(\vtheta_t)\nabla_{\vtheta_f}f^{(i)}_k(\vtheta_t)\right\|_2
+
\left\| \sum_{k=1}^K \nabla_{\vtheta_g}g^{(i)}_{k,j}(\vtheta_t)(f^{(i)}_k(\vtheta_t))\right\|_2\\
&\leq 
\sum_{k=1}^K|g^{(i)}_{k,j}(\vtheta_t)|\norm{\nabla_{\vtheta_f}f^{(i)}_k(\vtheta_t)}_2 + 
\sum_{k=1}^K|f^{(i)}_{k}(\vtheta_t)|\norm{\nabla_{\vtheta_g}g^{(i)}_{k,j}(\vtheta_t)}_2\\
&\leq K\lambda_2\varrho^{(f)}+K\lambda_1\varrho^{(g)}~, 
\end{align*}
where the last inequality follows from Lemma~\ref{lemm:hessgradbounds}.

Now, we have that
\begin{equation}
\label{eq:dd_G}
\begin{aligned}
    (\delta \vtheta)^\top \nabla_{\vtheta}^2 G_{\tilde{\tilde{\vtheta}}}(u^{(i)})(\vy^{(i)}_j)(\delta\vtheta) &= 
\sum^K_{k=1}(\delta\vtheta_f)^\top(g^{(i)}_{k,j}(\tilde{\tilde{\vtheta}}_g)\nabla^2_{\vtheta_f}f^{(i)}_{k}(\tilde{\tilde{\vtheta}}_f)(\delta\vtheta_f)
+ 
\sum^K_{k=1}(\delta\vtheta_g)^\top(f^{(i)}_{k}(\tilde{\tilde{\vtheta}}_f)\nabla^2_{\vtheta_g}g^{(i)}_{k,j}(\tilde{\tilde{\vtheta}}_g)(\delta\vtheta_g)\\
&\quad 
+2\sum^K_{k=1}(\delta\vtheta_g)^\top(\nabla_{\vtheta_g}g^{(i)}_{k,j}(\tilde{\tilde{\vtheta}}_g)(\nabla_{\vtheta_f}f^{(i)}_{k}(\tilde{\tilde{\vtheta}}_f))^\top(\delta\vtheta_f)\\
&\overset{(a)}{\leq} 
K\left(\frac{\lambda_2c^{(f)}}{\sqrt{m_f}}+ \frac{\lambda_1c^{(g)}}{\sqrt{m_g}}\right)\norm{\delta\vtheta}_2^2+\underbrace{2\sum^K_{k=1}(\delta\vtheta_g)^\top(\nabla_{\vtheta_g}g^{(i)}_{k,j}(\tilde{\tilde{\vtheta}}_g)(\nabla_{\vtheta_f}f^{(i)}_{k}(\tilde{\tilde{\vtheta}}_f))^\top(\delta\vtheta_f)}_{I_3}
\end{aligned}
\end{equation}
where (a) follows from Lemma~\ref{lemm:hessgradbounds} since $\tilde{\tilde{\vtheta}}\in B^{\mathrm{Euc}}_{\rho,\rho_1}(\vtheta_0)$.

Now, for $I_3$,
\begin{align*}
    \frac{1}{2}I_3 & = \delta \vtheta_f^\top \left( \sum_{k=1}^K \nabla_{\vtheta_f} f_k^{(i)}(\vtheta_{t,f}) \nabla_{\vtheta_g} g_{k,j}^{(i)}(\vtheta_{t,g})^\top \right) \delta \vtheta_g \\
    & \quad + \delta \vtheta_f^\top \left(\sum_{k=1}^K \left( \nabla_{\vtheta_f} f_k^{(i)}(\tilde{\tilde{\vtheta}}_f) - \nabla_{\vtheta_f} f_k^{(i)}(\vtheta_{t,f}) \right) \nabla_{\vtheta_g} g_{k,j}^{(i)}(\tilde{\tilde{\vtheta}}_g)^\top \right) \delta \vtheta_g \\
    & \quad + \delta \vtheta_f^\top \left(\sum_{k=1}^K \nabla_{\vtheta_f} f_k^{(i)}(\vtheta_{t,f}) \left( \nabla_{\vtheta_g} g_{k,j}^{(i)}(\tilde{\tilde{\vtheta}}_g) - \nabla_{\vtheta_g} g_{k,j}^{(i)}(\vtheta_{t,g}) \right)^\top \right) \delta \vtheta_g \\
    & \leq
    \delta \vtheta_f^\top \left( \sum_{k=1}^K \nabla_{\vtheta_f} f_k^{(i)}(\vtheta_{t,f}) \nabla_{\vtheta_g} g_{k,j}^{(i)}(\vtheta_{t,g})^\top \right) \delta \vtheta_g 
    \\
    &\quad+\sum^K_{k=1}  \left\| \nabla_{\vtheta_f} f_k^{(i)}(\tilde{\tilde{\vtheta}}_f) - \nabla_{\vtheta_f} f_k^{(i)}(\vtheta_{t,f}) \right\|_2 \left\| \nabla_{\vtheta_g} g_{k,j}^{(i)}(\tilde{\tilde{\vtheta}}_g) \right\|_2 \| \delta \vtheta_f \|_2 \| \delta \vtheta_g \|_2  \\
    & \quad +  \sum_{k=1}^K \left\| \nabla_{\vtheta_f} f_k^{(i)}(\vtheta_{t,f}) \right\|_2 \left\| \nabla_{\vtheta_g} g_{k,j}^{(i)}(\tilde{\tilde{\vtheta}}_g) - \nabla_{\vtheta_g} g_{k,j}^{(i)}(\vtheta_{t,g})  \right\|_2 \| \delta \vtheta_f \|_2 \| \delta \vtheta_g \|_2 \\
    & \overset{(a)}{\leq} 
    \delta \vtheta_f^\top \left( \sum_{k=1}^K \nabla_{\vtheta_f} f_k^{(i)}(\vtheta_{t,f}) \nabla_{\vtheta_g} g_{k,j}^{(i)}(\vtheta_{t,g})^\top \right) \delta \vtheta_g 
    \\
    &\quad+\xi_3\xi K\left( \frac{c^{(f)} \varrho^{(g)}}{\sqrt{m_f}} \right) \| \delta \vtheta_f \|_2^2 \| \delta \vtheta_g \|_2 
    +\xi_3\xi K\left( \frac{c^{(g)} \varrho^{(f)}}{\sqrt{m_g}} \right) \| \delta \vtheta_f \|_2 \| \delta \vtheta_g \|_2^2
    \\
    & \overset{(b)}{\leq} \delta \vtheta_f^\top \left( \sum_{k=1}^K \nabla_{\vtheta_f} f_k^{(i)}(\vtheta_{t,f}) \nabla_{\vtheta_g} g_{k,j}^{(i)}(\vtheta_{t,g})^\top \right) \delta \vtheta_g\\
    &\quad+ K\left( \frac{c^{(g)} \varrho^{(f)}}{\sqrt{m_f}} + \frac{c^{(f)} \varrho^{(g)}}{\sqrt{m_g}} \right) \| \delta \vtheta \|_2^3\\
    & \leq \delta \vtheta_f^\top \left( \sum_{k=1}^K \nabla_{\vtheta_f} f_k^{(i)}(\vtheta_{t,f}) \nabla_{\vtheta_g} g_{k,j}^{(i)}(\vtheta_{t,g})^\top \right) \delta \vtheta_g\\
    &\quad+ K\rho_2\left( \frac{c^{(g)} \varrho^{(f)}}{\sqrt{m_g}} + \frac{c^{(f)} \varrho^{(g)}}{\sqrt{m_f}} \right) \| \delta \vtheta \|_2^2~,
\end{align*}
where (a) follows from the generalized mean value theorem, from $\norm{\tilde{\tilde{\vtheta}}_f-\vtheta_{t,f}}_2= \norm{\xi_3 \tilde{\vtheta}_f+(1-\xi_3)\vtheta_{t,f}-\vtheta_{t,f}}_2=\xi_3\xi\norm{\vtheta'-\vtheta_{t,f}}_2=\xi_3\xi\norm{\delta\vtheta_f}_2$, and from the results in Lemma~\ref{lemm:hessgradbounds} since $\tilde{\tilde{\vtheta}}\in B^{\mathrm{Euc}}_{\rho,\rho_1}(\vtheta_0)$; and (b) follows from $\norm{\vtheta_f}_2,\norm{\vtheta_g}_2\leq \norm{\vtheta}_2$ and $\xi_3\xi\leq 1$.

Replacing the bound on $I_3$ back to~\eqref{eq:dd_G}, we obtain 
\begin{equation}
\label{eq:dd_G1}
\begin{aligned}
    (\delta \vtheta)^\top \nabla_{\vtheta}^2 G_{\tilde{\tilde{\vtheta}}}(u^{(i)})(\vy^{(i)}_j)(\delta\vtheta) 
& \leq \delta \vtheta_f^\top \left( \sum_{k=1}^K \nabla_{\vtheta_f} f_k^{(i)}(\vtheta_{t,f}) \nabla_{\vtheta_g} g_{k,j}^{(i)}(\vtheta_{t,g})^\top \right) \delta \vtheta_g\\
&\quad+ K(1+\rho_2)\left( \frac{c^{(g)} (\lambda_1+2\varrho^{(f)})}{\sqrt{m_g}} + \frac{c^{(f)} (\lambda_2+2\varrho^{(g)})}{\sqrt{m_f}} \right) \| \delta \vtheta \|_2^2\\
&\leq K(1+\rho_2)\left( \frac{c^{(g)} (\lambda_1+2\varrho^{(f)})}{\sqrt{m_g}} + \frac{c^{(f)} (\lambda_2+2\varrho^{(g)})}{\sqrt{m_f}} \right) \| \delta \vtheta \|_2^2~,
\end{aligned}
\end{equation}
where the last inequality follows from the fact that  $\vtheta^{\prime}\in Q^t_{\kappa}$, using the properties of the restricted set $Q^t_{\kappa}$ in Definition~\ref{defn:qset}.

Replacing~\eqref{eq:dd_G1} back to $I_2$ in~\eqref{eq:I3-1}, we obtain
\begin{equation}
\label{eq:I2-21}
\begin{aligned}
    I_2
    & \geq\frac{2}{n} \sum_{i=1}^n \frac{1}{q_i} \sum_{j=1}^{q_i}  \left\langle \delta \vtheta , \nabla_{\vtheta} G_{\vtheta_t}(u^{(i)})(\vy^{(i)}_j) \right\rangle^2  \\
    & \quad - 
4\xi_3K(\lambda_2\varrho^{(f)}+\lambda_1\varrho^{(g)})\left\|\delta\vtheta\right\|_2
    \times K(1+\rho_2)\left( \frac{c^{(g)} (\lambda_1+2\varrho^{(f)})}{\sqrt{m_g}} + \frac{c^{(f)} (\lambda_2+2\varrho^{(g)})}{\sqrt{m_f}} \right) \| \delta \vtheta \|_2^2\\
& =\frac{2}{n} \sum_{i=1}^n \frac{1}{q_i} \sum_{j=1}^{q_i}  \left\langle \delta \vtheta , \nabla_{\vtheta} G_{\vtheta_t}(u^{(i)})(\vy^{(i)}_j) \right\rangle^2  \\
    & \quad - 
4\xi_3K^2(1+\rho_2)\rho_2(\lambda_2\varrho^{(f)}+\lambda_1\varrho^{(g)})
    \left( \frac{c^{(g)} (\lambda_1+2\varrho^{(f)})}{\sqrt{m_g}} + \frac{c^{(f)} (\lambda_2+2\varrho^{(g)})}{\sqrt{m_f}} \right) \| \delta \vtheta \|_2^2~.    
\end{aligned}
\end{equation}
Replacing this lower bound~\eqref{eq:I2-21} back to the Hessian expression in~\eqref{eq:Hess_tildetheta_1},
\begin{equation}
\label{eq:Hess_tildetheta_1}
\begin{aligned}
    \delta \vtheta^{\top} \mH (\tilde{\vtheta}) \delta \vtheta
&\geq\frac{2}{n} \sum_{i=1}^n \frac{1}{q_i} \sum_{j=1}^{q_i}  \left\langle \delta \vtheta , \nabla_{\vtheta} G_{\vtheta_t}(u^{(i)})(\vy^{(i)}_j) \right\rangle^2  \\
    & \quad - 
4\xi_3K^2(1+\rho_2)\rho_2(\lambda_2\varrho^{(f)}+\lambda_1\varrho^{(g)})
    \left( \frac{c^{(g)} (\lambda_1+2\varrho^{(f)})}{\sqrt{m_g}} + \frac{c^{(f)} (\lambda_2+2\varrho^{(g)})}{\sqrt{m_f}} \right) \| \delta \vtheta \|_2^2\\
&\quad- 
    (2K\lambda_1\lambda_2+\tilde{c})
    \left(\frac{\lambda_1 c^{(g)}}{\sqrt{m_g}} + \frac{\lambda_2 c^{(f)}}{\sqrt{m_f}}\right) \| \delta \vtheta \|_2^2\\
& \quad- 2(2K\lambda_1\lambda_2+\tilde{c})\rho_2 \left( \frac{c^{(g)} \varrho^{(f)}}{\sqrt{m_f}} + \frac{c^{(g)} \varrho^{(f)}}{\sqrt{m_f}} \right) \| \delta \vtheta \|_2^2\\
&\overset{(a)}{\geq} 2 \left\langle \delta \vtheta , \nabla_{\vtheta} \bar{G}_{\vtheta_t} \right\rangle^2  - c_1 K^2
\left( \frac{1}{\sqrt{m_f}} + \frac{1}{\sqrt{m_g}} \right) \| \delta \vtheta \|_2^2\\
&\overset{(b)}{\geq} 2\kappa^2 \norm{\nabla_{\vtheta} \bar{G}_{\vtheta_t}}^2_2\norm{\delta\vtheta}_2^2  - c_1 K^2
\left( \frac{1}{\sqrt{m_f}} + \frac{1}{\sqrt{m_g}} \right) \| \delta \vtheta \|_2^2\\
&=\alpha_t\norm{\delta\vtheta}_2^2~,
\end{aligned}
\end{equation}
where (a) follows from Jensen's inequality with $\bar{G}_{\vtheta} = \frac{1}{n} \sum_{i=1}^n \frac{1}{q_i} \sum_{j=1}^{q_i} G_{\vtheta}(u^{(i)})(\vy^{(i)}_j)$; where (b) follows from the fact that  $\vtheta^{\prime}\in Q^t_{\kappa}$ and using the properties of the restricted set $Q^t_{\kappa}$ in Definition~\ref{defn:qset}; and where $\alpha_t = 2\kappa^2 \| \nabla_{\vtheta} \bar{G}_{\vtheta} \|_2^2 - c_1K^2\left(\frac{1}{m_f}+\frac{1}{m_g}\right)$. 
%
\pcedit{Note that adding all the constants from the second to the fourth line in~\eqref{eq:Hess_tildetheta_1} define the constant $c_1$, and so $c_1$ depends on $\sigma_1$, the depth $L$, and the radii $\rho$, $\rho_1$, and $\rho_2$ due to Lemma~\ref{lemm:hessgradbounds}. As in the statement of Lemma~\ref{lemm:hessgradbounds}, this dependence reduces to the depth and the radii and becomes polynomial whenever $\sigma_0\leq 1-\rho\max\{\frac{1}{\sqrt{m_f}},\frac{1}{\sqrt{m_g}}\}$.}
This completes the proof. \qed

\RSS*
\begin{proof}
By the second order Taylor expansion of $\gL(\vtheta^\prime)$ about the point $\bar{\vtheta}$ with $\vtheta^\prime,\bar{\vtheta}\in B^{\mathrm{Euc}}_{\rho,\rho_1}(\vtheta_0)$, we have
$\cL(\vtheta') = \cL(\bar{\vtheta}) + \langle \vtheta' - \bar{\vtheta}, \nabla_\vtheta\cL(\bar{\vtheta}) \rangle + \frac{1}{2} (\vtheta'-\bar{\vtheta})^\top \frac{\partial^2 \cL(\tilde{\vtheta})}{\partial \vtheta^2} (\vtheta'-\bar{\vtheta})$, 
where $\tilde{\vtheta} = \xi \vtheta' + (1-\xi) \bar{\vtheta}$ for some $\xi \in [0,1]$. Then, 
\begin{align*}
    (\vtheta'-\bar{\vtheta})^\top \frac{\partial^2 \cL(\tilde{\vtheta})}{\partial \vtheta^2} (\vtheta'-\bar{\vtheta}) 
    & = (\vtheta'-\bar{\vtheta})^\top \bigg( \frac{1}{n} \sum_{i=1}^n \frac{1}{q_i} \sum_{j=1}^{q_i}  \ell^{\prime\prime}_{i,j} \nabla_{\vtheta} G_{\tilde{\vtheta}}(u^{(i)})(\vy^{(i)}_j) \nabla_{\vtheta} G_{\tilde{\vtheta}}(u^{(i)})(\vy^{(i)}_j)^\top  \\
    & \qquad \qquad \qquad \qquad + \ell^{\prime}_{i,j}   \nabla^2_{\vtheta} G_{\tilde{\vtheta}}(u^{(i)})(\vy^{(i)}_j)  \bigg)  (\vtheta'-\bar{\vtheta}) \\
    & = \underbrace{\frac{1}{n} \sum_{i=1}^n  \frac{1}{q_i} \sum_{j=1}^{q_i} \ell^{\prime\prime}_{i,j} \left\langle \vtheta'-\bar{\vtheta}, \nabla_{\vtheta} G_{\tilde{\vtheta}}(u^{(i)})(\vy^{(i)}_j) \right\rangle^2}_{I_1} \\
    & \qquad \qquad + \underbrace{\frac{1}{n} \sum_{i=1}^n \frac{1}{q_i} \sum_{j=1}^{q_i} \ell^{\prime}_{i,j}  (\vtheta'-\bar{\vtheta})^\top \nabla^2_{\vtheta} G_{\tilde{\vtheta}}(u^{(i)})(\vy^{(i)}_j)  (\vtheta'-\bar{\vtheta}) }_{I_2}~,
\end{align*}
where $\ell_{i,j}=(G_{\tilde{\vtheta}}(u^{(i)})(\vy_{j}^{(i)})-G^\dagger(u^{(i)})(\vy^{(i)}_j))^2$. 

Now, note that
\begin{align*}
I_1 & = \frac{1}{n} \sum_{i=1}^n  \frac{1}{q_i} \sum_{j=1}^{q_i} \ell^{\prime\prime}_{i,j} \left\langle \vtheta'-\bar{\vtheta}, \nabla_{\vtheta} G_{\tilde{\vtheta}}(u^{(i)})(\vy^{(i)}_j) \right\rangle^2 \\
& \overset{(a)}{\leq} \frac{2}{n} \sum_{i=1}^n \frac{1}{q_i}\sum_{j=1}^{q_i} \left\| \nabla_{\vtheta} G_{\tilde{\vtheta}}(u^{(i)})(\vy^{(i)}_j) \right\|_2^2 \|\vtheta' - \bar{\vtheta} \|_2^2 \\
& \overset{(b)}{\leq} 4K^2(\lambda_2\varrho^{(f)}+\lambda_1\varrho^{(g)})^2 \| \vtheta' - \bar{\vtheta} \|_2^2~,
\end{align*}
where (a) follows by the Cauchy-Schwartz inequality and (b) from Lemma~\ref{lemm:hessgradbounds} as follows
\begin{equation*}
\norm{\nabla_{\vtheta}G_{\tilde{\vtheta}}(u^{(i)})(\vy^{(i)}_j)}_2\leq\sum^K_{k=1}(
\norm{g_{k,j}^{(i)}(\tilde{\vtheta}_g)\nabla_{\vtheta_f}f_k^{(i)}(\tilde{\vtheta}_f)}_2
+
\norm{
f_{k}^{(i)}(\tilde{\vtheta}_f)\nabla_{\vtheta_g}g_{k,j}^{(i)}(\tilde{\vtheta}_g)
}_2
)
\leq
K(\lambda_2\varrho^{(f)}+\lambda_1\varrho^{(g)}),
\end{equation*}
since $\tilde{\vtheta}\in B^{\mathrm{Euc}}_{\rho,\rho_1}(\vtheta_0)$.

Now, for $I_2$, 
\begin{align*}
I_2 & \leq \frac{1}{n} \sum_{i=1}^n \frac{1}{q_i} \sum_{j=1}^{q_i} |\ell^{\prime}_{i,j}|  |(\vtheta'-\bar{\vtheta})^\top \nabla^2 G_{\tilde{\vtheta}}(u^{(i)})(\vy^{(i)}_j) (\vtheta'-\bar{\vtheta})| \\
& \overset{(a)}{\leq}
(2K\lambda_1\lambda_2+\tilde{c})\left(K\varrho^{(f)}\varrho^{(g)}+K(1+\rho_2) 
\left( \frac{c^{(g)} (\lambda_1+\varrho^{(f)})}{\sqrt{m_g}} + \frac{c^{(f)} (\lambda_2+\varrho^{(g)})}{\sqrt{m_f}} \right)\right)
\| \vtheta' - \bar{\vtheta} \|_2^2~,
\end{align*}
with  $\tilde{c}=\max_{i\in[n],j\in[q_i]}|G^\dagger(u^{(i)})(\vy^{(i)}_j)|$, and where (a) follows from modifying the result in equation~\eqref{eq:dd_G1} from Theorem~\ref{theo:rsc_main_DON} according to our setting.

Putting the upper bounds on $I_1$ and $I_2$ back, we have
\begin{align*}
(\vtheta'-\bar{\vtheta})^\top \frac{\partial^2 \cL(\tilde{\vtheta})}{\partial \vtheta^2} (\vtheta'-\bar{\vtheta})
& \leq \left[ 
4K^2(\lambda_2\varrho^{(f)}+\lambda_1\varrho^{(g)})^2 \right.\\
&\left.\quad 
+
(2K\lambda_1\lambda_2+\tilde{c})\left(K\varrho^{(f)}\varrho^{(g)}+K(1+\rho_2) 
\left( \frac{c^{(g)} (\lambda_1+\varrho^{(f)})}{\sqrt{m_g}} + \frac{c^{(f)} (\lambda_2+\varrho^{(g)})}{\sqrt{m_f}} \right)\right)
\right]\\
&\quad \times \| \vtheta' - \bar{\vtheta} \|_2^2~.
\end{align*}
\pcedit{Note that all the constants on the right-hand side of the inequality above form an expression that depends on $K$ and on $\sigma_1$, the depth $L$, and the radii $\rho$, $\rho_1$, and $\rho_2$ due to Lemma~\ref{lemm:hessgradbounds}. As in the statement of Lemma~\ref{lemm:hessgradbounds}, the dependence of such expression reduces to the depth and the radii and becomes polynomial whenever $\sigma_0\leq 1-\rho\max\{\frac{1}{\sqrt{m_f}},\frac{1}{\sqrt{m_g}}\}$.} 
This completes the proof.
\label{theo:smooth}
\end{proof}

\begin{prop}[{\bf RSC to smoothness ratio}]
\label{prop:RSC-smooth-DON}
Under the same conditions as in Theorems~\ref{theo:rsc_main_DON} and~\ref{theo:smooth_main}, we have that $\alpha_t/\beta<1$ with probability at least $1-2LK(\frac{1}{m_f}+\frac{1}{m_g})$.
\end{prop}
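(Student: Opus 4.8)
The plan is to compare the closed forms of $\alpha_t$ (Theorem~\ref{theo:rsc_main_DON}, equation~\eqref{eq:RSCLoss}) and $\beta$ (Theorem~\ref{theo:smooth_main}) directly, working on the single high-probability event of Lemma~\ref{lemm:hessgradbounds} on which \emph{both} theorems are stated. On that event $\alpha_t = 2\kappa^2\|\nabla_{\vtheta}\bar{G}_t\|_2^2 - c_1K^2\big(\tfrac{1}{\sqrt{m_f}}+\tfrac{1}{\sqrt{m_g}}\big)$ with $c_1>0$, and since $\kappa\le \tfrac{1}{\sqrt2}$ we have $2\kappa^2\le 1$, hence $\alpha_t\le \|\nabla_{\vtheta}\bar{G}_t\|_2^2$.

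First I would bound $\|\nabla_{\vtheta}\bar{G}_t\|_2$. Since $\nabla_{\vtheta}\bar{G}_t$ is the average of the $\nabla_{\vtheta}G_{\vtheta_t}(u^{(i)})(\vy^{(i)}_j)$, the triangle inequality reduces the task to the per-example bound $\|\nabla_{\vtheta}G_{\vtheta_t}(u^{(i)})(\vy^{(i)}_j)\|_2\le K(\lambda_2\varrho^{(f)}+\lambda_1\varrho^{(g)})$, which was already derived (from Lemma~\ref{lemm:hessgradbounds}) inside the proof of Theorem~\ref{theo:rsc_main_DON} and again in the proof of Theorem~\ref{theo:smooth_main}. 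Therefore $\|\nabla_{\vtheta}\bar{G}_t\|_2^2\le K^2(\lambda_2\varrho^{(f)}+\lambda_1\varrho^{(g)})^2$, and combining with the previous step, $\alpha_t\le K^2(\lambda_2\varrho^{(f)}+\lambda_1\varrho^{(g)})^2$.

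Next I would read off $\beta$ from the final display in the proof of Theorem~\ref{theo:smooth_main}: $\beta$ equals the bracketed constant there, whose first summand is exactly $4K^2(\lambda_2\varrho^{(f)}+\lambda_1\varrho^{(g)})^2$ and whose remaining summand $(2K\lambda_1\lambda_2+\tilde{c})\big(K\varrho^{(f)}\varrho^{(g)}+K(1+\rho_2)(\cdots)\big)$ is nonnegative (every factor involved is nonnegative). Hence $\beta\ge 4K^2(\lambda_2\varrho^{(f)}+\lambda_1\varrho^{(g)})^2\ge 4\alpha_t$, so $\alpha_t/\beta\le \tfrac14<1$. The strict inequality is immediate because $\lambda_1,\lambda_2,\varrho^{(f)},\varrho^{(g)}$ are strictly positive (and, redundantly, because the correction term $c_1K^2(\tfrac{1}{\sqrt{m_f}}+\tfrac{1}{\sqrt{m_g}})$ subtracted in $\alpha_t$ is strictly positive). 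All of this is carried out on the event of Lemma~\ref{lemm:hessgradbounds}, which has probability at least $1-2KL(\tfrac{1}{m_f}+\tfrac{1}{m_g})=1-2LK(\tfrac{1}{m_f}+\tfrac{1}{m_g})$; since Theorems~\ref{theo:rsc_main_DON} and~\ref{theo:smooth_main} are both stated on this same event, no extra union bound is incurred.

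I do not expect a genuine obstacle here, since the statement falls out of already-established bounds. The only points requiring care are (i) invoking the RSC and smoothness characterizations on the \emph{same} probabilistic event, so the probability is not unnecessarily reduced, and (ii) securing a strict rather than merely weak inequality, which is guaranteed by the strict positivity of the predictor-norm bounds coming from Lemma~\ref{lemm:hessgradbounds}.
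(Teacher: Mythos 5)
Your proposal is correct and follows essentially the same route as the paper: both compare the explicit $\alpha_t$ from Theorem~\ref{theo:rsc_main_DON} with the explicit smoothness constant from Theorem~\ref{theo:smooth_main}, using the per-example bound $\|\nabla_{\vtheta}G_{\vtheta_t}(u^{(i)})(\vy^{(i)}_j)\|_2\le K(\lambda_2\varrho^{(f)}+\lambda_1\varrho^{(g)})$ from Lemma~\ref{lemm:hessgradbounds} to get $\alpha_t<4K^2(\lambda_2\varrho^{(f)}+\lambda_1\varrho^{(g)})^2\le\beta$ on the same high-probability event, with no extra union bound. Your version even gives the slightly sharper conclusion $\alpha_t/\beta\le 1/4$, but the argument is the paper's.
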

\begin{proof}
From the proofs of both Theorems~\ref{theo:rsc_main_DON} and~\ref{theo:smooth_main},
$\alpha_t<2\kappa^2\norm{\nabla_{\vtheta_t}\bar{G}_t}_2^2\leq 2\kappa^2 K^2(\lambda_2\varrho^{(f)}+\lambda_1\varrho^{(g)})^2\leq
4 K^2(\lambda_2\varrho^{(f)}+\lambda_1\varrho^{(g)})^2<
\beta$, and so $\frac{\alpha_t}{\beta}<1$.
\end{proof}

\section{Analysis for Fourier Neural Operators}
\label{app:fnoopt}
We recall the FNO model
\begin{align}
    \begin{aligned}
\aalpha^{(0)} &= P(u)(\vx)\\
\aalpha^{(1)} &= \phi\left(
        \frac{1}{\sqrt{m}} W^{(1)} \aalpha^{(0)}
    \right)\\
\aalpha^{(l)} & = \phi\left(
        \frac{1}{\sqrt{m}} W^{(l)} \aalpha^{(l-1)} +
        \frac{1}{\sqrt{m}} F^{*} R^{(l)} F \aalpha^{(l-1)}
    \right),\quad l\in \{2,\dots,L+1\}\\
    f(\vtheta;\x)  = \aalpha^{(L+2)} &:= \frac{1}{\sqrt{m}} \v^\top \aalpha^{(L+1)}~,
    \end{aligned}
\label{eq:FNO_predictor_app}
\end{align}
where $W^{(l)}, R^{(l)} \in \R^{m \times m}$ for $l \in \{2,\ldots,L+1\}$, $W^{(1)} \in \R^{m \times d}$. 

\subsection{Bounds on the Hessian, Gradients and the Predictor}

\begin{restatable}[{\bf Bounds on the Predictor}]{lemm}{HessDiag2}
\label{lemm:hessgradbounds-FNO}
Under Assumptions~\ref{asmp:Activation_Function_FNO} and \ref{asmp:smoothinit_FNO} and for $\vtheta \in B^{\mathrm{Euc}}_{\rho_w,\rho_r\rho_1}(\vtheta_0)$ we have with probability at least {$1-\frac{2(L+2)}{m}$}, that for any input function $u$ and evaluation point $\vx$ as in Section~\ref{sec:optFNO},
\begin{align}
        \left\|\nabla^2_{\vtheta} f\right\| \leq \frac{c}{\sqrt{m}}, \label{eq:hessianBoundG_fg_FNO} \\
    \left\| \nabla_{\vtheta} f\right\|_2 \leq \varrho~,\label{eq:gradientBoundG_fg_FNO}\\
     |f| \leq \lambda~,\label{eq:predictorBoundG_fg_FNO}
\end{align}
where $c,\;\varrho,\;\lambda$ are suitable constants that depend on $\sigma_{1,w}$, $\sigma_{1,r}$, the depth $L$, and the radii $\rho_w$, $\rho_r$, and $\rho_1$. 
\pcedit{The dependence of the constants reduces to depth and the radii and becomes polynomial whenever $\sigma_{1,w}+\sigma_{1,r}\leq 1-\frac{\rho_w+\rho_r}{\sqrt{m}}$.}
\end{restatable}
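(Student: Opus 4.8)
\textbf{Proof plan for Lemma~\ref{lemm:hessgradbounds-FNO}.}

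The plan is to mirror the strategy used for the feedforward case in~\citep{banerjee2022restricted} (which underlies Lemma~\ref{lemm:hessgradbounds} here), but to carry it through for the augmented FNO architecture~\eqref{eq:FNO_predictor_app} which has, in each layer $l\in\{2,\dots,L+1\}$, the extra spectral branch $\frac{1}{\sqrt m}F^{*}R^{(l)}F\aalpha^{(l-1)}$ alongside the ordinary branch $\frac{1}{\sqrt m}W^{(l)}\aalpha^{(l-1)}$. First I would fix an input function $u$ and evaluation point $\vx$ and set up notation for the forward pass: write the pre-activation at layer $l$ as $\vg^{(l)}=\frac{1}{\sqrt m}(W^{(l)}+F^{*}R^{(l)}F)\aalpha^{(l-1)}$, so the effective weight matrix is $\widetilde W^{(l)}:=W^{(l)}+F^{*}R^{(l)}F$. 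The key observation is that $F/\sqrt m$ is (up to normalization) unitary, so $\|F^{*}R^{(l)}F\|_2=\|R^{(l)}\|_2$; hence $\|\widetilde W^{(l)}\|_2\le \|W^{(l)}\|_2+\|R^{(l)}\|_2$. Then, using Assumption~\ref{asmp:smoothinit_FNO} together with standard Gaussian operator-norm concentration (an $\veps$-net / Bai--Yin type bound), with probability at least $1-\frac{2}{m}$ per layer we have $\|W_0^{(l)}\|_2\le \sigma_{1,w}\sqrt m\cdot$(const) and $\|R_0^{(l)}\|_2\le \sigma_{1,r}\sqrt m\cdot$(const); combined with the normalization constants $\sigma_{0,w},\sigma_{0,r}$ chosen in the assumption, $\frac{1}{\sqrt m}\|\widetilde W_0^{(l)}\|_2\le \frac{\sigma_{1,w}+\sigma_{1,r}}{2}+o(1)$. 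A union bound over the $L+1$ weight layers and the decoder $\v$ gives the overall probability $1-\frac{2(L+2)}{m}$ in the statement.

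Next, inside the set $B^{\mathrm{Euc}}_{\rho_w,\rho_r,\rho_1}(\vtheta_0)$ the triangle inequality gives $\frac{1}{\sqrt m}\|\widetilde W^{(l)}\|_2\le \frac{1}{\sqrt m}\|\widetilde W_0^{(l)}\|_2+\frac{\rho_w+\rho_r}{\sqrt m}$, which under the stated condition $\sigma_{1,w}+\sigma_{1,r}\le 1-\frac{\rho_w+\rho_r}{\sqrt m}$ is bounded by a constant strictly less than $1$ (the role of this condition is exactly to make the layerwise contraction factor $O(1)$, so that products over the $L$ layers stay polynomial in $L$ rather than exponential). With $\phi$ being $1$-Lipschitz (Assumption~\ref{asmp:Activation_Function_FNO}) and the encoder output normalized as $\|\aalpha^{(0)}\|_2=\sqrt d$, an induction on $l$ bounds $\|\aalpha^{(l)}\|_2$ and the layerwise Jacobians $\partial\aalpha^{(l)}/\partial\aalpha^{(l-1)}$ by constants depending polynomially on $L$ and the radii. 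Chaining these via the chain rule yields the gradient bound $\|\nabla_\vtheta f\|_2\le\varrho$ and the predictor bound $|f|\le\lambda$. For the Hessian bound~\eqref{eq:hessianBoundG_fg_FNO}, I would differentiate the forward recursion twice: the second derivative picks up a factor of $\phi''$ (bounded by $\beta_\phi$) times products of first-order Jacobian terms, and — crucially — one explicit $\frac{1}{\sqrt m}$ from the normalization of every layer's weight matrix, since each block of $\nabla^2_\vtheta f$ differentiates through at least one factor of $\frac{1}{\sqrt m}\widetilde W^{(l)}$ in a way that is not cancelled. The cross-derivatives between a $W$-parameter and an $R$-parameter (the extra terms absent in plain feedforward nets) are handled identically: differentiating $\vg^{(l)}$ once w.r.t.\ an entry of $W^{(l)}$ and once w.r.t.\ an entry of $R^{(l)}$ again produces a single $\frac{1}{\sqrt m}$ times a bounded expression (using $\|F/\sqrt m\|_2=O(1)$), so these blocks obey the same $O(1/\sqrt m)$ bound.

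The main obstacle I anticipate is the bookkeeping for the spectral-branch cross terms in the Hessian: one must verify that every entry of $\nabla^2_\vtheta f$ — including the $W$-$R$ and $R$-$R$ blocks and blocks involving the decoder $\v$ — retains exactly one uncancelled factor of $\frac{1}{\sqrt m}$, and that the remaining factors are products of quantities already bounded by $O(\mathrm{poly}(L))$ constants under the initialization condition; the DFT matrix $F$ does not spoil this because $\|F^{*}(\cdot)F\|_2=\|\cdot\|_2$ is an isometry of the spectral norm. Once that structural fact is in hand, the bounds follow by the same inductive/telescoping argument as in~\citep{banerjee2022restricted}; the dependence of $c,\varrho,\lambda$ on $\sigma_{1,w},\sigma_{1,r},L,\rho_w,\rho_r,\rho_1$, and its collapse to a polynomial in $L$ and the radii under $\sigma_{1,w}+\sigma_{1,r}\le 1-\frac{\rho_w+\rho_r}{\sqrt m}$, is then immediate from tracking the constants through the induction.
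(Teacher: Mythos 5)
Your plan follows essentially the same route as the paper: spectral-norm bounds on $W^{(l)}_0,R^{(l)}_0$ at initialization plus a triangle inequality inside $B^{\mathrm{Euc}}_{\rho_w,\rho_r,\rho_1}(\vtheta_0)$, the isometry $\|F^*R^{(l)}F\|_2=\|R^{(l)}\|_2$, an induction giving $\|\aalpha^{(l)}\|_2$ and layerwise Jacobian bounds in terms of $\gamma=\sigma_{1,w}+\sigma_{1,r}+\tfrac{\rho_w+\rho_r}{\sqrt m}$, a block decomposition of $\nabla^2_{\vtheta}f$ including the $W$--$R$ cross blocks and the blocks involving $\v$, a union bound over layers for the probability, and the condition $\gamma\leq 1$ to keep all constants polynomial in $L$ and the radii. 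This is exactly the structure of Lemmas~\ref{lemm:InitParamFNO}--\ref{lemm:cross-w-r-bound} and the block-by-block Hessian bound in the appendix.

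The one point where your justification would fail if pursued literally is the claimed source of the surviving $\frac{1}{\sqrt m}$ in \eqref{eq:hessianBoundG_fg_FNO}: you attribute it to the $\frac{1}{\sqrt m}$ normalization of the hidden-layer weight matrices being "not cancelled." Inside the ball those factors \emph{are} cancelled, since $\|W^{(l)}\|_2,\|R^{(l)}\|_2=O(\sqrt m)$ and likewise $\|\aalpha^{(l)}\|_2=O(\sqrt m)$; accordingly, the relevant second-derivative tensors such as $\partial^2\aalpha^{(l)}/\partial\w^{(l)}\partial\r^{(l)}$ have $\|\cdot\|_{2,2,1}$ norms that are $O(1)$ constants, not $O(1/\sqrt m)$. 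The decay in the paper comes instead from the readout: because the decoder is $\frac{1}{\sqrt m}\v^\top\aalpha^{(L+1)}$ with $\|\v\|_2\leq 1+\rho_1$, one has $\cQ_\infty(f)=\max_l\|\partial f/\partial\aalpha^{(l)}\|_\infty\leq \frac{(1+\rho_1)(1+\gamma^{L})}{\sqrt m}$, and every Hessian block is bounded by an $O(1)$ product of layerwise quantities times exactly one such factor (the $\v$-cross blocks carry the explicit $\frac{1}{\sqrt m}\aalpha^{(L+1)}$ instead, and $H_v=0$). So the structural fact you flag as the main obstacle is true, but establishing it requires pairing the $(2,2,1)$ tensor norms with the $\ell_\infty$ bound on the backpropagated output derivative coming from the $O(1)$-norm decoder, rather than bookkeeping of the per-layer $\frac{1}{\sqrt m}$ normalizations; with that correction your argument coincides with the paper's proof.
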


In this section we will prove all the bounds in Lemma~\ref{lemm:hessgradbounds-FNO}.

\begin{lemm}[{\bf Initialization of the Parameters}]
    \label{lemm:InitParamFNO}
    Under Assumption~\ref{asmp:smoothinit_FNO},with probability at least $1-\frac{2}{m}$ we have
\begin{equation}
    \|W^{(l)}_0\|_2 \leq \sigma_{1,w}\sqrt{m}, \quad \text{and} \quad \|R^{(l)}_0\|_2 \leq \sigma_{1,r}\sqrt{m}.
    \label{eq:Wl_zero_Rl_zero}
\end{equation}
\end{lemm}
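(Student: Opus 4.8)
The plan is to reduce the claim to the standard deviation bound on the operator norm of a Gaussian random matrix and then check that the prescribed initialization variance in Assumption~\ref{asmp:smoothinit_FNO} is exactly what is needed to make the resulting bound collapse to $\sigma_{1,w}\sqrt m$ (resp.\ $\sigma_{1,r}\sqrt m$). Concretely, fix a layer $l$ and write $W^{(l)}_0 = \sigma_{0,w}\, G$, where $G$ has i.i.d.\ standard normal entries and is either $m\times m$ (for $l\in\{2,\dots,L+1\}$) or $m\times d$ with $d\le m$ (for $l=1$). First I would recall the two classical facts: $\E\norm{G}_2 \le \sqrt m + \sqrt m = 2\sqrt m$ by Gordon's comparison inequality (using $d\le m$ in the first-layer case, so that $\sqrt m + \sqrt d \le 2\sqrt m$), and that $G\mapsto \norm{G}_2$ is $1$-Lipschitz in the Frobenius norm, so Gaussian concentration gives $\Pr\big(\norm{G}_2 \ge 2\sqrt m + t\big) \le e^{-t^2/2}$ for every $t>0$. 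This is the FNO analogue of the corresponding initialization lemma in \citep{banerjee2022restricted}.

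Next I would set $t = \sqrt{2\log m}$, so that the per-matrix failure probability is at most $e^{-\log m} = 1/m$, and on the complementary event
\[
\norm{W^{(l)}_0}_2 = \sigma_{0,w}\norm{G}_2 \le \sigma_{0,w}\big(2\sqrt m + \sqrt{2\log m}\big) = 2\sigma_{0,w}\sqrt m\Big(1 + \tfrac{\sqrt{\log m}}{\sqrt{2m}}\Big).
\]
Substituting the prescribed value $\sigma_{0,w} = \sigma_{1,w}\big/\big(2(1 + \tfrac{\sqrt{\log m}}{\sqrt{2m}})\big)$ makes the bracketed factor cancel exactly, leaving $\norm{W^{(l)}_0}_2 \le \sigma_{1,w}\sqrt m$. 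The identical argument applied to $R^{(l)}_0 = \sigma_{0,r} G'$ with $\sigma_{0,r} = \sigma_{1,r}\big/\big(2(1 + \tfrac{\sqrt{\log m}}{\sqrt{2m}})\big)$ yields $\norm{R^{(l)}_0}_2 \le \sigma_{1,r}\sqrt m$ on an event of failure probability at most $1/m$. A union bound over these two events gives the stated probability $1 - 2/m$.

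There is no substantive obstacle here; the only points needing care are (i) invoking the \emph{one-sided} upper-tail bound for $\sigma_{\max}(G)$ (Davidson--Szarek/Gordon), so that each matrix contributes only $1/m$ and the union over the $W$- and $R$-matrices lands at $2/m$ rather than $4/m$; and (ii) the first layer, where $W^{(1)}_0$ is $m\times d$ rather than square — since $d\le m$ the same constant $2\sqrt m$ still dominates $\E\norm{G}_2$, so the bound is unchanged. If one preferred a two-sided concentration bound the probability would degrade harmlessly to $1-4/m$, but I would use the one-sided version to match the statement as written.
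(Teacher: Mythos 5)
Your proposal is correct and is essentially the argument the paper relies on: the paper's proof simply invokes Lemma A.1 of \citet{banerjee2022restricted}, which is exactly this Gaussian operator-norm bound (expectation via Gordon/Davidson--Szarek plus Lipschitz concentration with $t=\sqrt{2\log m}$, the normalization $\sigma_{0,w}=\sigma_{1,w}/(2(1+\tfrac{\sqrt{\log m}}{\sqrt{2m}}))$ cancelling to give $\sigma_{1,w}\sqrt m$). Your explicit handling of the rectangular first layer and the one-sided tail so that the union over $W^{(l)}_0$ and $R^{(l)}_0$ lands at $1-\tfrac{2}{m}$ matches the stated probability.
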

\proof The proof follows directly from Lemma A.1 in~\citep{banerjee2022restricted}. 
\begin{prop}[{\bf Layer-wise matrices}]
\label{prop:W_l_R_l_bound_FNO}
Under Assumption~\ref{asmp:smoothinit_FNO}, for $\vtheta\in B^{\mathrm{Euc}}_{\rho_w,\rho_r\rho_1}(\vtheta_0)$, with probability at least $1-\frac{2}{m}$ we have
\begin{equation}
    \left\|W^{(l)}\right\|_2 \leq\left(\sigma_{1,w}+\frac{\rho_w}{\sqrt{m}}\right) \sqrt{m},\;l\in[L+1] \quad \text{and}\quad 
    \left\|R^{(l)}\right\|_2 \leq\left(\sigma_{1,r}+\frac{\rho_r}{\sqrt{m}}\right) \sqrt{m},\;l\in\{2,\dots,L+1\}
\end{equation}
\proof By the triangle inequality and Lemma~\ref{lemm:InitParamFNO},
\begin{align*}
    &\| W^{(l)}\|_2 \leq \| \Wlzero\|_2 + \| W^{(l)} - \Wlzero\|_2 \leq\sigma_{1,w}\sqrt{m} + \rho_w, \\ 
    &\| R^{(l)}\|_2 \leq \| \Rlzero\|_2 + \| R^{(l)} - \Rlzero\|_2 \leq \sigma_{1,r}\sqrt{m} + \rho_r~.
\end{align*}
\qed
\end{prop}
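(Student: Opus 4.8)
The plan is to reduce the bound at an arbitrary $\vtheta$ in the neighborhood to a bound at initialization plus a controlled perturbation, so that the only probabilistic ingredient is the initialization estimate. First I would invoke Lemma~\ref{lemm:InitParamFNO}, which under Assumption~\ref{asmp:smoothinit_FNO} guarantees $\|W^{(l)}_0\|_2 \leq \sigma_{1,w}\sqrt{m}$ for $l\in[L+1]$ and $\|R^{(l)}_0\|_2 \leq \sigma_{1,r}\sqrt{m}$ for $l\in\{2,\dots,L+1\}$, simultaneously over all layers, with probability at least $1-\frac{2}{m}$. This is the \emph{only} step that uses the randomness of the Gaussian initialization; everything downstream holds deterministically on this event.

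Next, since $\vtheta\in B^{\mathrm{Euc}}_{\rho_w,\rho_r,\rho_1}(\vtheta_0)$, the defining inequalities of the neighborhood set give directly $\|W^{(l)} - W^{(l)}_0\|_2 \leq \rho_w$ for $l\in[L+1]$ and $\|R^{(l)} - R^{(l)}_0\|_2 \leq \rho_r$ for $l\in\{2,\dots,L+1\}$. Applying the triangle inequality for the induced operator norm gives $\|W^{(l)}\|_2 \leq \|W^{(l)}_0\|_2 + \|W^{(l)} - W^{(l)}_0\|_2 \leq \sigma_{1,w}\sqrt{m} + \rho_w$, and analogously $\|R^{(l)}\|_2 \leq \sigma_{1,r}\sqrt{m} + \rho_r$. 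Factoring $\sqrt{m}$ out of each right-hand side yields exactly the claimed $\bigl(\sigma_{1,w} + \tfrac{\rho_w}{\sqrt{m}}\bigr)\sqrt{m}$ and $\bigl(\sigma_{1,r} + \tfrac{\rho_r}{\sqrt{m}}\bigr)\sqrt{m}$.

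Because Lemma~\ref{lemm:InitParamFNO} already controls \emph{all} layer matrices jointly on a single event of probability at least $1-\frac{2}{m}$, no further union bound is needed and the overall probability stays $1-\frac{2}{m}$. There is no genuine obstacle here: once the initialization event holds, the statement is purely deterministic, and the single piece of analytic content — operator-norm concentration of the Gaussian initial matrices — has been isolated in Lemma~\ref{lemm:InitParamFNO} (a standard Gaussian spectral-norm tail bound, as cited from~\citep{banerjee2022restricted}). The one point worth verifying is precisely that Lemma~\ref{lemm:InitParamFNO} is stated jointly across layers at the single probability $1-\frac{2}{m}$, so that invoking it once suffices for every $l$.
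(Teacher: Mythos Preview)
Your proposal is correct and follows essentially the same approach as the paper: invoke Lemma~\ref{lemm:InitParamFNO} for the initialization bound, use the definition of $B^{\mathrm{Euc}}_{\rho_w,\rho_r,\rho_1}(\vtheta_0)$ for the perturbation bound, and combine via the triangle inequality. Your closing caveat about verifying that Lemma~\ref{lemm:InitParamFNO} holds jointly across all layers at probability $1-\tfrac{2}{m}$ is a fair point to flag, since the paper's statement of that lemma is terse on this.
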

We now bound the norm of the output $\aalpha^{(l)}$ at the layer $l\in[L+1]$.
\begin{lemm}[{\bf Norm of the $l$-th layer output}]
\label{lemm:TwoNormOutputFNOBlock}
For $l\in [L+1]$, under Assumptions~\ref{asmp:Activation_Function_FNO} and \ref{asmp:smoothinit_FNO} for $\theta\in B^{\mathrm{Euc}}_{\rho_w,\rho_r\rho_1}(\vtheta_0)$, with probability at least $1 - \frac{2l}{m}$, we have
\begin{equation}
    \left\|\aalpha^{(l)}\right\|_2 \leq \sqrt{m}\left(\sigma_1+\frac{\rho}{\sqrt{m}}\right)^l+\sqrt{m} \sum_{i=1}^l\left(\sigma_1+\frac{\rho}{\sqrt{m}}\right)^{i-1}|\phi(0)|=\left(\gamma^l+|\phi(0)| \sum_{i=1}^l \gamma^{i-1}\right) \sqrt{m},
\end{equation}
where,
\begin{equation*}
    \sigma_1 = \sigma_{1,w} + \sigma_{1,r},\quad\rho = \rho_w + \rho_r,\quad \text{and}\quad \gamma=\sigma_1+\frac{\rho}{\sqrt{m}}.
\end{equation*}
\proof We prove the result using induction (e.g., see Lemma A.2 in~\citep{banerjee2022restricted}). First, note that for the first hidden layer, using the fact that $\phi$ is $1$-Lipschitz,
\begin{equation}
    \left\|\phi\left(\frac{1}{\sqrt{d}} W^{(1)} \aalpha^{(0)}\right)\right\|_2-\|\phi(\mathbf{0})\|_2 \leq\left\|\phi\left(\frac{1}{\sqrt{d}} W^{(1)} \aalpha^{(0)}\right)-\phi(\mathbf{0})\right\|_2 \leq\left\|\frac{1}{\sqrt{d}} W^{(1)} \aalpha^{(0)}\right\|_2,
\end{equation}
where $\vzero$ denotes the zero vector of appropriate size. This in turn gives, using $\|\aalpha^{(0)} \|_2 = \sqrt{d}$,
\begin{align*}
    \begin{aligned}
        \left\|\aalpha^{(1)}\right\|_2 & =\left\|\phi\left(\frac{1}{\sqrt{d}} W^{(1)} \aalpha^{(0)}\right)\right\|_2 \leq\left\|\frac{1}{\sqrt{d}} W^{(1)} \aalpha^{(0)}\right\|_2+\|\phi(\mathbf{0})\|_2 \\
        & \leq \frac{1}{\sqrt{d}}\left\|W^{(1)}\right\|_2\left\|\aalpha^{(0)}\right\|_2+|\phi(0)| \sqrt{m} \\
        & \leq\left(\sigma_{1,w}+\frac{\rho_w}{\sqrt{m}}\right) \sqrt{m}+|\phi(0)| \sqrt{m} \\
        & \leq\left(\sigma_{1,w}+\sigma_{1,r}+\frac{\rho_w + \rho_r}{\sqrt{m}}\right) \sqrt{m}+|\phi(0)| \sqrt{m}~.
    \end{aligned}
\end{align*}
Now, consider also the output at layer $2$, namely,
\begin{align*}
   \| \aalpha^{(2)}\|_2 
   = 
   \left\|\phi\left(\frac{1}{\sqrt{m}} W^{(2)} \aalpha^{(1)}
    +
    \frac{1}{\sqrt{m}} F^{*} R^{(2)} F \aalpha^{(1)}\right)\right\|_2,
\end{align*}
which gives,
\begin{align*}
    & \left\|\phi\left(\frac{1}{\sqrt{m}} W^{(2)} \aalpha^{(1)}
    +
    \frac{1}{\sqrt{m}} F^{*} R^{(2)} F \aalpha^{(1)}\right)\right\|_2 - \| \phi(\mathbf{0})\|_2 \\
    &\leq 
    \left\|\phi\left(\frac{1}{\sqrt{m}} W^{(2)} \aalpha^{(1)}
    +
    \frac{1}{\sqrt{m}} F^{*} R^{(2)} F \aalpha^{(1)}\right)
    - \phi(\mathbf{0})
    \right\|_2 \leq 
    \left\|\frac{1}{\sqrt{m}} W^{(2)} \aalpha^{(1)}
    +
    \frac{1}{\sqrt{m}} F^{*} R^{(2)} F \aalpha^{(1)}\right\|_2,
\end{align*}
and, in turn,
\begin{align*}
    \| \aalpha^{(2)}\|_2 
    &\leq \left\|\frac{1}{\sqrt{m}} W^{(2)} \aalpha^{(1)}
    +
    \frac{1}{\sqrt{m}} F^{*} R^{(2)} F \aalpha^{(1)}\right\|_2 + \| \phi(\mathbf{0})\|_2 \\
    &\leq \left\|\frac{1}{\sqrt{m}} W^{(2)} \aalpha^{(1)}\right\|_2+\left\|\frac{1}{\sqrt{m}} F^* R^{(2)} F \aalpha^{(1)}\right\|_2+|\phi(0)| \sqrt{m} \\
    &\overset{(a)}{\leq} \frac{1}{\sqrt{m}} \|W^{(2)}\|_2 \|\aalpha^{(1)}\|_2 + \frac{1}{\sqrt{m}} \|R^{(2)}\|_2 \| \aalpha^{(1)}\|_2 + \sqrt{m}|\phi(0)| \\
    &\leq \left( \sigma_{1,w} + \frac{\rho_w}{\sqrt{m}} + \sigma_{1,r} + \frac{\rho_r}{\sqrt{m}}\right) \| \aalpha^{(1)}\|_2 + \sqrt{m}|\phi(0)|\\
    &\leq \sqrt{m}\left( \sigma_1 + \frac{\rho}{\sqrt{m}}\right)^2 + \left(1 + \left( \sigma_1 + \frac{\rho}{\sqrt{m}}\right) \right)\sqrt{m}|\phi(0)|,
\end{align*}
where (a) follows from the fact that the operator $F$ is a unitary matrix. Now, for the inductive step, consider that the output at layer $l-1$ satisfies
\begin{align*}
    \left\|\aalpha^{(l-1)}\right\|_2 
    \leq \sqrt{m}\left(\sigma_1
    +
    \frac{\rho}{\sqrt{m}}\right)^{l-1}
    +\sqrt{m} \sum_{i=1}^{l-1}
    \left(\sigma_1
        +
        \frac{\rho}{\sqrt{m}}\right)^{i-1}|\phi(0)|.
\end{align*}
Finally, at layer $l$, we have
\begin{align}
    \left\|\aalpha^{(l)}\right\|_2 
    &\leq \frac{1}{\sqrt{m}}\left(\underbrace{\left\|W^{(l)}\right\|_2+\left\|F^* R^{(l)} F\right\|_2}\right) \| \aalpha^{(l-1)}\|_2
    +
    \sqrt{m}|\phi (0)|\\
    &\leq \left( \sigma_{1,w} + \sigma_{1,r} + \frac{\rho_w + \rho_r}{\sqrt{m}}\right) \| \aalpha^{(l-1)}\|_2
    +
    \sqrt{m}|\phi (0)|\\
    &\leq \sqrt{m}\left(\sigma_1
    +
    \frac{\rho}{\sqrt{m}}\right)^{l}
    +\sqrt{m} \sum_{i=1}^{l}
    \left(\sigma_{1}
        +
        \frac{\rho}{\sqrt{m}}\right)^{i-1}|\phi(0)|.
\end{align}
Introducing $\gamma = \sigma_1 + \dfrac{\rho}{\sqrt{m}}$, we can write
\begin{equation}
    \|\aalpha^{(l)}\|_2 \leq \sqrt{m}\left(\gamma^l+|\phi(0)| \sum_{i=1}^{l} \gamma^{i-1}\right).
    \label{eq:alpha_l_2_norm_bound}
\end{equation}
This completes the proof.\hfill \qed
\end{lemm}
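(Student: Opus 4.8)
The plan is to prove the bound by induction on the layer index $l$, exploiting three facts: $\phi$ is $1$-Lipschitz (Assumption~\ref{asmp:Activation_Function_FNO}); the layer matrices satisfy the operator-norm bounds $\|W^{(l)}\|_2 \le (\sigma_{1,w}+\rho_w/\sqrt m)\sqrt m$ and $\|R^{(l)}\|_2 \le (\sigma_{1,r}+\rho_r/\sqrt m)\sqrt m$ from Proposition~\ref{prop:W_l_R_l_bound_FNO}; and the DFT kernel $F$ (hence $F^{*}$) is unitary, so $\|F^{*}R^{(l)}F\|_2 = \|R^{(l)}\|_2$. The recurring elementary step is that for any $\vz\in\R^m$, $1$-Lipschitzness gives $\|\phi(\vz)\|_2 \le \|\phi(\vz)-\phi(\vzero)\|_2 + \|\phi(\vzero)\|_2 \le \|\vz\|_2 + |\phi(0)|\sqrt m$.

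First I would handle the base case $l=1$: applying the inequality above to $\vz = \tfrac{1}{\sqrt d}W^{(1)}\aalpha^{(0)}$, using $\|W^{(1)}\|_2 \le (\sigma_{1,w}+\rho_w/\sqrt m)\sqrt m$ and the normalization $\|\aalpha^{(0)}\|_2 = \sqrt d$ (Assumption~\ref{asmp:smoothinit_FNO}), one obtains $\|\aalpha^{(1)}\|_2 \le (\sigma_{1,w}+\rho_w/\sqrt m)\sqrt m + |\phi(0)|\sqrt m \le \gamma\sqrt m + |\phi(0)|\sqrt m$, since $\sigma_{1,r},\rho_r\ge 0$ forces $\sigma_{1,w}+\rho_w/\sqrt m \le \sigma_1 + \rho/\sqrt m = \gamma$; this is the claimed bound at $l=1$. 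For the inductive step at $l\ge 2$, write $\aalpha^{(l)} = \phi\!\big(\tfrac{1}{\sqrt m}W^{(l)}\aalpha^{(l-1)} + \tfrac{1}{\sqrt m}F^{*}R^{(l)}F\aalpha^{(l-1)}\big)$, peel off $\phi$ once more, and bound the argument by $\tfrac{1}{\sqrt m}\big(\|W^{(l)}\|_2 + \|F^{*}R^{(l)}F\|_2\big)\|\aalpha^{(l-1)}\|_2$. Unitarity of $F$ converts the second operator norm into $\|R^{(l)}\|_2$, so the multiplicative factor is at most $\sigma_{1,w}+\sigma_{1,r} + (\rho_w+\rho_r)/\sqrt m = \gamma$. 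Substituting the induction hypothesis $\|\aalpha^{(l-1)}\|_2 \le \big(\gamma^{l-1}+|\phi(0)|\sum_{i=1}^{l-1}\gamma^{i-1}\big)\sqrt m$ yields $\|\aalpha^{(l)}\|_2 \le \gamma^{l}\sqrt m + |\phi(0)|\sqrt m\big(\sum_{i=1}^{l-1}\gamma^{i} + 1\big)$, and re-indexing the geometric sum gives precisely $\big(\gamma^{l} + |\phi(0)|\sum_{i=1}^{l}\gamma^{i-1}\big)\sqrt m$. The probability $1-2l/m$ then follows from a union bound over the $l$ weight-norm events (Lemma~\ref{lemm:InitParamFNO}/Proposition~\ref{prop:W_l_R_l_bound_FNO}) invoked along the chain $1,\dots,l$.

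I do not expect a genuine obstacle; this is a clean induction. The two points requiring care are (i) using the unitarity of $F$ so that the Fourier branch $\tfrac{1}{\sqrt m}F^{*}R^{(l)}F$ contributes the same operator-norm bound as the spatial branch $\tfrac{1}{\sqrt m}W^{(l)}$ — the one place where the FNO structure, rather than a generic feedforward net, enters — and (ii) bookkeeping the constant term $|\phi(0)|\sqrt m$ through the recursion so that the partial geometric sum $\sum_{i=1}^{l}\gamma^{i-1}$ emerges with the correct index range.
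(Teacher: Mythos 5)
Your proposal is correct and mirrors the paper's own argument: the same induction on $l$, peeling off the $1$-Lipschitz activation via $\|\phi(\vz)\|_2\le\|\vz\|_2+|\phi(0)|\sqrt m$, bounding $\|W^{(l)}\|_2$ and $\|R^{(l)}\|_2$ through Proposition~\ref{prop:W_l_R_l_bound_FNO}, using unitarity of $F$ so that $\|F^*R^{(l)}F\|_2=\|R^{(l)}\|_2$, and accumulating the geometric sum, with the probability $1-\frac{2l}{m}$ obtained by a union bound over the layers. No gaps.
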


From now on, we will use the notation $\rho$, $\sigma_1$, and $\gamma$ as defined in Lemma~\ref{lemm:TwoNormOutputFNOBlock}.

\begin{lemm}
    \label{lemm:firstDerivativeBoundFNO}
    For $l\in \{2,\dots,L+1\}$, under Assumptions~\ref{asmp:Activation_Function_FNO} and \ref{asmp:smoothinit_FNO} for $\theta\in B^{\mathrm{Euc}}_{\rho_w,\rho_r\rho_1}(\theta_0)$, with probability at least $1 - \frac{2}{m}$, we have
    \begin{equation}
        \left\|\frac{\partial \aalpha^{(l)}}{\partial \aalpha^{(l-1)}}\right\|_2 \leq
        \gamma.
    \end{equation}
    \proof
    We first note that 
    \begin{align*}
        \left[
            \del{\aalpha^{(l)}}{\aalpha^{(l-1)}}
        \right]_{ij} = 
        \frac{1}{\sqrt{m}} \phi'(\widetilde{\aalpha}^{(l-1)}) \left[ 
            W^{(l)}_{ij} + [F^*R^{(l)}F]_{ij}
        \right].
    \end{align*}
    Now, from the definition $\|A\|_2 = \sup_{\| \vv\|_2 = 1} \| A\vv\|_2$ we have,
    \begin{align}
        \begin{aligned}
            \left\|\frac{\partial \aalpha^{(l)}}{\partial \aalpha^{(l-1)}}\right\|_2 
            &= 
            \sup_{\|\vv\|_2 = 1} \frac{1}{\sqrt{m}}
            \left(
                {\phi'} \left\|\left( W^{(l)} + F^*R^{(l)}F \right)\vv\right\|_2
            \right)\\
            &\overset{(a)}{\leq} \sup_{\|\vv\|_2 = 1} \frac{1}{\sqrt{m}}
            \left(
                \|W^{(l)}\vv\|_2 +  \|F^*R^{(l)}F\vv\|_2
            \right)
            \\
            &\overset{(b)}{=} \sup_{\|\vv\|_2 = 1} \frac{1}{\sqrt{m}}
            \left(
                \|W^{(l)}\vv\|_2 +  \|R^{(l)}F\vv\|_2
            \right)\\
            &\overset{(c)}{\leq}\sup_{\|\vv\|_2 = 1} \frac{1}{\sqrt{m}}
            \left(
                \|W^{(l)}\|_2\norm{\vv}_2 +  \norm{R^{(l)}}_2\norm{\vv}_2
            \right)\\
            &=\frac{1}{\sqrt{m}}
            \left(
                \|W^{(l)}\|_2 +  \norm{R^{(l)}}_2
            \right)~,            
        \end{aligned}
    \end{align}
    where $(a)$ follows from the fact that $\phi$ is $1$-Lipchitz and by using the triangle inequality, and $(b)$ and $(c)$ follow from the fact that $F^*$ and $F$ are isometries with respect to the $L_2$-norm, i.e. $\|F \vv\|_2=\|\vv\|_2$ and $\|F^* \vv\|_2=\|\vv\|_2$ for $\vv\in\R^{m}$.
    This finally gives 
    \begin{align*}
        \left\|\frac{\partial \aalpha^{(l)}}{\partial \aalpha^{(l-1)}}\right\|_2  
        \leq 
        \frac{1}{\sqrt{m}}\left(\|W^{(l)}\|_2 + \|R^{(l)}\|_2 \right) &\leq
            \left(\sigma_{1,w} + \frac{\rho_w}{\sqrt{m}}\right)
            + \left(\sigma_{1,r} + \frac{\rho_r}{\sqrt{m}}\right) \\
            & = \gamma~,
    \end{align*}
where we used Proposition~\ref{prop:W_l_R_l_bound_FNO}. This completes the proof.\hfill\qed
\end{lemm}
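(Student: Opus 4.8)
The plan is to differentiate the layer recursion $\aalpha^{(l)} = \phi\bigl(\tfrac{1}{\sqrt m}(W^{(l)} + F^{*} R^{(l)} F)\aalpha^{(l-1)}\bigr)$ and bound the resulting Jacobian's spectral norm term by term. By the chain rule, $\partial\aalpha^{(l)}/\partial\aalpha^{(l-1)} = D\cdot \tfrac{1}{\sqrt m}\bigl(W^{(l)} + F^{*} R^{(l)} F\bigr)$, where $D$ is the diagonal matrix whose entries are $\phi'$ evaluated at the corresponding pre-activations. Since $\phi$ is $1$-Lipschitz by Assumption~\ref{asmp:Activation_Function_FNO}, we have $\lvert\phi'\rvert\le 1$ pointwise, hence $\|D\|_2\le 1$, and therefore $\bigl\|\partial\aalpha^{(l)}/\partial\aalpha^{(l-1)}\bigr\|_2 \le \tfrac{1}{\sqrt m}\,\|W^{(l)} + F^{*} R^{(l)} F\|_2$.

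Next I would apply the triangle inequality for the operator norm, $\|W^{(l)} + F^{*} R^{(l)} F\|_2 \le \|W^{(l)}\|_2 + \|F^{*} R^{(l)} F\|_2$, and use that $F$ and $F^{*}$ act as $L_2$-isometries, i.e.\ $\|F\vv\|_2 = \|F^{*}\vv\|_2 = \|\vv\|_2$ for $\vv\in\R^m$, to collapse $\|F^{*} R^{(l)} F\|_2 = \|R^{(l)}\|_2$. This is the only place the Fourier structure enters, and it is the step I would be most careful with: one must use the normalization convention under which the DFT matrix is unitary, so that the ``spectral convolution'' block does not inflate the spectral norm and reduces to that of the plain weight matrix $R^{(l)}$.

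Finally I would invoke Proposition~\ref{prop:W_l_R_l_bound_FNO}, which holds with probability at least $1-\tfrac{2}{m}$ for $\vtheta\in B^{\mathrm{Euc}}_{\rho_w,\rho_r,\rho_1}(\vtheta_0)$ and gives $\|W^{(l)}\|_2\le(\sigma_{1,w}+\rho_w/\sqrt m)\sqrt m$ and $\|R^{(l)}\|_2\le(\sigma_{1,r}+\rho_r/\sqrt m)\sqrt m$. Substituting these bounds,
\[
\tfrac{1}{\sqrt m}\bigl(\|W^{(l)}\|_2 + \|R^{(l)}\|_2\bigr) \le \Bigl(\sigma_{1,w}+\tfrac{\rho_w}{\sqrt m}\Bigr) + \Bigl(\sigma_{1,r}+\tfrac{\rho_r}{\sqrt m}\Bigr) = \sigma_1 + \tfrac{\rho}{\sqrt m} = \gamma,
\]
which is precisely the claimed bound, with the failure probability $\tfrac{2}{m}$ inherited directly from Proposition~\ref{prop:W_l_R_l_bound_FNO}. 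Beyond the isometry observation, the argument is routine: no new concentration estimate is needed, since all the randomness has already been absorbed into the layer-wise matrix norm bounds.
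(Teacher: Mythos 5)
Your proposal is correct and follows essentially the same route as the paper: factor out the diagonal of $\phi'$ (bounded by $1$ via the Lipschitz assumption), apply the triangle inequality, use the unitarity of $F$ to reduce $\|F^{*}R^{(l)}F\|_2$ to $\|R^{(l)}\|_2$, and conclude with the layer-wise matrix bounds of Proposition~\ref{prop:W_l_R_l_bound_FNO}. Writing the Jacobian as $D\cdot\tfrac{1}{\sqrt m}(W^{(l)}+F^{*}R^{(l)}F)$ with $\|D\|_2\le 1$ is just a slightly cleaner bookkeeping of the same argument.
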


We make use of the Einstein summation convention, i.e. repeated indices imply summation, unless explicitly stated. We also use the notation $\text{vec}(\cdot)$ to denote the vectorization of the matrix argument according to some fixed manner (e.g., row-wise vectorization).

\begin{lemm}
    \label{lemm:gradient_alpha_params}
    Under Assumptions~\ref{asmp:Activation_Function_FNO} and \ref{asmp:smoothinit_FNO} and for $\theta\in B^{\mathrm{Euc}}_{\rho_w,\rho_r\rho_1}(\vtheta_0)$, with probability at least $ 1 - \frac{2l}{m}$,
    \begin{equation*}
\left\|\dfrac{\partial\aalpha^{(l)}}{\partial\mathbf{w}^{(l)}}\right\|_2,\left\|\dfrac{\partial\aalpha^{(l)}}{\partial\mathbf{r}^{(l)}}\right\|_2
\leq         
\left(\gamma^{l-1} + |\phi(0)|\sum_{i=1}^{l-1}\gamma^{i-1}\right)
    \end{equation*}
where, $\mathbf{w}^{(l)} = \text{vec}(W^{(l)})$ for $l\in[L+1]$, and $\r^{(l)} = \text{vec}(R^{(l)})$ for $l\in\{2,\dots,L+1\}$. 
\end{lemm}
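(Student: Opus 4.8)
The plan is to write the two Jacobians explicitly, bound their spectral norms by $\frac{1}{\sqrt m}\|\aalpha^{(l-1)}\|_2$, and then invoke Lemma~\ref{lemm:TwoNormOutputFNOBlock} to control $\|\aalpha^{(l-1)}\|_2$. The point is that $\aalpha^{(l-1)}$ depends only on $W^{(1)},\dots,W^{(l-1)}$ and $R^{(2)},\dots,R^{(l-1)}$, hence is constant as a function of $W^{(l)}$ and $R^{(l)}$, so the only dependence of $\aalpha^{(l)}$ on these parameters is through the pre-activation.

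First I would set $\widetilde\aalpha^{(l)} := \frac{1}{\sqrt m}\big(W^{(l)}\aalpha^{(l-1)} + F^{*}R^{(l)}F\aalpha^{(l-1)}\big)$ and $D^{(l)} := \operatorname{diag}\big(\phi'(\widetilde\aalpha^{(l)})\big)$. Since $\partial\widetilde\aalpha^{(l)}_i/\partial w^{(l)}_{ab} = \frac{1}{\sqrt m}\delta_{ia}\aalpha^{(l-1)}_b$ and $\partial\widetilde\aalpha^{(l)}_i/\partial r^{(l)}_{ab} = \frac{1}{\sqrt m}(F^{*})_{ia}(F\aalpha^{(l-1)})_b$, the chain rule through the pointwise activation gives, with the Kronecker factors ordered according to the fixed row-wise convention for $\text{vec}(\cdot)$,
\[
\frac{\partial\aalpha^{(l)}}{\partial\mathbf{w}^{(l)}} = \frac{1}{\sqrt m}\,D^{(l)}\big(I_m\otimes(\aalpha^{(l-1)})^{\top}\big),\qquad \frac{\partial\aalpha^{(l)}}{\partial\mathbf{r}^{(l)}} = \frac{1}{\sqrt m}\,D^{(l)}\big(F^{*}\otimes(F\aalpha^{(l-1)})^{\top}\big).
\]
Next I would take spectral norms. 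By Assumption~\ref{asmp:Activation_Function_FNO} the activation $\phi$ is $1$-Lipschitz, so $|\phi'|\le 1$ entrywise and $\|D^{(l)}\|_2\le 1$. Using $\|A\otimes B\|_2 = \|A\|_2\|B\|_2$, together with $\|I_m\|_2 = 1$, $\|F^{*}\|_2 = 1$, and the fact that $F,F^{*}$ act as $L_2$-isometries (exactly as already used in Lemmas~\ref{lemm:TwoNormOutputFNOBlock} and~\ref{lemm:firstDerivativeBoundFNO}), we have $\|(\aalpha^{(l-1)})^{\top}\|_2 = \|\aalpha^{(l-1)}\|_2$ and $\|(F\aalpha^{(l-1)})^{\top}\|_2 = \|F\aalpha^{(l-1)}\|_2 = \|\aalpha^{(l-1)}\|_2$, whence
\[
\left\|\frac{\partial\aalpha^{(l)}}{\partial\mathbf{w}^{(l)}}\right\|_2,\ \left\|\frac{\partial\aalpha^{(l)}}{\partial\mathbf{r}^{(l)}}\right\|_2\ \le\ \frac{1}{\sqrt m}\,\big\|\aalpha^{(l-1)}\big\|_2.
\]

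To finish, I would apply Lemma~\ref{lemm:TwoNormOutputFNOBlock} at layer $l-1$, which gives $\|\aalpha^{(l-1)}\|_2 \le \sqrt m\big(\gamma^{l-1} + |\phi(0)|\sum_{i=1}^{l-1}\gamma^{i-1}\big)$ with probability at least $1-\frac{2(l-1)}{m}\ge 1-\frac{2l}{m}$; substituting cancels the factor $1/\sqrt m$ and yields the stated bound on the same high-probability event (no further concentration argument is needed beyond what Lemma~\ref{lemm:TwoNormOutputFNOBlock} already supplies).

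The only real obstacle is bookkeeping: correctly identifying the two Jacobians as the claimed Kronecker products (keeping the diagonal activation factor $D^{(l)}$ in the correct position and matching the row-wise $\text{vec}(\cdot)$ convention), and being consistent with the paper's convention that the discrete Fourier kernel $F$ is treated as an $L_2$-isometry. Once those are pinned down, the spectral-norm estimates are immediate from submultiplicativity and the Kronecker-product norm identity.
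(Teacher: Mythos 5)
Your proof is correct and follows essentially the same route as the paper: reduce to showing $\bigl\|\partial\aalpha^{(l)}/\partial\mathbf{w}^{(l)}\bigr\|_2,\bigl\|\partial\aalpha^{(l)}/\partial\mathbf{r}^{(l)}\bigr\|_2\leq \frac{1}{\sqrt{m}}\|\aalpha^{(l-1)}\|_2$ using $|\phi'|\leq 1$ and the $L_2$-isometry of $F$, then invoke Lemma~\ref{lemm:TwoNormOutputFNOBlock} on the same high-probability event. The only difference is cosmetic: you compute the Jacobian norm via an explicit Kronecker factorization and $\|A\otimes B\|_2=\|A\|_2\|B\|_2$, whereas the paper takes a supremum over Frobenius-normalized test matrices $V$ and uses $\|V\|_2\leq\|V\|_F$; both yield the identical bound (and your $l=1$ case, omitted but trivial, is handled with the $1/\sqrt{d}$ normalization and $\|\aalpha^{(0)}\|_2=\sqrt{d}$ exactly as in the paper).
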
 

    \proof
    We can index the vectors $\vw^{(l)}$ and $\r^{(l)}$ according to their matrix form $W^{(l)}_{jj'}$ and $R^{(l)}_{jj'}$, respectively, with the indices $j\in [m]$, and $j'\in [d]$ when $l=1$ or $j'\in [m]$ when $l\in \{2,\dots,L+1\}$. Therefore,
    \begin{equation*}
        \left[\frac{\partial \aalpha^{(l)}}{\partial \mathbf{w}^{(l)}}\right]_{i, j j^{\prime}} = \frac{1}{\sqrt{m}}\phi' (\widetilde{\aalpha}^{(l)}_i)\delta_{ij}\aalpha^{(l-1)}_{j'},
        \quad \delta_{ij} = \begin{cases}
            1 &{i=j}\\
            0 &\text{otherwise}
        \end{cases}.
    \end{equation*}
    Now, for $l\in \{2,\dots,L+1\}$, we can write the $L_2$-norm of the matrices as follows
    \begin{align*}
        \left\|\frac{\partial \aalpha^{(l)}}{\partial \mathbf{w}^{(l)}}\right\|_2^2
        &=
        \sup _{\|V\|_F=1} \frac{1}{m} \sum_{i=1}^m\left(\phi^{\prime}\left(\widetilde{\aalpha}_i^{(l)}\right) \sum_{j, j^{\prime}=1}^m \aalpha_{j^{\prime}}^{(l-1)} \delta_{ij} V_{j j^{\prime}}\right)^2\\
        &\leq \sup_{\|V\|_F = 1} \frac{1}{m} \| V\aalpha^{(l-1)}\|_2^2\\
        & \leq \sup_{\|V\|_F = 1}\frac{1}{m} \|V\|_2^2 \|\aalpha^{(l-1)}\|_2^2 \\ 
        & \overset{(a)}{\leq} \sup_{\|V\|_F = 1}\frac{1}{m} \|V\|_F^2 \|\aalpha^{(l-1)}\|_2^2 \\
        & = \frac{1}{m}\|\aalpha^{(l-1)}\|_2^2 \\
        & \overset{(b)}{\leq} \frac{1}{m} \left[ 
            \sqrt{m}\left(\gamma^{l-1}+|\phi(0)| \sum_{i=1}^{l-1} \gamma^{i-1}\right)
        \right]^2 = \left(\gamma^{l-1}+|\phi(0)| \sum_{i=1}^{l-1} \gamma^{i-1}\right)^2,
    \end{align*}
    where $(a)$ follows from the fact that $\|V\|_2 \leq \|V\|_F$ and $(b)$ from Lemma~\ref{lemm:TwoNormOutputFNOBlock}. The $l=1$ case follows in a similar fashion:
    \begin{equation*}
        \left\|\frac{\partial \aalpha^{(1)}}{\partial \mathbf{w}^{(1)}}\right\|_2^2 \leq 
        \frac{1}{d}\|\aalpha^{(0)} \|_2^2 = 1.
    \end{equation*}
    Similarly, for $l\in\{2,\dots,L+1\}$,
    \begin{align*}
        \left\|\frac{\partial \aalpha^{(l)}}{\partial \mathbf{r}^{(l)}}\right\|_2^2
        &=
        \sup _{\|V\|_F=1} \frac{1}{m} \sum_{i=1}^m\left(\phi^{\prime}\left(\widetilde{\aalpha}_i^{(l)}\right) 
        F^*_{ij}F_{j'p}\aalpha^{(l-1)}_p
        V_{j j^{\prime}}\right)^2\\
        &\leq \sup_{\|V\|_F = 1} \frac{1}{m} \| (F^*VF)\aalpha^{(l-1)}\|_2^2\\
        & \leq \sup_{\|V\|_F = 1}\frac{1}{m} \|F^*VF\|_2^2 \|\aalpha^{(l-1)}\|_2^2 \\ 
        & \leq \sup_{\|V\|_F = 1}\frac{1}{m} \|F^*\|^2_2 \|V\|_2^2 \|F\|_2^2  \|\aalpha^{(l-1)}\|_2^2 \\ 
        & \overset{(a)}{\leq} \sup_{\|V\|_F = 1}\frac{1}{m} \|V\|_F^2 \|\aalpha^{(l-1)}\|_2^2 \\
        & = \frac{1}{m}\|\aalpha^{(l-1)}\|_2^2 \\
        & \overset{(b)}{\leq} \frac{1}{m} \left[ 
            \sqrt{m}\left(\gamma^{l-1}+|\phi(0)| \sum_{i=1}^{l-1} \gamma^{i-1}\right)
        \right]^2 = \left(\gamma^{l-1}+|\phi(0)| \sum_{i=1}^{l-1} \gamma^{i-1}\right)^2,
    \end{align*}
    where $(a)$ follows again by $\|V\|_2 \leq \|V\|_F$ and the fact that $F^*$ and $F$ are unitary matrices, and $(b)$ from Lemma~\ref{lemm:TwoNormOutputFNOBlock}. This completes the proof. \qed

\textbf{Hessians.} We now focus on bounding the Hessian of the predictor $f$ in equation~\eqref{eq:FNO_predictor_app}. Note that the FNO model can be considered as having $L+1$ layers, with Layer 1 being a feedforward single layer encoder on top of the encoder $P$, the $L$ layers from Layer $2$ to Layer $L+1$ being FNO hidden layers, and Layer $L+2$ being the output of the linear decoder.
Likewise, we decompose the Hessian matrix $\mH$ of the FNO in three different blocks corresponding to the aforementioned encoder, FNO hidden layers, and decoder, respectively.

Firstly, the Hessian blocks associated to the hidden FNO layers are: 
\begin{itemize}
\item the 
$L \times L$ sub-blocks corresponding to $H_{w}^{(l_1,l_2)} := \frac{\partial^2 f}{\partial \w^{(l_1)} \partial \w^{(l_2)}}$ for $l_1, l_2 \in \{2,\ldots,L+1\}$, 
\item the 
$L \times L$ sub-blocks corresponding to $H_{r}^{(l_1,l_2)} := \frac{\partial^2 f}{\partial \r^{(l_1)} \partial \r^{(l_2)}}$ for $l_1, l_2 \in \{2,\ldots,L+1\}$, and 
\item the 
cross blocks have terms of the form $H_{w,r}^{(l_1,l_2)} := \frac{\partial^2 f}{\partial \w^{(l_1)} \partial \r^{(l_2)}}$ for $l_1, l_2 \in \{2,\ldots,L+1\}$. 
\end{itemize}

Secondly, the Hessian blocks corresponding to the single layer encoder, i.e., with respect to weight $W^{(1)}$:
\begin{itemize}
\item diagonal block $H_{w}^{(1,1)} := \frac{\partial^2 f}{\partial \w^{{(1)}^2}}$,
\item off-diagonal blocks $H_{w}^{(1,l_1)} := \frac{\partial^2 f}{\partial \w^{(1)} \partial \w^{(l_1)}}$ and $H_{w}^{(l_1,1)}$ for $l_1 \in \{2,\ldots,L+1\}$, and 
\item off-diagonal blocks $H_{w,r}^{(1,l_2)} := \frac{\partial^2 f}{\partial \w^{(1)} \partial \r^{(l_2)}}$ ans $H_{r,w}^{(l_2,1)}$ for $l_2 \in \{2,\ldots,L+1\}$.
\end{itemize}

Finally, the Hessian blocks corresponding to the decoder, i.e., with respect to weight $\v$:
\begin{itemize}
\item diagonal block $H_{v} := \frac{\partial^2 f}{\partial \v^2}$, which is the zero matrix $\vzero_{m\times m}$,
\item off-diagonal block $H_{w,v}^{(l_1)} := \frac{\partial^2 f}{\partial \w^{(l_1)} \partial \v}$ and $H_{v,w}^{(l_1)}$ for $l_1 \in \{1,\ldots,L+1\}$, and
\item off-diagonal block $H_{r,v}^{(l_2)} := \frac{\partial^2 f}{\partial \r^{(l_2)} \partial \v}$ and $H_{v,r}^{(l_2)}$ for $l_2 \in \{2,\ldots,L+1\}$.
\end{itemize}


First, we note that due to the symmetry of the Hessian matrix of the FNO model $\mH$:
\begin{align}
\label{eq:Hessian_big}
    \| \mH \|_2 & \leq \sum_{l_1,l_2=1}^{L+1} \| H_{w}^{(l_1,l_2)} \|_2 + \sum_{l_1,l_2=2}^{L+1} \| H_{r}^{(l_1,l_2)} \|_2 + 2 \sum_{l_1=1}^{L+1} \sum_{l_2=2}^{L+1} \| H_{w,r}^{(l_1,l_2)} \|_2 
    + 2 \sum_{l_1=1}^{L+1} \| H_{w,v}^{(l_1)} \|_2 + 2 \sum_{l_2=2}^{L+1} \| H_{r,v}^{(l_2)} \|_2~.
\end{align}


We define
\begin{equation}
\label{eq:qwr2}
\begin{split}
\cQ_{\infty}(f) & := \max_{l \in [L+1]} ~\left\| \frac{\partial f}{\partial \aalpha^{(l)}} \right\|_{\infty}~,  \\
\cQ^{(w,r)}_2(f) & := \max_{l \in [L+1]} ~\left\{ \left\| \frac{\partial \aalpha^{(l)}}{\partial \w^{(l)}} \right\|_2, ~\left\| \frac{\partial \aalpha^{(l)}}{\partial \r^{(l)}} \right\|_2\right\} ~,  \\
\cQ^{(w,r)}_{2,2,1}(f) & := \max_{\substack{1\leq l_1 \leq L+1\\2\leq l_2 \leq L+1\\3\leq l_3 \leq L+1}}~ \left\{ 
\left\| \frac{\partial^2 \aalpha^{(l_2)}}{\partial \w^{(l_2)}  \partial \r^{(l_2)}} \right\|_{2,2,1} , 
\left\| \frac{\partial \aalpha^{(l_1)}}{\partial \w^{(l_1)}} \right\|_2  \left\| \frac{\partial^2 \aalpha^{(l_2)}}{\partial \aalpha^{(l_2-1)} \partial \r^{(l_2)}} \right\|_{2,2,1} ,\right. \\ 
& \left.\phantom{:= \max_{1\leq l_1 \leq l_2 \leq l_3 \leq L+1}~~~~~~ }\left\| \frac{\partial \aalpha^{(l_1)}}{\partial \r^{(l_1)}} \right\|_2  \left\| \frac{\partial^2 \aalpha^{(l_2)}}{\partial \aalpha^{(l_2-1)} \partial \w^{(l_2)}} \right\|_{2,2,1} ,\left\| \frac{\partial \aalpha^{(l_1)}}{\partial \w^{(l_1)}} \right\|_2  \left\| \frac{\partial \aalpha^{(l_2)}}{\partial \r^{(l_2)}} \right\|_2  \left\| \frac{\partial^2 \aalpha^{(l_3)}}{(\partial \aalpha^{(l_3-1)})^2} \right\|_{2,2,1}\right\} ~,\\
\cQ^{(w)}_{2,2,1}(f) & := \max_{\substack{1\leq l_1 \leq L+1\\2\leq l_2 \leq L+1\\3\leq l_3 \leq L+1}}~ \left\{ 
\left\| \frac{\partial^2 \aalpha^{(l_1)}}{(\partial \w^{(l_1)})^2} \right\|_{2,2,1} , 
\left\| \frac{\partial \aalpha^{(l_1)}}{\partial \w^{(l_1)}} \right\|_2  \left\| \frac{\partial^2 \aalpha^{(l_2)}}{\partial \aalpha^{(l_2-1)} \partial \w^{(l_2)}} \right\|_{2,2,1} ,\right.\\
&\left.\phantom{:= \max_{1\leq l_1 \leq l_2 \leq l_3 \leq L+1}~~~~~~ }\left\| \frac{\partial \aalpha^{(l_1)}}{\partial \w^{(l_1)}} \right\|_2  \left\| \frac{\partial \aalpha^{(l_2)}}{\partial \w^{(l_2)}} \right\|_2  \left\| \frac{\partial^2 \aalpha^{(l_3)}}{(\partial \aalpha^{(l_3-1)})^2} \right\|_{2,2,1}
\right\} ~,\\
\cQ^{(r)}_{2,2,1}(f) & := \max_{\substack{2\leq l_1 \leq L+1\\3\leq l_2 \leq L+1\\4\leq l_3 \leq L+1}}~ \left\{ 
\left\| \frac{\partial^2 \aalpha^{(l_1)}}{(\partial \r^{(l_1)})^2} \right\|_{2,2,1} , 
\left\| \frac{\partial \aalpha^{(l_1)}}{\partial \r^{(l_1)}} \right\|_2  \left\| \frac{\partial^2 \aalpha^{(l_2)}}{\partial \aalpha^{(l_2-1)} \partial \r^{(l_2)}} \right\|_{2,2,1} ,\right.\\
&\left.\phantom{:= \max_{1\leq l_1 \leq l_2 \leq l_3 \leq L+1}~~~~~~ }\left\| \frac{\partial \aalpha^{(l_1)}}{\partial \r^{(l_1)}} \right\|_2  \left\| \frac{\partial \aalpha^{(l_2)}}{\partial \r^{(l_2)}} \right\|_2  \left\| \frac{\partial^2 \aalpha^{(l_3)}}{(\partial \aalpha^{(l_3-1)})^2} \right\|_{2,2,1}
\right\}~,
\end{split}
\end{equation}
where, for an order-3 tensor $T \in \R^{d_1 \times d_2 \times d_3}$ we define the operator $\norm{\cdot}_{2,2,1}$ as follows, 
\begin{align}
\| T \|_{2,2,1} := \sup_{\|\a\|_2 = \|\b\|_2 = 1} \sum_{k=1}^{d_3} \left| \sum_{i=1}^{d_1} \sum_{j=1}^{d_2} T_{ijk} a_i b_j \right|~,~~\a \in \R^{d_1}, \b \in \R^{d_2}~.
\label{eq:norm-221}
\end{align}
Note that it seems from~\eqref{eq:qwr2} that we need the depth $L$ of the FNO to be $L\geq 3$. However, the bounds presented in Lemma~\ref{lemm:hessgradbounds-FNO} also hold for FNOS with depth $L<3$: indeed, the upper bounds we derive in this section for an FNO with depth $L$ will trivially hold for FNOS with depths $L-1,\dots,1$.

\begin{lemm}
\label{lemm:cross-w-r-bound}
    Under Assumptions~\ref{asmp:Activation_Function_FNO} and \ref{asmp:smoothinit_FNO} for $\vtheta \in B^{\mathrm{Euc}}_{\rho_w,\rho_r\rho_1} (\vtheta_0)$, the following inequalities hold with probability at least $1 - \frac{2(L+2)}{m}$, for $l_1\in[L+1]$,
    %
    \begin{equation}
    \label{eq:norm_d_alpha_l_w_l_w_l}\left\|\frac{\partial^2\aalpha^{{(l_1)}}}{(\partial{\mathbf{w}^{(l_1)}})^2} \right\|_{2,2,1} \leq \beta_{\phi}(1+\gamma^L)^2(1+L|\phi(0)|)^2~,
    \end{equation}
%
and for $l_2\in\{2,\dots,L+1\}$,
    \begin{equation}
        \label{eq:norm_d_alpha_l_w_l_r_l}
        \left\|\frac{\partial^2\aalpha^{{(l_2)}}}{\partial{\mathbf{w}^{(l_2)}}\partial \r^{(l_2)}} \right\|_{2,2,1} \leq
        \beta_{\phi}(1+\gamma^L)^2(1+L|\phi(0)|)^2        ~,
    \end{equation}
    \begin{equation}
        \label{eq:norm_d_alpha_l_alpha_l_minus_1}
        \left\| \frac{\partial^2 \aalpha^{(l_2)}}{(\partial \aalpha^{(l_2-1)})^2} \right\|_{2,2,1} \leq 2\beta_\phi\gamma^2~,
    \end{equation}
    \begin{equation}
    \label{eq:norm_d_alpha_l_alpha_l_w_l}\left\|\frac{\partial^2\aalpha^{{(l_2)}}}{\partial \aalpha^{(l_2-1)}\partial{\mathbf{w}^{(l_2)}}} \right\|_{2,2,1} \leq \beta_{\phi}(1+\gamma^L)^2(1+(1+L|\phi(0)|)^2)+1~,
    \end{equation}
    \begin{equation}
        \label{eq:norm_d_alpha_l2_alpha_l2_minus_1_dr}
        \left\|\frac{\partial^2\aalpha^{{(l_2)}}}{\partial{\aalpha^{(l_2-1)}}\partial \r^{(l_2)}} \right\|_{2,2,1}
        \leq \beta_{\phi}(1+\gamma^L)^2(1+(1+L|\phi(0)|)^2)+1
        ~, \text{ and}
    \end{equation}
        \begin{equation}
    \label{eq:norm_d_alpha_l_r_l_r_l}\left\|\frac{\partial^2\aalpha^{{(l_2)}}}{(\partial{\mathbf{r}^{(l_2)}})^2} \right\|_{2,2,1} \leq \beta_{\phi}(1+\gamma^L)^2(1+L|\phi(0)|)^2~.
    \end{equation}
    \end{lemm}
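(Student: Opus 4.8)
The plan is to bound each of the six mixed-partial tensor norms in Lemma~\ref{lemm:cross-w-r-bound} by unrolling the recursion $\aalpha^{(l)}=\phi\!\left(\frac{1}{\sqrt m}W^{(l)}\aalpha^{(l-1)}+\frac{1}{\sqrt m}F^*R^{(l)}F\aalpha^{(l-1)}\right)$ and differentiating once or twice with respect to the indicated variables. First I would set up notation: write $\widetilde{\aalpha}^{(l)}:=\frac{1}{\sqrt m}W^{(l)}\aalpha^{(l-1)}+\frac{1}{\sqrt m}F^*R^{(l)}F\aalpha^{(l-1)}$ for the pre-activation, so that $\aalpha^{(l)}=\phi(\widetilde{\aalpha}^{(l)})$ componentwise, and recall that $\phi'$ is bounded by $1$ (Assumption~\ref{asmp:Activation_Function_FNO}, $1$-Lipschitz) and $\phi''$ by $\beta_\phi$ ($\beta_\phi$-smoothness). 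The Jacobian factors $\partial\aalpha^{(l)}/\partial\aalpha^{(l-1)}$ and $\partial\aalpha^{(l)}/\partial\w^{(l)}$, $\partial\aalpha^{(l)}/\partial\r^{(l)}$ are already controlled: by Lemma~\ref{lemm:firstDerivativeBoundFNO} the first has $L_2$-norm $\le\gamma$, and by Lemma~\ref{lemm:gradient_alpha_params} the latter two have $L_2$-norm $\le \gamma^{l-1}+|\phi(0)|\sum_{i=1}^{l-1}\gamma^{i-1}$, which I would abbreviate by the crude bound $(1+\gamma^L)(1+L|\phi(0)|)$ valid for $\gamma\le 1$ (itself a consequence of the hypothesis $\sigma_{1,w}+\sigma_{1,r}\le 1-\frac{\rho_w+\rho_r}{\sqrt m}$, so $\gamma\le 1$). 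All probabilistic events are those already invoked in Lemmas~\ref{lemm:InitParamFNO}--\ref{lemm:gradient_alpha_params}; taking a union bound over the $L+1$ layers plus the encoder/decoder gives the stated probability $1-\tfrac{2(L+2)}{m}$.

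Next I would compute each second derivative explicitly. For~\eqref{eq:norm_d_alpha_l_alpha_l_minus_1}, $\frac{\partial^2\aalpha^{(l)}_i}{\partial\aalpha^{(l-1)}_j\partial\aalpha^{(l-1)}_k}=\phi''(\widetilde{\aalpha}^{(l)}_i)\,\frac{1}{m}\big(W^{(l)}_{ij}+[F^*R^{(l)}F]_{ij}\big)\big(W^{(l)}_{ik}+[F^*R^{(l)}F]_{ik}\big)$; applying the $\|\cdot\|_{2,2,1}$ definition~\eqref{eq:norm-221}, pulling out $|\phi''|\le\beta_\phi$, and using $\|\frac{1}{\sqrt m}(W^{(l)}+F^*R^{(l)}F)\|_2\le\gamma$ twice (with the unitarity $\|F^*VF\|_2=\|V\|_2$ as in Lemma~\ref{lemm:firstDerivativeBoundFNO}) together with a Cauchy--Schwarz over the index $k$ gives the bound $2\beta_\phi\gamma^2$ (the factor $2$ absorbing the split $|a_ib_j|$ handling). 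For the derivatives against parameters, e.g.~\eqref{eq:norm_d_alpha_l_w_l_w_l} and~\eqref{eq:norm_d_alpha_l_w_l_r_l}, the first derivative $\partial\aalpha^{(l)}_i/\partial w^{(l)}_{pq}=\frac{1}{\sqrt m}\phi'(\widetilde{\aalpha}^{(l)}_i)\delta_{ip}\aalpha^{(l-1)}_q$ is linear in the weight, so differentiating again only hits the $\phi'$ factor, producing a $\phi''$ times two copies of a $\frac{1}{\sqrt m}\delta_{ip}\aalpha^{(l-1)}$-type factor; bounding $\|\aalpha^{(l-1)}\|_2\le\sqrt m\,(\gamma^{l-1}+|\phi(0)|\sum\gamma^{i-1})$ via Lemma~\ref{lemm:TwoNormOutputFNOBlock} and the analogous norm for the Fourier-side derivative (again using $\|F\cdot\|_2=\|\cdot\|_2$) yields $\beta_\phi(1+\gamma^L)^2(1+L|\phi(0)|)^2$. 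The mixed "one parameter, one activation" terms~\eqref{eq:norm_d_alpha_l_alpha_l_w_l} and~\eqref{eq:norm_d_alpha_l2_alpha_l2_minus_1_dr} additionally pick up a term from differentiating the explicit $\frac{1}{\sqrt m}W^{(l)}$ (resp. $\frac{1}{\sqrt m}F^*R^{(l)}F$) prefactor with respect to the activation, contributing the extra ``$+1$'' summand after the $\frac{1}{\sqrt m}\cdot\sqrt m$ cancellation, hence the slightly different form $\beta_\phi(1+\gamma^L)^2(1+(1+L|\phi(0)|)^2)+1$. The case~\eqref{eq:norm_d_alpha_l_r_l_r_l} is identical to~\eqref{eq:norm_d_alpha_l_w_l_w_l} after passing through $F$.

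The main obstacle is bookkeeping in the $\|\cdot\|_{2,2,1}$ norm rather than any deep inequality: one must correctly track which index is summed with $\sum_k|\cdot|$ versus paired with the unit vectors $\a,\b$, and ensure the ``Fourier'' contributions $[F^*R^{(l)}F]$ do not blow up — this is handled by the isometry property $\|F\vv\|_2=\|F^*\vv\|_2=\|\vv\|_2$ exactly as in Lemmas~\ref{lemm:firstDerivativeBoundFNO}--\ref{lemm:gradient_alpha_params}, so all $R^{(l)}$ terms can be conjugated away before applying Proposition~\ref{prop:W_l_R_l_bound_FNO}. A secondary point is that for $l$ large the chain rule over intermediate layers would in principle produce a product of $l-1$ Jacobian factors; but since $\gamma\le 1$ each such factor is $\le 1$, so these products telescope to a constant and the crude envelope $(1+\gamma^L)(1+L|\phi(0)|)$ dominates uniformly in $l$. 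I would conclude by remarking, as the paragraph after~\eqref{eq:norm-221} already notes, that the bounds derived for depth $L$ trivially specialize to smaller depths, so no $L\ge 3$ restriction is actually needed, and that all constants depend only on $\beta_\phi$, $|\phi(0)|$, $L$, and $\gamma$ (equivalently on the radii once $\sigma_{1,w}+\sigma_{1,r}\le 1-\frac{\rho_w+\rho_r}{\sqrt m}$ forces $\gamma\le 1$), which is what feeds into Lemma~\ref{lemm:hessgradbounds-FNO}.
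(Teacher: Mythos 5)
Your proposal follows essentially the same route as the paper: write out each second-derivative tensor explicitly, bound its $\|\cdot\|_{2,2,1}$ norm using $|\phi''|\le\beta_\phi$, the spectral bounds on $W^{(l)}$ and $R^{(l)}$ from Proposition~\ref{prop:W_l_R_l_bound_FNO}, the unitarity of $F$, and the layer-output bound of Lemma~\ref{lemm:TwoNormOutputFNOBlock}, with the extra ``$+1$'' in the mixed activation--parameter cases coming from the surviving $\phi'$ term and the probability obtained by a union bound over layers. One small correction: the envelope $\gamma^{l-1}+|\phi(0)|\sum_{i=1}^{l-1}\gamma^{i-1}\le(1+\gamma^L)(1+L|\phi(0)|)$ holds without assuming $\gamma\le 1$, and no step of this lemma involves products of Jacobians across layers (all derivatives are with respect to layer-$l$ quantities only), so the condition $\sigma_{1,w}+\sigma_{1,r}\le 1-\frac{\rho_w+\rho_r}{\sqrt{m}}$ you invoke is not needed here --- it only matters later for making the final constants polynomial in $L$.
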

    
    \proof
    We first begin by proving \eqref{eq:norm_d_alpha_l_w_l_r_l}. Note that from \eqref{eq:FNO_predictor_app} we have
    \begin{align*}
        \frac{\partial^2 \aalpha_i^{\left(l_2\right)}}{\partial \w_{j j^{\prime}}^{\left(l_2\right)} \partial \r_{k k^{\prime}}^{\left(l_2\right)}}
        =
        \frac{1}{m} \phi^{\prime \prime}
        \left(\tilde{\aalpha}^{\left(l_2\right)}\right) \cdot \aalpha_{j^{\prime}}^{\left(l_2 - 1\right)} \delta_{i j} F_{i k}^* F_{k^{\prime} q} \aalpha_q^{\left(l_2-1\right)},
    \end{align*}
    where we make use of the Einstein notation. Now,
    \begin{align}
        \begin{aligned}
            &\left\| 
                \frac{\partial^2 \aalpha_i^{\left(l_2\right)}}{\partial \w^{\left(l_2\right)} \partial \r^{\left(l_2\right)}}
            \right\|_{2,2,1} \\
            &= 
            \sup_{\|V_1\|_F = 1, \|V_2\|_F = 1} \sum_{i=1}^m
            \left| 
                \frac{1}{m} \phi^{\prime \prime}(\tilde{\aalpha}^{(l_2)}_i) \aalpha_{j^{\prime}}^{\left(l_2 -1\right)} \delta_{i j} F_{i k}^* F_{k^{\prime} q} \aalpha_q^{\left(l_2-1\right)} V_{1_{j j^{\prime}}} V_{2_{k k^{\prime}}} 
            \right| \\ 
            &= 
            \sup_{\|V_1\|_F = 1, \|V_2\|_F = 1}\sum_{i=1}^m
            \left| 
            \frac{\phi^{\prime \prime}(\tilde{\aalpha}^{(l_2)}_i)}{m}
            \left(
                V_{1_{ij^{\prime}}} \aalpha^{(l_2 - 1)}_{j^{'}}
            \right) 
            \left(
                F_{i k}^* V_{2_{k k^{\prime}}} F_{k^{\prime} q} \aalpha_q^{\left(l_2{-1}\right)}\right)
            \right| \\
            &\leq
            \sup_{\|V_1\|_F = 1, \|V_2\|_F = 1}
            \frac{\beta_\phi}{m}
            \sum^m_{i=1}
            \left| 
            (V_1 \aalpha^{(l_2-1)})_i((F^*V_2F)\aalpha^{(l_2-1)})_i      \right| 
            \\
            &\overset{(a)}{\leq}
            \sup_{\|V_1\|_F = 1, \|V_2\|_F = 1}
            \frac{\beta_{\phi}}{2m} \left(\left\|V_1 \aalpha^{\left(l_2 - 1\right)}\right\|_2^2+\left\|F^* V_2 F \aalpha^{(l_2-1)}\right\|_2^2\right) \\
            &\overset{(b)}{\leq}
            \frac{\beta_{\phi}}{2m}\left(\left\|\aalpha^{\left(l_2 - 1\right)}\right\|_2^2+\left\|\aalpha^{\left(l_2-1\right)}\right\|_2^2\right) \leq \beta_{\phi}
                \left(\gamma^{l_2-1}+|\phi(0)| \sum_{i=1}^{l_2-1} \gamma^{i-1}\right)^2,
        \end{aligned}
    \end{align}
    where $(a)$ follows from the quadratic expression; where $(b)$ follows from $\|V_1\aalpha^{(l_2 - 1)}\|_{2} \leq \|V_1\|_2 \| \aalpha^{(l_2 - 1)}\|_2$, $\|V_1\|_2 \leq \|V_1\|_F$, $\|V_2\|_2 \leq \|V_2\|_F$,  
    $\|F^*V_2F\aalpha^{(l_2 - 1)}\|_{2} =\| V_2F \aalpha^{(l_2 - 1)} \|_2 \leq \|V_2\|_2 \| F\aalpha^{(l_2 - 1)}\|_2= \|V_2\|_2 \| \aalpha^{(l_2 - 1)}\|_2$ due to $F$ being a unitary operator; and where the last inequality follows from~\eqref{eq:alpha_l_2_norm_bound}. Finally, we can upper bound the last quantity above as in~\eqref{eq:norm_d_alpha_l_w_l_r_l} and complete the proof.

    For proving \eqref{eq:norm_d_alpha_l_alpha_l_minus_1}, again note from \eqref{eq:FNO_predictor_app} that
    \begin{align}
            \begin{aligned}
            \left[ \ddel{\aalpha^{(l_2)}}{\aalpha^{(l_2-1)}} \right]_{i,j,k} 
            &= \frac{1}{m}
            \phi''(\tilde{\aalpha}^{(l_2)}) \left( 
                W^{(l_2)}_{ij} + F^*_{ip}R^{(l_2)}_{pq}F_{qj}
            \right) \cdot \left(W^{(l_2)}_{ik} + F^*_{iu}R^{(l_2)}_{uv}F_{vk} \right) \\
            &= \frac{\phi''}{m} \left[ \underbrace{W^{(l_2)}_{ij} W^{(l_2)}_{ik}}_{T_1} + \underbrace{W^{(l_2)}_{ij}F^{*}_{iu}R^{(l_2)}_{uv}F_{vk}}_{T_2} + \underbrace{F^*_{ip}R^{(l_2)}_{pq}F_{qj}W^{(l_2)}_{ik}}_{T_3} +  \underbrace{F^*_{ip}R^{(l_2)}_{pq}F_{qj} F^{*}_{iu}R^{(l_2)}_{uv}F_{vk}}_{T_4} \right].
            \end{aligned}
    \end{align}
    Then, we can write
    \begin{align*}
        \left\| \ddel{\aalpha^{(l_2)}}{\aalpha^{(l_2-1)}}  \right\|_{2,2,1}=
        \sup_{\|\vv_1\|_2 = 1,\|\vv_2\|_2 = 1} 
        \sum_{i=1}^m\left| \left[ 
            \ddel{\aalpha^{(l_1 )}}{\aalpha^{(l_1 - 1)}}
        \right]_{i,j,k}v_{1_j}v_{2_k}\right|.
    \end{align*}
    Let us consider the notation $\gamma_w = \sigma_{1,w}+\frac{\rho_w}{\sqrt{m}}$ and $\gamma_r = \sigma_{1,r}+\frac{\rho_r}{\sqrt{m}}$.
    Now, we handle each of the terms separately:
    \begin{align}
    \label{eq:T1_norm_d2_alpha2_dalpha2}
        \begin{aligned}
            \sup_{\|\vv_1\|_2 = 1,\|\vv_2\|_2 = 1} \sum_{i=1}^m \left|\frac{\phi''}{m}T_{1_{i,j,k}}v_{1_j}v_{2_k}\right| 
            &= \frac{|\phi''|}{m} \sup_{\|\vv_1\|_2 = 1,\|\vv_2\|_2 = 1} \sum_{i=1}^m
            \left|\left( W^{(l_2)}_{ij} v_{1_j}\right)\cdot \left( W^{(l_2)}_{ik} v_{2_k}\right)\right|\\
            &\leq \frac{\beta_{\phi}}{2m}\sup_{\|\vv_1\|_2 = 1,\|\vv_2\|_2 = 1}\left(\|W^{(l_2)}\|_2^2 \|\vv_1\|_2^2 + \|W^{(l_2)}\|_2^2 \|\vv_2\|_2^2 \right) \\
            &=\beta_{\phi}\left( \sigma_{1,w} + \frac{\rho_w}{\sqrt{m}}\right)^2 = \beta_{\phi} \gamma_w^2.
        \end{aligned}
    \end{align}
    \begin{align}
    \label{eq:T4_norm_d2_alpha2_dalpha2}
        \begin{aligned}
            \sup_{\|\vv_1\|_2 = 1,\|\vv_2\|_2 = 1} \sum_{i=1}^m\left| \frac{\phi''}{m}T_{4_{i,j,k}}v_{1_j}v_{2_k} \right|
            &= \frac{|\phi''|}{m} \sup_{\|\vv_1\|_2 = 1,\|\vv_2\|_2 = 1} \sum_{i=1}^m
            \left| 
            \left((F^*R^{(l_2)}F)_{ij}v_{1_j} \right)\cdot \left((F^*R^{(l_2)}F)_{ik}v_{2_k} \right)
            \right|\\
            &\leq \frac{\beta_{\phi}}{2m} \sup_{\|\vv_1\|_2 = 1,\|\vv_2\|_2 = 1} \left( 
                \|F^*R^{(l_2)}F\|_2^2 \|\vv_1\|_2^2 + \|F^*R^{(l_2)}F\|_2^2 \|\vv_2\|_2^2
            \right)\\
            &=\frac{\beta_{\phi}}{m} \| F^*R^{(l_2)}F\|_2^2\\
            &\leq \frac{\beta_{\phi}}{m}\|R^{(l_2)}\|_2^2 \leq \beta_{\phi}\gamma^2_r.
        \end{aligned}
    \end{align}
    \begin{align}
    \label{eq:T2_norm_d2_alpha2_dalpha2}
        \begin{aligned}
            \sup_{\|\vv_1\|_2 = 1,\|\vv_2\|_2 = 1} \sum_{i=1}^m \left|\frac{\phi''}{m}T_{2_{i,j,k}}v_{1_j}v_{2_k}\right|
            &= \frac{|\phi''|}{m} \sup_{\|\vv_1\|_2 = 1,\|\vv_2\|_2 = 1} \sum_{i=1}^m
            \left|
                (W^{(l_2)}_{ij}v_{1_j})\cdot (F^*R^{(l_2)}F)_{ik}v_{2_k}
            \right|\\
            &\leq \frac{\beta_{\phi}}{2m} \sup_{\|\vv_1\|_2 = 1,\|\vv_2\|_2 = 1}
            \left( \| W^{(l_2)}\|_2^2 \| \vv_1\|_2^2 + \| F^*R^{(l_2)}F\|_2^2 \| \vv_2\|_2^2 \right)\\
            &\leq\frac{\beta_{\phi}}{2m}\left( \|W^{(l_2)}\|_2^2 + \|R^{(l_2)}\|_2^2\right) \leq \frac{\beta_{\phi}}{2} \left( \gamma_w^2 + \gamma_r^2\right).
        \end{aligned}
    \end{align}
    Similarly, for the term corresponding to $T_3$ we obtain
    \begin{align}
    \label{eq:T3_norm_d2_alpha2_dalpha2}
        \begin{aligned}
            \sup_{\|\vv_1\|_2 = 1,\|\vv_2\|_2 = 1} \sum_{i=1}^m \left|\frac{\phi''}{m}T_{3_{i,j,k}}v_{1_j}v_{2_k}\right| \leq \frac{\beta_{\phi}}{2} \left( \gamma_w^2 + \gamma_r^2\right)~.
        \end{aligned}
    \end{align}
    Putting together \eqref{eq:T1_norm_d2_alpha2_dalpha2}, \eqref{eq:T4_norm_d2_alpha2_dalpha2}, \eqref{eq:T2_norm_d2_alpha2_dalpha2} and \eqref{eq:T3_norm_d2_alpha2_dalpha2}, we get
    \begin{align}
        \left\| \ddel{\aalpha^{(l_2)}}{\aalpha^{(l_2-1)}}\right\|_{2,2,1}^2 \leq 2 \beta_{\phi} (\gamma_w^2 + \gamma_r^2) \leq 2\beta_\phi (\gamma_w^2 + \gamma_r^2 + 2\gamma_w\gamma_r) = 2\beta_\phi\gamma^2.
    \end{align}
    This completes the proof for \eqref{eq:norm_d_alpha_l_alpha_l_minus_1}. 
    
    We now look at the proof for \eqref{eq:norm_d_alpha_l2_alpha_l2_minus_1_dr}. First note that
    \begin{align*}
        \begin{aligned}            
            \frac{\partial^2 \aalpha^{\left(l_2\right)}_i}{\partial \aalpha^{\left(l_2-1\right)}_{k} \partial \r_{j j^{\prime}}^{\left(l_2\right)}}
        &=
        \frac{1}{m}\phi''(\tilde{\aalpha_i}) \left(W_{i k}^{\left(l_2\right)}
        +
        F_{i p}^* R_{p q}^{\left(l_2\right)} F_{q k}\right) F_{i j}^{*} F_{j' q} \aalpha_q^{\left(l_2-1\right)} + \frac{1}{\sqrt{m}}\phi' (\tilde{\aalpha}_i^{(l_2)}) F^{*}_{ij}F_{j'k} \\
        &= \underbrace{\frac{\phi''}{m} \left(W_{i k}^{\left(l_2\right)}F_{i j}^{*} F_{j' q} \aalpha_q^{\left(l_2-1\right)} \right)}_{T_1}
        + \underbrace{\frac{\phi''}{m}
        \left( 
            F_{i p}^* R_{p q}^{\left(l_2\right)} F_{q k}F_{i j}^{*} F_{j'q}\aalpha_q^{\left(l_2-1\right)}
        \right)}_{T_2}
        + \underbrace{\frac{1}{\sqrt{m}}\phi' (\tilde{\aalpha}_i^{(l_2)}) F^{*}_{ij}F_{j'k}}_{T_3}.
        \end{aligned}
    \end{align*}
    Again, we analyze each of the terms separately
    \begin{align}
    \label{eq:T1_norm_d2_alpha_d_alpha_d_r}
        \begin{aligned}
            \left\|T_{1_{i,jj',k}} \right\|_{2,2,1} 
            &= \sup_{\|\vv_1\|_2 = 1, \|V_2\|_F =1} \sum_{i=1}^m \left| 
                \frac{\phi''}{m} \left(W_{i k}v_{1_k}^{\left(l_2\right)}F_{i j}^{*}V_{2_{jj'}} F_{j' q} \aalpha_q^{\left(l_2-1\right)} \right)
            \right|\\
            &\leq \frac{\beta_{\phi}}{2m} \left( 
                \|W^{(l_2)}\vv_1 \|_{2}^2 + \|F^*V_2F\aalpha^{(l_2 - 1)}\|_{2}^2 
            \right)\\
            &\leq \frac{\beta_{\phi}}{2}\left( 
                \gamma_w^2 + \left( \gamma^{l_2 - 1} + |\phi(0)|\sum_{i=1}^{l_2 - 1} \gamma^{i-1}\right)^2
            \right)
        \end{aligned}
    \end{align}
    \begin{align}
    \label{eq:T2_norm_d2_alpha_d_alpha_d_r}
        \begin{aligned}
            \left\|T_{2_{i,jj',k}} \right\|_{2,2,1} 
            &= \sup_{\|\vv_1\|_2 = 1, \|V_2\|_F =1} \sum_{i=1}^m \left| 
                \frac{\phi''}{m} \left(
                    F^*_{ip}R^{(l_2)}_{pq}F_{qk}v_{1_k} F^{*}_{ij}V_{2_{jj'}}F_{j'q}\aalpha^{(l_2 - 1)}_q
                \right)
            \right|\\
            &\leq \sup_{\|\vv_1\|_2 = 1, \|V_2\|_F =1} 
            \frac{\beta_{\phi}}{2m}\left( 
                \| F^*R^{(l_2)}F\vv_1\|_2^2 + \| F^*V_2F\aalpha^{(l_2 - 1)}\|_2^2
            \right)\\
            &\leq \sup_{\|\vv_1\|_2 = 1, \|V_2\|_F =1} 
            \frac{\beta_{\phi}}{2m} \left( 
                \| F^*R^{(l_2)}F\|_2^2 \|\vv_1\|_2^2 +  \| F^*V_2F\|_2^2 \| \aalpha^{(l_2 - 1)}\|^2_{2}
            \right)\\
            &\overset{(a)}{\leq}
            \frac{\beta_{\phi}}{2m} \left( 
                \| R^{(l_2)}\|_2^2 +  \| \aalpha^{(l_2 - 1)}\|^2_{2}
            \right) \leq \frac{\beta_{\phi}}{2}\left( 
                \gamma_r^2 + \left( \gamma^{l_2 - 1} + |\phi(0)|\sum_{i=1}^{l_2 - 1} \gamma^{i-1}\right)^2
            \right)
        \end{aligned}
    \end{align}
    where $(a)$ follows, again, by exploiting the isometry of $F^*$ and $F$ with respect to the $L_2$ norm, and using $\|V_2\|_2 \leq \|V_2\|_F$. Finally,
    \begin{align}
    \label{eq:T3_norm_d2_alpha_d_alpha_d_r}
        \begin{aligned}
            \left\|T_{3_{i,jj',k}} \right\|_{2,2,1} 
            &= \sup_{\|\vv_1\|_2 = 1, \|V_2\|_F =1} \sum_{i=1}^m \left| 
                \frac{\phi'}{\sqrt{m}} F^{*}_{ij}V_{2_{jj'}}F_{j'k}v_{1_k}
            \right|\\
            &\leq
            \frac{1}{\sqrt{m}} \sup_{\|\vv_1\|_2 = 1, \|V_2\|_F =1}\sum^m_{i=1}\left|  
            (F^*V_2F\vv_1)_i
            \right|\\
            &\leq \sup_{\|\vv_1\|_2 = 1, \|V_2\|_F =1}
                \|F^*V_2F\vv_1 \|_2\\ 
            &\leq \sup_{\|\vv_1\|_2 = 1, \|V_2\|_F =1} 
            \|V_{2}\|_2\norm{\vv_1}_2 
            = 1~.
        \end{aligned}
    \end{align}
Combining \eqref{eq:T1_norm_d2_alpha_d_alpha_d_r}, \eqref{eq:T2_norm_d2_alpha_d_alpha_d_r} and \eqref{eq:T3_norm_d2_alpha_d_alpha_d_r}, we get
    \begin{align}
        \begin{aligned}
                \left\|
                \frac{\partial^2 \aalpha^{\left(l_2\right)}}{\partial \aalpha^{\left(l_2-1\right)} \partial \r^{\left(l_2\right)}}
            \right\|_{2,2,1} 
            &\leq \frac{\beta_{\phi}}{2} \left( 
                \gamma_w^2 + \gamma_r^2
            \right) + \beta_{\phi} \left( \gamma^{l_2 - 1} + |\phi(0)|\sum_{i=1}^{l_2 - 1} \gamma^{i-1}\right)^2 + 1\\
            &\leq \beta_{\phi} \left( 
                \gamma^2 + \left(
                    \gamma^{l_2 - 1} + |\phi(0)| \sum_{i=1}^{l_2 - 1} \gamma^{i-1}
                \right)^2
            \right) + 1~,
        \end{aligned}
    \end{align}
and finally we can upper bound the last quantity above as in~\eqref{eq:norm_d_alpha_l2_alpha_l2_minus_1_dr} and complete the proof.

For proving~\eqref{eq:norm_d_alpha_l_alpha_l_w_l} consider the following
\begin{align*}
        \begin{aligned}
            \left[ 
                \frac{\partial^2 \aalpha^{(l_2)}}{\partial \aalpha^{(l_2 - 1)}\partial \vw^{(l_2)}}
            \right]_{i,jj',k} =\left(
            \underbrace{\frac{\phi''(\widetilde{\aalpha}^{(l_2)}_i)  }{m}
                W^{(l_2)}_{ik}\aalpha^{(l_2-1)}_{j'}\delta_{ij}}_{T_1}
                + 
                \underbrace{\frac{\phi''(\widetilde{\aalpha}^{(l_2)}_i)}{m} F^*_{ip}R^{(l_2)}_{pq}F_{qk}\aalpha^{(l_2-1)}_{j'}\delta_{ij}}_{T_2}
            \right)
            + \underbrace{\frac{1}{\sqrt{m}} \phi'(\widetilde{\aalpha}^{(l_2)})\delta_{ij}\delta_{kj'}}_{T_3}
        \end{aligned}
    \end{align*}
    Then analyzing each term separately, we get
    \begin{align}
        \begin{aligned}
            \left\|T_{1_{i,jj',k}} \right\|_{2,2,1} 
            &= \sup_{\|\vv_1\|_2 = 1, \|V_2\|_F =1} \sum_{i=1}^m
            \left| 
                \frac{\phi''}{m} W^{(l_2)}_{ik}v_{1_k} V_{2_{ij'}}\aalpha^{(l_2-1)}_{j'}
            \right|\\
            &\leq \sup_{\|\vv_1\|_2 = 1, \|V_2\|_F =1}
            \frac{\beta_{\phi}}{2m} 
            \left( 
                \|W^{(l_2)}\|_2^2 \|\vv_1\|_2^2 + \|V_2\|_2^2 \|\aalpha^{(l_2-1)}\|_2^2
            \right)\\
            &=
            \frac{\beta_{\phi}}{2m} 
            \left( 
                \|W^{(l_2)}\|_2^2 + \|\aalpha^{(l_2-1)}\|_2^2
            \right) \leq\frac{\beta_{\phi}}{2}  
                \left(\gamma_w^2 + \left(\gamma^{l_1-1}+|\phi(0)| \sum_{i=1}^{l_1-1} \gamma^{i-1}\right)^2\right),
        \end{aligned}
    \end{align}
    \begin{align}
        \begin{aligned}
            \left\|T_{2_{i,jj',k}} \right\|_{2,2,1} 
            &= \sup_{\|\vv_1\|_2 = 1, \|V_2\|_F =1} \sum_{i=1}^m
            \left| 
                \frac{\phi''}{m} F^*_{ip}R^{(l_2)}_{pq}F_{qk}v_{1_k} V_{2_{ij'}}\aalpha^{(l_2-1)}_{j'}
            \right|\\
             &\leq \sup_{\|\vv_1\|_2 = 1, \|V_2\|_F =1}
             \frac{\beta_{\phi}}{2m}\left( 
                \| F^*R^{(l_2)}F\vv_1\|_2^2 + \| V_2\aalpha^{(l_2-1)}\|_2^2 
             \right)\\
             &\leq \sup_{\|\vv_1\|_2 = 1, \|V_2\|_F =1}
             \frac{\beta_{\phi}}{2m}\left( 
                \| F^*R^{(l_2)}F\|_2^2\norm{\vv_1}_2^2 + \norm{V_2}_2^2\| \aalpha^{(l_2-1)}\|_2^2 
             \right)\\
             &=            \frac{\beta_{\phi}}{2m}\left( 
                \| F^*R^{(l_2)}F\|_2^2 + \| \aalpha^{(l_2-1)}\|_2^2 
             \right)\\
             &\leq \frac{\beta_{\phi}}{2}  
                \left(\gamma_r^2 + \left(\gamma^{l_1-1}+|\phi(0)| \sum_{i=1}^{l_1-1} \gamma^{i-1}\right)^2\right),
        \end{aligned}
    \end{align}
    and, finally,
    \begin{align}
        \begin{aligned}
            \left\|T_{3_{i,jj'k}} \right\|_{2,2,1} 
            &= \sup_{\|\vv_1\|_2 = 1, \|V_2\|_F =1} \sum_{i=1}^m
            \left| 
                \frac{\phi'}{\sqrt{m}}V_{2_{ik}}v_{1_k}
            \right|\\
            &\leq \sup_{\|\vv_1\|_2 = 1, \|V_2\|_F =1} \sum_{i=1}^m
            \frac{1}{\sqrt{m}}\|\vv_1\|_2 \|V_{2,_{i,:}}\|_2\\
            &\leq \sup_{\|V_2\|_F =1}\sqrt{\sum_{i=1}^m
            \|V_{2,_{i,:}}\|_2^2}\\
            &= 1~.
        \end{aligned}
    \end{align}
    Hence, we have
    \begin{align}
        \begin{aligned}
            \left\|\frac{\partial^2\aalpha^{{(l_2)}}}{\partial \aalpha^{(l_2-1)}\partial{\mathbf{w}^{(l_2)}}} \right\|_{2,2,1}^2
            &\leq 
            \frac{\beta_{\phi}}{2}\left( \gamma_w^2 + \gamma_r^2\right) + \beta_{\phi} \left( 
                \gamma^{l_1 - 1} + |\phi(0)| \sum_{i=1}^{l_1 - 1} \gamma^{i-1}
        \right)^2+1\\
        &\leq 
            \beta_{\phi}\left( \gamma^2 +
                \left(\gamma^{l_1 - 1} + |\phi(0)| \sum_{i=1}^{l_1 - 1} \gamma^{i-1}\right)^2
        \right)+1~,
        \end{aligned}
    \end{align}
and finally we can upper bound the last quantity above as in~\eqref{eq:norm_d_alpha_l_alpha_l_w_l}.

We now focus on proving~\eqref{eq:norm_d_alpha_l_w_l_w_l}. Note that from \eqref{eq:FNO_predictor_app} we have
    \begin{align*}
        \left[\frac{\partial^2 \aalpha^{\left(l_1\right)}}{(\partial \w^{\left(l_1\right)})^2}\right]_{i,jj',kk'}
        =
        \frac{1}{m} \phi^{\prime \prime}
        \left(\tilde{\aalpha}^{\left(l_1\right)}\right) \cdot \aalpha_{j^{\prime}}^{\left(l_1 - 1\right)}
        \aalpha_{k^{\prime}}^{\left(l_1 - 1\right)}
        \delta_{i j}\delta_{ik}.
    \end{align*}
Now,
    \begin{align}
        \begin{aligned}
            &\left\| 
                \frac{\partial^2 \aalpha^{\left(l_1\right)}}{(\partial \w^{\left(l_1\right)})^2}
            \right\|_{2,2,1} \\
            &= 
            \sup_{\|V_1\|_F = 1, \|V_2\|_F = 1} \sum_{i=1}^m
            \left| 
                \frac{1}{m} \phi^{\prime \prime}(\tilde{\aalpha}^{(l_1)}_i) 
                V_{1_{i j^{\prime}}}\aalpha_{j^{\prime}}^{\left(l_1 -1\right)}
                V_{2_{i k^{\prime}}}\aalpha_{k^{\prime}}^{\left(l_1 -1\right)} 
            \right| \\ 
            &\leq
            \sup_{\|V_1\|_F = 1, \|V_2\|_F = 1}
            \frac{\beta_\phi}{m}
            \sum^m_{i=1}
            \left| 
            (V_1 \aalpha^{(l_1-1)})_i(V_2\aalpha^{(l_1-1)})_i \right| 
            \\
            &\leq
            \sup_{\|V_1\|_F = 1, \|V_2\|_F = 1}
            \frac{\beta_{\phi}}{2m} \left(\left\|V_1 \aalpha^{\left(l_1 - 1\right)}\right\|_2^2+\left\|V_2\aalpha^{(l_1-1)}\right\|_2^2\right) \\
            &\leq
            \frac{\beta_{\phi}}{m}\left\|\aalpha^{\left(l_1 - 1\right)}\right\|_2^2 \leq \beta_{\phi}
                \left(\gamma^{l_1-1}+|\phi(0)| \sum_{i=1}^{l_1-1} \gamma^{i-1}\right)^2.
        \end{aligned}
    \end{align}
Finally, we can upper bound the last quantity above as in~\eqref{eq:norm_d_alpha_l_w_l_w_l} and complete the proof.

Now, the last result to prove is~\eqref{eq:norm_d_alpha_l_r_l_r_l}. Note that from \eqref{eq:FNO_predictor_app} we have
    \begin{align*}
        \left[\frac{\partial^2 \aalpha^{\left(l_2\right)}}{(\partial \r^{\left(l_2\right)})^2}\right]_{i,jj',kk'}
        =
        \frac{1}{m} \phi^{\prime \prime}\left(\tilde{\aalpha}^{\left(l_2\right)}\right) \cdot 
        F_{i j}^* F_{j^{\prime} p} \aalpha_p^{\left(l_2-1\right)} 
        F_{i k}^* F_{k^{\prime} q} \aalpha_q^{\left(l_2-1\right)},
    \end{align*}
where we make use of the Einstein notation. Now,
    \begin{align}
        \begin{aligned}
            &\left\| 
                \frac{\partial^2 \aalpha^{\left(l_2\right)}}{(\partial \r^{\left(l_2\right)})^2}
            \right\|_{2,2,1} \\
            &= 
            \sup_{\|V_1\|_F = 1, \|V_2\|_F = 1} \sum_{i=1}^m
            \left| 
                \frac{1}{m} \phi^{\prime \prime}(\tilde{\aalpha}^{(l_2)}_i) F_{i j}^* V_{1_{j j^{\prime}}} F_{j^{\prime} p} \aalpha_p^{\left(l_2-1\right)}  
                F_{i k}^* V_{2_{k k^{\prime}}} F_{k^{\prime} q} \aalpha_q^{\left(l_2-1\right)}
            \right| \\ 
            &\leq
            \sup_{\|V_1\|_F = 1, \|V_2\|_F = 1}
            \frac{\beta_\phi}{m}
            \sum^m_{i=1}
            \left| 
            ((F^*V_1F)\aalpha^{(l_2-1)})_i ((F^*V_2F)\aalpha^{(l_2-1)})_i      \right| 
            \\
            &\leq
            \sup_{\|V_1\|_F = 1, \|V_2\|_F = 1}
            \frac{\beta_{\phi}}{2m} \left(\left\|F^* V_1 F \aalpha^{\left(l_2 - 1\right)}\right\|_2^2+\left\|F^* V_2 F \aalpha^{(l_2-1)}\right\|_2^2\right) \\
            &\leq
            \frac{\beta_{\phi}}{m}\left\|\aalpha^{\left(l_2 - 1\right)}\right\|_2^2 \leq \beta_{\phi}
                \left(\gamma^{l_2-1}+|\phi(0)| \sum_{i=1}^{l_2-1} \gamma^{i-1}\right)^2.
        \end{aligned}
    \end{align}
Finally, we can upper bound the last quantity above as in~\eqref{eq:norm_d_alpha_l_r_l_r_l} and complete the proof.
%
%
\qed

Now we upper bound the terms in equation~\eqref{eq:qwr2}. Thus, we obtain that, with probability at least $1-\frac{2(L+2)}{m}$,
\begin{equation}
\label{eq:Q_1}
 \cQ_{\infty}(f)=\max_{l\in[L+1]}\left\| \frac{\partial f}{\partial \aalpha^{(l)}} \right\|_{\infty}\overset{(a)}{\leq}
 \max_{l\in[L+1]}\frac{1}{\sqrt{m}}\gamma^{L+1-l}(1+\rho_1)\leq
 \frac{1}{\sqrt{m}}(1+\gamma^{L})(1+\rho_1)~,
\end{equation}
\begin{equation}
\label{eq:Q_2}
\cQ^{(w,r)}_2(f)\overset{(b)}{\leq}\max_{l\in[L+1]}(\gamma^{l-1}+|\phi(0)|\sum^{l-1}_{i=1}\gamma^{i-1})
\leq (1+\gamma^L)(1+L|\phi(0)|)~,
\end{equation}
and 
\begin{equation}
\label{eq:Q_3}
\cQ^{(w,r)}_{2,2,1}(f),\cQ^{(w)}_{2,2,1}(f),\cQ^{(r)}_{2,2,1}(f)  \overset{(c)}{\leq} 
2\beta_{\phi}(1+\gamma^L)^2(1+\gamma)^2(1+(1+L|\phi(0)|)^2)+1~,
\end{equation}
where (a) follows from a direct adaptation of the results from Section~A.6 in~\cite{banerjee2022restricted}, (b) follows from Lemma~\ref{lemm:gradient_alpha_params}, and (c) follows from Lemma~\ref{lemm:cross-w-r-bound}.

We now proceed to analyze the Hessian. We also recall that $\max_{l \in \{2,\dots,L+1\}} \left\| \frac{\partial \aalpha^{(l)}}{\partial \aalpha^{(l-1)}} \right\|_2 \leq \gamma$ from Lemma~\ref{lemm:firstDerivativeBoundFNO}. 

We introduce some notation. Given an order-3 tensor $T \in \R^{d_1 \times d_2 \times d_3}$, we have that its first dimension has $d_1$ entries, the second has $d_2$ entries, and the third has $d_3$ entries. Consider the matrices $X\in\R^{k_1\times d_1}$, $Y\in\R^{k_2\times d_2}$, and $Z\in\R^{k_3\times d_3}$. We use the notation $(X)(Y)T(Z)\in\R^{k_1\times k_2\times k_3}$ to denote $X$ multiplying $A$ long its first dimension, $Y$ along its second dimension, and $Z$ along its third dimension. We use the notation $(X)T(Z)\in\R^{k_1\times d_2\times k_3}$ to denote $X$ multiplying $A$ long its first dimension and $Z$ along its third dimension.

\paragraph{Off-Diagonal Blocks.} For the off-diagonal blocks, we focus on bounding $\| H_{w,r}^{(l_1,l_2)} \|_2$ for (Case 1.A) $l_1 \leq l_2$, (Case 1.B) $l_2 \leq l_1$. Further, we bound (Case 2.A) $\| H_{v,w}^{(l_1)} \|_2$ and (Case 2.B) $\| H_{v,r}^{(l_2)} \|_2$.

{\bf Case 1.A:} $2 \leq l_1 \leq l_2 \leq L+1$. By building on the form of the gradient, we have 
\begin{align*}
H_{w,r}^{(l_1,l_2)} & = \frac{\partial^2 \aalpha^{(l_1)}}{\partial \w^{(l_1)} \partial \r^{(l_1)}} \frac{\partial f}{\partial \aalpha^{(l_1)}} \1_{[l_1 =l_2]} +  \1_{[l_1 <l_2]}\left( \frac{\partial \aalpha^{(l_1)}}{\partial \w^{(l_1)}}  \prod_{l'=l_1+1}^{l_2-1} \frac{\partial \aalpha^{(l')}}{\partial \aalpha^{(l'-1)}} \right) \frac{\partial^2 \aalpha^{(l_2)}}{\partial \aalpha^{(l_2-1)} \partial \r^{(l_2)}} \left(\frac{\partial f}{\partial \aalpha^{(l_2)}} \right) \\
& ~~~~ + \1_{[l_1 <l_2]}\sum_{l=l_2+1}^{L+1} \left( \frac{\partial \aalpha^{(l_1)}}{\partial \w^{(l_1)}} \prod_{l'=l_1+1}^{l-1} \frac{\partial \aalpha^{(l')}}{\partial \aalpha^{(l'-1)}} \right)  \left( \frac{\partial \aalpha^{(l_2)}}{\partial \r^{(l_2)}} \prod_{l'=l_2+1}^{l-1} \frac{\partial \aalpha^{(l')}}{\partial \aalpha^{(l'-1)}} \right) \frac{\partial^2 \aalpha^{(l)}}{(\partial \aalpha^{(l-1)})^2}  \left(\frac{\partial f}{\partial \aalpha^{(l)}} \right) ~. 
\end{align*}
Then,
\begin{align*}
\| H_{w,r}^{(l_1,l_2)} \|_2 
& \leq  \left\| \frac{\partial^2 \aalpha^{(l_1)}}{\partial \w^{(l_1)}  \partial \r^{(l_1)}} \right\|_{2,2,1} \left\| \frac{\partial f}{\partial \aalpha^{(l_1)}} \right\|_{\infty}\1_{[l_1=l_2]} \\
& ~~~ +\1_{[l_1 <l_2]}  \left\| \frac{\partial \aalpha^{(l_1)}}{\partial \w^{(l_1)}} \right\|_2 \prod_{l'=l_1+1}^{l_2-1} \left\| \frac{\partial \aalpha^{(l')}}{\partial \aalpha^{(l'-1)}} \right\|_2 \left\| \frac{\partial^2 \aalpha^{(l_2)}}{\partial \aalpha^{(l_2-1)} \partial \r^{(l_2)}} \right\|_{2,2,1} \left\| \frac{\partial f}{\partial \aalpha^{(l_2)}} \right\|_{\infty} \\
& ~~~~ + \1_{[l_1 <l_2]}\sum_{l=l_2+1}^{L+1} \left( \left\| \frac{\partial \aalpha^{(l_1)}}{\partial \w^{(l_1)}} \right\|_2  \prod_{l'=l_1+1}^{l-1} \left\| \frac{\partial \aalpha^{(l')}}{\partial \aalpha^{(l'-1)}} \right\|_2 \right)  \left( \left\| \frac{\partial \aalpha^{(l_2)}}{\partial \r^{(l_2)}} \right\|_2 \prod_{l'=l_2+1}^{l-1} \left\| \frac{\partial \aalpha^{(l')}}{\partial \aalpha^{(l'-1)}} \right\|_{2} \right)\\
& ~~~~ \times\left\| \frac{\partial^2 \aalpha^{(l)}}{(\partial \aalpha^{(l-1)})^2} \right\|_{2,2,1} \left\| \frac{\partial f}{\partial \aalpha^{(l)}} \right\|_{\infty} \\
& \leq  \left\| \frac{\partial^2 \aalpha^{(l_1)}}{\partial \w^{(l_1)}  \partial \r^{(l_1)}} \right\|_{2,2,1} \left\| \frac{\partial f}{\partial \aalpha^{(l_1)}} \right\|_{\infty} \1_{[l_1 =l_2]}\\
& ~~~~ + \1_{[l_1 <l_2]} \gamma^{l_2-l_1-1} \left\| \frac{\partial \aalpha^{(l_1)}}{\partial \w^{(l_1)}} \right\|_2 \left\| \frac{\partial^2 \aalpha^{(l_2)}}{\partial \aalpha^{(l_2-1)} \partial \r^{(l_2)}} \right\|_{2,2,1} \left\| \frac{\partial f}{\partial \aalpha^{(l_2)}} \right\|_{\infty} \\
& ~~~~ + \1_{[l_1 <l_2]}\sum_{l=l_2+1}^{L+1} \gamma^{2l - l_2 - l_1-2} \left\| \frac{\partial \aalpha^{(l_1)}}{\partial \w^{(l_1)}} \right\|_2  \left\| \frac{\partial \aalpha^{(l_2)}}{\partial \r^{(l_2)}} \right\|_2  \left\| \frac{\partial^2 \aalpha^{(l)}}{(\partial \aalpha^{(l-1)})^2} \right\|_{2,2,1} \left\| \frac{\partial f}{\partial \aalpha^{(l)}} \right\|_{\infty}~.
\end{align*}
Then, based on the definitions in \eqref{eq:qwr2}, we have
\begin{align*}
 \| H_{w,r}^{(l_1,l_2)} \|_2 &\leq (L+1)(1+\gamma^{2L}) \cQ^{(w,r)}_{2,2,1}(f) \cQ_{\infty}(f)\\
 &\overset{(a)}{\leq}
 \frac{(L+1)(1+\rho_1)}{\sqrt{m}}(1+\gamma^{2L})^2(2\beta_{\phi}(1+\gamma^L)^2(1+\gamma)^2(1+(1+L|\phi(0)|)^2)+1)
 ~,
\end{align*}
where for (a) we used equations~\eqref{eq:Q_1} and~\eqref{eq:Q_3}. 

{\bf Case 1.B:} $2 \leq l_2 \leq l_1 \leq L+1$. By building on the form of the gradient, we have 
\begin{align*}
H_{w,r}^{(l_1,l_2)} & = \frac{\partial^2 \aalpha^{(l_2)}}{\partial \w^{(l_2)} \partial \r^{(l_2)}} \frac{\partial f}{\partial \aalpha^{(l_2)}} \1_{[l_1 =l_2]} + \1_{[l_2 <l_1]} \left( \frac{\partial \aalpha^{(l_2)}}{\partial \r^{(l_2)}}  \prod_{l'=l_2+1}^{l_1-1} \frac{\partial \aalpha^{(l')}}{\partial \aalpha^{(l'-1)}} \right) \frac{\partial^2 \aalpha^{(l_1)}}{\partial \aalpha^{(l_1-1)} \partial \w^{(l_1)}} \left(\frac{\partial f}{\partial \aalpha^{(l_1)}} \right) \\
& ~~~~ + \1_{[l_2 <l_1]}\sum_{l=l_1+1}^{L+1} \left( \frac{\partial \aalpha^{(l_2)}}{\partial \r^{(l_2)}} \prod_{l'=l_2+1}^{l-1} \frac{\partial \aalpha^{(l')}}{\partial \aalpha^{(l'-1)}} \right)  \left( \frac{\partial \aalpha^{(l_1)}}{\partial \w^{(l_1)}} \prod_{l'=l_1+1}^{l-1} \frac{\partial \aalpha^{(l')}}{\partial \aalpha^{(l'-1)}} \right) \frac{\partial^2 \aalpha^{(l)}}{(\partial \aalpha^{(l-1)})^2}  \left(\frac{\partial f}{\partial \aalpha^{(l)}} \right) ~. 
\end{align*}
Then,
\begin{align*}
\| H_{w,r}^{(l_1,l_2)} \|_2 
& \leq  \left\| \frac{\partial^2 \aalpha^{(l_2)}}{\partial \w^{(l_2)}  \partial \r^{(l_2)}} \right\|_{2,2,1} \left\| \frac{\partial f}{\partial \aalpha^{(l_2)}} \right\|_{\infty} \1_{[l_1=l_2]}\\
& ~~~~ + \1_{[l_2 <l_1]} \left\| \frac{\partial \aalpha^{(l_2)}}{\partial \r^{(l_2)}} \right\|_2 \prod_{l'=l_2+1}^{l_1-1} \left\| \frac{\partial \aalpha^{(l')}}{\partial \aalpha^{(l'-1)}} \right\|_2 \left\| \frac{\partial^2 \aalpha^{(l_1)}}{\partial \aalpha^{(l_1-1)} \partial \w^{(l_1)}} \right\|_{2,2,1} \left\| \frac{\partial f}{\partial \aalpha^{(l_1)}} \right\|_{\infty} \\
& ~~~~ + \1_{[l_2 <l_1]}\sum_{l=l_1+1}^{L+1} \left( \left\| \frac{\partial \aalpha^{(l_2)}}{\partial \r^{(l_2)}} \right\|_2  \prod_{l'=l_2+1}^{l-1} \left\| \frac{\partial \aalpha^{(l')}}{\partial \aalpha^{(l'-1)}} \right\|_2 \right)  \left( \left\| \frac{\partial \aalpha^{(l_1)}}{\partial \w^{(l_1)}} \right\|_2 \prod_{l'=l_1+1}^{l-1} \left\| \frac{\partial \aalpha^{(l')}}{\partial \aalpha^{(l'-1)}} \right\|_{2} \right)\\
& ~~~~ \times \left\| \frac{\partial^2 \aalpha^{(l)}}{(\partial \aalpha^{(l-1)})^2} \right\|_{2,2,1} \left\| \frac{\partial f}{\partial \aalpha^{(l)}} \right\|_{\infty} \\
& \leq  \left\| \frac{\partial^2 \aalpha^{(l_2)}}{\partial \w^{(l_2)}  \partial \r^{(l_2)}} \right\|_{2,2,1} \left\| \frac{\partial f}{\partial \aalpha^{(l_2)}} \right\|_{\infty} \1_{[l_1 =l_2]}\\
& ~~~~ + \1_{[l_2 <l_1]} \gamma^{l_1-l_2-1} \left\| \frac{\partial \aalpha^{(l_2)}}{\partial \r^{(l_2)}} \right\|_2 \left\| \frac{\partial^2 \aalpha^{(l_1)}}{\partial \aalpha^{(l_1-1)} \partial \w^{(l_1)}} \right\|_{2,2,1} \left\| \frac{\partial f}{\partial \aalpha^{(l_1)}} \right\|_{\infty} \\
& ~~~~ + \1_{[l_2 <l_1]}\sum_{l=l_1+1}^{L+1} \gamma^{2l - l_1 - l_2-2} \left\| \frac{\partial \aalpha^{(l_2)}}{\partial \r^{(l_2)}} \right\|_2  \left\| \frac{\partial \aalpha^{(l_1)}}{\partial \w^{(l_1)}} \right\|_2  \left\| \frac{\partial^2 \aalpha^{(l)}}{(\partial \aalpha^{(l-1)})^2} \right\|_{2,2,1} \left\| \frac{\partial f}{\partial \aalpha^{(l)}} \right\|_{\infty} ~.
\end{align*}

Then, the upper bound is similar to the case {\bf Case 1.A},
\begin{align*}
 \| H_{w,r}^{(l_1,l_2)} \|_2 &\leq (L+1)(1+\gamma^{2L}) \cQ^{(w,r)}_{2,2,1}(f) \cQ_{\infty}(f)\\
 &\overset{(a)}{\leq}
 \frac{(L+1)(1+\rho_1)}{\sqrt{m}}(1+\gamma^{2L})^2(2\beta_{\phi}(1+\gamma^L)^2(1+\gamma)^2(1+(1+L|\phi(0)|)^2)+1)
 ~.
\end{align*}

{\bf Case 2.A:} $1 \leq l_1 \leq L+1$. For Hessian terms involving $(w,v)$, since $\frac{\partial f}{\partial \v} = \frac{1}{\sqrt{m}} \aalpha^{(L+1)}$, we have 
\begin{align*}
H_{w,v}^{(l_1)} = \frac{1}{\sqrt{m}}  \frac{\partial \aalpha^{(L+1)}}{\partial \w^{(l_1)}} = \frac{1}{\sqrt{m}} \left( \frac{\partial \aalpha^{(l_1)}}{\partial \w^{(l_1)}} \prod_{l'=l_1+1}^{L+1} \frac{\partial \aalpha^{(l')}}{\partial \aalpha^{(l'-1)}}  \right)~.
\end{align*}
Then,
\begin{align*}
 \| H_{w,v}^{(l_1,L+1)} \|_2 \leq \frac{1}{\sqrt{m}} \left\| \frac{\partial \aalpha^{(l_1)}}{\partial \w^{(l_1)}} \right\|_2 \prod_{l'=l_1+1}^{L+1} \left\| \frac{\partial \aalpha^{(l')}}{\partial \aalpha^{(l'-1)}} \right\|_2 \leq \frac{1}{\sqrt{m}} \gamma^L \cQ_2^{(w,r)}(f)\overset{(a)}{\leq} \frac{1}{\sqrt{m}}\gamma^L(1+\gamma^L)(1+L|\phi(0)|)~,   
\end{align*}
where (a) follows from equation~\eqref{eq:Q_2}.

{\bf Case 2.B:} $2 \leq l_2 \leq L+1$. For Hessian terms involving $(r,v)$, since $\frac{\partial f}{\partial \v} = \frac{1}{\sqrt{m}} \aalpha^{(L+1)}$,  we have 
\begin{align*}
H_{r,v}^{(l_2)} = \frac{1}{\sqrt{m}}  \frac{\partial \aalpha^{(L+1)}}{\partial \r^{(l_2)}}
= \frac{1}{\sqrt{m}} \left( \frac{\partial \aalpha^{(l_2)}}{\partial \r^{(l_2)}} \prod_{l'=l_2+1}^{L+1} \frac{\partial \aalpha^{(l')}}{\partial \aalpha^{(l'-1)}}  \right)~.
\end{align*}
Then,
\begin{align*}
 \| H_{r,v}^{(l_2)} \|_2 \leq \frac{1}{\sqrt{m}} \left\| \frac{\partial \aalpha^{(l_2)}}{\partial \r^{(l_2)}} \right\|_2 \prod_{l'=l_2+1}^{L+1} \left\| \frac{\partial \aalpha^{(l')}}{\partial \aalpha^{(l'-1)}} \right\|_2 \overset{(a)}{\leq} \frac{1}{\sqrt{m}}\gamma^L(1+\gamma^L)(1+L|\phi(0)|)~,   
\end{align*}
where (a) follows from equation~\eqref{eq:Q_2}.

\paragraph{Diagonal Blocks.} 
For the diagonal blocks, we focus only on bounding (Case 3.A) $\| H_{w}^{(l_1,l_2)} \|_2$ and (Case 3.B) $\| H_{r}^{(l_1,l_2)} \|_2$ for $l_1 \leq l_2$, since the case $l_2 \leq l_1$ is just symmetrical and will have the same bounds. 

{\bf Case 3.A:} $1 \leq l_1 \leq l_2 \leq L+1$. By building on the form of the gradient, we have 
\begin{align*}
H_{w}^{(l_1,l_2)} & = \frac{\partial^2 \aalpha^{(l_1)}}{(\partial \w^{(l_1)})^2} \frac{\partial f}{\partial \aalpha^{(l_1)}} \1_{[l_1 =l_2]} +  \1_{[l_1 <l_2]}\left( \frac{\partial \aalpha^{(l_1)}}{\partial \w^{(l_1)}}  \prod_{l'=l_1+1}^{l_2-1} \frac{\partial \aalpha^{(l')}}{\partial \aalpha^{(l'-1)}} \right) \frac{\partial^2 \aalpha^{(l_2)}}{\partial \aalpha^{(l_2-1)} \partial \w^{(l_2)}} \left(\frac{\partial f}{\partial \aalpha^{(l_2)}} \right) \\
& ~~~~ + \1_{[l_1 <l_2]}\sum_{l=l_2+1}^{L+1} \left( \frac{\partial \aalpha^{(l_1)}}{\partial \w^{(l_1)}} \prod_{l'=l_1+1}^{l-1} \frac{\partial \aalpha^{(l')}}{\partial \aalpha^{(l'-1)}} \right)  \left( \frac{\partial \aalpha^{(l_2)}}{\partial \w^{(l_2)}} \prod_{l'=l_2+1}^{l-1} \frac{\partial \aalpha^{(l')}}{\partial \aalpha^{(l'-1)}} \right) \frac{\partial^2 \aalpha^{(l)}}{(\partial \aalpha^{(l-1)})^2}  \left(\frac{\partial f}{\partial \aalpha^{(l)}} \right) ~. 
\end{align*}
Then,
\begin{align*}
\| H_{w}^{(l_1,l_2)} \|_2 
& \leq  \left\| \frac{\partial^2 \aalpha^{(l_1)}}{(\partial \w^{(l_1)})^2} \right\|_{2,2,1} \left\| \frac{\partial f}{\partial \aalpha^{(l_1)}} \right\|_{\infty} \1_{[l_1=l_2]}\\
& ~~~~ + \1_{[l_1 <l_2]} \left\| \frac{\partial \aalpha^{(l_1)}}{\partial \w^{(l_1)}} \right\|_2 \prod_{l'=l_1+1}^{l_2-1} \left\| \frac{\partial \aalpha^{(l')}}{\partial \aalpha^{(l'-1)}} \right\|_2 \left\| \frac{\partial^2 \aalpha^{(l_2)}}{\partial \aalpha^{(l_2-1)} \partial \w^{(l_2)}} \right\|_{2,2,1} \left\| \frac{\partial f}{\partial \aalpha^{(l_2)}} \right\|_{\infty} \\
& ~~~~ + \1_{[l_1 <l_2]}\sum_{l=l_2+1}^{L+1} \left( \left\| \frac{\partial \aalpha^{(l_1)}}{\partial \w^{(l_1)}} \right\|_2  \prod_{l'=l_1+1}^{l-1} \left\| \frac{\partial \aalpha^{(l')}}{\partial \aalpha^{(l'-1)}} \right\|_2 \right)  \left( \left\| \frac{\partial \aalpha^{(l_2)}}{\partial \w^{(l_2)}} \right\|_2 \prod_{l'=l_2+1}^{l-1} \left\| \frac{\partial \aalpha^{(l')}}{\partial \aalpha^{(l'-1)}} \right\|_{2} \right)\\
& ~~~~ \times\left\| \frac{\partial^2 \aalpha^{(l)}}{(\partial \aalpha^{(l-1)})^2} \right\|_{2,2,1} \left\| \frac{\partial f}{\partial \aalpha^{(l)}} \right\|_{\infty} \\
& \leq  \left\| \frac{\partial^2 \aalpha^{(l_1)}}{(\partial \w^{(l_1)})^2} \right\|_{2,2,1} \left\| \frac{\partial f}{\partial \aalpha^{(l_1)}} \right\|_{\infty}\1_{[l_1=l_2]}\\
& ~~~~ +  \1_{[l_1 <l_2]}\gamma^{l_2-l_1-1} \left\| \frac{\partial \aalpha^{(l_1)}}{\partial \w^{(l_1)}} \right\|_2 \left\| \frac{\partial^2 \aalpha^{(l_2)}}{\partial \aalpha^{(l_2-1)} \partial \w^{(l_2)}} \right\|_{2,2,1} \left\| \frac{\partial f}{\partial \aalpha^{(l_2)}} \right\|_{\infty} \\
& ~~~~ + \1_{[l_1 <l_2]}\sum_{l=l_2+1}^{L+1} \gamma^{2l - l_2 - l_1-2} \left\| \frac{\partial \aalpha^{(l_1)}}{\partial \w^{(l_1)}} \right\|_2  \left\| \frac{\partial \aalpha^{(l_2)}}{\partial \w^{(l_2)}} \right\|_2  \left\| \frac{\partial^2 \aalpha^{(l)}}{(\partial \aalpha^{(l-1)})^2} \right\|_{2,2,1} \left\| \frac{\partial f}{\partial \aalpha^{(l)}} \right\|_{\infty}~.
\end{align*}
Then, based on the definitions in \eqref{eq:qwr2}, we have
\begin{align*}
 \| H_{(w)}^{(l_1,l_2)} \|_2 &\leq (L+1)(1+\gamma^{2L}) \cQ^{(w)}_{2,2,1}(f) \cQ_{\infty}(f)\\
 &\overset{(a)}{\leq}
 \frac{(L+1)(1+\rho_1)}{\sqrt{m}}(1+\gamma^{2L})^2(2\beta_{\phi}(1+\gamma^L)^2(1+\gamma)^2(1+(1+L|\phi(0)|)^2)+1)
 ~,
\end{align*}
where for (a) we used equations~\eqref{eq:Q_1} and~\eqref{eq:Q_3}.

{\bf Case 3.B:} $2 \leq l_1 \leq l_2 \leq L+1$. By building on the form of the gradient, we have 
\begin{align*}
H_{r}^{(l_1,l_2)} & = \frac{\partial^2 \aalpha^{(l_1)}}{(\partial \r^{(l_1)})^2} \frac{\partial f}{\partial \aalpha^{(l_1)}} \1_{[l_1 =l_2]} +  \1_{[l_1 <l_2]}\left( \frac{\partial \aalpha^{(l_1)}}{\partial \r^{(l_1)}}  \prod_{l'=l_1+1}^{l_2-1} \frac{\partial \aalpha^{(l')}}{\partial \aalpha^{(l'-1)}} \right) \frac{\partial^2 \aalpha^{(l_2)}}{\partial \aalpha^{(l_2-1)} \partial \r^{(l_2)}} \left(\frac{\partial f}{\partial \aalpha^{(l_2)}} \right) \\
& ~~~~ + \1_{[l_1 <l_2]}\sum_{l=l_2+1}^{L+1} \left( \frac{\partial \aalpha^{(l_1)}}{\partial \r^{(l_1)}} \prod_{l'=l_1+1}^{l-1} \frac{\partial \aalpha^{(l')}}{\partial \aalpha^{(l'-1)}} \right)  \left( \frac{\partial \aalpha^{(l_2)}}{\partial \r^{(l_2)}} \prod_{l'=l_2+1}^{l-1} \frac{\partial \aalpha^{(l')}}{\partial \aalpha^{(l'-1)}} \right) \frac{\partial^2 \aalpha^{(l)}}{(\partial \aalpha^{(l-1)})^2}  \left(\frac{\partial f}{\partial \aalpha^{(l)}} \right) ~. 
\end{align*}
Then, we can obtain  prove the following upper bound in a similar way to Case 3.A based on the definitions in \eqref{eq:qwr2},
\begin{align*}
 \| H_{(r)}^{(l_1,l_2)} \|_2 &\leq (L+1)(1+\gamma^{2L}) \cQ^{(r)}_{2,2,1}(f) \cQ_{\infty}(f)\\
 &\overset{(a)}{\leq}
 \frac{(L+1)(1+\rho_1)}{\sqrt{m}}(1+\gamma^{2L})^2(2\beta_{\phi}(1+\gamma^L)^2(1+\gamma)^2(1+(1+L|\phi(0)|)^2)+1)
 ~,
\end{align*}
where for (a) we used equations~\eqref{eq:Q_1} and~\eqref{eq:Q_3}.

Putting all the shown results back in~\eqref{eq:Hessian_big}, we prove equation~\eqref{eq:hessianBoundG_fg_FNO}. \pcedit{We also note that all the constants in the Hessian bound depend on $\sigma_{1,w}$, $\sigma_{1,r}$, the depth $L$, and the radii $\rho_w$, $\rho_r$, $\rho_1$, and $\rho_2$. This dependence of this bound reduces to the depth and the radii and becomes polynomial whenever $\gamma\leq 1$, which is equivalent to $\sigma_{1,w}+\sigma_{1,r}\leq 1-\frac{\rho_w+\rho_r}{\sqrt{m}}$.}

Now, we focus on proving the rest of equations in Lemma~\ref{lemm:hessgradbounds-FNO}, namely, equations~\eqref{eq:gradientBoundG_fg_FNO} and~\eqref{eq:predictorBoundG_fg_FNO}. 

{\bf Gradient and predictor bounds.} 
We observe that for $l\in[L]$,
$
\frac{\partial f}{\partial \w^{(l)}} = \frac{\partial \aalpha^{(l)}}{\partial \w^{(l)}}  \left( \prod_{l'=l}^L \frac{\partial \aalpha^{(l'+1)}}{\partial \aalpha^{(l')}} \right) \frac{\partial f}{\partial \aalpha^{(L+1)}}
$, and so

\begin{align*}
\left\|\frac{\partial f}{\partial \w^{(l)}}\right\|_2 & \leq \left\| \frac{\partial \aalpha^{(l)}}{\partial \w^{(l)}}\right\|_2 \gamma^{L-l+1} \left\|\frac{\partial f}{\partial \aalpha^{(L+1)}}\right\|_2\\
&\leq 
\left\| \frac{\partial \aalpha^{(l)}}{\partial \w^{(l)}}\right\|_2 \gamma^{L-l+1} \frac{1}{\sqrt{m}}(1+\rho_1)\\
&\leq 
(1+\gamma^L)(1+L|\phi(0)|)\gamma^{L} \frac{1}{\sqrt{m}}(1+\rho_1)~,
\end{align*}
where the last inequality follows from Lemma~\ref{lemm:gradient_alpha_params}.

We also have that
\begin{align*}
\left\|\frac{\partial f}{\partial \w^{(L+1)}}\right\|_2 & =
\left\|\frac{\partial \aalpha^{(L+1)}}{\partial \w^{(L+1)}} \frac{\partial f}{\partial \aalpha^{(L+1)}}\right\|_2\\
& =
\left\|\frac{\partial \aalpha^{(L+1)}}{\partial \w^{(L+1)}}\right\|_2\left\| \frac{\partial f}{\partial \aalpha^{(L+1)}}\right\|_2\\
&\leq (1+\gamma^L)(1+L|\phi(0)|)\frac{1}{\sqrt{m}}(1+\rho_1)~.
\end{align*}

Similarly, we can obtain for $l_2\in\{2,\dots,L\}$,
\begin{align*}
\left\|\frac{\partial f}{\partial \r^{(l_2)}}\right\|_2 &\leq 
(1+\gamma^L)(1+L|\phi(0)|)\gamma^{L} \frac{1}{\sqrt{m}}(1+\rho_1)~,
\end{align*}
and
\begin{align*}
\left\|\frac{\partial f}{\partial \r^{(L+1)}}\right\|_2 
&\leq (1+\gamma^L)(1+L|\phi(0)|)\frac{1}{\sqrt{m}}(1+\rho_1)~.
\end{align*}
Using all these derivations, 
\begin{align*}
\norm{\nabla_{\vtheta}f}_2^2 &=\sum^{L+1}_{l=1}\left\|\frac{\partial f}{\partial \w^{(l)}}\right\|_2^2 + \sum^{L+1}_{l=2}\left\|\frac{\partial f}{\partial \r^{(l)}}\right\|_2^2\\
&\leq \frac{2}{m}(L+1)(1+\gamma^L)^2(1+L|\phi(0)|)^2(1+\rho_1)^2~,
\end{align*}
which finishes the proof for equation~\eqref{eq:gradientBoundG_fg_FNO}.

Now, 
\begin{align*}
|f| &=\left|\frac{1}{\sqrt{m}}\v^\top\aalpha^{(L+1)}\right|\\
&\leq \frac{1}{\sqrt{m}}\norm{\v}_2\norm{\aalpha^{(L+1)}}_2\\
&\leq (1+\rho_1)(1+\gamma^L)(1+L|\phi(0)|)~,
\end{align*}
which finishes the proof for equation~\eqref{eq:predictorBoundG_fg_FNO}. 
\pcedit{Again, we notice that all these bounds have a polynomial dependence on the depth $L$, and the radii $\rho_w$, $\rho_r$, $\rho_1$, and $\rho_2$ whenever $\gamma\leq 1$, i.e., whenever $\sigma_{1,w}+\sigma_{1,r}\leq 1-\frac{\rho_w+\rho_r}{\sqrt{m}}$.}

Thus, we finish the proof for Lemma~\ref{lemm:hessgradbounds-FNO}.

\subsection{RSC and Smoothness Results}

Using the results from the previous section, we immediately obtain the RSC and smoothness results. 

\RSCLossFNO*
\begin{proof}
We start by proving the first part of the theorem's statement. 
We immediately see that, 
since $B^t_{\kappa}\subset B^{\mathrm{Euc}}_{\rho_w,\rho_r\rho_1}(\vtheta_0)$, we satisfy Condition~\ref{cond:rsc}(a). We now need to satisfy Condition~\ref{cond:rsc}(b). For this, we proceed to show the existence of an element $\vtheta' \in B^t_{\kappa}$ that is an element of the set $Q^{t}_{\kappa}$ as in Definition~\ref{defn:qset_FNO}, i.e., satisfies
\begin{equation}
|\cos(\vtheta'-\vtheta_t, \nabla_\vtheta\bar{G}_t)| \geq \kappa~,
\label{eq:kap-fno}
\end{equation}
and that also satisfies the following two conditions:
\begin{enumerate}[{Condition} (A):]
\item  $\|\vtheta' - \vtheta_{t} \|_2 = \epsilon$ for some $\epsilon< \frac{2 \norm{\nabla_{\vtheta} \cL(\vtheta_t)}_2 \sqrt{1-\kappa^2}}{\beta}$; and \label{cond-1-fno}
\item the angle $\nu'$ between $(\vtheta' - \vtheta_{t})$ and $-\nabla_{\vtheta} \cL(\vtheta_t)$ is acute, so that $\cos(\nu') > 0$. \label{cond-2-fno}\end{enumerate}

To show the existence of such element $\vtheta' \in B_t$, we propose two possible constructions:
\begin{enumerate}[{Choice} (A):]
\item  
If the points $\vtheta_{t+1}$, $\nabla_\vtheta\bar{G}_t + \vtheta_{t}$, and $\vtheta_{t}$ are not collinear, then they define a hyperplane $\mathcal{P}$ that contains the vectors $\nabla_\vtheta\bar{G}_t$ and $-\nabla_{\vtheta} \cL(\vtheta_t)$ (recall that $\vtheta_{t+1} - \vtheta_{t}=-\nabla_{\vtheta}\cL(\vtheta_t)$ by gradient descent). We choose $\vtheta'$ such that the vector 
$\vtheta'-\vtheta_{t}$ 
lies in 
$\mathcal{P}$ 
with 
$\cos(\vtheta'-\vtheta_{t},\nabla_\vtheta\bar{G}_t)=\kappa$ (i.e., it satisfies condition~\eqref{eq:kap-fno} with equality) while simultaneously satisfying Condition~\eqref{cond-2-fno}.
If the points $\vtheta_{t+1}$, $\nabla_\vtheta\bar{G}_t + \vtheta_{t}$, and $\vtheta_{t}$ are collinear, we choose $\vtheta'$ such that it is not collinear with these points, thus defining a hyperplane $\mathcal{P}$ 
with these other three points, 
and such that 
$\vtheta'$ is also taken so that $\cos(\vtheta'-\vtheta_{t},\nabla_\vtheta\bar{G}_t)=\kappa$ while simultaneously satisfying Condition~\eqref{cond-2-fno}.

Thus far we have only defined \emph{angle} (or \emph{direction}) conditions on the vector $\vtheta'-\vtheta_{t}$, and so there could be an infinite number of values for $\vtheta'_f$ satisfying such angle conditions without $\vtheta'$ belonging to the set $B^{\mathrm{Euc}}_{\rho_w,\rho_r,\rho_1}(\vtheta_0)$ nor $\vtheta'$ satisfying Condition~\eqref{cond-1-fno}. To determine the feasible values for $\vtheta'$, we observe that $\vtheta_t$ is \emph{strictly inside} the set $B^{\mathrm{Euc}}_{\rho_w,\rho_r,\rho_1}(\vtheta_0)$ by Assumption~\ref{asmp:iter-2}, and so 
$\vtheta'$ can be taken arbitrarily close to $\vtheta_{t}$ so that $\vtheta'\in B^{\mathrm{Euc}}_{\rho_w,\rho_r,\rho_1}(\vtheta_0)$ and Condition~\eqref{cond-1-fno} is satisfied. 

We remark that, regardless of the collinearity of the points 
$\vtheta_{t+1}$, $\nabla_\vtheta\bar{G}_t + \vtheta_{t}$, and $\vtheta_{t}$, hyperplane $\mathcal{P}$ contains the vectors $\vtheta'-\vtheta_{t}$, $\nabla_\vtheta\bar{G}_t$, and $-\nabla_{\vtheta}\cL(\vtheta_t)$, all sharing its origin at $\vtheta_{f}\in\mathcal{P}$. \label{ch-A-fno}
%
\item  
We choose $\vtheta'$ as in Choice~\eqref{ch-A-don} but with $\nabla_\vtheta\bar{G}_t$ replaced by $-\nabla_\vtheta\bar{G}_t$.
\label{ch-B-fno}
\end{enumerate}
We immediately notice that $\vtheta'$ defined by either Choice~\eqref{ch-A-fno} or Choice~\eqref{ch-B-fno} satisfies 
$\vtheta'\in Q^t_\kappa \cap B^{\mathrm{Euc}}_{\rho_w,\rho_r,\rho_1}(\vtheta_0)
$. To make $\vtheta'$ belong to the set $B^t_\kappa$, we need to find a radius $\rho_2$ such that $\vtheta'\in B^{\mathrm{Euc}}_{\rho_w,\rho_r,\rho_1}(\vtheta_0)$, which is done by taking $\rho_2>\epsilon$ with $\epsilon$ as in Condition~\eqref{cond-1-fno}. 
Finally, it is straightforward to verify that such $\vtheta'\in B^t_\kappa$ defined by either Choice~\eqref{ch-A-fno} or Choice~\eqref{ch-B-fno} will always exist, by considering the following cases for the angle $\nu$ between $\nabla_\vtheta\bar{G}_t$ and $-\nabla_{\vtheta} \cL(\vtheta_t)$:
\begin{enumerate}[(i)]
\item If $\nu \in [0, \pi/2]$ or $\nu \in [3\pi/2, 2\pi]$, then Choice~\eqref{ch-A-fno} will be true, since $-\nabla_{\vtheta_f} \cL(\vtheta_t)$ is in the positive half space\footnote{We say $\a$ is in the positive half-space of $\b$ if $\langle \a, \b \rangle \geq 0$.} of $\nabla_\vtheta\bar{G}_t$; and
\label{it-i-fno}
\item if $\nu \in [\pi/2,\pi]$ or $\nu \in [\pi, 3\pi/2]$, then Choice~\eqref{ch-B-fno} will be true, since $-\nabla_{\vtheta_f} \cL(\vtheta_t)$ is in the positive half space of $-\nabla_\vtheta\bar{G}_t$.\label{it-ii-fno}
\end{enumerate}

Now, let us assume we are in the case of item~\eqref{it-i-fno} above, so that $\vtheta'$ is constructed according to Choice~\eqref{ch-A-fno} (the rest of the proof can be adapted to the case of item~\eqref{it-ii-fno} by using a symmetrical argument and so it is omitted).  
Let $\nu_1$ be the angle between $\vtheta'-\vtheta_{t}$ and $\nabla_\vtheta\bar{G}_t$, so that $\cos(\nu_1)=\kappa$ according to Choice~\eqref{ch-A-fno}.
Then, we have that 
\begin{align*}
|\cos(\nu')| = |\cos(\nu - \nu_1)| \geq |\cos(\pi/2 - \nu_1)| = |\sin(\nu_1)| = \sqrt{1-\cos^2(\nu_1)} = \sqrt{1-\kappa^2}~.
\end{align*}
Further, by the construction in Condition~\eqref{cond-2-fno}, $\cos(\nu') > 0$, which implies 
$\cos(\nu') \geq  \sqrt{1-\kappa^2}>0$.
Now, 
by the smoothness property of the empirical loss $\cL$ we have
\begin{align*}
\cL(\vtheta') & \leq \cL(\vtheta_t) - \langle \vtheta' - \vtheta_t, -\nabla_\vtheta \cL(\vtheta_t) \rangle + \frac{\beta}{2}\| \vtheta' - \vtheta_t \|_2^2 \\ 
& = \cL(\vtheta_t) - \|\vtheta' - \vtheta_{t}\|_2 \|\nabla_{\vtheta} \cL(\vtheta_t) \|_2 \cos(\nu) + \frac{\beta}{2}\| \vtheta' - \vtheta_{t} \|_2^2 \\
& =  \cL(\vtheta_t) -  \epsilon \|\nabla_{\vtheta} \cL(\vtheta_t) \|_2 \cos(\nu) + \frac{\beta}{2} \epsilon^2 \\
& = \cL(\vtheta_t) -  \frac{\beta \epsilon}{2} \left( \frac{2 \|\nabla_{\vtheta} \cL(\vtheta_t) \|_2 \cos(\nu)}{\beta} - \epsilon \right)\\
&<\cL(\vtheta_t)~.
\end{align*}
where the last inequality follows by the construction of $\epsilon$ in Condition~\eqref{ch-A-fno}. Note that this implies that the constructed $\vtheta'$ is as described in Condition~\ref{cond:rsc}(b.2). This finishes the proof for Condition~\ref{cond:rsc}(b).

The second part of the proof, i.e., the RSC condition over the non-empty set $B^t_{\kappa}$, 
follows from a direct adaptation of Theorem~5.1 in~\citep{banerjee2022restricted} using Lemma~\ref{lemm:hessgradbounds-FNO}. \pcedit{Since we are using Lemma~\ref{lemm:hessgradbounds-FNO}, the condition for polynomial dependence on the bounds carries on.} 
\end{proof}

\RSSFNO*
\begin{proof}
The proof follows from a direct adaptation of the proof of Theorem~5.2 in~\citep{banerjee2022restricted} using Lemma~\ref{lemm:hessgradbounds-FNO}, where it can be shown that $\beta = 2\varrho^2 + \frac{\bar{c}}{\sqrt{m}}$ for some positive constant \pcedit{$\bar{c}$ which inherits the dependence on the constants $\sigma_{1,w}$, $\sigma_{1,r}$, the depth $L$ and the radii $\rho_w$, $\rho_r$, and $\rho_1$ from Lemma~\ref{lemm:hessgradbounds-FNO}}.
\end{proof}

\begin{prop}[{\bf RSC to smoothness ratio}] 
\label{prop:RSC-smooth-FNO}
Under the same conditions as in Theorems~\ref{theo:rsc_main_fno} and~\ref{theo:smooth_main_fno}, we have that $\alpha_t/\beta<1$ with probability at least $1-\frac{2(L+2)}{m}$.
\end{prop}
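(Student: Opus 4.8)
The plan is to follow the same short argument used for DONs in Proposition~\ref{prop:RSC-smooth-DON}, adapted to the FNO setting. The key observation is that the RSC parameter $\alpha_t$ from Theorem~\ref{theo:rsc_main_fno} satisfies $\alpha_t \leq 2\kappa^2 \|\nabla_{\vtheta}\bar G_t\|_2^2$, since we simply drop the negative term $-c_1/\sqrt{m}$. On the other hand, the smoothness constant $\beta$ from Theorem~\ref{theo:smooth_main_fno} is, as noted in that proof, of the form $\beta = 2\varrho^2 + \bar c/\sqrt{m}$ with $\varrho$ the gradient bound from Lemma~\ref{lemm:hessgradbounds-FNO} and $\bar c>0$. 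So it suffices to show $2\kappa^2\|\nabla_{\vtheta}\bar G_t\|_2^2 < 2\varrho^2 + \bar c/\sqrt{m}$.

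First I would bound $\|\nabla_{\vtheta}\bar G_t\|_2$. By Jensen's inequality, $\|\nabla_{\vtheta}\bar G_t\|_2 = \|\frac1n\sum_i\frac1R\sum_j \nabla_{\vtheta}G_{\vtheta_t}(u^{(i)})(x_j)\|_2 \leq \frac1n\sum_i\frac1R\sum_j \|\nabla_{\vtheta} f^{(i)}(\vtheta_t;x_j)\|_2 \leq \varrho$, where the last inequality is exactly equation~\eqref{eq:gradientBoundG_fg_FNO} of Lemma~\ref{lemm:hessgradbounds-FNO}, valid since $\vtheta_t \in B^{\mathrm{Euc}}_{\rho_w,\rho_r\rho_1}(\vtheta_0)$ by Assumption~\ref{asmp:iter-2}. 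Combining, $\alpha_t \leq 2\kappa^2\varrho^2$. Since $\kappa \in (0,1)$, we get $\kappa^2 < 1$, hence $\alpha_t \leq 2\kappa^2\varrho^2 < 2\varrho^2 \leq 2\varrho^2 + \bar c/\sqrt{m} = \beta$, so $\alpha_t/\beta < 1$. The probability qualifier $1-\frac{2(L+2)}{m}$ is inherited because both the gradient bound (Lemma~\ref{lemm:hessgradbounds-FNO}) and the smoothness characterization (Theorem~\ref{theo:smooth_main_fno}) hold on the same high-probability event of that measure; a single union bound over the shared event suffices.

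There is no real obstacle here — the argument is a one-line chain of inequalities once the gradient bound is invoked. The only point requiring a modicum of care is making sure the strict inequality is genuinely strict: it comes from $\kappa^2 < 1$ (strict, since $\kappa \in (0,1)$ is an open interval in Definition~\ref{defn:qset_FNO}) together with $\bar c/\sqrt{m} > 0$, so in fact strictness is doubly guaranteed. Thus the proof of Proposition~\ref{prop:RSC-smooth-FNO} reads, essentially verbatim:

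\begin{proof}
From the proof of Theorem~\ref{theo:smooth_main_fno}, $\beta = 2\varrho^2 + \frac{\bar c}{\sqrt m}$ for some $\bar c>0$, where $\varrho$ is the gradient bound from Lemma~\ref{lemm:hessgradbounds-FNO}. By Jensen's inequality and equation~\eqref{eq:gradientBoundG_fg_FNO} of Lemma~\ref{lemm:hessgradbounds-FNO}, $\|\nabla_{\vtheta}\bar G_t\|_2 \leq \varrho$ on an event of probability at least $1-\frac{2(L+2)}{m}$. Hence, on that same event, $\alpha_t \leq 2\kappa^2\|\nabla_{\vtheta}\bar G_t\|_2^2 \leq 2\kappa^2\varrho^2 < 2\varrho^2 \leq \beta$, using $\kappa\in(0,1)$, so $\alpha_t/\beta<1$.
\end{proof}
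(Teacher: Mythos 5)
Your proof is correct and follows essentially the same route as the paper's: bound $\|\nabla_{\vtheta}\bar G_t\|_2\leq\varrho$ via the gradient bound of Lemma~\ref{lemm:hessgradbounds-FNO}, then chain $\alpha_t\leq 2\kappa^2\varrho^2<2\varrho^2\leq\beta$ with $\beta=2\varrho^2+\bar c/\sqrt m$ from Theorem~\ref{theo:smooth_main_fno}. The only cosmetic difference is where the strict inequality is placed (you use $\kappa<1$, the paper uses the $-c_1/\sqrt m$ and $+\bar c/\sqrt m$ terms), which does not affect the argument.
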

\begin{proof}
From the direct adaptation of the proof of Theorem~5.2 in~\citep{banerjee2022restricted} using Lemma~\ref{lemm:hessgradbounds-FNO}, we can obtain $\norm{\nabla_{\vtheta}\bar{G}_t}_2^2\leq \varrho^2$. Then, $\alpha_t\overset{(a)}{<}2\kappa^2\norm{\nabla_{\vtheta}\bar{G}_t}_2^2\leq 2\kappa^2 \varrho^2\leq 2\varrho^2\overset{(b)}{<}
\beta$, where (a) follows from~\eqref{eq:RSCLoss_FNO} and (b) from Theorem~\ref{theo:smooth_main_fno}. This result shows that $\frac{\alpha_t}{\beta}<1$.
\end{proof}


\section{Supplementary Information for the Experiments}
\label{app:exp_don_fno}
In this section we expand on the mathematical description of each operator learning problem studied in Section~\ref{sec:Experiments}. We also present further results on how the accuracy of each neural operator model improves as the width $m$ increases. Finally, we provide details about the hyperparameters and datasets used in the training of the corresponding models. 

We remark that all experiments with widths $m\in\{10,50\}$ were run on a personal computer with one NVIDIA Quadro GPU, while the rest of widths were on Google Colab with single NVIDIA L4 and A100 GPUs. 

\subsection{Antiderivative Operator}
We consider a simple one-dimensional Antiderivative or Integral operator given by
\begin{equation}
    s(x) := G(u)(x) = \int_0^x u(\xi)\,\mathrm{d}\xi, \qquad x\in [0, 1]~.
\end{equation}
Note that $G(u)$ is a linear operator and therefore learnable up to high accuracy. This is evident from the training loss in Figure~\ref{fig:seLU_Loss_DON} as well as from the sample solutions presented in Figure~\ref{fig:DON_solns_vs_width} for DONs and in Figure~\ref{fig:FNO_solns_vs_width} for FNOs. We observe that overall an increase in the width $m$ leads to higher training accuracy and lower training loss.

The sample size of the training data is $n=2000$, with every input function $u^{(i)}$, $i\in[n]$, being a one-dimensional Gaussian Random Fields (GRF).
For DON training, we choose $R=100$ input locations and we choose $100$ output locations for each input function, i.e., $q_i=100$, $i\in[n]$ (according to the notation in Section~\ref{subsec:DON_Setup}).\footnote{During training; however, for each $i\in[n]$, instead of averaging the loss over all the $q_i$ points, we simply randomly choose one of the $q_i$ points and evaluate the loss on it. This is strictly done in the interest of computational efficiency, since it is known to not reduce the accuracy of the results for the Antiderivative operator; e.g.,  see~\citep{lu20201DeepONet}.}  
\begin{figure}[t!]
    \centering
    \includegraphics[width=\linewidth]{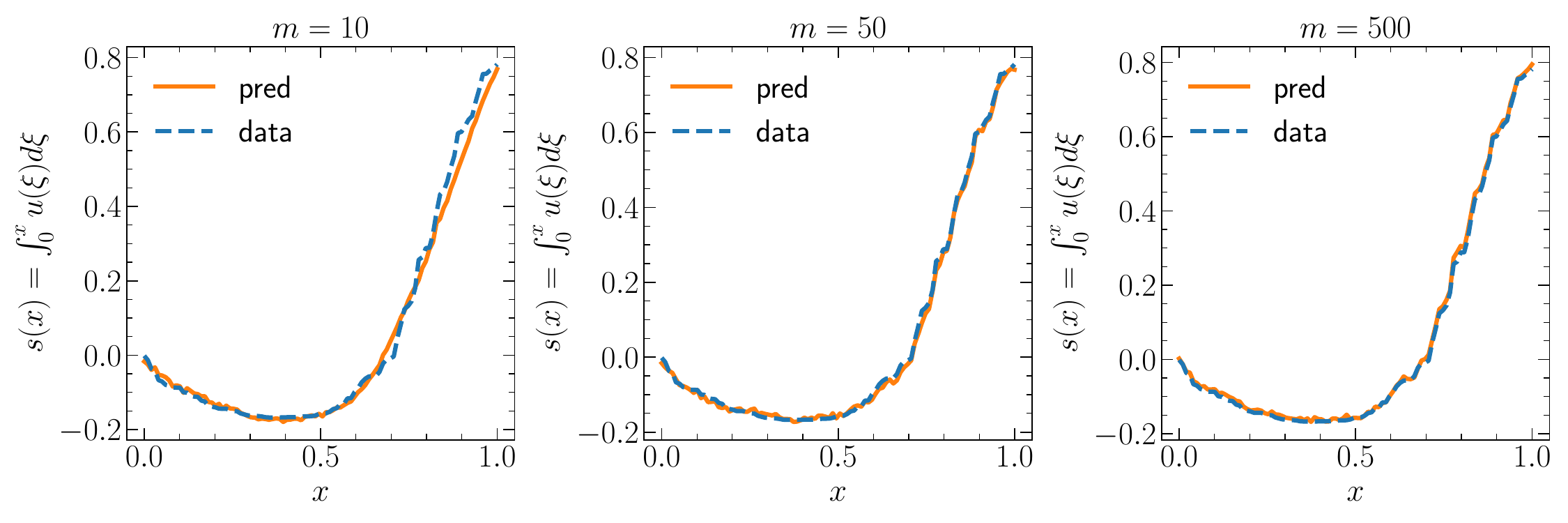}
    \caption{Sample solutions obtained for the Antiderivative operator for DONs for $m\in \{10, 50, 500\}$ at the end of the training process (80,000 epochs) for a randomly chosen input function. The ``data'' refers to the ground truth (obtained by a standard numerical solver) and ``pred'' corresponds to the learned operator.}
    \label{fig:DON_solns_vs_width}
\end{figure}
For the FNO, the input function is also sampled across $100$ locations (i.e., $\bar{R}=100$ using the notation in Section~\ref{sec:optFNO}); however, since we are interested in the model to provide an output of $100$ output locations, we modify the FNO architecture to provide this vector-valued output.
\footnote{This means that we have $R=1$ (according to the notation in Section~\ref{subsec:FNO_setup}) with the understanding that for each input function, we output a vector of size $100$. This is done as an alternative to an FNO with a scalar output which is averaged across the $100$ locations (for which we would have $R=100$), which is what we described in Section~\ref{sec:modelSetup}. We remark that we considered this modification on the output of the FNO just for the sake of computational efficiency, and this only empirically works for the case of the Antiderivative operator. Again, as in the case of DONs, we randomly sample one of the output locations to compute the loss during training.} 
For all the experiments we fix the learning rate for the \texttt{Adam} optimizer at $10^{-3}$ and with full-batch training, i.e., the batch size of $2000$ for both DONs and FNOs. For testing the trained neural operators, we generate another one-dimensional GRF. 
\begin{figure}[t!]
    \centering
    \includegraphics[width=\linewidth]{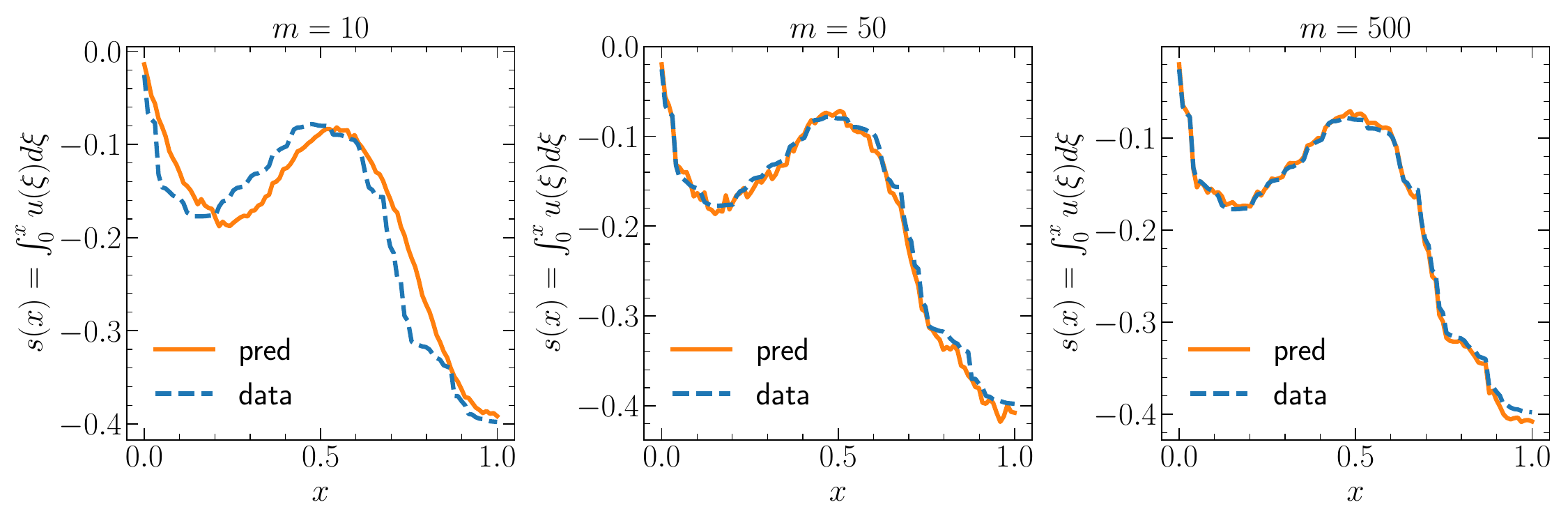}
    \caption{Sample solutions obtained for the Antiderivative operator for FNOs for $m\in \{10, 50, 500\}$. The setting is the same as in Figure~\ref{fig:DON_solns_vs_width}.}
    \label{fig:FNO_solns_vs_width}
\end{figure}

\subsection{Diffusion-Reaction Operator}
We are interested in learning an operator $G: u(x) \to s(x,t)$ for the solution operator of the one-dimensional Diffusion-Reaction equation implicitly given by
\begin{equation}
\label{eq:dr-eq}
\begin{aligned}
    \frac{\partial s}{\partial t}&=D \frac{\partial^2 s}{\partial x^2}+k s^2+u(x), \quad(x, t) \in(0,1] \times(0,1]~,
\end{aligned}
\end{equation}
with $D>0$ and zero initial and boundary conditions, namely, 
\begin{equation*}
    s(0, t) = s(1, t) = 0 \quad \text{and}\quad s(x, 0) = 0~,
\end{equation*}
along with a forcing function $u(x)$ defined by a GRF. This is the same setup as in~\citep{physicsInformed202WangPerdikaris,lu20201DeepONet}. The corresponding solutions for DONs and FNOs are presented in Figures~\ref{fig:DON_DR_solns_vs_width} and~\ref{fig:FNO_DR_solns_vs_width} respectively. Again, a larger width $m$ leads to a more accurate solution.

The neural operator aims to learn a mapping from the forcing function to the solution at different times in the interval $(0,1]$, in other words, the forcing function would be the \emph{input function} as defined in Section~\ref{sec:modelSetup}. We make use of a slightly modified solver provided at \url{https://github.com/PredictiveIntelligenceLab/Physics-informed-DeepONets} to generate the training data for the equation. We generate solutions for $n=5000$ input functions which are sampled on $100$ points in the space dimension (i.e., the interval $(0,1]$ for $x$ in~\eqref{eq:dr-eq} is divided in $100$ points) so that $R=100$ for DON and $\bar{R}=100$ for FNO (according to the notations in Section~\ref{subsec:DON_Setup} and Section~\ref{sec:optFNO} respectively). For computing the solutions, we are interested in computing them at $100$ different times $t$ within the time interval $(0,1]$ in~\eqref{eq:dr-eq} (in order to be able to plot the two-dimensional map on $x$ and $t$ in Figures~\ref{fig:DON_DR_solns_vs_width} and~\ref{fig:FNO_DR_solns_vs_width}). This division of both spatial and time dimensions results in a grid of $10,000$ points that can be chosen as output locations.
For the training of DONs, for each $i\in[n]$, we only select $100$ scattered points from the grid of output locations (out of their $10,000$ points), so that $q^{(i)}=100$, which will become the input to the trunk net. 
However, for the training of FNOs, we do choose the full grid as output locations and thus we modify the FNO to provide $10,000$ outputs instead of the scalar output provided in our theoretical analysis.
\footnote{A scalar output is needed if we were interested in evaluating the operator at only one specific spatial location $x$ and one specific value of time $t$; however, as can be seen in Figures~\ref{fig:DON_DR_solns_vs_width} and~\ref{fig:FNO_DR_solns_vs_width}, we are interested in plotting solutions at multiple locations and times.} We fix the diffusivity as $D = 0.01$.
For all the experiments we use a constant learning rate of $3\times 10^{-4}$ and \texttt{Adam} optimizer with a batch size of $4000$. 
For testing the trained neural operators, we generate another one-dimensional GRF. 

\begin{figure}[t!]
    \centering
    \includegraphics[width=\linewidth]{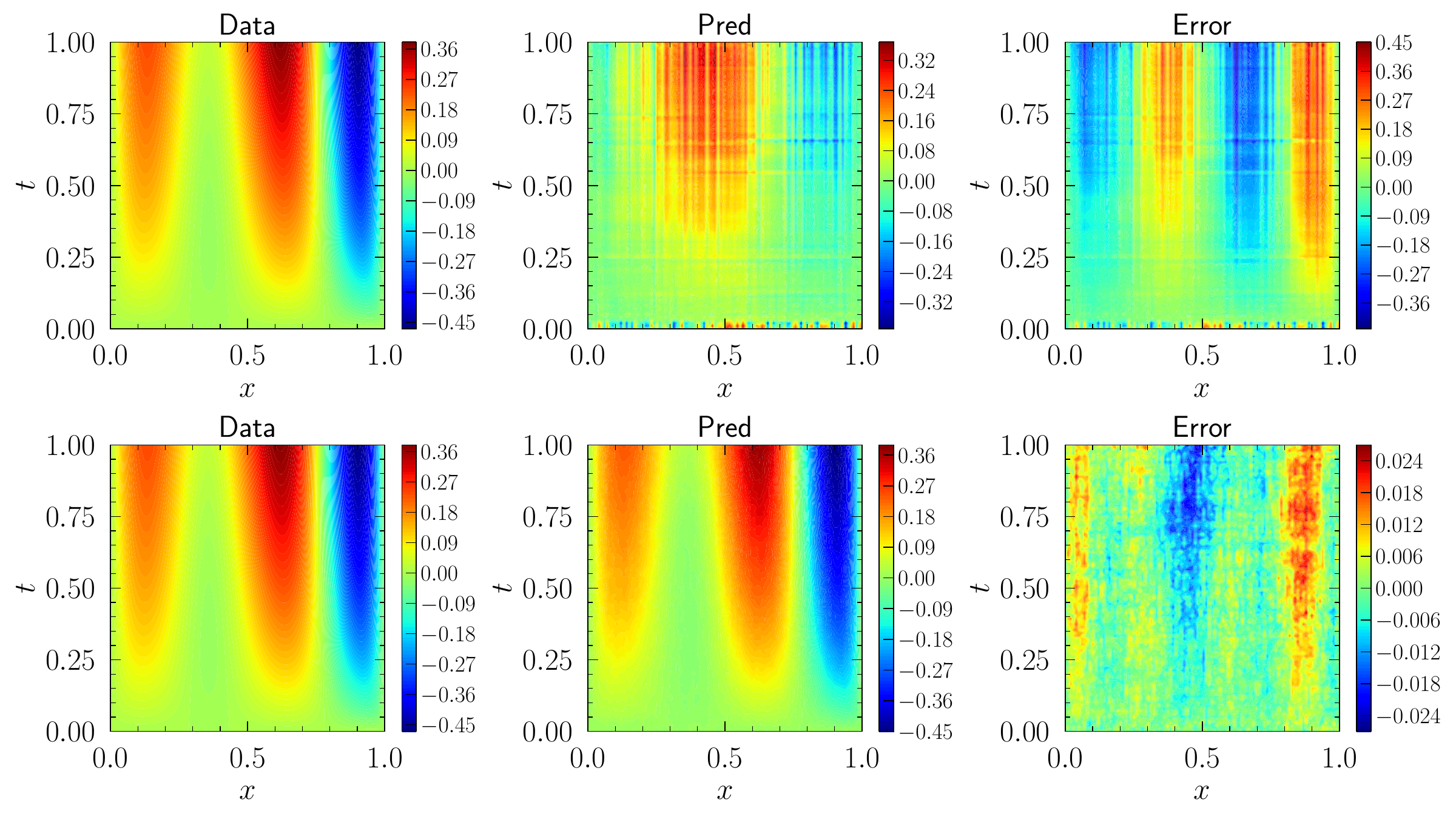}
    \caption{Sample solutions $s(x,t)$ obtained for the Diffusion-Reaction operator for DONs for $m\in \{10,500\}$ given an input $u(x)$. The top row corresponds to $m=10$ and the bottom row to $m=500$. The third column represents the pointwise difference of the ground truth or ``Data'' (first column) minus the obtained results from the learned DON or ``Pred'' (second column).}
    \label{fig:DON_DR_solns_vs_width}
\end{figure}
\begin{figure}[t!]
    \centering
    \includegraphics[width=\linewidth]{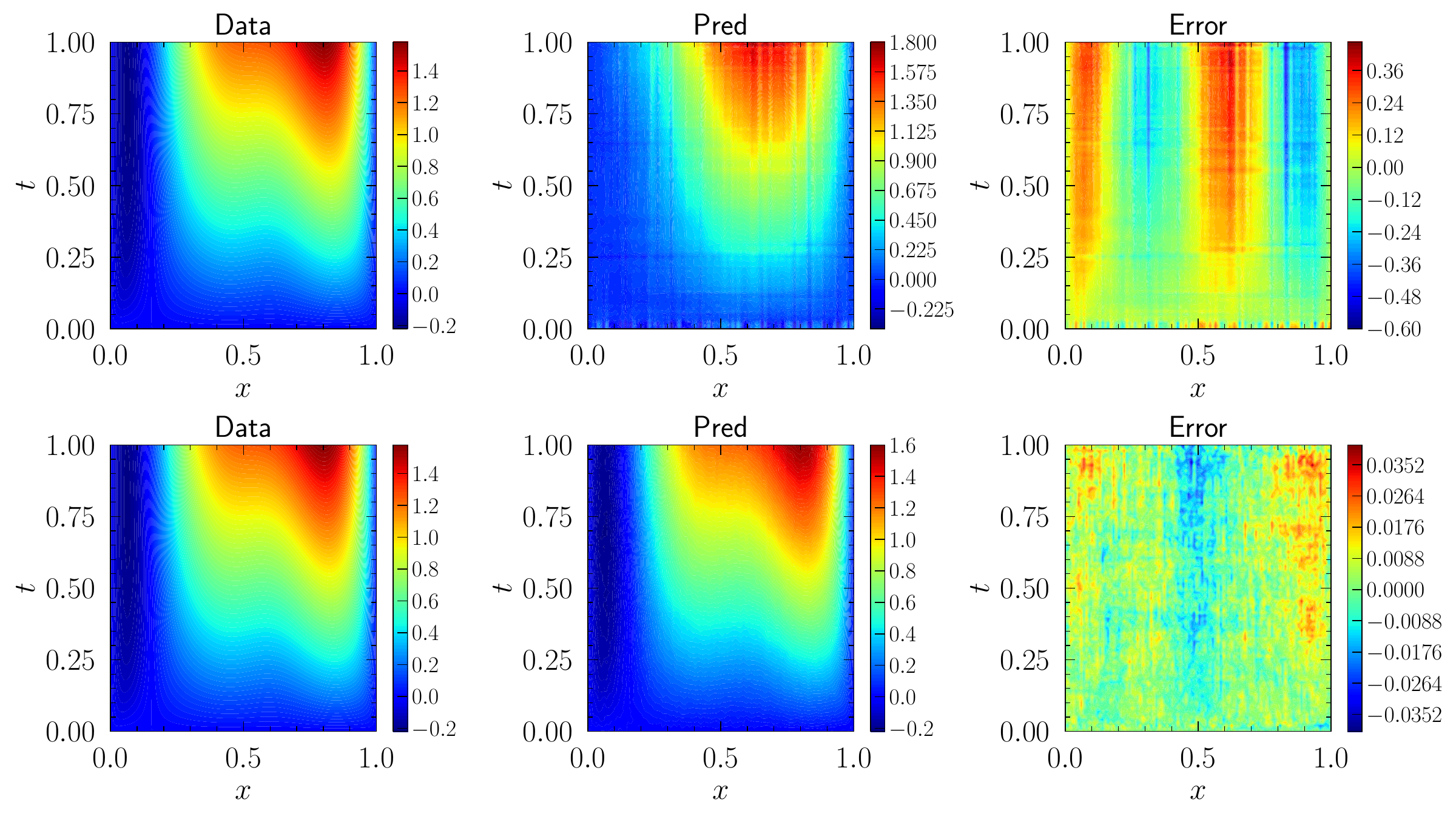}
    \caption{Sample solutions obtained for the Diffusion-Reaction operator $s(x, t)$ for FNOs for $m\in \{10,500\}$. The setting of the plots is the same as in Figure~\ref{fig:DON_DR_solns_vs_width} where the top row corresponds to $m=10$ and the bottom row to $m=500$.}
    \label{fig:FNO_DR_solns_vs_width}
\end{figure}

\subsection{Burger's Equation}
The Burger's equation operator learns an operator $G: u(x) \to s(x, 1)$, where 
\begin{align}
    \begin{aligned}
        & \frac{\partial s}{\partial t}
        +
        s \frac{\partial s}{\partial x}
        -
        \nu \frac{\partial^2 s}{\partial x^2}
        =
        0, \quad(x, t) \in(0, 2\pi] \times(0,1] \\
        & s(x, 0)=u(x), \quad x \in(0,2\pi]
    \end{aligned}\label{eq:Burgers}
\end{align}
with $\nu>0$ and periodic boundary conditions
\begin{align*}
    \begin{aligned}
        & s(0, t)=s(2\pi, t) \\
        & \frac{\partial s}{\partial x}(0, t)=\frac{\partial s}{\partial x}(2\pi, t)~.
    \end{aligned}
\end{align*}
The corresponding solutions for DONs and FNOs are presented in Figures~\ref{fig:DON_solns_Burgers_vs_width} and~\ref{fig:FNO_solns_Burgers_vs_width} respectively. Again, a larger width $m$ leads to a more accurate solution.

The neural operator aims to learn a mapping from the initial condition to the solution at time $t=1$, i.e. the mapping from $u(x)$ to the final solution $s(x, 1)$.  This is the operator learning problem originally studied in~\citep{li_fourier_2021}. 
We note that the initial condition would then be the \emph{input function} as defined in Section~\ref{sec:modelSetup}. We make use of the datasets publicly available at \url{https://github.com/neuraloperator/neuraloperator}, specifically the \texttt{Burgers\_R10.mat} dataset available at \url{https://drive.google.com/drive/folders/1UnbQh2WWc6knEHbLn-ZaXrKUZhp7pjt-}, which comprises of 2048 input functions and corresponding final solution (i.e., $u^{(i)}$ with associated solution $s^{(i)}(\cdot,1)$, $i\in[2048]$). All solutions are calculated for a single viscosity $\nu=0.01$. 
For all the experiments we use a constant learning rate of $10^{-3}$ and \texttt{Adam} optimizer with a batch size of $800$. We test the trained neural operators on a simple GRF sampled from the training dataset.
\begin{figure}[t!]
    \centering
    \includegraphics[width=\linewidth]{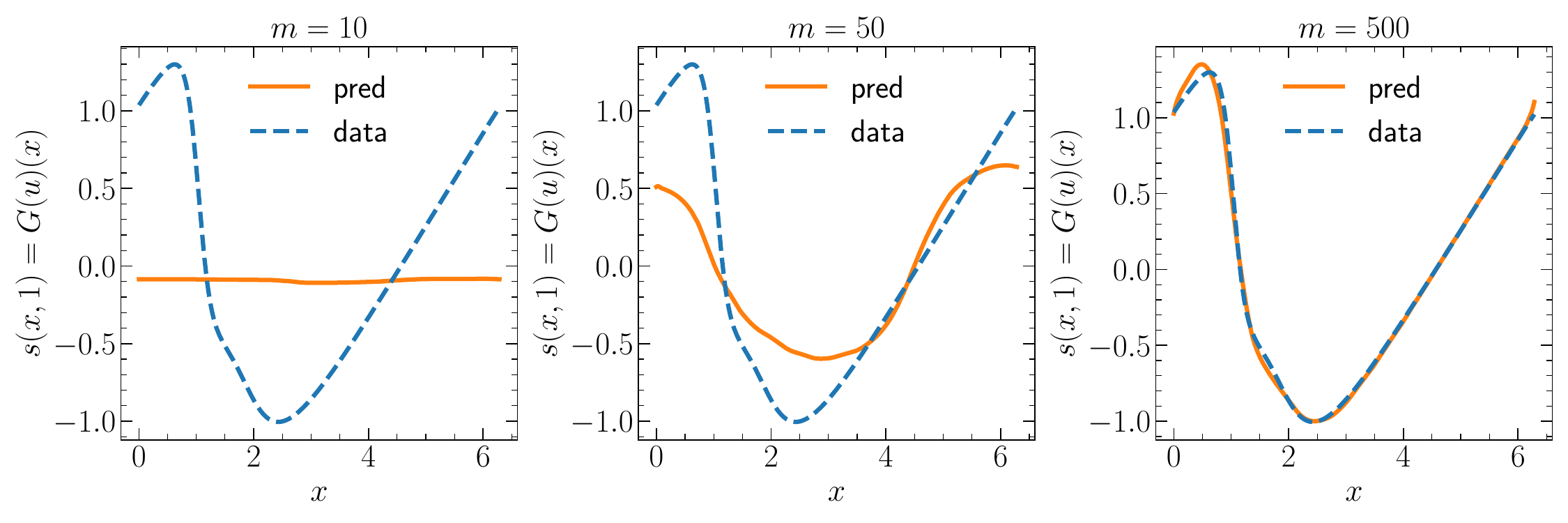}
    \caption{Sample solutions obtained for the Burger's equation for DONs for $m\in \{10, 50, 500\}$. The setting is similar to the one in Figure~\ref{fig:DON_solns_vs_width} where we plot the obtained solution from the learned operator (denoted by ``pred'') along with the ground truth (denoted by ``data'') for different widths. 
    }
    \label{fig:DON_solns_Burgers_vs_width}
\end{figure}
\begin{figure}[t!]
    \centering
    \includegraphics[width=\linewidth]{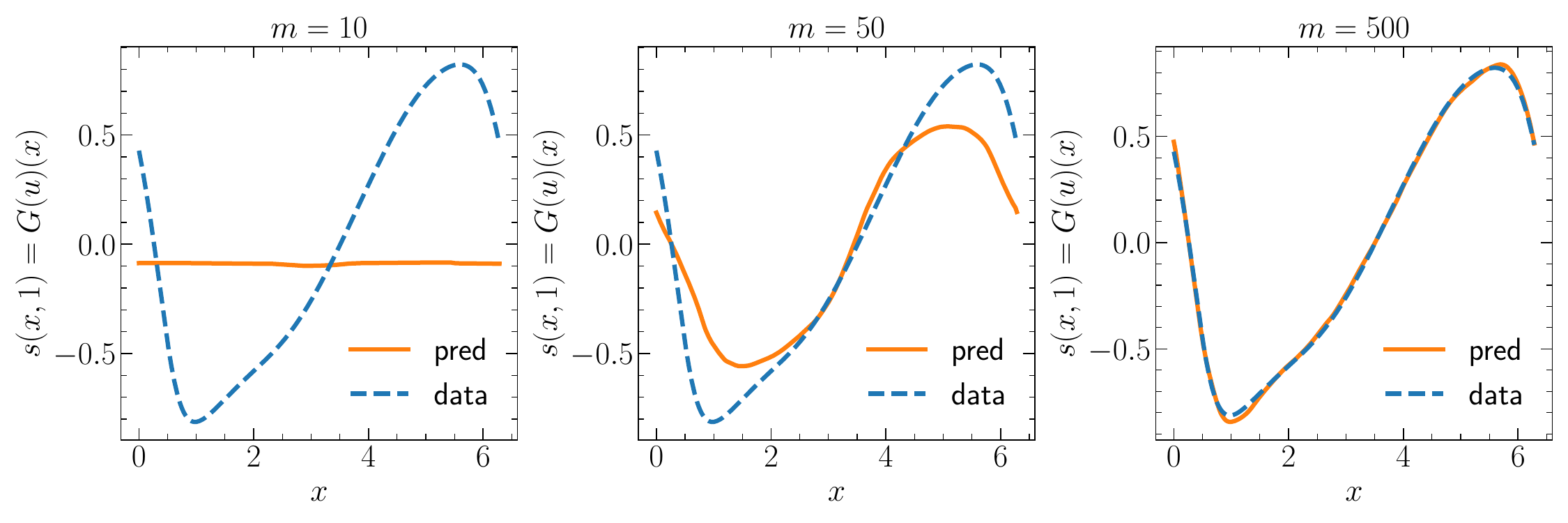}
    \caption{Sample solutions obtained for the Burgers equation for FNOs for $m\in \{10, 50, 500\}$. The setting is the same as in Figure~\ref{fig:DON_solns_Burgers_vs_width}.}
    \label{fig:FNO_solns_Burgers_vs_width}
\end{figure}

\end{document}